\DeclareFontFamily{OT1}{pzc}{}
\DeclareFontShape{OT1}{pzc}{m}{it}{<-> s * [1.10] pzcmi7t}{}
\DeclareMathAlphabet{\mathpzc}{OT1}{pzc}{m}{it}
\newcommand{\commentout}[1]{}
\newcommand{\nwc}{\newcommand}
\nwc{\ba}{\begin{array}}
\nwc{\bal}{\begin{align}}
\nwc{\bea}{\begin{eqnarray}}
\nwc{\beq}{\begin{eqnarray}}
\nwc{\bean}{\begin{eqnarray*}}
\nwc{\beqn}{\begin{eqnarray*}}
\nwc{\beqast}{\begin{eqnarray*}}
\nwc{\ea}{\end{array}}
\nwc{\eal}{\end{align}}
\nwc{\eea}{\end{eqnarray}}
\nwc{\eeq}{\end{eqnarray}}
\nwc{\eean}{\end{eqnarray*}}
\nwc{\eeqn}{\end{eqnarray*}}
\nwc{\eeqast}{\end{eqnarray*}}
\nwc{\ep}{\varepsilon}
\nwc{\ept}{\epsilon}
\nwc{\nwt}{\newtheorem}
\newcommand{\RR}{\mathbb{R}}
\newcommand{\ZZ}{\mathbb{Z}}
\newcommand{\EE}{\mathbb{E}}
\newcommand{\PP}{\mathbb{P}}
\nwc{\calC}{\mathcal{C}}
\nwc{\calO}{\mathcal{O}}
\nwc{\calJ}{\mathcal{J}}
\nwc{\calM}{\mathcal{M}}
\nwc{\calX}{\mathcal{X}}
\nwc{\calD}{\mathcal{D}}
\nwc{\calE}{\mathcal{E}}
\nwc{\calP}{\mathcal{P}}
\nwc{\calH}{\mathcal{H}}
\nwc{\calK}{\mathcal{K}}
\nwc{\calT}{\mathcal{T}}
\nwc{\tcalT}{\tilde{\calT}}
\nwc{\calA}{\mathcal{A}}
\nwc{\calB}{\mathcal{B}}
\nwc{\calI}{\mathcal{I}}
\nwc{\calQ}{\mathcal{Q}}
\nwc{\calF}{\mathcal{F}}
\nwc{\AS}{\calA_s}
\nwc{\Agamma}{\calA_{\gamma}}
\nwc{\BS}{\calB_s}
\nwc{\Gammas}{\Gamma_s}
\nwc{\bbI}{\mathbb{I}}
\nwc{\tgamma}{\tilde{\gamma}}
\nwc{\talpha}{\tilde{\alpha}}
\nwc{\btheta}{\boldsymbol \theta}
\nwc{\proj}{{\rm Proj}}
\nwc{\Vol}{{\rm Vol}}
\nwc{\diam}{{\rm diam}}
\nwc{\lognn}{\frac{\log n}{n}}
\nwc{\Lam}{\Lambda}
\nwc{\hLam}{\widehat{\Lambda}}
\nwc{\lcol}{\left\|}
\nwc{\rcol}{\right\|}
\nwc{\hcalD}{\widehat{\calD}}
\nwc{\hcalP}{\widehat{\calP}}
\nwc{\hc}{\widehat{c}}
\nwc{\hP}{\widehat{\calP}}
\nwc{\hS}{\widehat{\Sigma}}
\nwc{\hV}{\widehat{V}}
\nwc{\halpha}{\widehat{\alpha}}
\nwc{\hrho}{\widehat{\rho}}
\nwc{\hcalT}{\widehat{\calT}}
\nwc{\argmin}{{\rm argmin}}
\nwc{\trace}{{\rm trace}}
\nwc{\rank}{{\rm rank}}
\nwc{\intdim}{{\rm intdim}}
\nwc{\din}{d_{\rm in}}
\nwc{\cjk}{c_{j,k}}
\nwc{\hcjk}{\hc_{j,k}}
\nwc{\Sjk}{\Sigma_{j,k}}
\nwc{\hSjk}{\hS_{j,k}}
\nwc{\Cjk}{C_{j,k}}
\nwc{\calPjk}{\calP_{j,k}}
\nwc{\hcalPjk}{\widehat{\calP}_{j,k}}
\nwc{\Vjk}{V_{j,k}}
\nwc{\hVjk}{\widehat{V}_{j,k}}
\nwc{\Deltajk}{\Delta_{j,k}}
\nwc{\hDeltajk}{\widehat{\Delta}_{j,k}}
\nwc{\bDeltajk}{\bar{\Delta}_{j,k}}
\nwc{\jkp}{{j+1,k'}}
\nwc{\chijk}{\mathbf{1}_{j,k}}
\nwc{\calMjk}{\calM_{j,k}}
\nwc{\calQjk}{\calQ_{j,k}}
\nwc{\hcalQjk}{\widehat{\calQ}_{j,k}}
\nwc{\njk}{n_{j,k}}
\nwc{\hnjk}{\widehat{n}_{j,k}}
\nwc{\jstar}{j^*}
\nwc{\cjx}{c_{j,x}}
\nwc{\Vjx}{{V_{j,x}}}
\nwc{\Vjox}{{V_{j+1,x}}}
\nwc{\Sjx}{{S_{j,x}}}
\nwc{\SSjk}{{S_{j,k}}}
\nwc{\Sjxp}{{S_{j,x}^\perp}}
\nwc{\Sjox}{{S_{j+1,x}}}
\nwc{\Ujx}{{U_{j,x}}}
\nwc{\Ujox}{{U_{j+1,x}}}
\nwc{\calS}{\mathcal{S}}
\nwc{\calSjx}{\calS_{j,x}}
\nwc{\hn}{\widehat{n}}
\nwc{\hcjx}{\hc_{j,x}}
\nwc{\hVjx}{\hV_{j,x}}
\nwc{\hVjox}{\hV_{j+1,x}}
\nwc{\hSjx}{\widehat{S}_{j,x}}
\nwc{\hSSjk}{\widehat{S}_{j,k}}
\nwc{\hSjxp}{\widehat{S}_{j,x}^\perp}
\nwc{\hSjox}{\widehat{S}_{j+1,x}}
\nwc{\hUjx}{\widehat{U}_{j,x}}
\nwc{\hUjox}{\widehat{U}_{j+1,x}}
\nwc{\hcalS}{\widehat{\mathcal{S}}}
\nwc{\hcalSjx}{\hcalS_{j,x}}
\nwc{\calSjk}{\calS_{j,k}}
\nwc{\hcalSjk}{\hcalS_{j,k}}
\nwc{\hcalM}{\widetilde{\calM}}
\nwc{\Child}{{\mathscr{C}}}
\nwc{\Desendant}{{\mathscr{D}}}
\nwc{\amax}{a_{\rm max}}
\nwc{\amin}{a_{\rm min}}
\nwc{\jmax}{j_{\rm max}}
\nwc{\jmin}{j_{\rm min}}
\nwc{\fs}{\lfloor s\rfloor}
\nwc{\Voronoi}{{\rm Voronoi}}
\nwc{\ttheta}{\tilde\theta}
\nwc{\calTn}{\calT^n}
\nwc{\hcalTtaun}{\hcalT_{\tau_n}}
\nwc{\calTrhoeta}{\calT_{(\rho,\eta)}}
\nwc{\hcalTeta} {\hcalT_{\eta}}
\nwc{\calTeta}{\calT_{\eta}}
\nwc{\calTtaunb}{\calT_{\tau_n/b}}
\nwc{\calTbtaun}{\calT_{b\tau_n}}
\nwc{\calTrhobtaun}{\calT_{(\rho,b\tau_n)}}
\nwc{\calTrhotaunob}{\calT_{(\rho,\tau_n^o/b)}}
\nwc{\hLameta}{\hLam_{\eta}}
\nwc{\Lameta}{\Lam_{\eta}}
\nwc{\Lamrhoeta}{\Lam_{(\rho,\eta)}}
\nwc{\Lambtaun}{\Lam_{b\tau_n}}
\nwc{\Lamtaunb}{\Lam_{\tau_n/b}}
\nwc{\Lamrhobtaun}{\Lam_{(\rho,b\tau_n)}}
\nwc{\hLamtaun}{\hLam_{\tau_n}}
\nwc{\hcalTtauno}{\hcalT_{\tau_n^o}}
\nwc{\calTtaunob}{\calT_{\tau_n^o/b}}
\nwc{\hLamtauno}{\hLam_{\tau_n^o}}
\nwc{\Lambtauno}{\Lam_{b{\tau_n^o}}}
\nwc{\Lamtaunob}{\Lam_{{\tau_n^o}/b}}
\begin{document}

\title{Adaptive Geometric Multiscale Approximations for Intrinsically Low-dimensional Data}

\author{\name Wenjing Liao$^1$ \email wliao@math.jhu.edu	 \\
       \AND
       \name Mauro Maggioni$^{1,2,3}$ \email mauro.maggioni@jhu.edu \\
       \addr $^1$Departments of Mathematics, $^2$Applied Mathematics and Statistics, \\$^3$The Institute for Data Intensive Engineering and Science\\
       Johns Hopkins University, 3400 N. Charles Street, Baltimore, MD 21218, USA
}

\editor{}

\maketitle

\begin{abstract}
We consider the problem of efficiently approximating and encoding high-dimensional data sampled from a probability distribution $\rho$ in $\mathbb{R}^D$, that is nearly supported on a $d$-dimensional set $\mathcal{M}$ - for example supported on a $d$-dimensional Riemannian manifold.
Geometric Multi-Resolution Analysis (GMRA) provides a robust and computationally efficient procedure to construct low-dimensional geometric approximations of $\calM$ at varying resolutions. We introduce a thresholding algorithm on the geometric wavelet coefficients, leading to what we call adaptive GMRA approximations. We show that these data-driven, empirical approximations perform well, when the threshold is chosen as a suitable universal function of the number of samples $n$, on a wide variety of measures $\rho$, that are allowed to exhibit different regularity at different scales and locations, thereby efficiently encoding data from more complex measures than those supported on manifolds.
These approximations yield a data-driven dictionary, together with a fast transform mapping data to coefficients, and an inverse of such a map.  The algorithms for both the dictionary construction and the transforms have complexity $C n \log n$ with the constant linear in $D$ and exponential in $d$. Our work therefore establishes adaptive GMRA as a fast dictionary learning algorithm with approximation guarantees.
We include several numerical experiments on both synthetic and real data, confirming our theoretical results and demonstrating the effectiveness of adaptive GMRA.
\end{abstract}

\ \\

\begin{keywords}
Dictionary Learning, Multi-Resolution Analysis, Adaptive Approximation, Manifold Learning, Compression
\end{keywords}



\section{Introduction}

We model a data set as $n$ i.i.d. samples $\calX_n := \{x_i\}_{i=1}^n$ from a probability measure $\rho$ in $\RR^D$.
We make the assumption that $\rho$ is supported on or near a set $\calM$ of dimension $d \ll D$, and consider the problem, given $\calX_n$, of learning a data-dependent dictionary that efficiently encodes data sampled from $\rho$.

In order to circumvent the curse of dimensionality, a popular model for data is sparsity: we say that the data is $k$-sparse on a suitable dictionary (i.e. a collection of vectors) $\Phi=\{\varphi_i\}_{i=1}^m\subset\mathbb{R}^D$ if each data point $x\in\mathbb{R}^d$ may be expressed as a linear combination of at most $k$ elements of $\Phi$. Clearly the case of interest is $k\ll D$.
These \textit{sparse representations} have been used in a variety of statistical signal processing tasks, compressed sensing, learning \citep[see e.g.][among many others]{PotterElad,peyre:sparsetexture,Lewicki98learningovercomplete,Kreutz-Delgado:2003:DLA:643335.643340,DBLP:journals/tit/MaurerP10,chen:33,DD:CompressedSensing,Aharon05ksvd,Tao:DantzigEstimator}, 
and spurred much research about how to learn data-adaptive dictionaries
\citep[see][and references therein]{gribonval:hal-00918142,Vainsencher:2011:SCD:1953048.2078210,DBLP:journals/tit/MaurerP10}. 
The algorithms used in dictionary learning are often computationally demanding, being based on high-dimensional non-convex optimization \citep{Mairal:OnlineLearningSparseCoding}.
These approaches have the strength of being very general, with minimal assumptions made on geometry of the dictionary or on the distribution from which the samples are generated. This ``worst-case'' approach incurs bounds dependent upon the ambient dimension $D$ in general (even in the standard case of data lying on one hyperplane). 


In \cite{MMS:NoisyDictionaryLearning} we proposed to attack the dictionary learning problem under geometric assumptions on the data, namely that the data lies close to a low-dimensional set $\calM$.
There are of course various possible geometric assumptions, the simplest one being that $\calM$ is a single $d$-dimensional subspace. For this model Principal Component Analysis (PCA) \citep[see][]{Pearson_PCA,Hotelling_PCA1,Hotelling_PCA2} is an effective tool to estimate the underlying plane. 
More generally, one may assume that data lie on a union of several low-dimensional planes instead of a single one. The problem of estimating multiple planes, called subspace clustering, is more challenging
\citep[see][]{RANSAC-FB-ACM81,KS-Ho-CVPR03,GPCA-VMS-PAMI05, LSA-YP-ECCV06,ALC-MDHW-PAMI07,GPCA-MYDF-SiamRev08,sccFoCM09,SSC-EV-CVPR09,LBF-ZSWL-CVPR10,Liu2010,CM:CVPR2011}. 
This model was shown effective in  various applications, including image processing \citep{RANSAC-FB-ACM81}, computer vision \citep{KS-Ho-CVPR03} and motion segmentation \citep{LSA-YP-ECCV06}. 

A different type of geometric model gives rise to manifold learning, where $\calM$ is assumed to be a $d$-dimensional manifold isometrically embedded in $\RR^D$, see \citep{isomap,RSLLE,belkin:nc,DG_HessianEigenmaps,DiffusionPNAS,DiffusionPNAS2,ZhangZha} and many others.
It is of interest to move beyond this model to even more general geometric models, for example where the regularity of the manifold is reduced, and data is not forced to lie exactly on a manifold, but only close to it.

Geometric Multi-Resolution Analysis (GMRA) was proposed in  \citet{CM:MGM2}, and its finite-sample performance was analyzed in \citet{MMS:NoisyDictionaryLearning}. 
In GMRA, geometric approximations of $\calM$ are constructed with multiscale techniques that have their roots in geometric measure theory, harmonic analysis and approximation theory.
GMRA performs a multiscale tree decomposition of data and build multiscale low-dimensional geometric approximations to $\calM$.
Given data, we run the cover tree algorithm \citep{LangfordICML06-CoverTree} to obtain a multiscale tree in which every node is a subset of $\calM$, called a dyadic cell, and all dyadic cells at a fixed scale form a partition of $\calM$. 
After the tree is constructed, we perform PCA on the data in each cell to locally approximate $\calM$ by the $d$-dimensional principal subspace so that every point in that cell is only encoded by the $d$ coefficients in principal directions. 
At a fixed scale $\calM$ is approximated by a piecewise
linear set.
%
In \cite{CM:MGM2} the performance of GMRA for volume measures on a $\mathcal{C}^{s}, s \in (1,2]$ Riemannian manifold was analyzed in the continuous case (no sampling), and the effectiveness of GMRA was demonstrated empirically on simulated and real-world data.
In \citet{MMS:NoisyDictionaryLearning}, the approximation error of $\calM$ was estimated in the non-asymptotic regime with $n$ i.i.d. samples from a measure $\rho$, satisfying certain technical assumptions, supported on a tube of a $\calC^2$ manifold of dimension $d$ isometrically embedded in $\RR^D$.  The probability bounds in \cite{MMS:NoisyDictionaryLearning} depend on $n$ and $d$, but not on $D$, successfully avoiding the curse of dimensionality caused by the ambient dimension. The assumption that $\rho$ is supported in a tube around a manifold can account for noise and does not force the data to lie exactly on a smooth low-dimensional manifold.

In \citet{CM:MGM2} and \citet{MMS:NoisyDictionaryLearning}, GMRA approximations are constructed on uniform partitions in which all the cells have similar diameters.
However, when the regularity, such as smoothness or curvature, weighted by the $\rho$ measure, of $\calM$ varies at different scales and locations, uniform partitions are not optimal.
Inspired by the adaptive methods in classical multi-resolution analysis \citep[see][among many others]{DJ1,DJ2,CohenDGO,DeVore:UniversalAlgorithmsLearningTheoryI,DeVore:UniversalAlgorithmsLearningTheoryII}, we propose an adaptive version of GMRA to construct low-dimensional geometric approximations of $\calM$ on an adaptive partition and provide finite sample performance guarantee for a much larger class of geometric structures $\calM$ in comparison with \citet{MMS:NoisyDictionaryLearning}. 

Our main result (Theorem \ref{thm3}) in this paper may be summarized as follows: \commentout{
\textcolor{blue}{statement in terms of $\ep$:}
suppose that the probability measure $\rho$ is supported on or near a compact $d$-dimensional Riemannian manifold $\calM \hookrightarrow \RR^D$ ($d\ge 3$). Let $s$ be a regularity parameter of $\rho$ in Definition \ref{defBs}, which allows us to consider $\calM$'s and $\rho$'s with highly nonuniform regularity.
For a given accuracy $\ep$, and regularity parameter $s$, if $n_\ep \gtrsim (1/\ep)^{\frac{2s+d-2}{s}}\log (1/\ep)$ i.id. samples are taken from $\rho$, then, with high probability, adaptive GMRA outputs a dictionary $\widehat{\Phi}_\ep = \{\widehat\phi_i\}_{i \in \calJ_\ep}$, an encoding operator $\hcalD_\ep: \RR^D \rightarrow \RR^{\calJ_\ep}$ and a decoding operator $\hcalD_\ep^{-1}: \RR^{\calJ_\ep} \rightarrow \RR^D$. For every $x \in \RR^D$, $\|\hcalD_\ep x\|_0 \le d+1$ (i.e. only $d+1$ entries are non-zero), and the Mean Squared Error (MSE) satisfies 
$${\rm MSE} := \EE_{x \sim \rho}[\| x -\hcalD_\ep^{-1} \hcalD_\ep x\|^2] \lesssim \ep^2.$$
As for computational complexity, constructing $\widehat\Phi_\ep$ takes $\calO((C^d+d^2)D \ep^{-\frac{2s+d-2}{s}}\log(1/\ep))$ and computing $\hcalD_\ep x$ takes $\calO(d(D+d^2)\log(1/\ep))$. While we state the results, including the computational complexity, in terms of the requested accuracy $\ep$, there are equivalent formulations in terms of the sample size $n$. We will use the latter format in the rest of the paper.}
Let $\rho$ be a probability measure supported on or near a compact $d$-dimensional Riemannian manifold $\calM \hookrightarrow \RR^D$, with $d\ge 3$. 
Let $\rho$ admit a multiscale decomposition satisfying the technical assumptions A1-A5 in section \ref{sectree} below.
Given $n$ i.i.d. samples are taken from $\rho$, the intrinsic dimension $d$, and a parameter $\kappa$ large enough, adaptive GMRA outputs a dictionary $\widehat{\Phi}_n = \{\widehat\phi_i\}_{i \in \calJ_n}$, an encoding operator $\hcalD_n: \RR^D \rightarrow \RR^{\calJ_n}$ and a decoding operator $\hcalD_n^{-1}: \RR^{\calJ_n} \rightarrow \RR^D$. With high probability, for every $x \in \RR^D$, $\|\hcalD_n x\|_0 \le d+1$ (i.e. only $d+1$ entries are non-zero), and the Mean Squared Error (MSE) satisfies 
$${\rm MSE} := \EE_{x \sim \rho}[\| x -\hcalD_n^{-1} \hcalD_n x\|^2] \lesssim \left( \lognn\right)^{\frac{2s}{2s+d-2}}.$$
Here $s$ is a regularity parameter of $\rho$ as in definition \ref{defBs}, which allows us to consider $\calM$'s and $\rho$'s with nonuniform regularity, varying at different locations and scales. Note that the algorithm does not need to know $s$, but it automatically adapts to obtain a rate that depends on $s$. We believe, but do not prove, that this rate is indeed optimal.
As for computational complexity, constructing $\widehat\Phi_n$ takes $\calO((C^d+d^2)D n \log n)$ and computing $\hcalD_n x$ only takes $\calO(d(D+d^2)\log n)$, which means we have a fast transform mapping data to their sparse encoding on the dictionary.

In adaptive GMRA, the dictionary is composed of the low-dimensional planes on adaptive partitions and the encoding operator transforms a point to the local $d$ principal coefficients of the data in a piece of the partition.
We state this results in terms of encoding and decoding to stress that learning the geometry in fact yields efficient representations of data, which may be used 
for performing signal processing tasks in a domain where the data admit a sparse representation, e.g. in compressive sensing or estimation problems \citep[see][]{IM:GMRA_CS,6410789,ArminWakin}. 
Adaptive GMRA is designed towards robustness, both in the sense of tolerance to noise and to model error (i.e. data not lying on a manifold). We assume $d$ is given throughout this paper. If not, we refer to \citet{LMR:MGM1, MM:MultiscaleDimensionalityEstimationAAAI,MM:MultiscaleDimensionalityEstimationSSP} for the estimation of intrinsic dimensionality.


 The paper is organized as follows. Our main results, including the construction of GMRA, adaptive GMRA and their finite sample analysis, are presented in Section \ref{secmain}. We show numerical experiments in Section \ref{secnum}. The detailed analysis of GMRA and adaptive GMRA is presented in Section \ref{secGMRA}. 
In Section \ref{secdisex}, we discuss the computational complexity of adaptive GMRA and extend our work to adaptive orthogonal GMRA.
Proofs are postponed till the appendix.

\noindent{\bf{Notation}}.
We will introduce some basic notation here. $f \lesssim g$ means that there exists a constant $C$ independent on any variable upon which $f$ and $g$ depend, such that $f \le C g$. Similarly for $\gtrsim$. $f \asymp g$ means that both $f \lesssim g$ and $f \gtrsim g$ hold. 
The cardinality of a set $A$ is denoted by $\#A$.
For $x \in \RR^D$, $\|x\|$ denotes the Euclidean norm 
and $B_r(x)$ denotes the Euclidean ball of radius $r$ centered at $x$.
Given a subspace $V\in \RR^D$, we denote its dimension by $\dim(V)$ and the orthogonal projection onto $V$ by $\proj_V$. If $A$ is a linear operator on $\RR^D$, $||A||$ is its operator norm. 
The identity operator is denoted by $\bbI$. 

\section{Main results}
\label{secmain}

GMRA was proposed in \citet{CM:MGM2} to efficiently represent points on or near a low-dimensional manifold in high dimensions.
We refer the reader to that paper for details of the construction, and we summarize here the main ideas in order to keep the presentation self-contained.
The construction of GMRA involves the following steps:
\begin{enumerate}
\item[(i)] construct  a {\em multiscale tree $\boldsymbol\calT$} and the associated decomposition of $\calM$ into nested cells $\{\Cjk\}_{k\in\calK_j,j\in\ZZ}$ where $j$ represents scale and $k$ location;

\item[(ii)] perform {\em local PCA} on each $\Cjk$: let the mean (``center'')  be $\cjk$ and the $d$-dim principal subspace $\Vjk$. Define $\calP^{j,k}(x) := c^{j,k}+\proj_{V^{j,k}}(x-c^{j,k})$. 

\item[(iii)] construct a {\em \lq\lq difference\rq\rq\ subspace $W_{j+1,k'}$} capturing $\calP^{j,k}(\Cjk)-\calP^{j+1,k'}(C_{j+1,k'})$, for each $C_{j+1,k'}\subseteq\Cjk$ (these quantities are associated with the refinement criterion in adaptive GMRA).
\end{enumerate}
$\calM$ may be approximated, at each scale $j$, by its projection $\calP_{\Lambda_j}$ onto the family of linear sets $\Lambda_j:=\{\calPjk(\Cjk)\}_{k\in\calK_j}$.  For example, linear approximations of the S manifold at scale $6$ and $10$ are displayed in Figure \ref{FigDemo1}. In a variety of distances, $\calP_{\Lambda_j}(\calM)\rightarrow\calM$.
\begin{figure}[t]
 \centering
  \subfigure[S manifold]{
 \includegraphics[width=4.8cm,height=4cm]{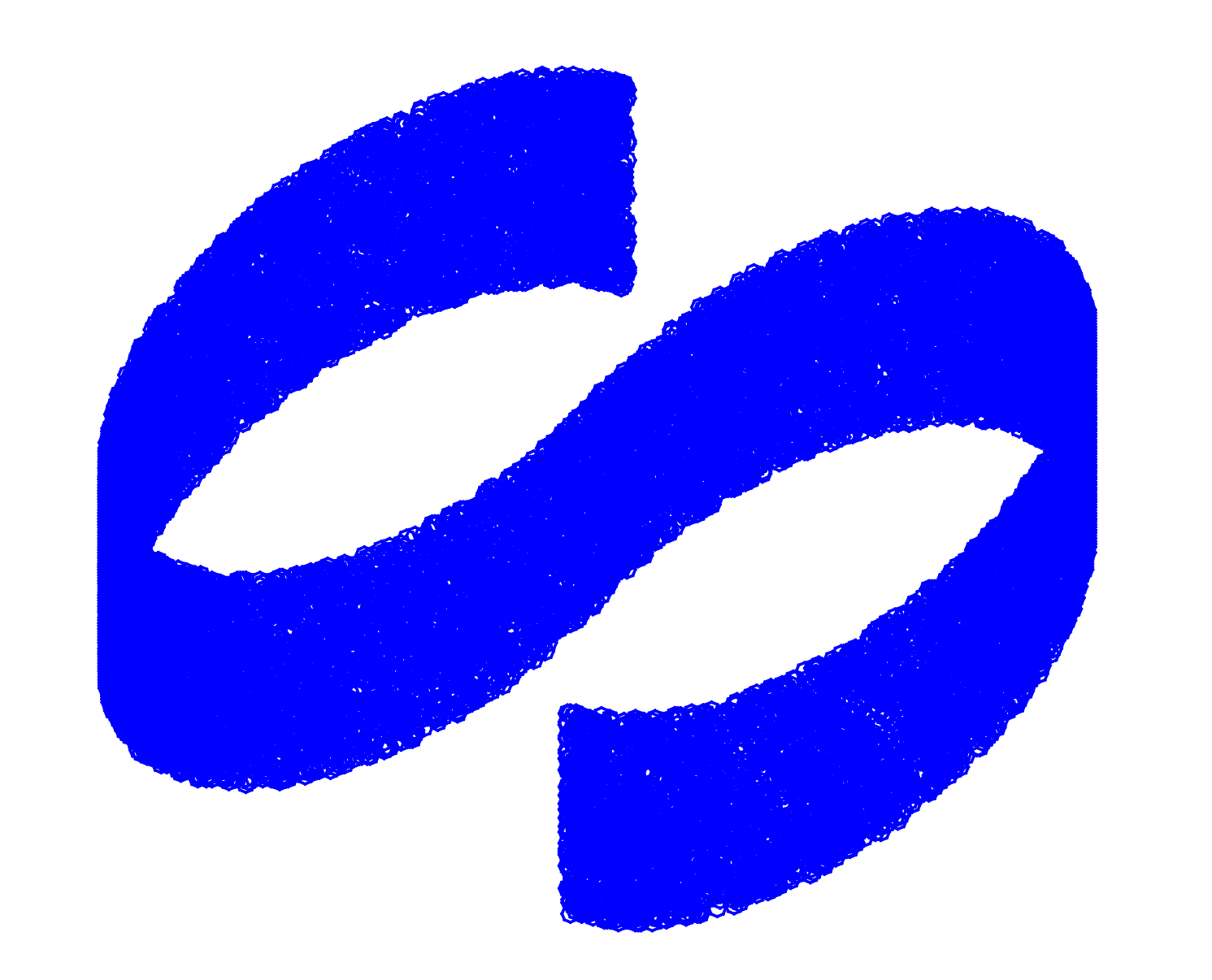}
  }
  \hspace{-0.3cm}
  \subfigure[scale 6]{
 \includegraphics[width=4.8cm,height=4cm]{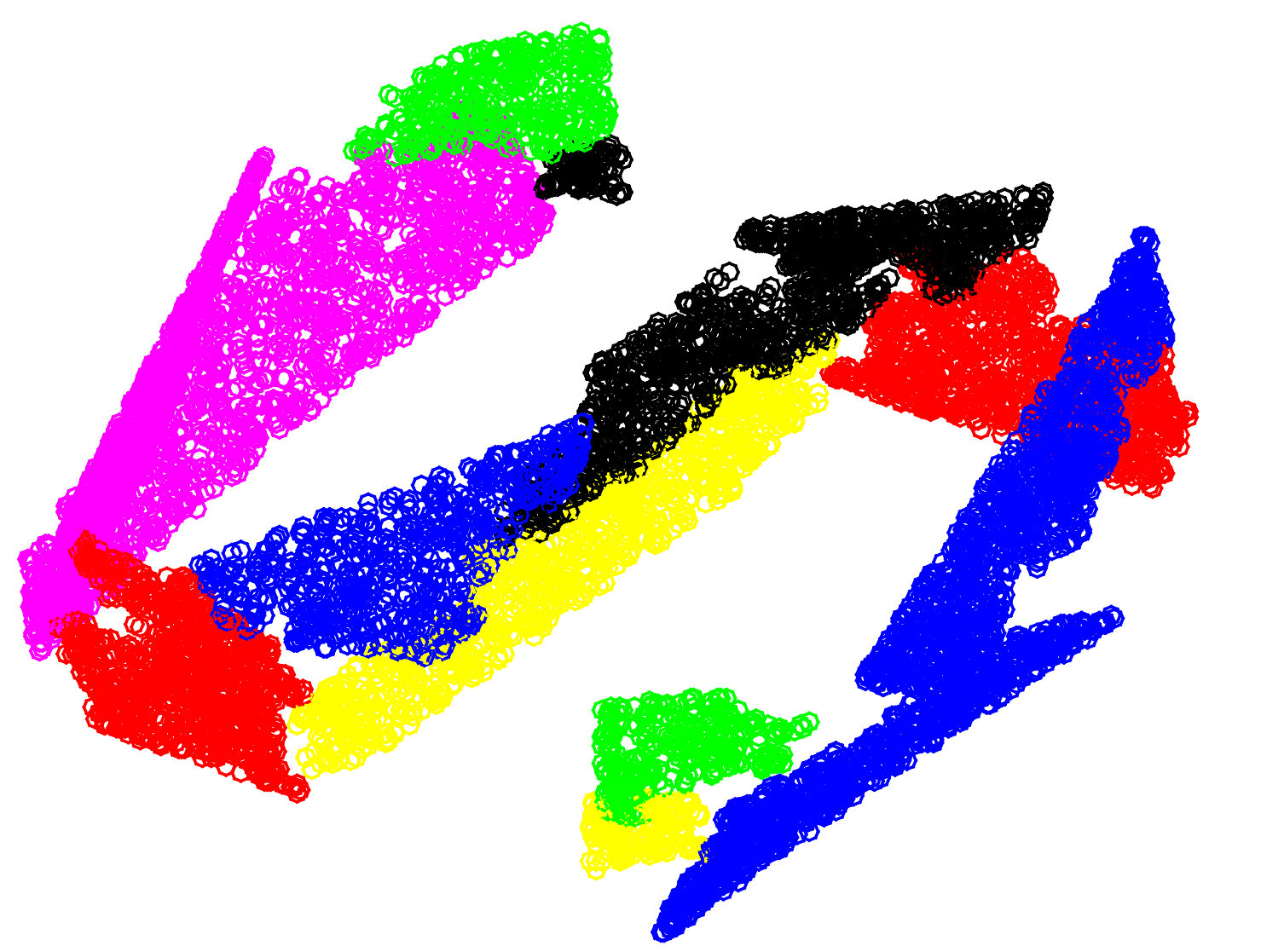}
  }
  \hspace{-0.3cm}
  \subfigure[scale 10]{
 \includegraphics[width=4.8cm,height=4cm]{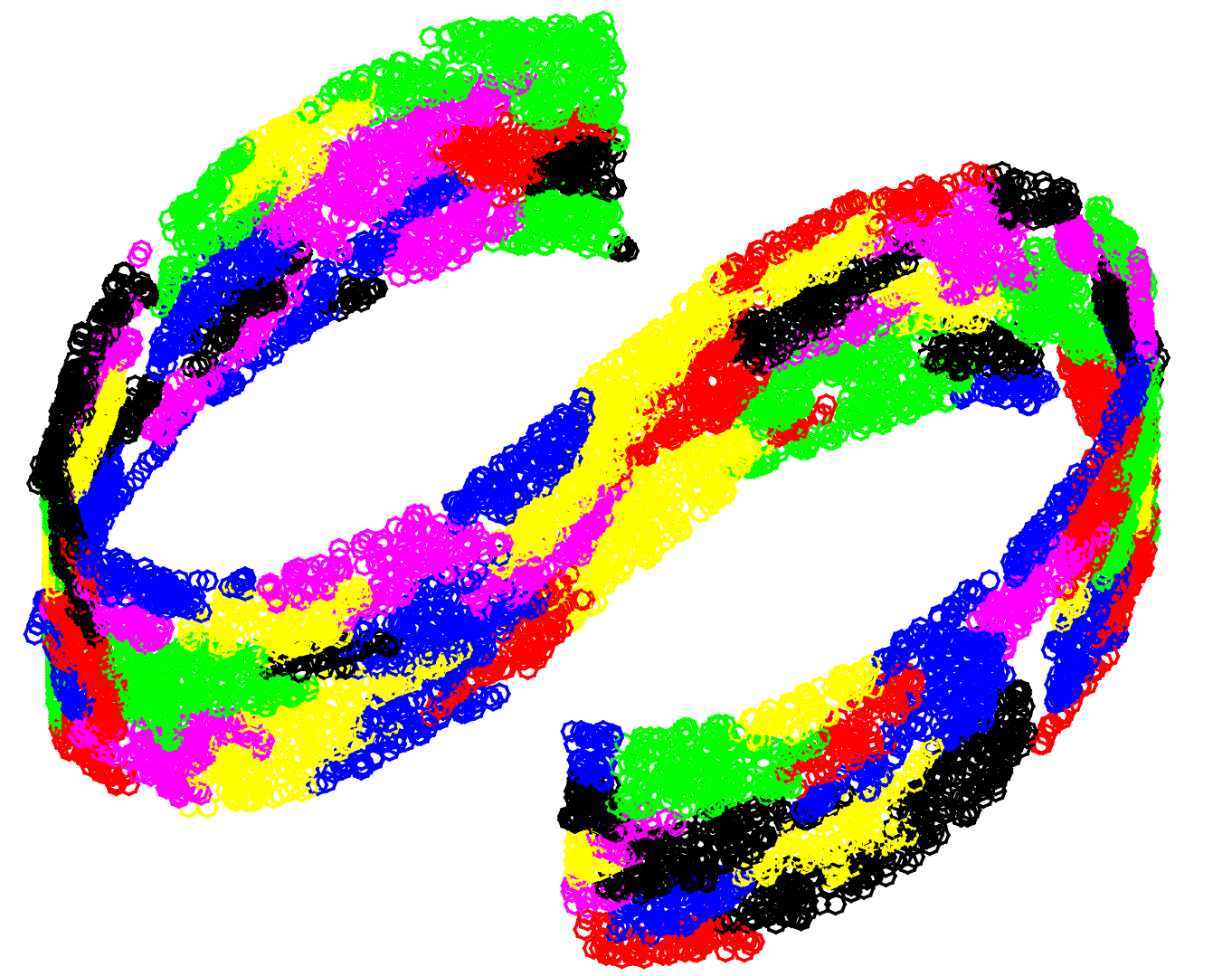}
  }
    \caption{(a) S manifold; (b,c) Linear approximations at scale $6,10$.
    }
  \label{FigDemo1}
\end{figure}
In practice $\mathcal{M}$ is unknown, and the construction above is carried over on training data, and its result is random with the training samples. Naturally we are interested in the performance of the construction on new samples. This is analyzed in a setting of ``smooth manifold+noise'' in \citet{MMS:NoisyDictionaryLearning}.
When the regularity (such as smoothness or curvature) of $\calM$ varies at different locations and scales, linear approximations  on fixed uniform partitions are not optimal. Inspired by adaptive methods in classical multi-resolution analysis \citep[see][]{CohenDGO,DeVore:UniversalAlgorithmsLearningTheoryI,DeVore:UniversalAlgorithmsLearningTheoryII}, we propose an adaptive version of GMRA which learns adaptive and near-optimal approximations.

We will start with the multiscale tree decomposition in Section \ref{sectree} and present GMRA and adaptive GMRA in Section \ref{secLMR:MGM1} and \ref{secALMR:MGM1} respectively. 

\subsection{Multiscale partitions and trees}
\label{sectree}

A multiscale set of partitions of $\calM$ with respect to probability measure $\rho$ is a family of sets $\{C_{j,k}\}_{k\in\calK_j,j\in \ZZ}$, called dyadic cells, satisfying Assumptions (A1-A5) below for all integers $j \ge \jmin$:

\begin{description}
\item[(A1)] for any $k\in \calK_{j}$ and $k'\in \calK_{j+1}$, either $C_{j+1,k'}\subseteq C_{j,k}$ or $\rho(C_{j+1,k'}\cap C_{j,k}) = 0$. We denote the children of $\Cjk$ by $\Child(C_{j,k}) = \{C_{j+1,k'}: C_{j+1,k'} \subseteq C_{j,k}\}$. We assume that $\amin\le\#\Child(C_{j,k})\le\amax$.
Also for every $\Cjk$, there exists a unique $k' \in \calK_{j-1}$ such that $C_{j,k} \subseteq C_{j-1,k'}$. 
We call $C_{j-1,k'}$ the parent of $C_{j,k}$.

\item[(A2)] $\rho(\calM \setminus \cup_{k\in\calK_j}C_{j,k}) = 0$, i.e. $\Lam_j :=\{C_{j,k}\}_{k\in\calK_j}$ is a cover for $\calM$.

\item[(A3)] $\exists\theta_1>0\,:\,\#\Lam_j  \le 2^{jd}/\theta_1$.

\item[(A4)] $\exists\theta_2>0$ such that, if $x$ is drawn from $\rho|_{\Cjk}$, then a.s. $\|x-c_{j,k}\| \le \theta_2 2^{-j}.$

\item[(A5)] Let $\lambda_1^{j,k} \ge \lambda_2^{j,k} \ge \ldots \ge\lambda_D^{j,k}$ be the eigenvalues of the covariance matrix $\Sjk$ of $\rho|_{\Cjk}$, defined in Table \ref{TableGMRA}. Then:
\begin{itemize}
\item[(i)] $\exists\theta_3>0$ such that $\forall j\ge \jmin$ and $k \in \calK_j$, $\lambda_d^{j,k} \ge \theta_3 {2^{-2j}}/{d}$,

\item[(ii)] $\exists \theta_4 \in (0,1)$ such that $\lambda_{d+1}^{j,k} \le \theta_4 \lambda_{d}^{j,k}$. 

\end{itemize}
\end{description}
(A1) implies that the $\{\Cjk\}_{k\in\calK_j,j\ge\jmin}$ are associated with a tree structure, and with some abuse of notation we call the above tree decompositions.
(A1)-(A5) are natural assumptions, easily satisfied by natural multiscale decompositions when $\calM$ is a $d$-dimensional manifold isometrically embedded in $\RR^D$: see the work \citep[][]{MMS:NoisyDictionaryLearning} for a detailed discussion.
(A2) guarantees that the cells at scale $j$ form a partition of $\calM$; (A3) says that there are at most $2^{jd}/\theta_1$ dyadic cells at scale $j$. (A4) ensures $\diam(\Cjk) \lesssim 2^{-j}$. When $\calM$ is a $d$-dimensional manifold, (A5)(i) is the condition that the best rank $d$ approximation to $\Sjk$ is close to the covariance matrix of a $d$-dimensional Euclidean ball, while (A5)(ii) imposes that the $(d+1)$-th eigenvalue is smaller that the $d$-th eigenvalue, i.e. the set has significantly larger variances in $d$ directions than in all the remaining ones.

We will construct such $\{\Cjk\}_{k\in\calK_j,j\ge\jmin}$ in Section \ref{secCoverTree}.
In our construction (A1-A4) is satisfied when $\rho$ a doubling probability measure\footnote{$\rho$ is doubling if there exists $C_1>0$ such that  $C_1^{-1} r^d \le \rho(\calM \cap B_r(x)) \le C_1 r^d$ for any $x \in \calM$ and $r>0$. $C_1$ is called the doubling constant of $\rho$.} \citep[see][]{MR1096400,DengHan}. 
If we further assume that $\calM$ is a $d$-dimensional $\calC^{s},s \in (1,2]$ closed Riemannian manifold isometrically embedded in $\RR^D$, then (A5) is satisfied as well (See Proposition \ref{propcovertree2}). 

\commentout{
\textcolor{blue}{OLD:}
More generally, $\rho$ may be supported on a tube of the manifold.
For example, let $\calM$ be a $d$-dimensional $\calC^{1+\alpha}$ Riemannian manifold with reach $\tau$ \footnote{The reach of $\calM$ is the largest number $t$ to have the property that any point at a distance $r<t$ from $\calM$ has a unique nearest point in $\calM$.} isometrically embedded in $\RR^D$ and 
$\calM_\sigma$ \footnote{$\calM_\sigma = \{y\in \RR^D : \inf_{x\in \calM} \|x-y\| < \sigma\}$.}
be the $\sigma$ tube of $\calM$. 
Assume that $\rho$ is the volume measure on $\calM_\sigma$ (or $\rho$ and the volume measure are absolutely continuous with respect to each other)
and
 $\Cjk$ uniformly contains a ball of radius $\asymp 2^{-j}$ for all $k \in \calK_j, j \ge \jmin$ (i.e., there exists $c(\rho)>0$ such that on every $\Cjk$, there exists a point $z_{j,k}$ such that $B(z_{j,k},c2^{-j}) \cap \calM_\sigma \subset \Cjk$).
Then as long as $2^{-j} > 8 \sigma$ and $3\cdot 2^{-j}+\sigma < \tau/8$, 
we have: (i) $\theta_1 \ge \frac{\Vol(B_d(0,1))}{2^{4d}\Vol(\calM)}(\frac{\tau-\sigma}{\tau+\sigma})^d$  where $B_d(0,1)$ is the Euclidean ball in $\RR^d$ of radius $1$ centered at the origin \citep[see][Theorem 7]{MMS:NoisyDictionaryLearning};
(ii) Our tree construction by the cover tree algorithm guarantees that with high probability \textcolor{blue}{$\theta_2 \le 4$} for $j \lesssim (1/d) \log n$;
(iii) $\theta_3 \ge (1/2)^{4d+8}(1+\sigma/\tau)^{-d}$  \citep[see][Theorem 7]{MMS:NoisyDictionaryLearning} and $\sum_{l=d+1}^D \lambda_l^{j,k} \lesssim  \sigma^2 + 2^{-2(1+\alpha)j}$ \citep[see][Theorem 2.3]{CM:MGM2}.
}

Some notation: a master tree $\calT$ is associated with $\{\Cjk\}_{k\in\calK_j,j\ge\jmin}$ (using property (A1)), constructed on $\calM$; since $\Cjk$'s at scale $j$ have similar diameters, $\Lam_j := \{ \Cjk\}_{k \in \calK_j}$ is called a uniform partition at scale $j$. 
It may happen that at the coarsest scales conditions (A3)-(A5) are satisfied but with very poor constants $\bf\theta$: it will be clear that in all that follows we may discard a few coarse scales, and only work at scales that are fine enough and for which (A3)-(A5) truly capture the local geometry of $\mathcal{M}$.

A proper subtree $\tcalT$ of $\calT$ is a collection of nodes of $\calT$ with the properties: (i) the root node is in $\tcalT$, (ii) if $C_{j,k}$ is in $\tcalT$ then the parent of $\Cjk$ is also in $\tcalT$. Any finite proper subtree $\tcalT$ is associated with a unique partition $\Lam = \Lam(\tcalT)$ which consists of its outer leaves, by which we mean those $C_{j,k} \in \calT$ such that $C_{j,k} \notin \tcalT$ but its parent is in $\tcalT$.

\begin{center}
\begin{table}[t]
\renewcommand{\arraystretch}{2}
\centering
\resizebox{0.8\columnwidth}{!}{
\begin{tabular}{ |c || c  |c |}
\hline
   & GMRA&  Empirical GMRA 
   \\
   \hline
   \hline
\parbox{3cm}{\centering Linear projection \\ on $\Cjk$ }
  & 
  $\calP_{j,k}(x) := c_{j,k} +\proj_{V_{j,k}} (x-c_{j,k})$
  &   $\hcalP_{j,k}(x) := \hcjk +\proj_{\hVjk} (x-\hcjk)$

   \\
   \hline
  \parbox{3cm}{\centering Linear projection \\ at scale $j$} & $\calP_j := \sum_{k\in \calK_j} \calP_{j,k} \mathbf{1}_{{j,k}}$  &
  $\hcalP_j := \sum_{k \in \calK_j} \hcalP_{j,k} \mathbf{1}_{{j,k}}$ 
  \\
 \hline
  Measure & $\rho(\Cjk)$  & $\hrho(\Cjk) = \hnjk /n$\\
      \hline
  Center  & $\cjk:= \EE_{j,k} x $
  & 
  $ \hc_{j,k} := \frac{1}{\widehat{n}_{j,k}}\textstyle\sum_{x_i \in \Cjk} x_i $
  \\
       \hline
    \parbox{3cm}{\centering Principal \\ subspaces}
    &\parbox{7cm}{\centering $\Vjk $ minimizes \\  $ \EE_{j,k} \|x-c_{j,k}-\proj_V(x-c_{j,k})\|^2$ \\ among $d$-dim subspaces
    }
    & \parbox{7cm}{\centering
  $\hV_{j,k} $ minimizes
  \\ { $\frac{1}{\hn_{j,k}} 
 \textstyle \sum_{x_i \in \Cjk} \|x-\hc_{j,k} -\proj_V (x-\hc_{j,k})\|^2$}
  \\
 among $d$-dim subspaces}
      \\
       \hline
  \parbox{3cm}{\centering Covariance \\ matrix } & $\Sigma_{j,k} := \EE_{j,k}  (x-c_{j,k})(x-c_{j,k})^T $
& 
$\hS_{j,k} := \frac{1}{\hnjk} \sum_{x_i \in \Cjk} (x_i-\hc_{j,k})(x_i-\hc_{j,k})^T $
  \\
      [5pt]
         \hline
$\langle \calP X , \calQ X\rangle$	&	$\int_\calM \langle \calP x , \calQ x \rangle d\rho$	& 	${1}/{n}\sum_{x_i \in \calX_n} \langle \calP x_i , \calQ x_i \rangle$ \\ [5pt] \hline
$\|\calP X\|$ & $\left(\int_\calM \|\calP x\|^2 d\rho \right)^{\frac 1 2}$ & $\left({1}/{n}\sum_{x_i\in\calX_n} \|\calP x_i\|^2  \right)^{\frac 1 2}$ \\ [5pt] \hline
\end{tabular}
}
\caption{$\chijk$ is the indicator function on $\Cjk$ (i.e.,$\chijk(x)=1$ if $x\in\Cjk$ and $0$ otherwise).
Here $\EE_{j,k}$ stands for expectation with respect to the conditional distribution $d\rho|_{\Cjk} $.  
The measure of $\Cjk$ is $\rho(\Cjk)$ and the empirical measure is $\hrho(\Cjk) =\hnjk/n$ where $\hnjk$ is the number of points in $\Cjk$. 
$V_{j,k}$ and $\hVjk$ are the eigen-spaces associated with the largest $d$ eigenvalues of $\Sjk$ and $\hSjk$ respectively.
Here $\calP,\calQ$: $\calM\to \RR^D$ are two operators. 
}
\label{TableGMRA}
\end{table}
\end{center}
\subsection{Empirical GMRA}
In practice the master tree $\calT$ is not given, nor can be constructed since $\calM$ is not known: we will construct one on samples by running a variation of the cover tree algorithm \citep[see][]{LangfordICML06-CoverTree}. 
From now on we denote the training data by $\calX_{2n}$. We randomly split the data into two disjoint groups such that $\calX_{2n} = \calX'_n \cup \calX_n$ where $\calX'_n =\{x'_1,\ldots,x'_{n}\}$ and $\calX_n =\{x_1,\ldots,x_n\}$, apply the cover tree algorithm \citep[see][]{LangfordICML06-CoverTree} on $\calX_n'$ to construct a tree satisfying  (A1-A5) (see section \ref{secCoverTree}).  After the tree is constructed, we assign points in the second half of data $\calX_n$, to the appropriate cells. 
In this way we obtain a family of multiscale partitions for the points in $\calX_n$, which we truncate to the largest subtree whose leaves contain at least $d$ points in $\calX_n$. This subtree is called the {\em{data master tree}}, denoted by $\calT^n$. 
We then use $\calX_n$ to perform local PCA to obtain the empirical mean $\hcjk$ and the empirical $d$-dimensional principal subspace $\hVjk$ on each $\Cjk$. Define the empirical projection $\hcalPjk(x) := \hcjk + \proj_{\hVjk}(x-\hcjk)$ for $x\in C_{j,k}$.
Table \ref{TableGMRA} summarizes the GMRA objects and their empirical counterparts.

\subsection{Geometric Multi-Resolution Analysis: uniform partitions}
\label{secLMR:MGM1}

GMRA with respect to the distribution $\rho$ associated with the multiscale tree $\calT$ consists a collection of piecewise affine projectors $\{\calP_j: \RR^D \rightarrow \RR^D\}_{j \ge \jmin}$ on the multiscale partitions $\{\Lam_j := \{\Cjk\}_{k \in \calK_j}\}_{j \ge \jmin}$. 
At scale $j$, $\calM$ is approximated by the piecewise linear sets $\{\calPjk(\Cjk)\}_{k \in \calK_j}$. 
In order to understand the empirical approximation error of $\calM$ by the piecewise linear sets $\{\hcalPjk(\Cjk)\}_{k \in \calK_j}$ at scale $j$, we split the error into a squared bias term and a variance term:
\beq
  \EE\|X-\hcalP_j X\|^2
= 
\underbrace{\|X-\calP_j X\|^2}_{\text{bias}^2 } + 
\underbrace{\EE\|\calP_j X-\hcalP_j X\|^2}_{\text{variance}}.
\label{bv1}
\eeq
$ \EE\|X-\hcalP_j X\|^2$ is also called the Mean Square Error (MSE) of GMRA.
To bound the bias term, we need regularity assumptions on the $\rho$, and for the variance term we prove concentration bounds of the relevant quantities around their expected values.

For a fixed distribution $\rho$, the approximation error of $\calM$ at scale $j$, measured by
$\|X-\calP_j X\|:=\|(\bbI-\calP_j)X\|$, decays at a rate dependent on the regularity of $\calM$ in the $\rho$-measure  \citep[see][]{CM:MGM2}. We quantify the regularity of $\rho$ as follows:

\begin{definition}[Model class $\AS$]
\label{defAs}
A probability measure $\rho$ supported on $\calM$ is in $\AS$ if
\beq
\label{eqAs}
|\rho|_{\AS}=\sup_{\calT}\ \inf\{A_0\,:\,\|X-\calP_j X\| \le A_0 2^{-js}, \forall\, j\ge \jmin\}<\infty\,,
\eeq
where $\calT$ varies over the set, assumed non-empty, of multiscale tree decompositions satisfying Assumption (A1-A5).
\end{definition}

We capture the case where the $L^2$ approximation error is roughly the same on every cell with the following definition:
\begin{definition}[Model class $\AS^\infty$]
\label{defAsInf}
A probability measure $\rho$ supported on $\calM$ is in $\AS^\infty$ if
\beq
\label{eqAsInf}
|\rho|_{\AS^{\infty}}=\sup_{\calT}\ \inf\{A_0\,:\,\|(X-\calPjk  X)\chijk\| \le A_0 2^{-js} \sqrt{\rho(\Cjk)}, \ \forall\, k \in \calK_j, j \ge \jmin\}<\infty\
\eeq
where $\calT$ varies over the set, assumed non-empty, of multiscale tree decompositions satisfying Assumption (A1-A5).

\end{definition}

Clearly $\AS^\infty \subset \AS$.
Also, since $\diam(\Cjk) \le 2\theta_2 2^{-j}$, necessarily
$\|(\bbI - \calPjk)\chijk X\| \le \theta_2 2^{-j} \sqrt{\rho(\Cjk)}, \ \forall\, k \in \calK_j, j \ge \jmin$, and therefore $\rho \in \calA_1^\infty$ in any case. 
Finally, these classes contain suitable measures supported on manifolds:

\begin{proposition}
\label{propsmanifold}

Let $\mathcal{M}$ be a closed manifold of class $\mathcal{C}^{s}$, $s\in (1,2]$ isometrically embedded in $\RR^D$, and $\rho$ be a doubing probability measure on $\mathcal{M}$ with the doubling constant $C_1$. Then our construction of $\{\Cjk\}_{k\in\calK_j,j\ge\jmin}$ in Section \ref{secCoverTree} satisfies (A1-A5), and $\rho \in \calA_s^\infty$.
%
%
%



\end{proposition}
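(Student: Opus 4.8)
The plan is to treat the structural assumptions (A1)--(A4) and the covariance assumption (A5) separately, and to extract the membership $\rho\in\AS^\infty$ as a byproduct of the estimates used for (A5). Assumptions (A1)--(A4) concern only the combinatorics and the diameters of the cells, and they follow from the cover tree construction of Section \ref{secCoverTree} together with the doubling property of $\rho$: doubling forces $\rho(\Cjk)\asymp 2^{-jd}$ for the constructed cells, which gives the counting bound (A3), while the construction itself guarantees $\diam(\Cjk)\lesssim 2^{-j}$ (A4) and the nesting/cover properties (A1)--(A2). The genuinely geometric content, and the only place the $\calC^s$ regularity enters, is (A5) and the approximation bound defining $\AS^\infty$.

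The key geometric lemma I would establish first is a \emph{local graph estimate}. Fixing $j\ge\jmin$ and a cell $\Cjk$, let $p\in\Cjk$, let $T=T_p\calM$ be the tangent plane, and write $\calM$ locally as a graph $y=f(x)$ over $T$ with $f(0)=0$ and $Df(0)=0$. Since $\calC^s=\calC^{1,s-1}$ for $s\in(1,2]$, the derivative $Df$ is $(s-1)$-H\"older, so $\|Df(x)\|\le L\|x\|^{s-1}$ and hence $\|f(x)\|\le \tfrac{L}{s}\|x\|^{s}$. By compactness of $\calM$ the scale below which this graph representation is valid is uniform, so after discarding finitely many coarse scales (absorbed into $\jmin$) every point of a cell of radius $\lesssim 2^{-j}$ lies within $C2^{-js}$ of the affine tangent plane $p+T$. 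Feeding this into the optimality of PCA---the projector $\calPjk$ minimizes the conditional $L^2$ error among all $d$-dimensional affine projectors, so its error is no larger than that of $p+T$---yields
\[
\|(X-\calPjk X)\chijk\|^2=\int_{\Cjk}\|x-\calPjk x\|^2\,d\rho \le C^2 2^{-2js}\rho(\Cjk),
\]
which is exactly $\rho\in\AS^\infty$ with $|\rho|_{\AS^\infty}\le C$. The same inequality controls the tail of the spectrum of $\Sjk$: since $\sum_{l>d}\lambda_l^{j,k}=\EE_{j,k}\|x-\calPjk x\|^2$, we obtain $\lambda_{d+1}^{j,k}\le \sum_{l>d}\lambda_l^{j,k}\le C^2 2^{-2js}$.

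It then remains to prove the lower bound (A5)(i) on $\lambda_d^{j,k}$, which I expect to be the main obstacle, since it requires quantifying that $\rho|_{\Cjk}$ is non-degenerately spread in all $d$ tangent directions, rather than merely controlling its deviation from $T$. Here I would apply Courant--Fischer with the test subspace $V=T$ to get $\lambda_d^{j,k}\ge \min_{v\in T,\,\|v\|=1}\EE_{j,k}\langle x-\cjk,v\rangle^2$, and then lower-bound this tangential variance using doubling: the cover tree construction places inside $\Cjk$ an intrinsic ball of radius $\asymp 2^{-j}$ carrying a definite fraction of the mass (by the doubling lower bound $\rho(B)\gtrsim 2^{-jd}\asymp\rho(\Cjk)$), and since the projection onto $T$ is bi-Lipschitz with constant near $1$ on such a small ball, the projected conditional measure spreads over a Euclidean ball of radius $\asymp 2^{-j}$ in $\RR^d$. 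Comparing the mass on two sub-balls separated by $\asymp 2^{-j}$ along any direction $v$ (each of mass $\gtrsim\rho(\Cjk)$ by doubling) forces $\EE_{j,k}\langle x-\cjk,v\rangle^2\gtrsim 2^{-2j}$ uniformly in $v$, giving $\lambda_d^{j,k}\ge\theta_3 2^{-2j}/d$.

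Finally, (A5)(ii) is immediate from the two spectral bounds: $\lambda_{d+1}^{j,k}/\lambda_d^{j,k}\lesssim 2^{-2j(s-1)}$, which tends to $0$ as $j\to\infty$ precisely because $s>1$, so enlarging $\jmin$ if necessary makes this ratio at most any prescribed $\theta_4\in(0,1)$. The strictness $s>1$ is exactly what this last step needs, matching the hypothesis $s\in(1,2]$.
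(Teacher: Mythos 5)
Your proposal is correct and follows essentially the same route as the paper's proof in Appendix \ref{s:AppRegSpaces}: (A1)--(A4) from the cover-tree construction plus doubling volume arguments, the spectral tail bound and the membership $\rho\in\AS^\infty$ from a Taylor/graph estimate over a tangent plane combined with the $L^2$-optimality of the PCA projector, and (A5)(ii) from the ratio $\lambda_{d+1}^{j,k}/\lambda_d^{j,k}\lesssim 2^{-2j(s-1)}$ using $s>1$. The only cosmetic difference is in (A5)(i), where the paper transfers a known lower bound for covariances of $\rho$ restricted to balls to the cells via a covariance-comparison lemma for nested sets, whereas you sketch a direct Courant--Fischer/mass-separation argument on the inscribed ball; both rest on the same inclusion $B_{2^{-j}/4}(a_{j,k})\subseteq\Cjk\subseteq B_{3\cdot 2^{-j}}(a_{j,k})$ and the doubling property.
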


Proposition \ref{propsmanifold} is proved in Appendix \ref{s:AppRegSpaces}. 


\begin{example}
We consider the $d$-dim S manifold whose $x_1$ and $x_2$ coordinates are on the $S$ curve and $x_i \in [0,1], i=3,\ldots,d+1$. As stated above, the volume measure on the S manifold is in $\mathcal{A}_2^\infty$. 
Numerically one can identify $s$ from data sampled from $\rho\in\AS$ as the slope of the line approximating $\log_{10} \|X-\calP_j X\|$ as a function of $\log_{10}r_j$ where $r_j$ is the average diameter of $\Cjk$'s at scale $j$.
Our numerical experiments in Figure \ref{FigASBS} (b) give rise to $s \approx 2.0239, 2.1372, 2.173$ when $d=3,4,5$ respectively.

\end{example}

\begin{example}

For comparison we consider the $d$-dimensional Z manifold whose $x_1$ and $x_2$ coordinates are on the $Z$ curve and $x_i \in [0,1], i=3,\ldots,d+1$. Volume measure on the Z manifold is in $\mathcal{A}_{1.5}$ (see appendix \ref{AppZ}).
Our numerical experiments in Figure \ref{FigASBS} (c) give rise to $s \approx 1.5367, 1.6595, 1.6204$ when $d=3,4,5$ respectively.
\end{example}

\commentout{
 In practice, $\rho$ is unknown and we only have access to the data $\calX_n$ based on which we build the empirical GMRA:
\beq
\label{eqGMRAe}
\widehat{\calP}_j := \sum_{k\in \calK_j}\hcalP_{j,k} \mathbf{1}_{j,k}
\eeq
where $\calP_{j,k}$ is the empirical affine projection
\beq
\label{eqGMRAe1}
\hcalP_{j,k}(x) := \hc_{j,k}+\proj_{\hV_{j,k}}(x-\hc_{j,k})
\eeq
with $\hc_{j,k}$ and $\hV_{j,k}$ given by
\begin{align}
\hc_{j,k} & := \frac{1}{\widehat{n}_{j,k}}\sum_{i=1}^n x_i \mathbf{1}_{j,k}(x_i),
\ \widehat{n}_{j,k}=\sum_{i=1}^n \mathbf{1}_{j,k}(x_i),
\label{eqGMRAe2}
\\
\hV_{j,k} & := \underset{\dim V = d}{\argmin} \ \frac{1}{\hn_{j,k}} \sum_{i=1}^n \|x-\hc_{j,k} -\proj_V (x-\hc_{j,k})\|^2\mathbf{1}_{j,k}(x_i).
\label{eqGMRAe3}
\end{align}
In other words, $\hc_{j,k}$ is the empirical conditional mean on $\Cjk$ and $\hV_{j,k}$ is the subspace spanned by eigenvectors corresponding to the $d$ largest eigenvalues of the empirical conditional covariance matrix 
\beq
\hS_{j,k} := \frac{1}{\hnjk} \sum_{i=1}^{n} (x_i-\hc_{j,k})(x_i-\hc_{j,k})^T \mathbf{1}_{j,k}(x_i).
\label{eqGMRAe4}
\eeq
}

The squared bias in \eqref{bv1} satisfies $\|X-\calP_j X\|^2 \le  |\rho|_{\AS}^22^{-2js}$ whenever $\rho \in \AS$. In Proposition \ref{lemma0} we will show that the variance term is estimated in terms of the sample size $n$ and the scale $j$:
$$
\EE \|\calP_j X - \hcalP_j X\|^2 
\le
\frac{d^2 \#\Lam_j \log[\alpha d  \#\Lam_j]}{\beta 2^{2j}n}{=\mathcal{O}\left(\frac{j 2^{j(d-2)}}{n}\right),}
$$
where $\alpha, \beta$ are constants depending on $\theta_2, \theta_3$. In the case $d=1$ both the squared bias and the variance decrease as $j$ increases, so choosing the finest scale of the data tree $\calTn$ yields the best rate of convergence. When $d\ge 2$, the squared bias decreases but the variance increases as $j$ gets large as shown Figure \ref{figBV}, as a manifestation of the bias-variance tradeoff in classical statistical and machine learning, except that it arises here in a geometric setting. By choosing a proper scale $j$ to balance these two terms, we obtain the following rate of convergence for empirical GMRA:

\begin{figure}[t]
  \centering
  \subfigure[Bias and variance tradeoff]{
    \includegraphics[width=0.4\textwidth]{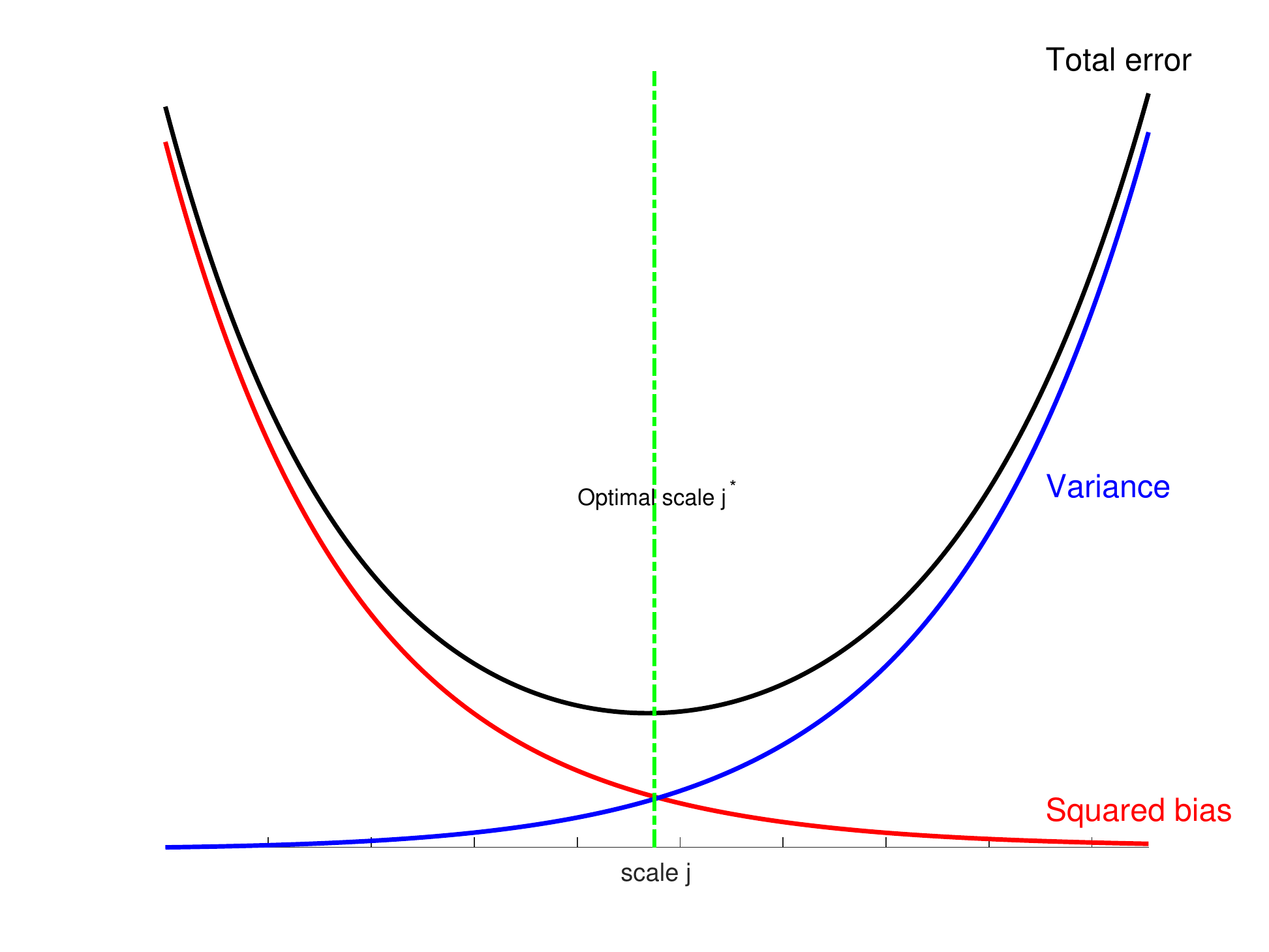}
    }
    \subfigure[Error versus the partition size]{
    \includegraphics[width=0.4\textwidth]{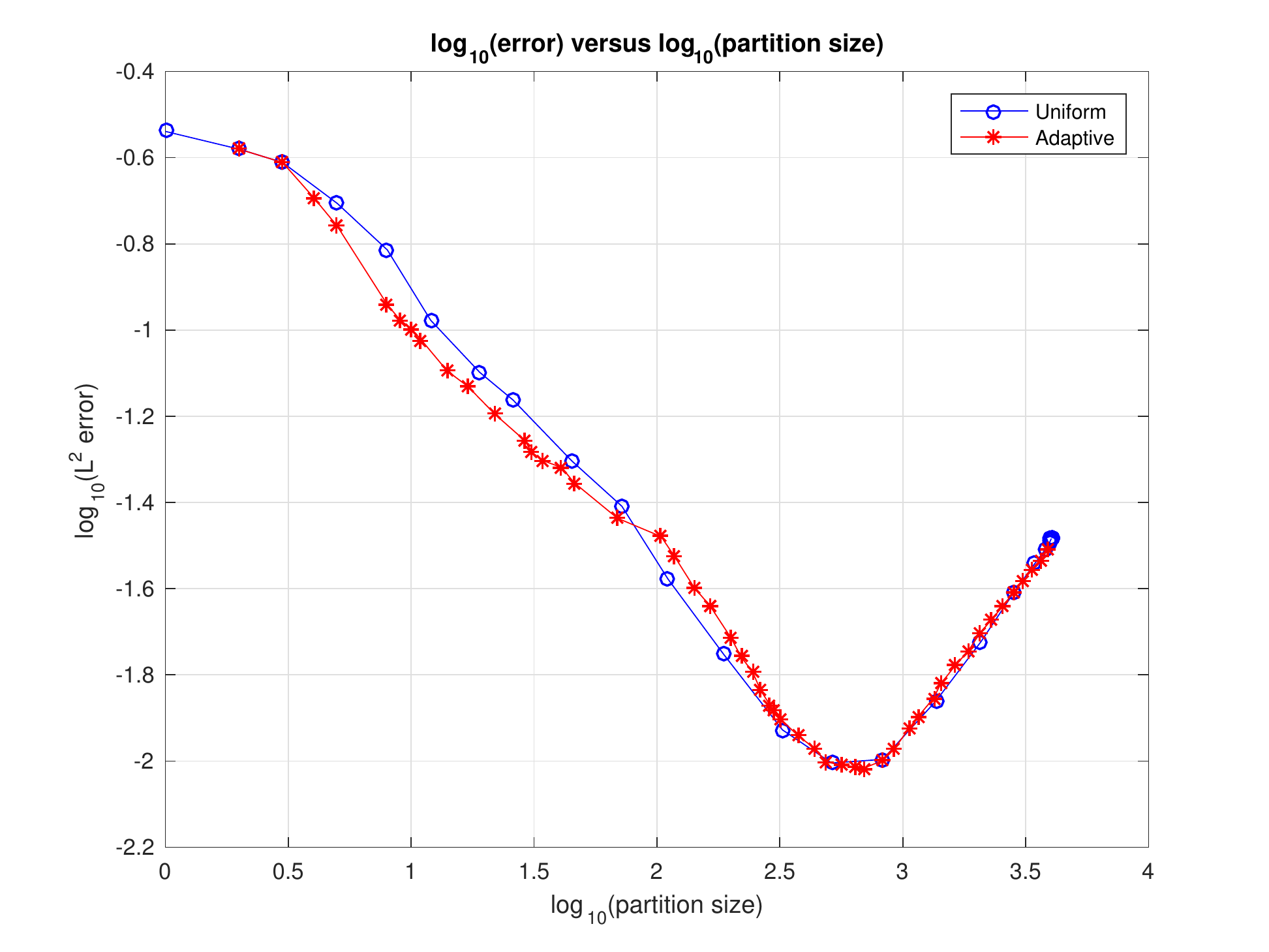}
    }
    \caption{
    %
    (a) Plot of the bias and variance estimates in Eq. \eqref{bv1}, with $s=2,d=5,n=100$. (b) shows the approximation error on test data versus the partition size in GMRA and adaptive GMRA for the 3-dim S manifold.
    When the partition size is between $1$ and $10^{2.8}$, the bias dominates the error so the error decreases; after that, the variance dominates the error, which becomes increasing.
    }
    \label{figBV}
\end{figure}

\begin{theorem}
\label{thm2}
Suppose $\rho \in \calA_s$ for $s\ge1$. Let $\nu>0$ be arbitrary and $\mu >0$.
Let $j^*$ be chosen such that
\begin{equation}
2^{-j^*} =
\begin{cases}
	\mu\lognn  &\text{   for } d=1 \\
         \mu\left( \frac{\log n}{n}\right)^{\frac{1}{2s+d-2}} , &\text{   for } d\ge 2
  \label{jd2}
\end{cases}  
\end{equation} 
then there exists $C_1 := C_1(\theta_1,\theta_2,\theta_3,\theta_4,d,\nu,\mu)$ and $C_2 :=C_2(\theta_1,\theta_2,\theta_3,\theta_4, d,\mu)$ such that:
\begin{align}
\PP\left\{\|X - \hcalP_{j^*} X \| 
\ge (|\rho|_{\calA_s} \mu^s + C_1 )\frac{\log n}{n} 
\right\}
 \le C_2  n^{-\nu}, \quad & \text{for } d=1,
\label{thm2eq01}
\\
\PP\left\{\|X - \hcalP_{j^*} X \| 
\ge (|\rho|_{\calA_s}\mu^s + C_1 )\left (\frac{\log n}{n} \right)^{\frac{s}{2s+d-2}}
\right\}
 \le C_2  n^{-\nu}, \quad
& \text{for } d\ge 2\,.
\label{thm2eq1}
\end{align}

\end{theorem}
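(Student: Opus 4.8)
The plan is to combine a deterministic bias estimate with a high-probability concentration bound for the variance, and to balance the two through the prescribed choice of $j^*$. Starting from the triangle inequality in $L^2(\rho)$,
\[
\|X - \hcalP_{j^*} X\| \le \|X - \calP_{j^*} X\| + \|\calP_{j^*} X - \hcalP_{j^*} X\|,
\]
I would treat the first (bias) term deterministically and the second (variance) term probabilistically. Since $\rho \in \AS$, Definition \ref{defAs} gives $\|X - \calP_{j^*} X\| \le |\rho|_{\AS}\, 2^{-j^* s}$; substituting $2^{-j^*} = \mu(\lognn)^{\frac{1}{2s+d-2}}$ for $d\ge 2$ yields exactly $|\rho|_{\AS}\mu^s(\lognn)^{\frac{s}{2s+d-2}}$, the leading term of \eqref{thm2eq1}, while for $d=1$ the choice $2^{-j^*}=\mu\lognn$ together with $s\ge 1$ produces the bound $|\rho|_{\AS}\mu^s\lognn$ of \eqref{thm2eq01}.

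The heart of the argument is a high-probability bound establishing $\|\calP_{j^*} X - \hcalP_{j^*} X\| \le C_1 \times(\text{rate})$ off an event of probability at most $C_2 n^{-\nu}$. I would decompose the variance over the partition, $\|\calP_j X - \hcalP_j X\|^2 = \sum_{k\in\calK_j}\int_{\Cjk}\|\calP_{j,k}x - \hcalP_{j,k}x\|^2\,d\rho$, and on each cell control $\|\calP_{j,k} - \hcalP_{j,k}\|$ through the perturbations $\|\hcjk - \cjk\|$ and $\|\proj_{\hVjk}-\proj_{\Vjk}\|$, using (A4) to bound $\|x-\cjk\|\le\theta_2 2^{-j}$. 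Three concentration ingredients are required per cell: (i) a Chernoff bound ensuring $\hnjk\gtrsim n\rho(\Cjk)$ so each cell retains enough samples, where $\rho(\Cjk)\asymp 2^{-jd}$; (ii) a vector Bernstein inequality giving $\|\hcjk-\cjk\| \lesssim 2^{-j}\sqrt{\log(\#\Lam_j)/(n\rho(\Cjk))}$; and (iii) matrix Bernstein to control $\|\hSjk-\Sjk\|$, combined with a Davis--Kahan bound and the spectral gap $\lambda_d^{j,k}-\lambda_{d+1}^{j,k}\gtrsim (1-\theta_4)\theta_3 2^{-2j}/d$ from (A5), giving $\|\proj_{\hVjk}-\proj_{\Vjk}\|\lesssim \sqrt{\log(\#\Lam_j)/(n\rho(\Cjk))}$. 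A union bound over the $\#\Lam_j\le 2^{jd}/\theta_1$ cells (A3) then generates the $\log[\alpha d\,\#\Lam_j]$ factor, and summing the per-cell contributions against $\sum_k\rho(\Cjk)=1$ yields, in high probability, $\|\calP_j X - \hcalP_j X\|^2 \lesssim \frac{d^2\#\Lam_j\log[\alpha d\,\#\Lam_j]}{\beta 2^{2j}n}=\calO(j\,2^{j(d-2)}/n)$, matching the expectation estimate of Proposition \ref{lemma0}.

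Finally I would substitute the prescribed $j^*$. For $d\ge 2$ the defining balance $2^{-2js}\asymp j\,2^{j(d-2)}/n$ makes the variance rate $\calO((\lognn)^{\frac{2s}{2s+d-2}})$, so its square root is $\le C_1(\lognn)^{\frac{s}{2s+d-2}}$; for $d=1$ both bias and variance decay monotonically at the finest scale and are $\lesssim\lognn$. Adding the bias term through the triangle inequality delivers the stated $(|\rho|_{\AS}\mu^s+C_1)\times(\text{rate})$, with the free parameter $\nu$ absorbed by taking the constants in the per-cell deviation thresholds large enough that the union-bounded failure probability is $\le C_2 n^{-\nu}$.

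The main obstacle is ingredient (iii): controlling the empirical principal subspaces $\hVjk$ \emph{uniformly} over all cells at scale $j^*$. This demands that every cell carry enough samples for PCA to be stable (tying the admissible scale to $\lognn$ and to the rule that leaves hold at least $d$ points), a Davis--Kahan bound that genuinely uses the eigengap assumption (A5)(ii), and matrix-Bernstein control of $\|\hSjk-\Sjk\|$ with constants independent of the ambient dimension $D$ — the last point being essential to avoid the curse of dimensionality. Propagating these per-cell estimates through the union bound while preserving the correct $\log\#\Lam_j$ dependence, so that the final rate is not degraded, is the delicate bookkeeping.
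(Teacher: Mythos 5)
Your architecture coincides with the paper's: a triangle-inequality split into a deterministic bias term controlled by the $\AS$ definition and a stochastic term $\|\calP_{j^*}X-\hcalP_{j^*}X\|$, which is bounded cell by cell via a Chernoff bound on $\hnjk$, vector Bernstein for $\|\hcjk-\cjk\|$, intrinsic-dimension matrix Bernstein plus Davis--Kahan with the (A5) eigengap for $\|\proj_{\hVjk}-\proj_{\Vjk}\|$, and a union bound over $\#\Lam_{j^*}$ cells; the balance at the prescribed $j^*$ is also computed correctly. This is exactly Lemma \ref{thm1} / Proposition \ref{lemma0} and the paper's proof of Theorem \ref{thm2}.

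There is, however, one genuine gap: you assume $\rho(\Cjk)\asymp 2^{-jd}$, but the assumptions (A1)--(A5) only give an upper bound on the \emph{number} of cells ((A3)) and on their diameters ((A4)); no lower bound on the measure of an individual cell is available, and some $\Cjk$ may carry arbitrarily little mass. For such a cell your per-cell deviation bounds of the form $\lesssim 2^{-j}\sqrt{\log(\#\Lam_j)/(n\rho(\Cjk))}$ cannot be certified: the event $\hnjk\gtrsim n\rho(\Cjk)$ has failure probability $\exp(-c\,n\rho(\Cjk))$, which is $\Theta(1)$ when $n\rho(\Cjk)=O(1)$, so the union bound over all cells does not close. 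The paper repairs this by splitting $\Lam_{j^*}$ into \emph{light} cells, with $\rho(\Cjk)$ below a threshold of order $t^2/(2^{-2j}\#_j\Lam)$ (respectively $(\log n)/n$ in Lemmas \ref{lemmathm2_1}--\ref{lemmathm2_2}), whose total contribution is bounded \emph{deterministically} by $\theta_2^2 2^{-2j^*}\sum_k\rho(\Cjk)\lesssim t^2$ using only the diameter bound (A4), and \emph{heavy} cells, on which $n\rho(\Cjk)\gtrsim\log n$ so that all three concentration ingredients apply with failure probability $n^{-\nu-1}$ per cell. Without this light/heavy decomposition (or an added hypothesis forcing $\rho(\Cjk)\gtrsim 2^{-jd}$ uniformly, which the theorem does not assume), the stochastic estimate in your step (i)--(iii) would fail on the low-mass cells. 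The rest of your bookkeeping, including the $D$-independence via the intrinsic dimension $\intdim(\Sjk)\le\theta_2^2 d/\theta_3$ and the $\log\#\Lam_j$ factor from the union bound, matches the paper.
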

Theorem \ref{thm2} is proved in Section \ref{s:gmra}.
In the perspective of dictionary learning, GMRA provides a dictionary $\Phi_{j^*}$ of cardinality $\asymp dn/\log n$ for $d =1$ and of cardinality $ \asymp d (n /\log n)^{\frac{d}{2s+d-2}}$ for $d \ge 2$, so that every $x$ sampled from $\rho$ (and not just samples in the training data) may be encoded with a vector with $d+1$ nonzero entries: one entry encodes the location $k$ of $x$ on the tree, e.g. $(j^*,x) = (j^*,k)$ such that $x \in C_{j^*,k}$, and the other $d$ entries are $\hV_{j^*,x}^T (x-\hc_{j^*,x})$.
We also remind the reader that GMRA automatically constructs a fast transform mapping points $x$ to the vector representing $\Phi_{j^*}$ (See \citet[][]{CM:MGM2,MMS:NoisyDictionaryLearning} for a discussion).
Note that by choosing $\nu$ large enough,
\begin{align*}
\eqref{thm2eq1}
 \implies
{\rm MSE} = \EE\|X-\hcalP_{j^*}X\|^2 \lesssim \left(\lognn\right)^{\frac{2s}{2s+d-2}}\,,
\end{align*}
and \eqref{thm2eq01} implies MSE $\lesssim (\lognn)^2$ for $d=1$.
Clearly, one could fix a desired MSE of size $\ep^2$, and obtain from the above a dictionary of size dependent only of $\ep$ and independent of $n$, for $n$ sufficiently large, thereby obtaining a way of compressing data (see \citet{MMS:NoisyDictionaryLearning} for further discussion on this point).
A special case of Theorem \ref{thm2} with $s=2$ was proved in  \citet{MMS:NoisyDictionaryLearning}. 


\subsection{Geometric Multi-Resolution Analysis: Adaptive  Partitions}
\label{secALMR:MGM1}

\begin{table}[b]
\renewcommand{\arraystretch}{1.4}
\centering
\resizebox{0.8\columnwidth}{!}{
\begin{tabular}{ |c | c  |c| }
\hline
   & Definition (infinite sample) &  Empirical version \\
   [5pt]
   \hline
  Difference operator
  & $\calQjk :=  (\calP_j - \calP_{j+1}) \chijk  $
  & $\hcalQjk :=  (\hcalP_j - \hcalP_{j+1}) \chijk $ 
  \\
     [5pt]
  \hline
  Norm of difference	  & 
  $\Deltajk^2 := \int_{\Cjk}\|\calQjk x\|^2 d\rho$
  & 
  $\hDeltajk^2 := \frac 1 n \sum_{x_i \in \Cjk}
\|\hcalQjk x_i\|^2$
  \\
       [10pt]
       \hline
    \end{tabular}
    }
\caption{Refinement criterion and the empirical version}
\label{TableRefinement}
\end{table}

The performance guarantee in Theorem \ref{thm2} is not fully satisfactory for two reasons: (i) the regularity parameter $s$ is required to be known to choose the optimal scale $j^*$, and this parameter is typically unknown in any practical setting, and (ii) none of the uniform partitions $\{\Cjk\}_{k\in\calK_j}$ will be optimal if the regularity of $\rho$ (and/or $\mathcal{M}$) varies at different locations and scales. This lack of uniformity in regularity appears in a wide variety of data sets: when clusters exist that have cores denser than the remaining regions of space, when trajectories of a dynamical system are sampled that linger in certain regions of space for much longer time intervals than others (e.g. metastable states in molecular dynamics \citep{RZMC:ReactionCoordinatesLocalScaling,ZRMC:PolymerReversal}), in data sets of images where details exist at different level of resolutions, affecting regularity at different scales in the ambient space, and so on. To fix the ideas we consider again one simplest manifestations of this phenomenon in the examples considered above: uniform partitions work well for the volume measure on the S manifold but are not optimal for the volume measure on the Z manifold, for which the ideal partition is coarse on flat regions but finer at and near the corners (see Figure \ref{FigDelta}). In applications, for example to mesh approximation, it is often the case that the point clouds to be approximated are not uniformly smooth and include different levels of details at different locations and scales (see Figure \ref{FigShapePartition}).
Thus we propose an adaptive version of GMRA that will automatically adapts to the regularity of the data and choose a near-optimal adaptive partition.

 \begin{figure}[t]
\centering
 \includegraphics[width=0.33\textwidth]{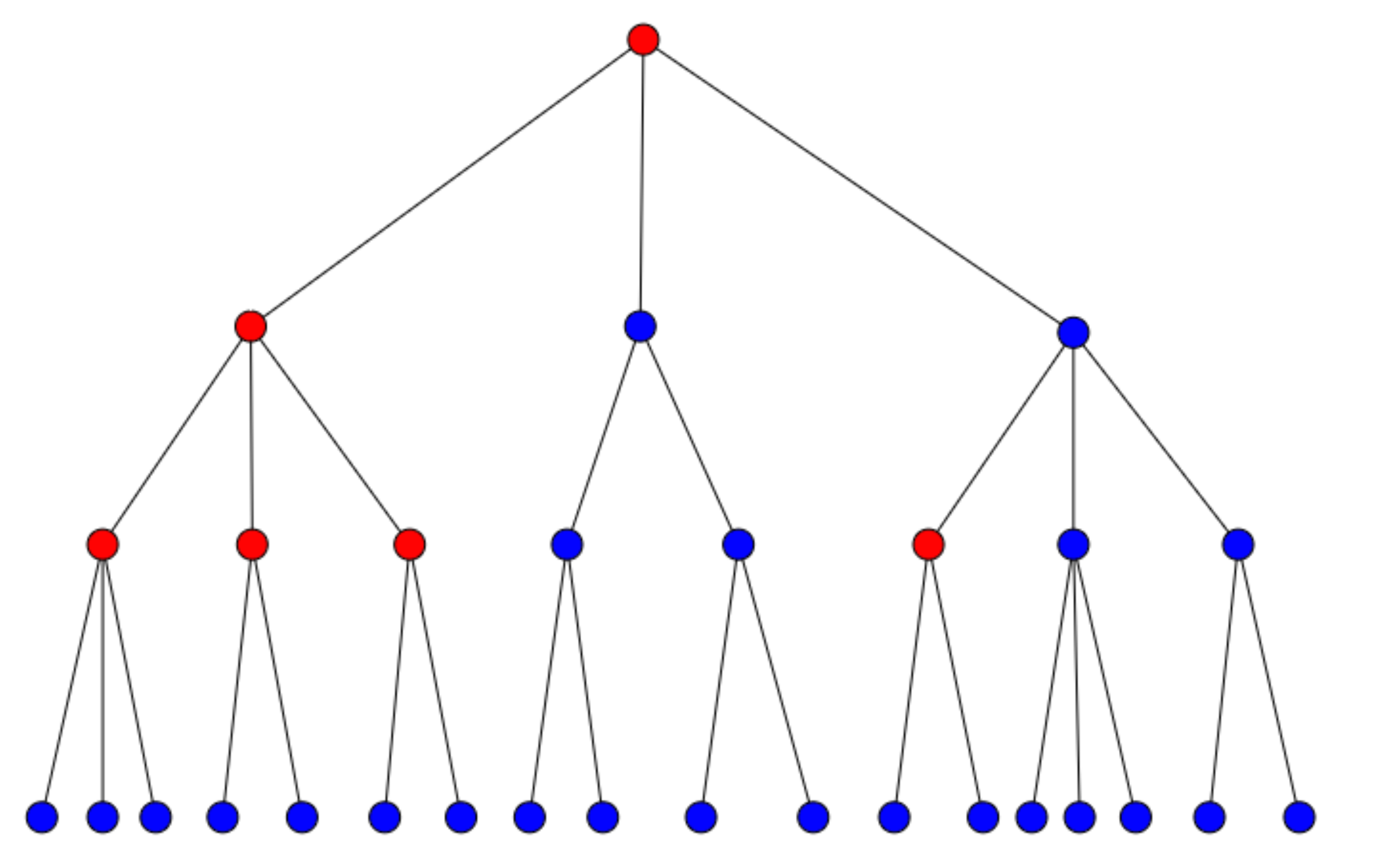}
 \includegraphics[width=0.33\textwidth]{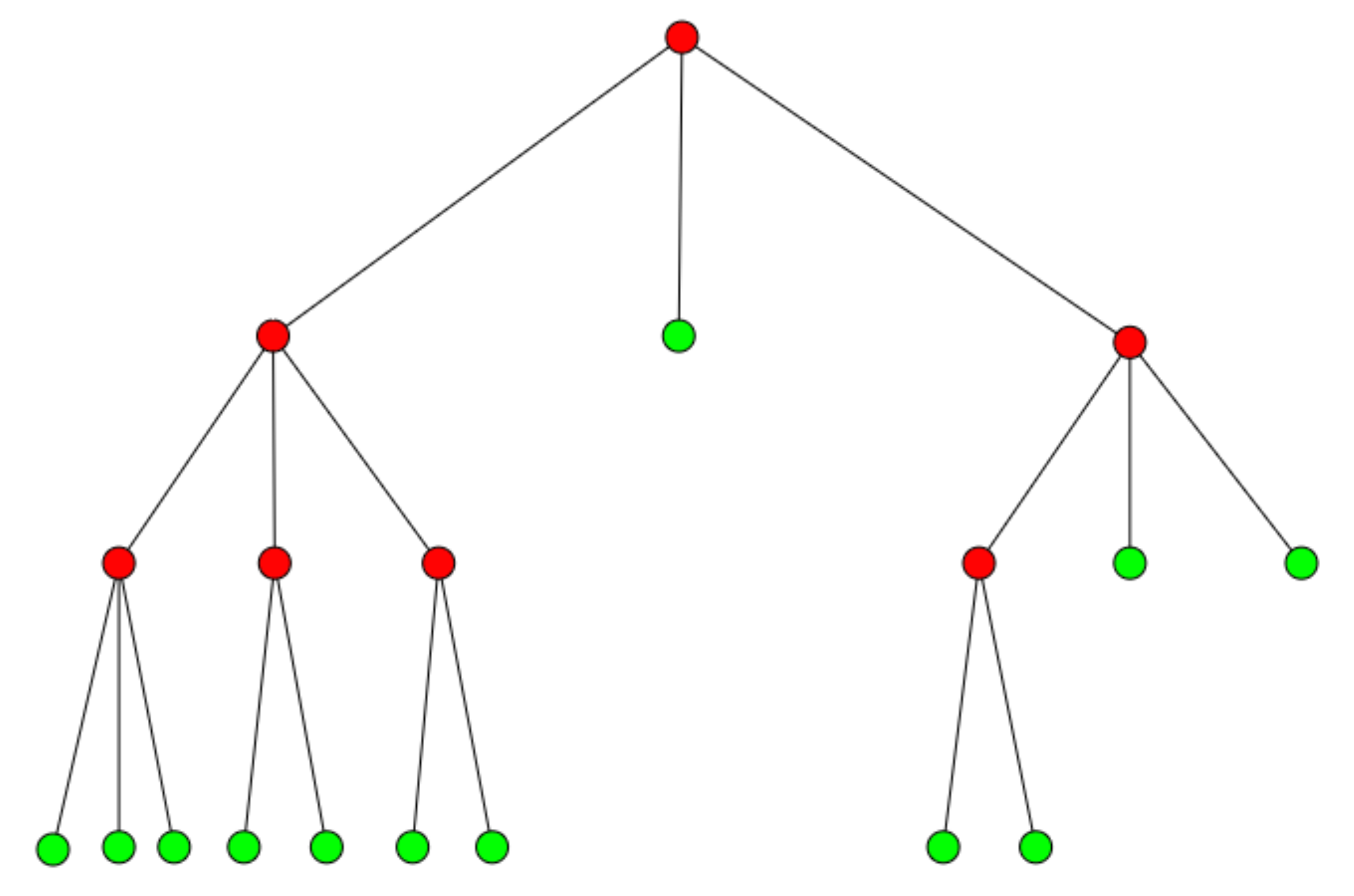}
\caption{Left: a master tree in which red nodes satisfy $\Deltajk \ge 2^{-j} \tau_n$ but blue nodes do not. Right: the subtree of the red nodes is the smallest proper subtree that contains all the nodes satisfying $\Deltajk \ge 2^{-j} \tau_n$, i.e. were red in the figure on the left. Green nodes form the adaptive partition.}
\label{figadaptivetree}
\end{figure}

\begin{algorithm}[t]                      	
\caption{Empirical adaptive GMRA}          	
\label{TableAdaptiveGMRA}		
\begin{algorithmic}[1]                    	
    \REQUIRE  data $\calX_{2n} = \calX_n'\cup\calX_n$, intrinsic dimension $d$, threshold $\kappa$ 
    \ENSURE $\calTn$, $\{C_{j,k}\}$, $\hcalP_{\hLamtaun}$ : multiscale tree, corresponding cells and adaptive piecewise linear projectors on adaptive partition.
    \STATE Construct $\calTn$ and $\{C_{j,k}\}$ from $\calX'_n$
    \STATE Now use $\calX_n$. Compute $\hcalPjk $ and $\hDeltajk$ on every node $\Cjk \in \calTn$.
    \STATE $\hcalTtaun\leftarrow$ smallest proper subtree of $\calTn$ containing all $\Cjk \in \calTn$ : $\hDeltajk \ge 2^{-j}\tau_n$  where $\tau_n = \kappa\sqrt{(\log n)/n}$.
     \STATE $\hLamtaun\leftarrow$ partition associated with outer leaves of $\hcalTtaun$
    \STATE $\hcalP_{\hLamtaun}\leftarrow\sum_{\Cjk \in \hLamtaun} \hcalPjk \chijk.$
\end{algorithmic}
\end{algorithm}
Adaptive partitions may be effectively selected with a criterion that determines whether or not a cell should participate to the adaptive partition.
The quantities involved in the selection and their empirical version are summarized in Table \ref{TableRefinement}. 

We expect $\Deltajk$ to be small on approximately flat regions, and large $\Deltajk$ at many scales at irregular locations. We also expect $\hDeltajk$ to have the same behavior, at least when $\hDeltajk$ is with high confidence close to $\Deltajk$.
We see this phenomenon represented in Figure \ref{FigDelta} (a,b): as $j$ increases, for the S manifold $\|\hcalP_{j+1}x_i- \hcalP_j x_i\|$ decays uniformly at all points, while for the Z manifold, the same quantity decays rapidly on flat regions but remains large at fine scales around the corners.
We wish to include in our approximation the nodes where this quantity is large, since we may expect a large improvement in approximation by including such nodes. However if too few samples exist in a node, then this quantity is not to be trusted, because its variance is large.
It turns out that it is enough to consider the following criterion:
let $\hcalTtaun$ be the smallest proper subtree of $\calTn$ that contains all  $C^{j,k} \in \calTn$ for which $\hDeltajk \ge 2^{-j}\tau_n$ 
where $\tau_n = \kappa\sqrt{(\log n)/n}$.
Crucially, $\kappa$ may be chosen independently of the regularity index (see Theorem \ref{thm3}).
Empirical adaptive GMRA returns piecewise affine projectors on $\hLamtaun$, the partition associated with the outer leaves of $\hcalTtaun$.
Our algorithm is summarized in Algorithm 1.

\begin{figure}[t]
 \centering
 \subfigure[S manifold]{\includegraphics[width=.32\textwidth]{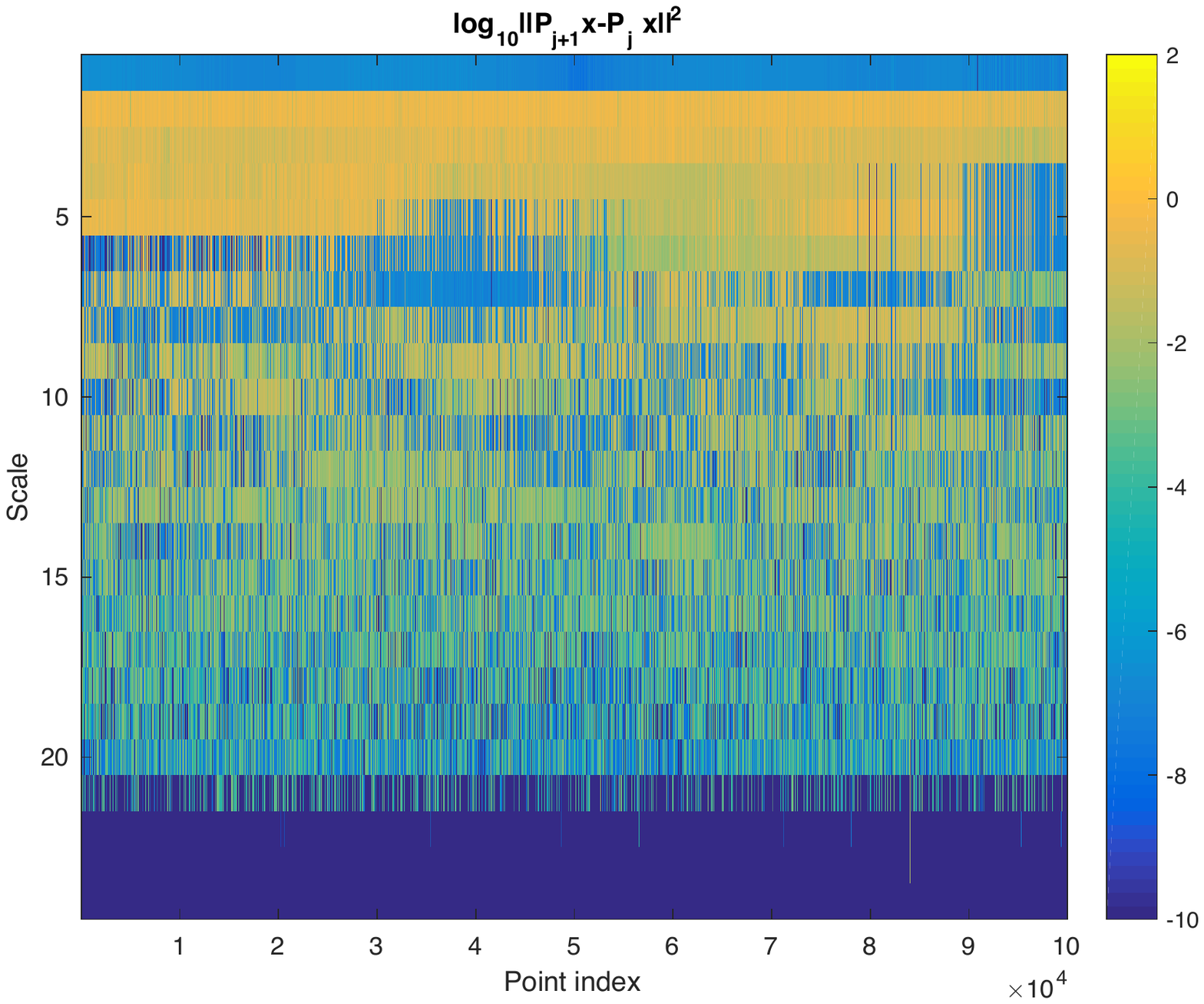}} 
 \subfigure[Z manifold]{\includegraphics[width=.32\textwidth]{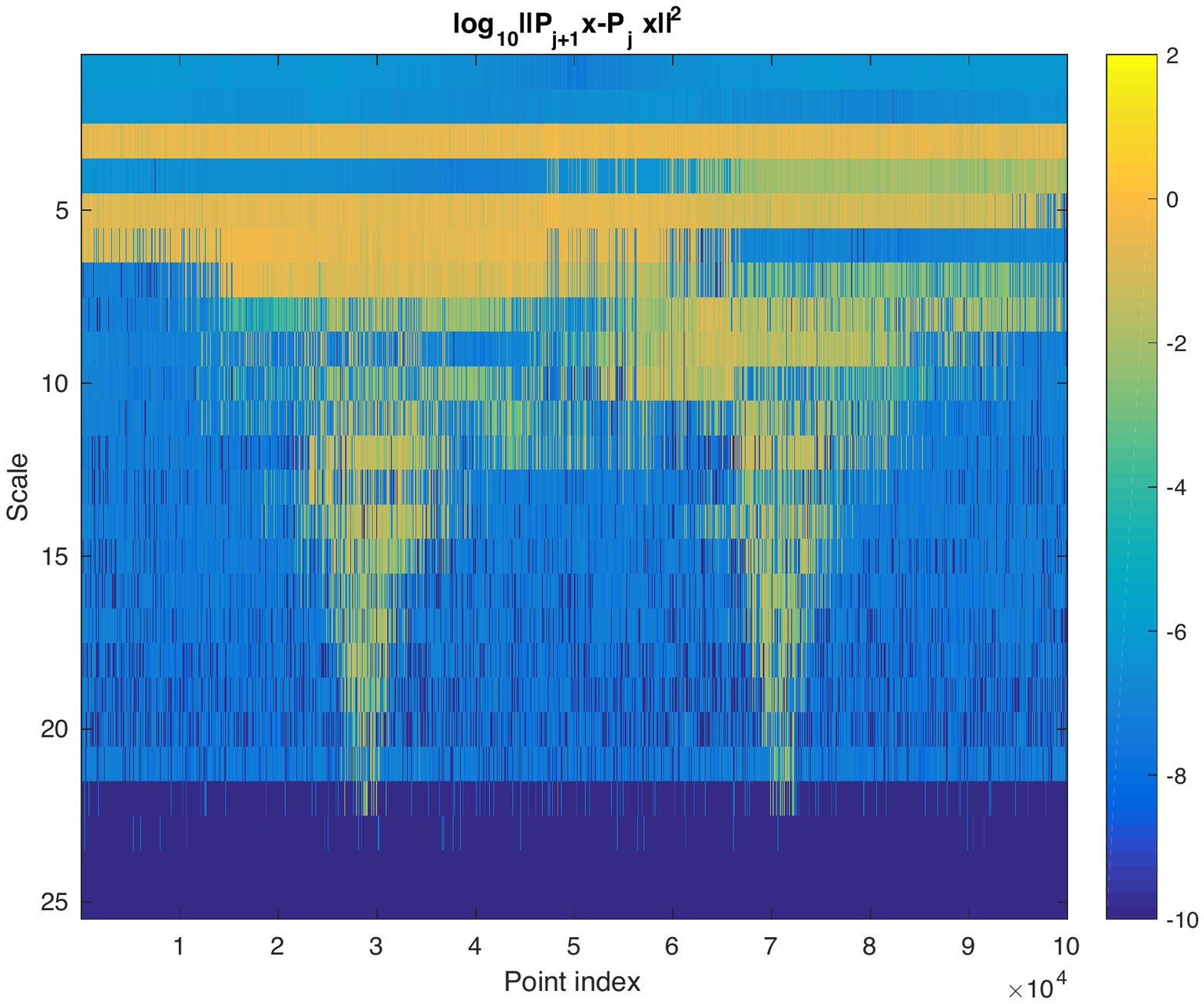}} 
 \subfigure[Error vs partition size, S manifold]{\includegraphics[width=0.345\textwidth]{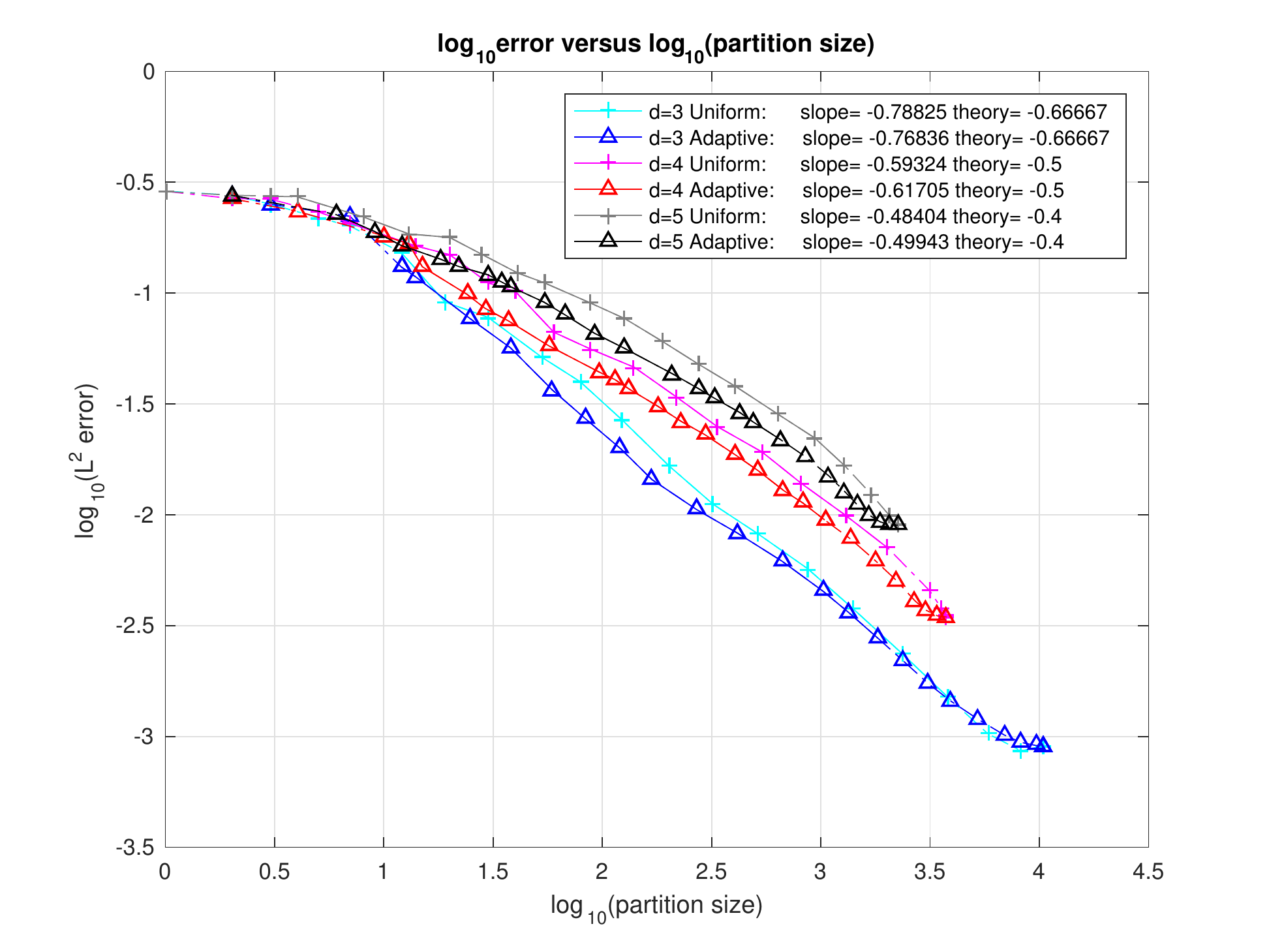}}
 \\
 \subfigure[Adaptive approx. of S]{\includegraphics[width=.317\textwidth]{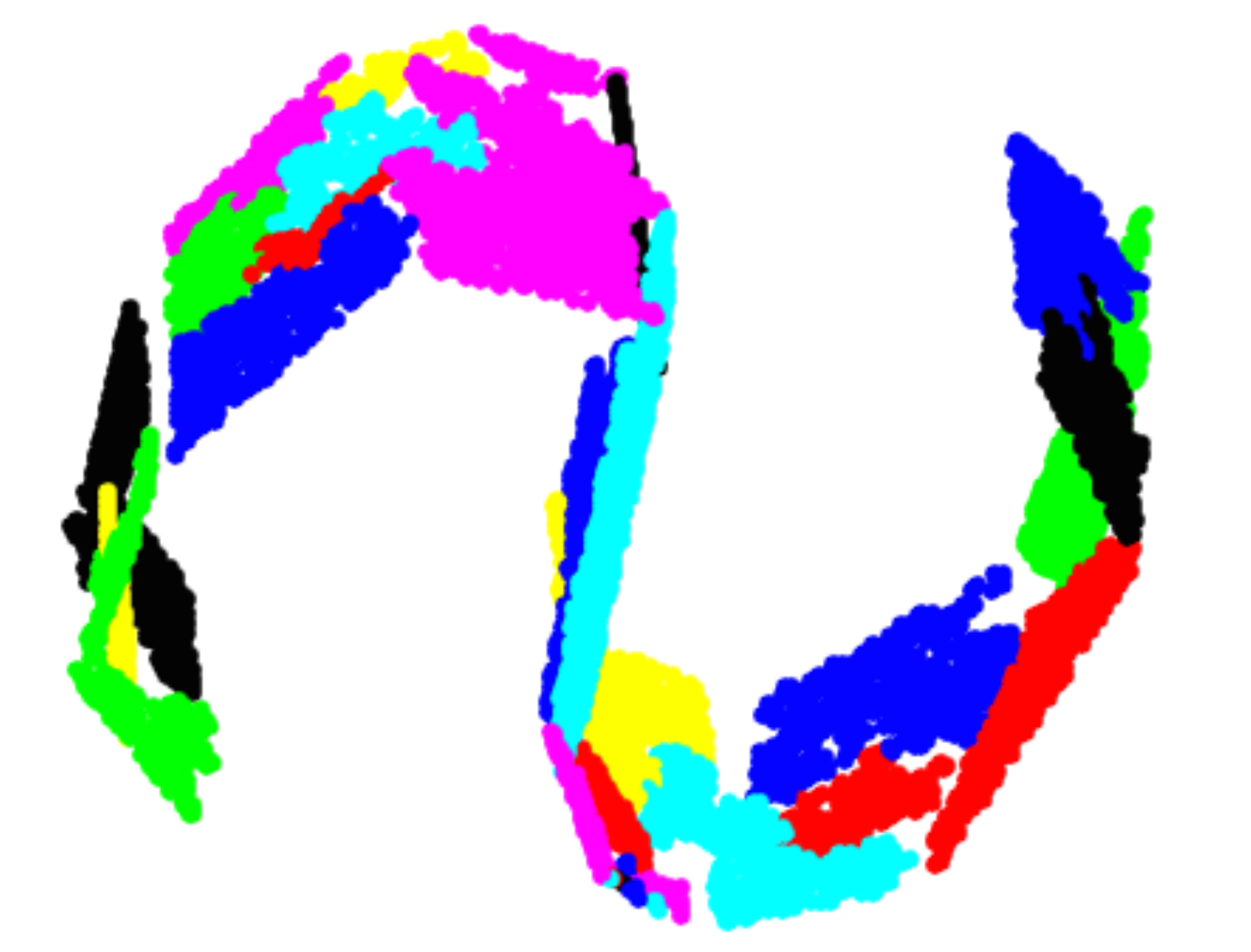}} 
 \subfigure[Adaptive approx. of Z]{\includegraphics[width=.317\textwidth]{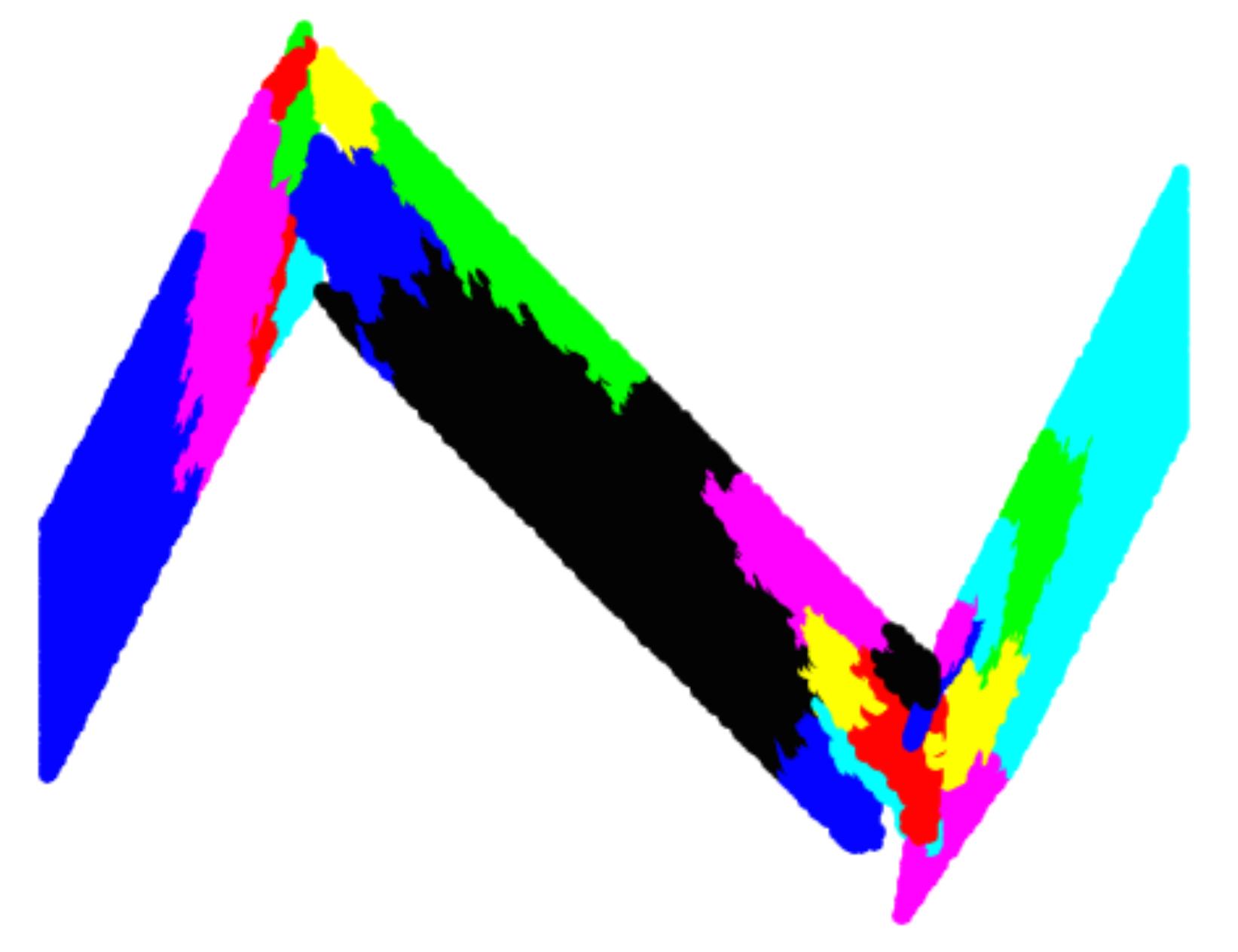}}
\subfigure[Error vs partition size, Z manifold]{\includegraphics[width=0.345\textwidth]{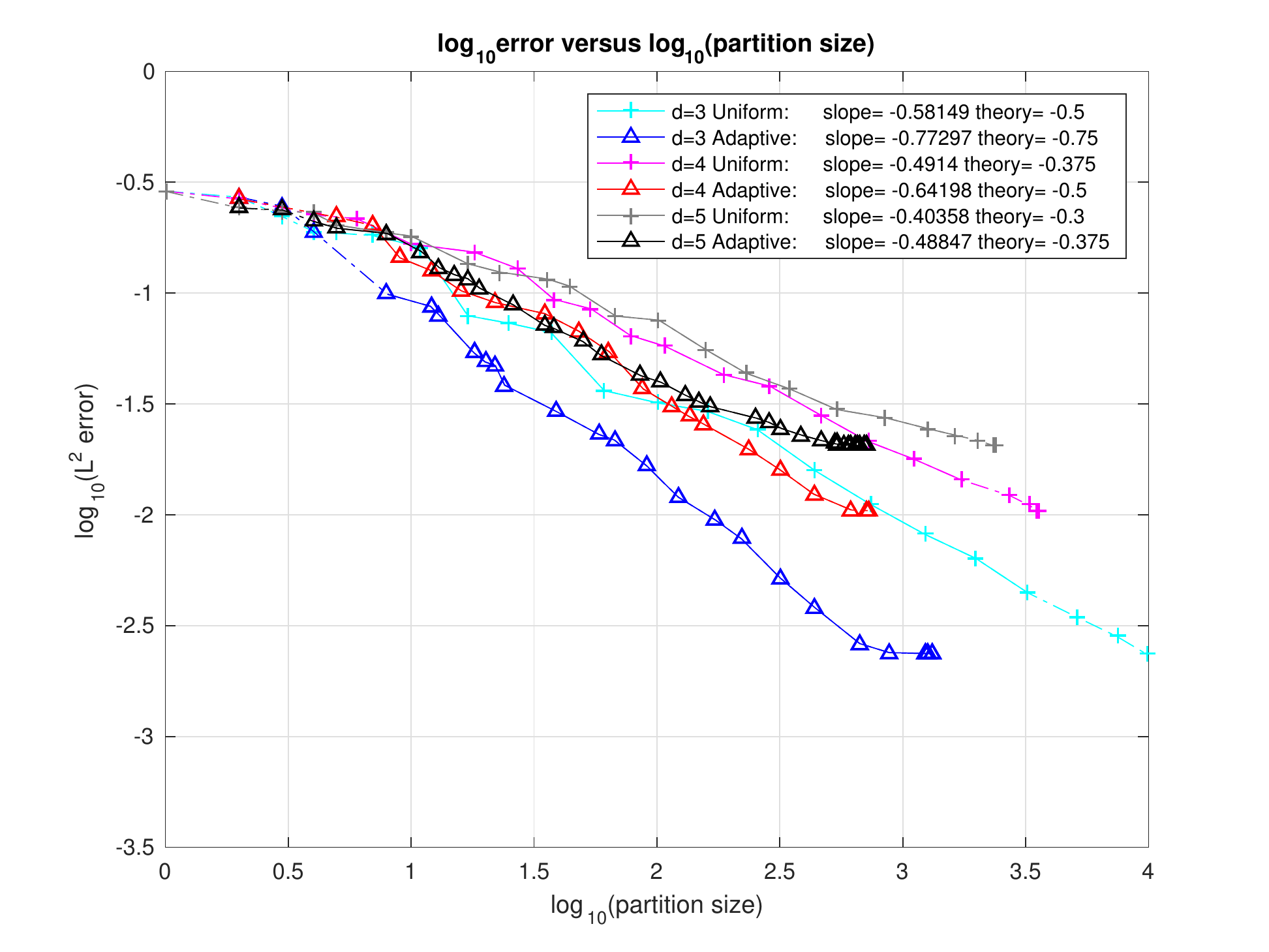}}
\caption{(a,b): $\log_{10}||\hcalP_{j}(x_i)-\hcalP_{j+1}(x_i)||$ from the coarsest scale (top) to the finest scale (bottom), with columns indexed by points, sorted from left to right on the manifold. 
            (d,e): adaptive approximations.  
            (c,f): log-log plot of the approximation error versus the partition size in GMRA and adaptive GMRA respectively.  
            Theoretically, the slope is $-2/d$ in both GMRA and adaptive GMRA for the S manifold.
            For the Z manifold, the slope is $-1.5/d$ in GMRA and $-{1.5/(d-1)}$ in adaptive GMRA (see appendix \ref{AppSZ}). 
            }
             \label{FigDelta}
\end{figure}

We will provide a finite sample performance guarantee of the empirical adaptive GMRA for a model class that is more general than $\AS^\infty$. Given any fixed threshold $\eta > 0$, we let $
\calTrhoeta$ be the smallest proper tree of $\calT$ that contains all  $C_{j,k} \in \calT$ for which  $\Delta_{j,k} \ge 2^{-j}\eta$. The corresponding adaptive partition $\Lamrhoeta$ consists of the outer leaves of $\calTrhoeta$. We let $\#_j \calTrhoeta$ be the number of cells in $\calTrhoeta$ at scale $j$.
\begin{definition}[Model class $\BS$]
\label{defBs}
{In the case $d \ge 3$, }
given $s>0$, a probability measure $\rho$ supported on $\calM$ is in $\BS$ if $\rho$ satisfies the following regularity condition:
\beq
\label{eqBs}
|\rho|_{\BS}^p:= \sup_{{\calT}}\ \sup_{\eta>0} \eta^p \sum_{j \ge \jmin }2^{-2j}\#_j \calTrhoeta <\infty ,\,\, \text{ with } 
p= \frac{2(d-2)}{2s+d-2}
\eeq
where $\calT$ varies over the set, assumed nonempty, of multiscale tree decompositions satisfying Assumption (A1-A5).
\end{definition}
For elements in the model class $\BS$ we have control on the growth rate of the truncated tree $\calTrhoeta$ as $\eta$ decreases, namely it is $\mathcal{O}(\eta^{-p})$. 
The key estimate on variance and sample complexity in Lemma \ref{thm1} indicates that the natural measure of the complexity of $\calTrhoeta$ is the weighted tree complexity measure $\sum_{j \ge \jmin} 2^{-2j}\#_j \calTrhoeta$ in the definition above.
First of all, the class $\BS$ is indeed larger than $\AS^\infty$ (see appendix \ref{appasinf} for a proof):

\commentout{
\begin{figure}[h]
\centering
S manifold \\
     \includegraphics[width=5cm]{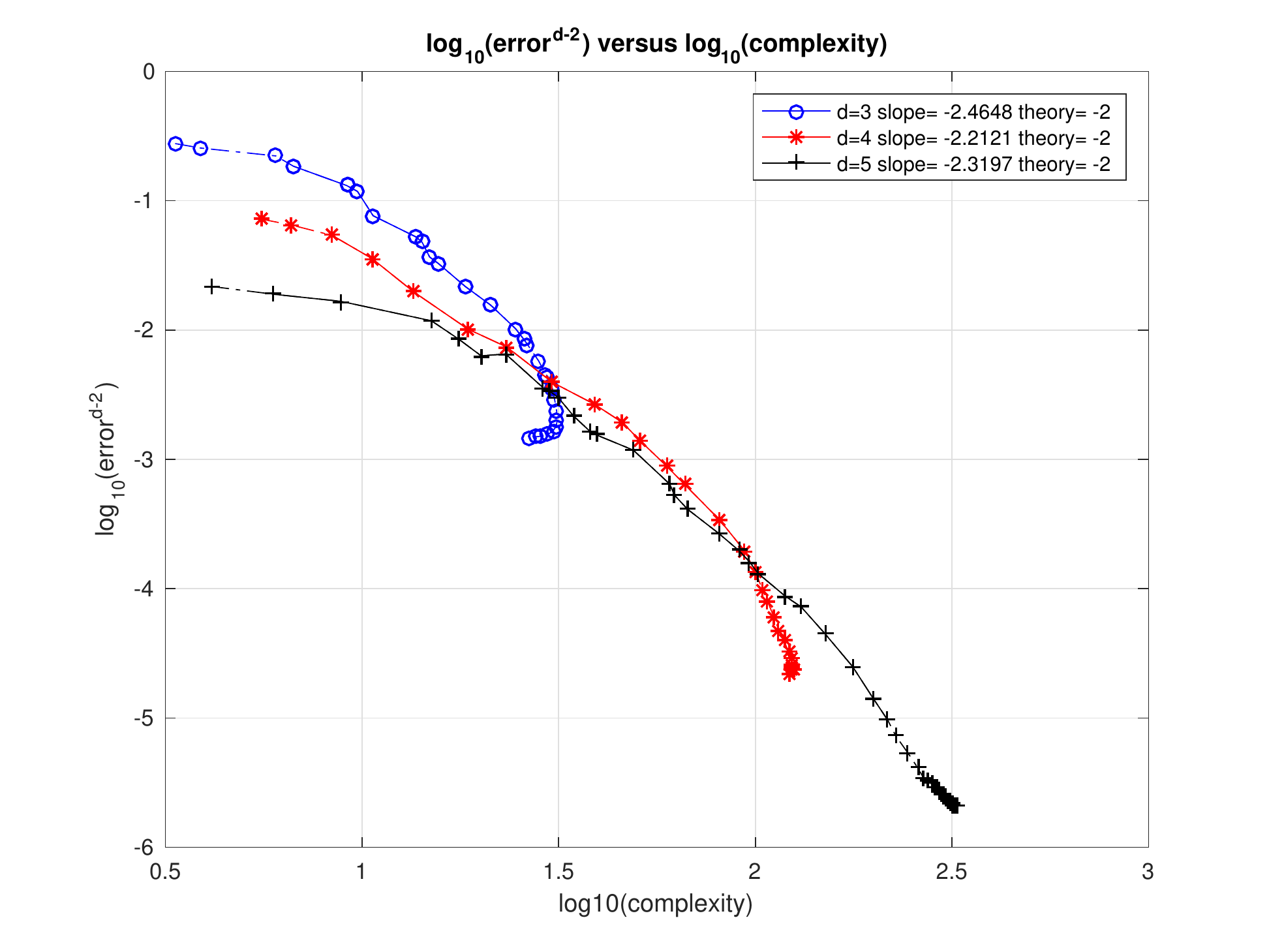}
          \includegraphics[width=5cm]{Fig23_SandZ/Sample100000/V2/MoreExperiments/S_Bs_2-eps-converted-to.pdf}
          \includegraphics[width=5cm]{Fig23_SandZ/Sample100000/V2/MoreExperiments/S_Bs_3-eps-converted-to.pdf}
           \includegraphics[width=5cm]{Fig23_SandZ/Sample100000/V2/MoreExperiments/S_Bs_4-eps-converted-to.pdf}
          \includegraphics[width=5cm]{Fig23_SandZ/Sample100000/V2/MoreExperiments/S_Bs_5-eps-converted-to.pdf}
          \\
Z manifold \\
     \includegraphics[width=5cm]{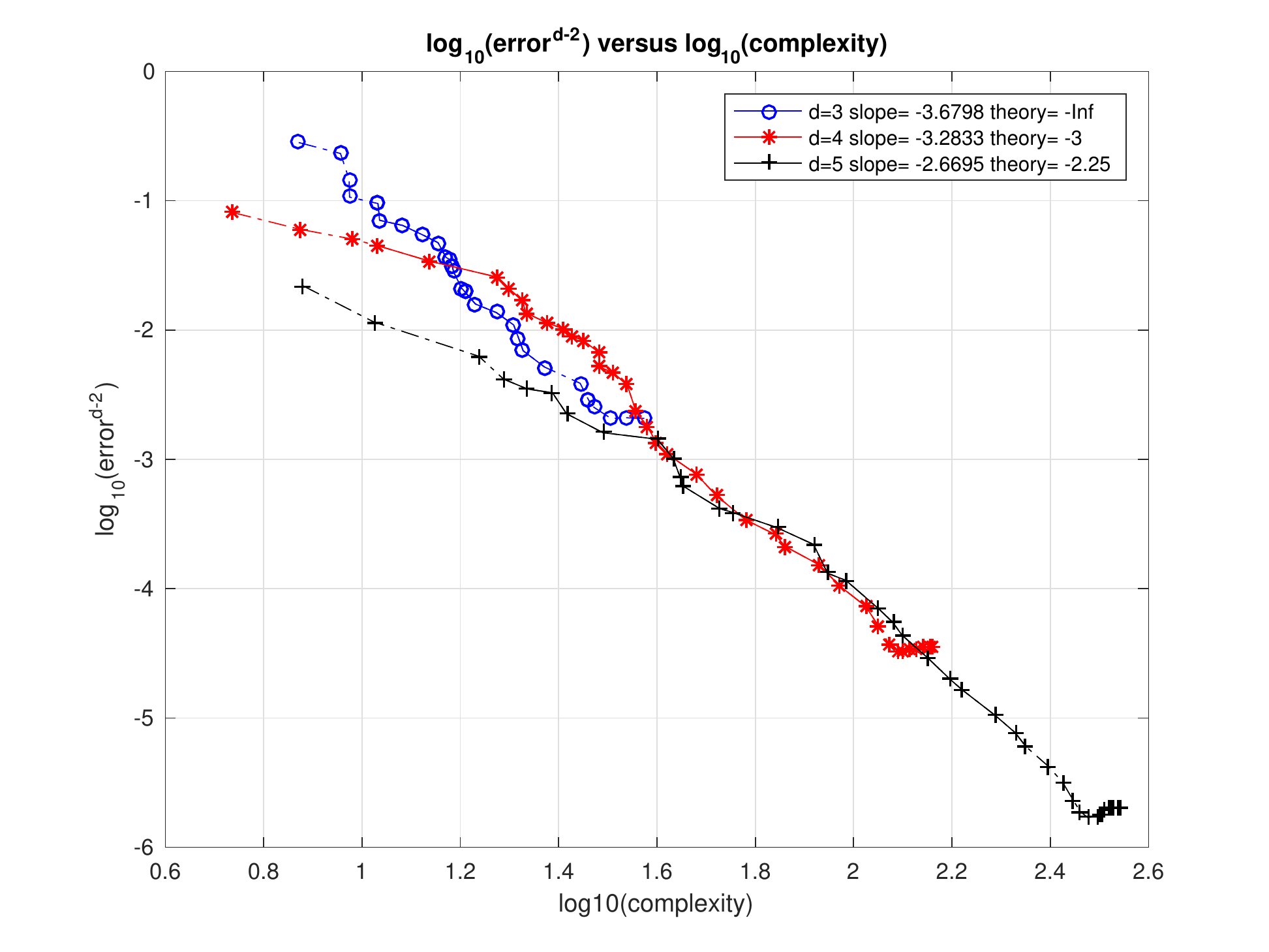}
          \includegraphics[width=5cm]{Fig23_SandZ/Sample100000/V2/MoreExperiments/Z_Bs_2-eps-converted-to.pdf}
          \includegraphics[width=5cm]{Fig23_SandZ/Sample100000/V2/MoreExperiments/Z_Bs_3-eps-converted-to.pdf}
           \includegraphics[width=5cm]{Fig23_SandZ/Sample100000/V2/MoreExperiments/Z_Bs_4-eps-converted-to.pdf}
          \includegraphics[width=5cm]{Fig23_SandZ/Sample100000/V2/MoreExperiments/Z_Bs_5-eps-converted-to.pdf}

\end{figure}
}

\begin{lemma}
\label{l:AsinBs}
$\BS$ is a more general model class than $\AS^\infty$. If $\rho\in\AS^\infty$, 
then $\rho\in\calB_{s}$ and $|\rho|_{\BS} \lesssim |\rho|_{\AS^\infty}$.
\end{lemma}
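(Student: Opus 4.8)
The plan is to bound the weighted tree complexity $\sum_{j\ge\jmin}2^{-2j}\#_j\calTrhoeta$ that defines $|\rho|_{\BS}$ by relating the size of the geometric wavelet quantity $\Deltajk$ to the per-cell approximation error that the $\AS^\infty$ hypothesis directly controls. Write $A:=|\rho|_{\AS^\infty}$. The first step is a cellwise bias estimate for $\Deltajk$. On $\Cjk$ one has $\calP_j=\calPjk$, so $(X-\calP_jX)\chijk=(X-\calPjk X)\chijk$, whose $L^2(\rho)$ norm is $\le A\,2^{-js}\sqrt{\rho(\Cjk)}$ by the definition of $\AS^\infty$. For the scale $j+1$ term I decompose over the children $C_{j+1,k'}\subseteq\Cjk$: their indicators have disjoint support, so
\[
\|(X-\calP_{j+1}X)\chijk\|^2=\sum_{k'}\|(X-\calP_{j+1,k'}X)\mathbf{1}_{j+1,k'}\|^2\le A^2\,2^{-2(j+1)s}\rho(\Cjk).
\]
Writing $\calQjk x=\big((x-\calP_{j+1}x)-(x-\calP_j x)\big)\chijk(x)$ and applying the triangle inequality yields $\Deltajk\le A(2^{-js}+2^{-(j+1)s})\sqrt{\rho(\Cjk)}\le 2A\,2^{-js}\sqrt{\rho(\Cjk)}$.

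Next I count marked cells. Let $M_\eta$ be the set of cells with $\Deltajk\ge 2^{-j}\eta$. Combining this with the bound above forces $\rho(\Cjk)\ge \tfrac{\eta^2}{4A^2}2^{2j(s-1)}$ on each marked cell, so because the scale-$j$ cells are disjoint with total mass $\le 1$ (by (A2)), at most $\tfrac{4A^2}{\eta^2}2^{2j(1-s)}$ cells at scale $j$ can be marked; of course there are also at most $2^{jd}/\theta_1$ of them by (A3). I then pass from $M_\eta$ to the full subtree $\calTrhoeta$, which additionally contains all ancestors of marked cells. A scale-$j$ cell lies in $\calTrhoeta$ precisely when it has a marked descendant (including itself); since each marked cell has a unique ancestor at scale $j$, the ancestor map is a surjection from the marked cells at scales $\ge j$ onto the scale-$j$ cells of $\calTrhoeta$. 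Hence, for $s>1$,
\[
\#_j\calTrhoeta\le\sum_{j'\ge j}\#_{j'}M_\eta\lesssim \frac{A^2}{\eta^2}\sum_{j'\ge j}2^{2j'(1-s)}\lesssim \frac{A^2}{\eta^2}2^{2j(1-s)},
\]
the geometric series being dominated by its first term; together with the trivial bound $\#_j\calTrhoeta\le 2^{jd}/\theta_1$.

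Finally I split the sum at a cutoff scale $J$, using the trivial count below $J$ and the decay count above it:
\[
\sum_{j\ge\jmin}2^{-2j}\#_j\calTrhoeta\lesssim \sum_{\jmin\le j\le J}\frac{2^{j(d-2)}}{\theta_1}+\frac{A^2}{\eta^2}\sum_{j>J}2^{-2js}\lesssim 2^{J(d-2)}+\frac{A^2}{\eta^2}2^{-2Js},
\]
where $d\ge 3$ makes the first (geometric, ratio $>1$) sum dominated by its top term and $s>0$ makes the second dominated by its bottom term. Choosing $2^{J}\asymp (A/\eta)^{2/(2s+d-2)}$ balances the two contributions and gives $\sum_j 2^{-2j}\#_j\calTrhoeta\lesssim (A/\eta)^{p}$ with $p=\tfrac{2(d-2)}{2s+d-2}$. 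Multiplying by $\eta^p$ and taking the suprema over $\eta>0$ and over admissible $\calT$ yields $|\rho|_{\BS}^p\lesssim A^p$, i.e. $|\rho|_{\BS}\lesssim |\rho|_{\AS^\infty}$.

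I expect the main obstacle to be the subtree counting: since $\calTrhoeta$ by definition includes every ancestor of a marked cell, one must verify that this tree overhead does not change the order of the complexity, which is exactly what the surjective ancestor map delivers. A secondary subtle point is that the exponent $p$ is not an input but emerges from balancing the coarse-scale count $\asymp 2^{J(d-2)}$ (which requires $d\ge 3$ so that the series grows geometrically) against the fine-scale decay $\asymp \eta^{-2}2^{-2Js}$; this is precisely why the definition of $\BS$ is calibrated with that value of $p$ and restricted to $d\ge 3$, and why the argument is cleanest for $s>1$.
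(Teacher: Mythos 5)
Your argument is correct in the regime it covers, and it takes a genuinely different route from the paper's. Both proofs start from the same cellwise estimate $\Deltajk \le 2|\rho|_{\AS^\infty}2^{-js}\sqrt{\rho(\Cjk)}$, but the paper then \emph{assumes} $\rho(\Cjk)\asymp 2^{-jd}$ (an Ahlfors-regularity hypothesis not present in the statement of the lemma), deduces $\Deltajk \lesssim |\rho|_{\AS^\infty}2^{-j(s+d/2)}$, and concludes that every marked cell lives at a scale $j\le j^*$ with $2^{-j^*}\asymp(\eta/|\rho|_{\AS^\infty})^{2/(2s+d-2)}$; the truncated tree is then simply contained in the full uniform tree $\bigcup_{j\le j^*}\Lam_j$, whose weighted complexity is $\asymp 2^{j^*(d-2)}$. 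You instead avoid the Ahlfors assumption entirely by using the mass pigeonhole $\sum_k\rho(C_{j,k})\le 1$ to count marked cells per scale, the ancestor-surjection to control the overhead of closing up to a proper subtree, and a two-regime split at a balancing scale $J$. What each buys: the paper's argument is shorter and works for all $s>0$, but only under the unstated extra regularity of the cell measures; yours works for arbitrary cell measures but genuinely requires $s>1$ for the geometric series $\sum_{j'\ge j}2^{2j'(1-s)}$ to be summable (as you note), so it does not literally cover $s\le 1$ --- a minor loss, since every admissible $\rho$ lies in $\calA_1^\infty$ and the content of the lemma is in the range $s>1$. The only point worth tightening is the degenerate situation where the set of marked cells is empty (large $\eta$), in which case $\calTrhoeta$ still contains the root; this is absorbed by your trivial bound $\#_j\calTrhoeta\le 2^{jd}/\theta_1$ at the coarsest scales and affects both proofs equally.
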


\begin{example}
The volume measures on the $d$-dim $(d\ge 3)$ S manifold and Z manifold are in $\calB_2$ and $\calB_{{1.5(d-2)}/{(d-3)}}$ respectively (see appendix \ref{AppSZ}). 
Numerically $s$ is approximated by the negative of the slope in the log-log plot of $\|X-\hcalP_{\hLameta} X\|^{d-2}$ versus the weighted complexity of the truncated tree 
according to Eq. \eqref{Bs1}.
See numerical examples in Figure \ref{FigASBS}.
\end{example}



We also need a quasi-orthogonality condition which says that the operators $\{\calQjk\}_{k\in\calK_j, j\ge \jmin}$ applied on $\calM$ are mostly orthogonal across scales and/or $\|\calQjk X\|$ quickly decays.

\begin{definition}[Quasi-orthogonality]
There exists a constant $B_0>0$ such that for any proper subtree $\tilde\calT$ of any mater tree $\calT$ satisfying Assumption (A1-A5),
\beq
\label{quasiortho}
\|\sum_{\Cjk \notin \tcalT} \calQjk X \|^2
 \le B_0 \!\!\sum_{\Cjk \notin \tcalT}\!\! \|\calQjk X\|^2.
\eeq
\end{definition}
 We postpone further discussion of this condition to Section \ref{secQO}.
One can show (see appendix \ref{proofBs1}) that in the case $d\ge 3$, $\rho\in\BS$ along with quasi-orthogonality implies a certain rate of approximation of $X$ by $\calP_{\Lamrhoeta} X$, as $\eta\rightarrow0^+$:
\beq
\label{Bs1}
\|X-\calP_{\Lamrhoeta} X\|^2
\le
B_{s,d} |\rho|_{\BS}^p \eta^{2-p} \le B_{s,d} |\rho|_{\BS}^2 \left(\sum_{j \ge \jmin }2^{-2j} \#_j \calTrhoeta \right)^{-\frac{2s}{d-2}},
 \eeq
where $s = \frac{(d-2)(2-p)}{2p}$ and $B_{s,d}:= B_0 2^p /(1-2^{p-2})$.

\commentout{
\begin{definition}[Model class $\BS$]
\label{defBs}
{In the case $d \ge 3$, }
given $s>0$, a probability measure $\rho$ supported on $\calM$ is in $\BS$ if the two following conditions hold. 
\begin{enumerate}
\item Regularity: 
\beq
\label{eqBs}
|\rho|_{\BS}^p:= \sup_{{\calT}}\ \sup_{\eta>0} \eta^p \sum_{j\ge 0}2^{-2j}\#_j \calTrhoeta <\infty ,\,\, \text{ with } 
p= \frac{2(d-2)}{2s+d-2}
\eeq
where $\calT$ varies over the set, assumed nonempty, of multiscale tree decompositions satisfying Assumption (A1-A5).

\item Quasi-orthogonality: There exists a constant $B_0>0$ such that for any proper subtree $\tilde\calT$ of $\calT$
\beq
\label{quasiortho}
\|\sum_{\Cjk \notin \calTrhoeta} \calQjk X \|^2
 \le B_0 \!\!\sum_{\Cjk \notin \calTrhoeta}\!\! \|\calQjk X\|^2.
\eeq
\end{enumerate} 
\end{definition}
The regularity condition requires that the tree entropy grows at a rate that is well-balanced by $\eta^p$, and the quasi-orthogonality condition imposes that the operators $\{\calQjk\}_{k\in\calK_j, j\ge 0}$ applied on $x \in \calM$ are mostly orthogonal and/or $\|\calQjk X\|$ quickly decays across scales. We postpone further discussion of this condition in Section \ref{secQO}.
One can show that in the case $d\ge 3$, $\rho\in\BS$ along with quasi-orthogonality implies a certain rate of approximation of $X$ by $\calP_{\Lamrhoeta} X$, as $\eta\rightarrow0^+$:
\beq
\label{Bs1}
\|X-\calP_{\Lamrhoeta} X\|^2
\le
B_{s,d} |\rho|_{\BS}^p \eta^{2-p} \le B_{s,d} |\rho|_{\BS}^2 \left(\sum_{j \ge 0}2^{-2j} \#_j \calTrhoeta \right)^{-\frac{2s}{d-2}},
 \eeq
where $s = {(d-2)(2-p)}/(2p)$ and $B_{s,d}:= B_0 2^p /(1-2^{p-2})$. This is proven in appendix \ref{proofBs1}.  
}

\commentout{
In summary, our algorithm of constructing the empirical adaptive GMRA include the following steps:
\begin{enumerate}
\item Compute the empirical affine projector $\hcalPjk$ and the empirical amount of decreased error $\hDeltajk$ for every node $\Cjk \in \calTn$.

\item Obtain $\hcalTtaun$, the smallest proper subtree of $\calTn$ which contains all $\Cjk \in \calTn$ such that $\hDeltajk \ge 2^{-j}\tau_n$.

\item Return piecewise affine projectors on $\hLamtaun$ 
$$\hcalP_{\hLamtaun} = \sum_{\Cjk \in \hLamtaun} \hcalPjk \chijk.$$
\end{enumerate}
}
The main result of this paper is the following performance analysis of empirical adaptive GMRA (see the proof in Section \ref{s:adaptivegmra}). 

\begin{theorem}
\label{thm3}
Suppose $\rho$ satisfies quasi-orthogonality and $\calM$ is bounded: $\calM \subset B_{M}(0)$. 
Let $\nu>0$. 
There exists $\kappa_0(\theta_2,\theta_3,\theta_4,\amax,d,\nu)$ such that if $\tau_n = \kappa \sqrt{(\log n)/n}$ with $\kappa \ge \kappa_0$, the following holds:
\begin{itemize}
\item[(i)] 
if ${d\ge3}$ and $\rho \in \BS$ for some $s>0$, there are $c_1$ and $ c_2$ such that
\begin{align}
&\PP\left\{ 
\|X-\hcalP_{\hLamtaun}X\| \ge c_1\left(\lognn \right)^{\frac{s}{2s+d-2}}
\right\}
 \le 
c_2 n^{-\nu}\,.
\label{thm3eq0}
\end{align}

\item[(ii)] if ${d=1}$, there exist $c_1$ and $c_2$ such that
\begin{align}
&\PP\left\{ 
\|X-\hcalP_{\hLamtaun}X\| \ge c_1\left(\lognn \right)^{\frac 1 2}
\right\}
 \le 
c_2 n^{-\nu}\,.
\label{thm3eq10}
\end{align}

\item[(iii)] if ${d=2}$ and
$$|\rho|: = \sup_\calT\sup_{\eta>0} [-\log \eta]^{-1}{\sum_{j\ge \jmin } 2^{-2j} \#_j \calTrhoeta } < +\infty\,,$$
then there exist $c_1$ and $c_2$ such that
\begin{align}
&\PP\left\{ 
\|X-\hcalP_{\hLamtaun}X\| \ge c_1\left(\frac{\log^2 n}{ n}\right)^\frac12
\right\}
 \le 
c_2 n^{-\nu}\,.
\label{thm3eq20}
\end{align}
\end{itemize}
\end{theorem}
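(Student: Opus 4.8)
The plan is to run a bias--variance decomposition on the random adaptive partition $\hLamtaun$ and to control both pieces by sandwiching the empirical tree $\hcalTtaun$ between two deterministic trees. Fix an absolute constant $b>1$. In the $L^2(\rho)$ norm I would write
\[
\|X-\hcalP_{\hLamtaun}X\| \le \|X-\calP_{\hLamtaun}X\| + \|\calP_{\hLamtaun}X-\hcalP_{\hLamtaun}X\|,
\]
where the first term is a bias (the population projector on the empirical partition) and the second a variance (population versus empirical projector on the same partition). Both depend on the data twice, through the partition and through the local PCA, so the comparison trees $\calT_{(\rho,b\tau_n)}$ and $\calT_{(\rho,\tau_n/b)}$ are introduced to decouple these dependencies.

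The heart of the argument, and the step I expect to be hardest, is a uniform concentration estimate (this is the role of Lemma~\ref{thm1}): for $\kappa\ge\kappa_0$ there is an event $\Omega_n$ with $\PP(\Omega_n)\ge 1-c_2 n^{-\nu}$ on which, simultaneously for every cell of the data tree $\calTn$,
\[
\Deltajk \ge 2^{-j}b\tau_n \Rightarrow \hDeltajk \ge 2^{-j}\tau_n, \qquad \Deltajk < 2^{-j}\tau_n/b \Rightarrow \hDeltajk < 2^{-j}\tau_n.
\]
These two implications force the inclusions
\[
\calT_{(\rho,b\tau_n)} \subseteq \hcalTtaun \subseteq \calT_{(\rho,\tau_n/b)} \quad\text{on }\Omega_n.
\]
Establishing them requires that $\hDeltajk$ concentrate around $\Deltajk$ at scale $2^{-j}\sqrt{(\log n)/n}$ uniformly over all $\mathcal{O}(n)$ cells; since $\hDeltajk$ is built from the empirical PCA projectors $\hcalP_j,\hcalP_{j+1}$, this rests on matrix-perturbation bounds driven by the concentration of the empirical covariances $\hSjk$, with the eigenvalue separation (A5) and the boundedness $\calM\subset B_M(0)$ supplying the needed control, and with $\kappa_0$ taken large enough that the union bound over the tree plus the deviations stay inside the gap between $b\tau_n$ and $\tau_n/b$. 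Crucially $\kappa_0$ may be chosen independently of $s$, which is what makes the threshold universal.

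On $\Omega_n$ the two inclusions dispatch the two terms. For the bias, $\hcalTtaun\supseteq\calT_{(\rho,b\tau_n)}$ means the cells outside $\hcalTtaun$ lie among those outside $\calT_{(\rho,b\tau_n)}$, so quasi-orthogonality \eqref{quasiortho} together with $\|\calQjk X\|^2=\Deltajk^2$ gives
\[
\|X-\calP_{\hLamtaun}X\|^2 \le B_0\!\!\sum_{\Cjk\notin\hcalTtaun}\!\!\Deltajk^2 \le B_0\!\!\sum_{\Cjk\notin\calT_{(\rho,b\tau_n)}}\!\!\Deltajk^2 \lesssim |\rho|_{\BS}^p(b\tau_n)^{2-p},
\]
the final step being the regularity estimate underlying \eqref{Bs1}. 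For the variance, $\hcalTtaun\subseteq\calT_{(\rho,\tau_n/b)}$ controls the partition complexity; summing the per-cell variance bound of Lemma~\ref{thm1} over the outer leaves and invoking the weighted-complexity bound in Definition~\ref{defBs} yields
\[
\|\calP_{\hLamtaun}X-\hcalP_{\hLamtaun}X\|^2 \lesssim \frac{\log n}{n}\sum_{j\ge\jmin}2^{-2j}\#_j\calT_{(\rho,\tau_n/b)} \lesssim \frac{\log n}{n}\,|\rho|_{\BS}^p(\tau_n/b)^{-p}.
\]
Since $\tau_n^2\asymp(\log n)/n$, both terms are $\lesssim \tau_n^{2-p}$, and because $2-p=\tfrac{4s}{2s+d-2}$ this equals $(\log n/n)^{\frac{2s}{2s+d-2}}$, giving \eqref{thm3eq0} after taking square roots. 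The borderline cases follow the same scheme with the complexity bound adjusted: for $d=1$ one has $\sum_j 2^{-2j}\#_j\calTn\lesssim\sum_j 2^{-j}<\infty$ irrespective of regularity, so the variance alone controls the error at rate $(\log n/n)^{1/2}$; for $d=2$ one has $p=0$, and replacing the power-law complexity bound by the logarithmic one, $\sum_j 2^{-2j}\#_j\calT_{(\rho,\tau_n/b)}\lesssim|\rho|\,[-\log\tau_n]\asymp\log n$, produces the extra factor giving rate $(\log^2 n/n)^{1/2}$.
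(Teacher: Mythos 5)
Your architecture is the paper's own: two comparison trees at thresholds $b\tau_n$ and $\tau_n/b$, a uniform concentration of $\hDeltajk$ around $\Deltajk$ at scale $2^{-j}\sqrt{(\log n)/n}$ (the paper's Lemma~\ref{lemma4}), quasi-orthogonality plus the $\BS$ bound \eqref{Bs1} for the bias, and the weighted-complexity variance bound of Lemma~\ref{thm1} for the stochastic term, with exactly the paper's treatment of $d=1,2$. The paper phrases this as a four-term split $e_1+e_2+e_3+e_4$ in which $e_2=e_4=0$ with high probability rather than as a sandwich event, but that is a cosmetic difference.

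There is, however, one step that fails as written. Your concentration event $\Omega_n$ is stated only over cells of the data master tree $\calTn$, so the first implication yields only $\calTrhobtaun\cap\calTn\subseteq\hcalTtaun$, not the full inclusion $\calTrhobtaun\subseteq\hcalTtaun$ that your bias chain
\begin{equation*}
\sum_{\Cjk\notin\hcalTtaun}\Deltajk^2\;\le\;\sum_{\Cjk\notin\calTrhobtaun}\Deltajk^2
\end{equation*}
requires. A cell of $\calTrhobtaun$ containing fewer than $d$ sample points is not in $\calTn$ at all, hence cannot be in $\hcalTtaun$ no matter how well $\hDeltajk$ concentrates, and the inequality above would then miss its contribution. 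The missing ingredient is the paper's Lemma~\ref{thm3_lemma1}: since $\Deltajk\le\tfrac32\theta_2 2^{-j}\sqrt{\rho(\Cjk)}$, any cell with $\Deltajk\ge 2^{-j}b\tau_n$ has $\rho(\Cjk)\ge 4b^2\kappa^2(\log n)/(9\theta_2^2 n)$, so by Bernstein it contains at least $d$ points except with probability $\mathcal{O}(n^{-\nu})$; a union bound over the at most $\mathcal{O}(\tau_n^{-2})$ nodes of $\calTrhobtaun$ then shows $\calTrhobtaun\subseteq\calTn$ with high probability, after which your argument goes through. This is precisely the point the paper highlights as its improvement over the $\calA_\gamma$ assumption of Binev et al., so it should not be elided. (A smaller remark: Lemma~\ref{thm1} is stated for a fixed partition, while $\hLamtaun$ is random; you need, as the paper does, to exploit that the probability bound depends on $\hLamtaun$ only through complexity quantities that are dominated by those of the deterministic tree $\calTtaunb$ on your event.)
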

The dependencies of the constants in Theorem \ref{thm3} on the geometric constants are as follows:
\begin{center}
    \begin{tabular}{ c c  c c c }
  $d\ge 3:$ & $c_1=c_1(\theta_2,\theta_3,\theta_4,\amax,d,s,\kappa,|\rho|_{\BS},B_0,\nu)$,& $c_2 = c_2(\theta_2,\theta_3,\theta_4,\amin,\amax,d,s,\kappa,|\rho|_{\BS},B_0)$
    \\
    $d=1:$ & $c_1 = c_1(\theta_1,\theta_2,\theta_3,\theta_4,\amax,d,\kappa,B_0,\nu)$, & $c_2 =c_2(\theta_1,\theta_2,\theta_3,\theta_4,\amin,\amax,d,\kappa,B_0)$
    \\
    $d=2:$ &$c_1 = c_1(\theta_2,\theta_3,\theta_4,\amax,d,\kappa,|\rho|, B_0,\nu)$, & $c_2=c_2(\theta_2,\theta_3,\theta_4,\amin,\amax,d,\kappa,|\rho|,B_0)$
    \\
   \end{tabular}
\end{center}
Notice that by choosing $\nu$ large enough, we have 
$$\PP\left\{ 
\|X-\hcalP_{\hLamtaun}X\| \ge c_1\left(\frac{\log^\alpha n}{ n}\right)^\beta
\right\}
 \le 
c_2 n^{-\nu}
\Rightarrow {\rm MSE } \le c_1 \left(\frac{\log^\alpha n}{n}\right)^{2\beta},$$
so we also have the MSE bound: ${\rm MSE} \lesssim (\log n /n)^{\frac{2s}{2s+d-2}}$ for $d\ge 3$ and ${\rm MSE} \lesssim \log^d n /n$ for $d=1,2$.

In comparison with Theorem \ref{thm2}, Theorem \ref{thm3} is more satisfactory for two reasons: (i) when $d \ge 3$, the same rate $(\log n /n)^{\frac{2s}{2s+d-2}}$ is proved for the model class $\BS$ which is larger than $\AS^\infty$. (ii) our algorithm is universal: it does not require a priori knowledge of the regularity $s$, since the choice of $\kappa$ is independent of $s$, yet it achieves the rate as if it knew the optimal regularity parameter $s$.

In the perspective of dictionary learning, when $d \ge 3$, adaptive GMRA provides a dictionary $\Phi_{\hLamtaun}$ associated with a tree of weighted complexity $(n/\log n)^{\frac{d-2}{2s+d-2}}$, so that every $x$ sampled from $\rho$ may be encoded by a vector with $d+1$ nonzero entries, among which one encodes the location of $x$ in the adaptive partition and the other $d$ entries are the local principal coefficients of $x$. 

%
For a given accuracy $\ep$, in order to achieve ${\rm MSE} \lesssim \ep^2$, the number of samples we need is $n_\ep \gtrsim (1/\ep)^{\frac{2s+d-2}{s}}\log (1/\ep)$. When $s$ is unknown, we can determine $s$ as follows: 
%
we fix a small $n_0$ and run adaptive GMRA with $n_0, 2n_0, 4 n_0, \ldots,10 n_0$ samples. For each sample size, we evenly split data into the training set to construct adaptive GMRA and the test set to evaluate the MSE. According to Theorem \ref{thm3}, the MSE scales like $ [(\log n)/n]^{\frac{2s}{2s+d-2}}$ where $n$ is the sample size. Therefore, the slope in the log-log plot of the MSE versus $n$ gives an approximation of $-2s/(2s+d-2)$.

The threshold $\tau_n$ in our adaptive algorithm is independent of $s$ as $\kappa_0$ does not depend on $s$, which means our adaptive algorithm does not require $s$ as a priori information but rather will learn it from data. It would also be natural to consider another stopping criterion: $\calE^2_{j,k} := \frac{1}{\rho(\Cjk)}\int_{\Cjk} \|\calP_j x - x\|^2 d\rho \le \eta^2$ which suggests stopping refinement to finer scales if the approximation error is below certain threshold. The reason why we do not  adopt this stopping criterion is that in this case the threshold $\eta$ would have to depend on $s$ in order to guarantee the (adaptive) rate ${\rm MSE} \lesssim \left(\log n /n\right)^{\frac{2s}{2s+d-2}}$ for $d \ge 3$.
More precisely, for any threshold $\eta>0$, let $\calTrhoeta^{\calE}$ be the smallest proper subtree of $\calT$ whose leaves satisfy $\calE_{j,k}^2 \le \eta^2$. The corresponding adaptive partition $\Lamrhoeta^{\calE}$ consists of the leaves of $\calTrhoeta^{\calE}$. This stopping criterion guarantees $\|X-\calP_{\Lamrhoeta^{\calE}}X\| \le \eta$. It is natural to define the model class $\calF_s$ in the case $d\ge 3$ to be the set of probability measures $\rho$ supported on $\calM$ such that $\sup_{\calT}\sup_{\eta>0} \eta^{\frac{d-2}{s}} \sum_{j \ge \jmin} 2^{-2j}\#_j\Lamrhoeta^{\calE} < \infty$ where $\calT$ varies over the set of multiscale tree decompositions satisfying (A1-A5). One can show that $\calA_s^\infty \subsetneq \calF_s$. As an analogue of Theorem \ref{thm3}, we can prove that, there exists $\kappa_0>0$ such that if our adaptive algorithm adopts the stopping criterion $\widehat{\calE}_{j,k} \le \tau^{\calE}_n$ where the threshold is chosen as $\tau^{\calE}_n = \kappa \left(\log n /n\right)^{\frac{s}{2s+d-2}}$ with $\kappa \ge \kappa_0$, then the empirical approximation on the adaptive partition  $\widehat\Lam_{\tau_n^{\calE}}$ satisfies ${\rm MSE} = \|X-\hcalP_{\widehat\Lam_{\tau_n^{\calE}}}X\|^2 \lesssim \left(\log n /n\right)^{\frac{2s}{2s+d-2}}.$ With this stopping criterion, the threshold $\tau_n^{\calE}$ would depend on $s$, forcing us to know $s$ as a priori information, unlike in Theorem \ref{thm3}.

Theorem \ref{thm3} is stated when $\calM$ is bounded. The assumption of the boundedness of $\calM$ is largely irrelevant, and may be replaced by a weaker assumption on the decay of $\rho$. 

\begin{theorem}
\label{thmunbounded}
Let $d \ge 3$, $s,\delta,\lambda,\mu>0$.
Assume that there exists $C_1$ such that $$\int_{B_R(0)^c} ||x||^{2}d\rho \le C_1 R^{-\delta}, \ \forall R \ge R_0.$$
Suppose $\rho$ satisfies quasi-orthogonality. 
If $\rho$ restricted on $B_R(0)$, denoted by $\rho|_{B_R(0)}$, is in $\BS$ for every $R \ge R_0$ and $\Large|\rho|_{B_R(0)} \Large|_{\BS}^{p} \le C_2 R^\lambda$ for some $C_2>0$, where $p = \frac{2(d-2)}{2s+d-2}$.
Then there exists $ \kappa_0(\theta_2,\theta_3,\theta_4,\amax,d,\nu)$ such that if $\tau_n = \kappa \sqrt{\log n /n}$ with $\kappa \ge \kappa_0$, we have
 \begin{align}
 \label{equnbounded}
&\PP\left\{ 
\|X-\hcalP_{\hLamtaun}X\| \ge c_1\left(\lognn \right)^{\frac{s}{2s+d-2}\frac{\delta}{\delta+\max(\lambda,2)}}
\right\}
 \le 
c_2 n^{-\nu}\,
\end{align}
for 
some $c_1,c_2$ independent of $n$,
where the estimator $\hcalP_{\hLamtaun}X$ is obtained by adaptive GMRA within $B_{R_n}(0)$ where $R_n =\max(R_0, \mu(n/\log n)^{\frac{2s}{(2s+d-2)(\delta+\max(\lambda,2))}} )$, and is equal to $0$ for the points outside $B_{R_n}(0)$.
\end{theorem}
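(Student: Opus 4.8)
The plan is to reduce to the bounded case of Theorem \ref{thm3} by truncating the data to the ball $B_{R_n}(0)$ and using the tail hypothesis to absorb the discarded mass. First I would decompose the mean-square error according to whether a point lies inside or outside $B_{R_n}(0)$. Since the estimator is set to $0$ outside $B_{R_n}(0)$, the outer contribution is exactly $\int_{B_{R_n}(0)^c}\|x\|^2\,d\rho \le C_1 R_n^{-\delta}$ by assumption, with no approximation performed there. It then remains to bound the inner contribution $\int_{B_{R_n}(0)}\|x-\hcalP_{\hLamtaun}x\|^2\,d\rho$, which is where the bounded theorem enters.

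For the inner term I would rescale by $1/R_n$, mapping $\calM\cap B_{R_n}(0)$ into $B_1(0)$ and pushing $\rho|_{B_{R_n}(0)}$ forward to a measure $\tilde\rho$ supported in $B_1(0)$. The point is that rescaling is a similarity and hence preserves all the geometric constants $\theta_1,\ldots,\theta_4,\amin,\amax$ and the quasi-orthogonality constant $B_0$: diameters and covariance eigenvalues rescale consistently under a uniform shift $j\mapsto j+\log_2 R_n$ of the scale index, so Assumptions (A1-A5) carry over unchanged. Only the regularity seminorm transforms, and a direct computation from Definition \ref{defBs} — using that the refinement threshold in $\Deltajk\ge 2^{-j}\eta$ is invariant under the joint rescaling of $\Deltajk$ and $2^{-j}$ — yields the scaling law $|\tilde\rho|_{\BS}^p = R_n^{-2}\,\bigl|\rho|_{B_{R_n}(0)}\bigr|_{\BS}^p \le C_2 R_n^{\lambda-2}$.

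Applying Theorem \ref{thm3}(i) to $\tilde\rho$ with $M=1$ (and the same universal $\kappa\ge\kappa_0$, whose threshold constant does not depend on the domain size) and then undoing the rescaling, the squared error picks up the overall factor $R_n^2$ from $\|x\|^2 = R_n^2\|y\|^2$. Tracking how the bounded bound depends on the regularity seminorm — the bias in \eqref{Bs1} and the variance controlled by Lemma \ref{thm1} both scale like $|\tilde\rho|_{\BS}^p$ on top of a regularity-independent base level — I would obtain, with probability at least $1-c_2 n^{-\nu}$,
\[
\int_{B_{R_n}(0)}\|x-\hcalP_{\hLamtaun}x\|^2\,d\rho \;\lesssim\; R_n^2\bigl(1+|\tilde\rho|_{\BS}^p\bigr)\left(\lognn\right)^{\frac{2s}{2s+d-2}} \;\lesssim\; R_n^{\max(\lambda,2)}\left(\lognn\right)^{\frac{2s}{2s+d-2}}.
\]
One also has to observe that the samples landing in $B_{R_n}(0)$ are, conditionally on their number $m$, i.i.d.\ from $\rho|_{B_{R_n}(0)}$, and that $m\asymp n$ with overwhelming probability because $\rho(B_{R_n}(0)^c)\le C_1 R_n^{-2-\delta}$; this lets us invoke Theorem \ref{thm3} with $\asymp n$ effective samples.

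Finally I would balance the two contributions. Adding the inner bound $R_n^{\max(\lambda,2)}(\lognn)^{\frac{2s}{2s+d-2}}$ to the outer bound $R_n^{-\delta}$ and optimizing over $R_n$ equates them exactly at $R_n\asymp (n/\log n)^{\frac{2s}{(2s+d-2)(\delta+\max(\lambda,2))}}$, which is the stated choice, giving total squared error $\lesssim (\lognn)^{\frac{2s}{2s+d-2}\cdot\frac{\delta}{\delta+\max(\lambda,2)}}$; taking square roots yields \eqref{equnbounded}. The main obstacle is the third paragraph: one must make the dependence of Theorem \ref{thm3} on the domain radius and on $|\rho|_{\BS}$ fully explicit, verifying both the scaling identity $|\tilde\rho|_{\BS}^p = R_n^{-2}\bigl|\rho|_{B_{R_n}(0)}\bigr|_{\BS}^p$ and that, after rescaling, the variance estimate of Lemma \ref{thm1} contributes only the regularity-independent $R_n^2$ term (which is what produces the $\max(\lambda,2)$). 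The probabilistic bookkeeping — ensuring the single failure event of Theorem \ref{thm3} still has probability $\le c_2 n^{-\nu}$ after conditioning on the sample count in $B_{R_n}(0)$ — is routine.
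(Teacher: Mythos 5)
Your proposal is correct and follows essentially the same route as the paper: split the error into the part inside and the part outside $B_{R_n}(0)$, bound the outer part by the tail hypothesis $\int_{B_{R_n}(0)^c}\|x\|^2d\rho\le C_1R_n^{-\delta}$, apply Theorem \ref{thm3} on the ball to get an inner bound $\lesssim\bigl(\bigl|\rho|_{B_{R_n}(0)}\bigr|_{\BS}^p+R_n^2\bigr)\left(\lognn\right)^{\frac{2s}{2s+d-2}}\lesssim R_n^{\max(\lambda,2)}\left(\lognn\right)^{\frac{2s}{2s+d-2}}$, and balance the two terms at the stated $R_n$. The only real difference is in how the radius dependence of the bounded-case constant is justified: the paper reads it off directly from its proof of Theorem \ref{thm3} (the $R^2$ coming from the $4M^2$ bounds on $e_{12},e_2,e_4$ and the seminorm entering through $e_{11}$ and $e_3$), whereas you obtain the same dependence by rescaling to the unit ball and using the identity $|\tilde\rho|_{\BS}^p=R_n^{-2}\bigl|\rho|_{B_{R_n}(0)}\bigr|_{\BS}^p$ — a correct and somewhat more explicit bookkeeping of the same facts.
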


Theorem \ref{thmunbounded} is proved at the end of Section \ref{s:adaptivegmra}. It implies ${\rm MSE} \lesssim (\log n /n)^{\frac{2s}{2s+d-2} \cdot \frac{\delta}{\delta+\max(\lambda,2)}}$
As $\delta$ increases, i.e., $\delta \rightarrow +\infty$, the MSE approaches $(\log n /n)^{\frac{2s}{2s+d-2}}$, which is consistent with Theorem \ref{thm3} for bounded $\calM$.
Similar results, with similar proofs, would hold under different assumptions on the decay of $\rho$; for example for $\rho$ decaying exponentially, or faster, only $\log n$ terms in the rate would be lost compared to the rate in Theorem \eqref{thm3}.

\begin{remark}
\label{remarkunbounded}
We claim that $\lambda$ is not large in simple cases. For example, if $\rho \in \AS^\infty$ and $\rho$ decays in the radial direction in such a way that $\rho(\Cjk) \le C 2^{-jd}\|\cjk\|^{-(d+1+\delta)}$, it is easy to show that $\rho|_{B_R(0)} \in \BS$ for all $R>0$ and $\Large | \rho|_{B_R(0)}\Large|_{\BS}^p \le R^\lambda$ with $\lambda = d-\frac{(d+1+\delta)(d-2)}{2s+d-2}$ (see the end of Section \ref{s:adaptivegmra}).
\end{remark}

\begin{remark}
Suppose that $\rho$ was supported in a tube of radius $\sigma$ around a $d$-dimensional manifold $\calM$, a model that can account both for (bounded) noise and situations where data is not exactly on a manifold, but close to it, as in \citet{MMS:NoisyDictionaryLearning}. Then Theorem \ref{thm3} and Theorem \ref{thmunbounded} apply in this case, provided one stops at scale $j$ such that $2^{-j}\gtrsim\sigma$.
\end{remark}

\begin{remark}
In these Theorems we are assuming that $d$ is given because it can be estimated using existing techniques, see for example \citet{LMR:MGM1} and references therein.
\end{remark}


\commentout{
\subsubsection{Scale-independent threshold and model class $\Gammas$}
We also consider scale-independent threshold where we run Algorithm \ref{TableAdaptiveGMRA} with $\hDeltajk \ge \tau_n$ in Step 3.
Given any fixed threshold $\eta > 0$, we let $
\calTrhoeta$ \textcolor{green}{MM: we can't use the same notation as for the scale-dependent threshold...} be the smallest proper tree of $\calT$ that contains all  $C_{j,k} \in \calT$ for which  $\Delta_{j,k} \ge \eta$. We define model class $\Gammas$ consisting of the elements for which we can control
the growth rate of $\#\calTrhoeta$ as $\eta$ decreases.

\begin{definition}[Model class $\Gammas$]
\label{defGammas}
Given $s>0$, a probability measure $\rho$ supported on $\calM$ is in $\Gammas$ if $\rho$ satisfies the following regularity condition:
\beq
\label{eqGammas}
|\rho|_{\Gammas}^p:= \sup_{{\calT}}\ \sup_{\eta>0} \eta^p \# \calTrhoeta <\infty ,\,\, \text{ with } 
p= \frac{2d}{2s+d}
\eeq
where $\calT$ varies over the set, assumed nonempty, of multiscale tree decompositions satisfying Assumption (A1-A5).
\end{definition}

One can show that $\rho\in\Gammas$ along with quasi-orthogonality (eqn. \eqref{quasiortho}) implies
$$
\|X-\calP_{\Lamrhoeta} X\|^2
\le
B_0 2^p /(1-2^{p-2}) |\rho|_{\Gammas}^p \eta^{2-p} \le  B_0 2^p /(1-2^{p-2})  |\rho|_{\Gammas}^2  \#\calTrhoeta^{-\frac{2s}{d}},
$$
where $s = {d(2-p)}/(2p)$. 

\begin{example}
$\Gammas$ is also more general than $\AS^\infty$. If $\rho\in\AS^\infty$, 
then $\rho\in\Gammas$ and $|\rho|_{\Gammas} \lesssim |\rho|_{\AS^\infty}$ (see appendix \ref{appAsInf2}).
\end{example}

\begin{lemma}
\label{l:spacerelationships}
\begin{equation*}
\AS^\infty\subsetneq\Gammas\subsetneq\BS
\end{equation*}
\end{lemma}
\begin{proof}
\textcolor{green}{MM: put the proof in the appendix. Summarize all containments in one place, for example this Lemma. Would be also nice to show that the containments are strict}.

\end{proof}

\begin{example}
Volume measure on the $d$-dimensional $(d\ge 3)$ S manifold and Z manifold are in $\Gamma_2$ and $\Gamma_{{1.5d}/{(d-1)}}$ respectively (see appendix \ref{AppSZ}). 
Numerically $s$ is approximated by the negative of the slope of the log-log plot of $\|X-\hcalP_{\hLameta} X\|^{d}$ versus cardinality of the truncated tree.
See numerical experiments in Figure \ref{FigBS} (b,d).
\end{example}

\commentout{
In summary, our algorithm of constructing the empirical adaptive GMRA include the following steps:
\begin{enumerate}
\item Compute the empirical affine projector $\hcalPjk$ and the empirical amount of decreased error $\hDeltajk$ for every node $\Cjk \in \calTn$.

\item Obtain $\hcalTtaun$, the smallest proper subtree of $\calTn$ which contains all $\Cjk \in \calTn$ such that $\hDeltajk \ge 2^{-j}\tau_n$.

\item Return piecewise affine projectors on $\hLamtaun$ 
$$\hcalP_{\hLamtaun} = \sum_{\Cjk \in \hLamtaun} \hcalPjk \chijk.$$
\end{enumerate}
}

The second main result of this paper is the performance analysis of empirical adaptive GMRA with scale-independent threshold for the model class $\Gammas$. 
\begin{theorem}
\label{thm31}
Suppose $\rho$ satisfies quasi-orthogonality and $\calM$ is bounded: $\|x\| \le M$ for all $x \in \calM$. 
Let $\nu>0$. 
There exists $\kappa_0 = \kappa_0(\theta_2,\theta_3,\amax,d,\nu)$ such that if $\tau_n = \kappa \sqrt{(\log n)/n}$ with $\kappa \ge \kappa_0$, then if $\rho \in \Gammas$ for some $s>0$, there are $c_1:=c_1(\theta_2,\theta_3,\amax,d,s,\kappa,|\rho|_{\Gammas},B_0,\nu)$ and $c_2:= c_2(\theta_2,\theta_3,\amin,\amax,d,s,\kappa,|\rho|_{\Gammas},B_0)$ such that
$$
\PP\left\{ 
\|X-\hcalP_{\hLamtaun}X\| \ge c_1\left(\lognn \right)^{\frac{s}{2s+d}}
\right\}
 \le 
c_2 n^{-\nu}\,.
$$

\end{theorem}

We also have the MSE bound: 
$$
\EE \|X-\hcalP_{\hLamtaun}X\|^2 
\le c_3(\theta_2,\theta_3,\amin,\amax,d,M,s,\kappa,|\rho|_{\Gammas},B_0)\left(\lognn\right)^{\frac{2s}{2s+d}}.
$$
In the perspective of dictionary learning, adaptive GMRA provides a dictionary $\Phi_{\hLamtaun}$ with cardinality $\asymp d(n/\log n)^{\frac{d}{2s+d}}$, so that every $x$ sampled from $\rho$ may be encoded with a vector with $d+1$ nonzero entries.

\begin{figure}
\centering
\begin{tikzpicture}
\draw[color=blue, very thick](-1,0) circle (2);
\node [blue] at (-1.7,0) {$\BS$};
\draw[color=black, very thick](1,0) circle (2);
\node [black] at (1.7,0) {$\Gammas$};
\draw[color=red!60, fill=red!5, very thick](0,0) circle (0.8);
\node [red] at (0,0) {$\AS^\infty$};
\end{tikzpicture}
\caption{Relation of $\AS^\infty$, $\BS$ and $\Gammas$}
\end{figure}
}

\subsection{Connection to previous works}

The works by \cite{CM:MGM2} and \cite{MMS:NoisyDictionaryLearning} are natural predecessors to this work.
In \cite{CM:MGM2}, GMRA and orthogonal GMRA were proposed as data-driven dictionary learning tools to analyze intrinsically low-dimensional point clouds in a high dimensions. The bias $\|X-\calP_j X\|$ were estimated for volume measures on $\calC^{s}, s \in (1,2]$ manifolds .
The performance of GMRA,  including sparsity guarantees and computational costs, were systematically studied and tested on both simulated and real data. 
In \cite{MMS:NoisyDictionaryLearning} the finite sample behavior of empirical GMRA was studied. A non-asymptotic probabilistic bound on the approximation error $\|X-\hcalP_j X\|$ for the model class $\calA_2$ (a special case of Theorem \ref{thm2} with $s=2$) was established. It was further proved that if the measure $\rho$ is absolutely continuous with respect to the volume measure on a tube of a bounded $\mathcal{C}^2$ manifold with a finite reach, then $\rho$ is in $\calA_2$. Running the cover tree algorithm on data gives rise to a family of multiscale partitions satisfying Assumption (A3-A5).
The analysis in \cite{MMS:NoisyDictionaryLearning} robustly accounts for noise and modeling errors as the probability measure is concentrated \lq\lq near\rq\rq\ a manifold. This work extends GMRA by introducing adaptive GMRA, where low-dimensional linear approximations of $\calM$ are built on adaptive partitions at different scales. The finite sample performance of adaptive GMRA is proved for a large model class. Adaptive GMRA takes full advantage of the multiscale structure of GMRA in order to model data sets of varying complexity across locations and scales.
We also generalize the finite sample analysis of empirical GMRA from $\calA_2$ to $\calA_s$, and analyze the finite sample behavior of orthogonal GMRA and adaptive orthogonal GMRA.

In a different direction, a popular learning algorithm for fitting low-dimensional planes to data is $k$-flats: let $\mathcal{F}_k$ be the collections of $k$ flats (affine spaces) of dimension $d$. Given data $\calX_n =\{x_1,\ldots,x_n\}$, $k$-flats solves the optimization problem
\beq
\label{eqkflats}
\displaystyle\min_{S \in \mathcal{F}_k} 
\frac{1}{n} \sum_{i=1}^n {{\rm dist}^2(x_i,S)}
\eeq
where ${\rm dist}(x,S) = \inf_{y \in S} \|x-y\|$. Even though a global minimizer of \eqref{eqkflats} exists, it is hard to attain due to the non-convexity of the model class $\mathcal{F}_k$, and practitioners are aware that many local minima that are significantly worse than the global minimum exist. While often $k$ is considered given, it may be in fact chosen from the data: for example Theorem 4 in \cite{Kflats} implies that, given $n$ samples from a probability measure that is absolutely continuous with respect to the volume measure on a smooth $d$-dimensional manifold $\calM$, the expected (out-of-sample) $L^2$ approximation error of $\calM$ by $k_n = C_1(\calM,\rho)n^{\frac{d}{2(d+4)}}$ planes is of order $\mathcal{O}( n^{-\frac{2}{d+4}})$. This result is comparable with our Theorem \ref{thm2} in the case $s=2$ which says that the $L^2$ error by empirical GMRA at the scale $j$ such that $2^j \asymp \left ( n/\log n\right)^{\frac{1}{d+2}}$ achieves a faster rate $\mathcal{O}(n^{-\frac{2}{d+2}})$. So we not only achieve a better rate, but we do so with provable and fast algorithms, that are nonlinear but do not require non-convex optimization.

Multiscale adaptive estimation has been an intensive research area for decades. 
In the pioneering works by Donoho and Johnstone \citep[see][]{DJ1,DJ2}, soft thresholding of wavelet coefficients was proposed as a spatially adaptive method to denoise a function. 
In machine learning, Binev et al. addressed the regression problem with piecewise constant approximations \citep[see][]{DeVore:UniversalAlgorithmsLearningTheoryI} and piecewise polynomial approximations \citep[see][]{DeVore:UniversalAlgorithmsLearningTheoryII} supported on an adaptive subpartition chosen as the union of data-{\em{independent}} cells (e.g. dyadic cubes or recursively split samples).
While the works above are in the context of function approximation/learning/denoising, a whole branch of geometric measure theory (following the seminal work by \cite{Jones-TSP,DS}) quantifies via multiscale least squares fits the rectifiability of sets and their approximability by multiple images of bi-Lipschitz maps of, say, a $d$-dimensional square. We can the view the current work as extending those ideas to the setting where data is random, possibly noisy, and guarantees on error on future data become one of the fundamental questions.

Theorem \ref{thm3} can be viewed as a geometric counterpart of the adaptive function approximation in \citet{DeVore:UniversalAlgorithmsLearningTheoryI,DeVore:UniversalAlgorithmsLearningTheoryII}. 
Our results are a ``geometric counterpart'' of sorts. 
We would like to point out two main differences between Theorem \ref{thm3} and Theorem 3 in \cite{DeVore:UniversalAlgorithmsLearningTheoryI}: 
(i) In \citet[Theorem 3]{DeVore:UniversalAlgorithmsLearningTheoryI}, there is an extra assumption that the function is in $\calA_{\gamma}$ with $\gamma$ arbitrarily small. This assumption takes care of the error at the nodes in $\calT\setminus \calTn$ where the thresholding criteria would succeed: these nodes should be added to the adaptive partition but have not been explored by our data. 
This assumption is removed in our Theorem \ref{thm3} by observing that the nodes below the data master tree have small measure  
so their refinement criterion is smaller than $2^{-j}\tau_n$ with high probability.
(ii) we consider scale-dependent thresholding criterion $\hDeltajk \ge 2^{-j} \tau_n$ 
unlike the criterion in \cite{DeVore:UniversalAlgorithmsLearningTheoryI,DeVore:UniversalAlgorithmsLearningTheoryII} that is scale-independent. This difference arises 
because at scale $j$ our linear approximation is built on data within  a ball of radius $\lesssim 2^{-j}$ and so the variance of PCA on a fixed cell at scale $j$ is proportional to $2^{-2j}$.  
For the same reason, we measure the complexity of $\calTrhoeta$ in terms of the weighted tree complexity instead of the cardinality since the former one gives an upper bound of the variance in piecewise linear approximation on partition via PCA (see Lemma \ref{thm1}). Using scale-dependent threshold and measuring tree complexity in this way give rise to the best rate of convergence. In contrast, if we use scale-independent threshold and define a model class $\Gammas$ for whose elements $\#\calTrhoeta=\mathcal{O}(\eta^{-\frac{2d}{2s+d}})$ (analogous to the function class in \cite{DeVore:UniversalAlgorithmsLearningTheoryI,DeVore:UniversalAlgorithmsLearningTheoryII}), we can still show that $\AS^\infty \subset \Gammas$, but the estimator only achieves ${\rm MSE} \lesssim ((\log n)/n)^{\frac{2s}{2s+d}}$. However many elements\footnote{For these elements, the average cell-wise refinement is monotone such that: for every $\Cjk$ and $C_{j+1,k'} \subset \Cjk$, $\frac{\Delta_{j+1,k'}}{\sqrt{\rho(C_{j+1,k'})}} \le \frac{\Deltajk}{\sqrt{\rho(\Cjk)}}$.} of $\Gamma_s$ not in $\AS^\infty$ are in $\mathcal{B}^{s'}$ with $\frac{2(d-2)}{2s'+d-2} = \frac{2d}{2s+d}$, and in Theorem \ref{thm3} the estimator based on scaled thresholding achieves a better rate, which we believe is optimal.

We refer the reader to \cite{MMS:NoisyDictionaryLearning} for a thorough discussion of further related work related to manifold and dictionary learning.

\subsection{Construction of a multiscale tree decomposition}
\label{secCoverTree}

Our multiscale tree decomposition is constructed from a variation of the cover tree algorithm \citep[see][]{LangfordICML06-CoverTree} applied on half of the data denoted by $\calX_n' $. In brief the cover tree $T(\calX_n')$ on $\calX_n'$ is a leveled tree where each level is a \lq\lq cover\rq\rq\ for the level beneath it. Each level is indexed $j$ and each node in $T(\calX_n')$ is associated with a point in $\calX'_n$. A point can be associated with multiple nodes in the tree but it can appear at most once at every level. Let $T_j(\calX_n') \subset \calX_n'$ be the set of nodes of $T$ at level $j$. The cover tree obeys the following properties for all $j \in [\jmin,\jmax]$:

\begin{enumerate}
\item Nesting: $T_j(\calX_n')  \subset T_{j+1}(\calX_n') $;
\item Separation: for all distinct $p,q \in T_j(\calX_n') $, $\|p-q\|> 2^{-j}$;
\item Covering: for all $q \in T_{j+1}(\calX_n') $, there is $p \in T_j(\calX_n') $ such that $\|p-q\|<2^{-j}$. The node at level $j$ associated with $p$ is a parent of the node at level $j+1$ associated with $q$.
\end{enumerate}

In the third property, {$q$ is called a child of $p$. Each node can have multiple parents but is only assigned to one of them in the tree.} The properties above imply that for any $q \in \calX_n'$, there exists $p \in T_j$ such that $\|p-q\|<2^{-j+1}$. The authors  in \cite{LangfordICML06-CoverTree} showed that cover tree always exists and the construction takes time $\mathcal{O}(C^d D n\log n)$ .

We know show that from a set of nets $\{T_j(\calX_n')\}_{j \in [\jmin,\jmax]}$ as above we can construct a set of $C_{j,k}$ with desired properties. (see Appendix \ref{appa} for the construction of $\Cjk$'s and the proof of Proposition \ref{propcovertree2}). $\hcalM$ defined in \eqref{e:tildeM} is equal to the union of the $\Cjk's$ constructed above up to a set whose empirical measure is $0$.

\begin{proposition}
\label{propcovertree2}

Assume $\rho$ is a doubling probability measure on $\calM$ with the doubling constant $C_1$. Then $\{\Cjk \}_{k \in \calK_j, \jmin \le j \le\jmax}$ constructed above satisfies the Assumptions 
\begin{enumerate}
\item (A1) with $\amax = C_1^2 (24)^d$ and $\amin=1$.

\item For any $\nu>0$, 
\beq
\label{eqct1}
\PP\left\{\rho(\calM\setminus \hcalM) > \frac{28\nu\log n}{3n}\right\} \le 2n^{-\nu};
\eeq

\item (A3) with $\theta_1 = C_1^{-1} 4^{-d}$;

\item (A4) with $\theta_2 = 3$.

\item If additionally
\begin{itemize}
\item[5a.] if $\rho$ satisfies the conditions in (A5) with $B_r(z)$,  $z \in \calM$, replacing $\Cjk$ with constants $\ttheta_3,\ttheta_4$ such that $\lambda_d({\rm Cov}(\rho|_{B_r(z)})) \ge \ttheta_3 r^2/d$ and $\lambda_{d+1}({\rm Cov}(\rho|_{B_r(z)})) \le \ttheta_4 \lambda_d({\rm Cov}(\rho|_{B_r(z)}))$, then the conditions in (A5) are satisfied by the $\Cjk$'s we construct with $\theta_3:=\ttheta_3(4C_1)^{-2} 12^{-d}$ and $\theta_4:=\ttheta_4/\ttheta_3 12^{2d+2} C_1^4$.
\item[5b.] if $\rho$ is the volume measure on a closed $\calC^s$ Riemannian manifold isometrically embedded in $\RR^D$, then the conditions in (A5) are satisfied by the $\Cjk$'s when $j$ is sufficiently large.
\end{itemize}

\end{enumerate}
\end{proposition}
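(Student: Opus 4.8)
The plan is to fix a concrete construction of the cells from the cover-tree nets $\{T_j(\calX_n')\}$, and then verify (A1)--(A5) and \eqref{eqct1} separately, using the doubling property of $\rho$ as essentially the only tool. I would define $\Cjk$ by the hierarchical nearest-ancestor rule: send each $x\in\calM$ to the point of $T_{\jmax}(\calX_n')$ nearest to it provided it lies within the finest assignment radius (and leave it unassigned otherwise), then climb the unique chain of cover-tree parents; $\Cjk$ is the set of $x$ whose chain passes through the node indexed $(j,k)$, with center $c_{j,k}$ the associated net point. The nesting $T_j\subset T_{j+1}$ and the uniqueness of parents then give the partition/tree structure of (A1) and $\amin=1$ immediately; the remaining content is the quantitative bounds.

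For (A1)'s $\amax$, (A3) and (A4) I would run elementary packing/covering estimates. For (A3): the centers at scale $j$ are $2^{-j}$-separated, so the balls $B_{2^{-j-1}}(c_{j,k})$ are pairwise disjoint, each of mass $\ge C_1^{-1}2^{-(j+1)d}$ by the doubling lower bound, and summing against $\rho(\calM)=1$ gives $\#\Lam_j\le C_1 4^d 2^{jd}$, which is (A3) with $\theta_1=C_1^{-1}4^{-d}$. For $\amax$: the children centers of $\Cjk$ lie in a ball of radius $\asymp 2^{-j}$ about $c_{j,k}$ (covering property) and are $2^{-j-1}$-separated, so the same disjoint-packing argument with the two-sided doubling bounds caps their number at $\calO(C_1^2 24^d)$. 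For (A4): any $x\in\Cjk$ is within the finest assignment radius of its leaf center, and each parent step at scale $\ell$ moves the center by at most $2^{-\ell}$ (covering property), so $\|x-c_{j,k}\|$ is bounded by a geometric series summing to $\le 3\cdot2^{-j}$, i.e. $\theta_2=3$.

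For the covering bound \eqref{eqct1}, I would calibrate the finest net scale so that the net-balls have $\rho$-mass of order $(\log n)/n$, cover $\calM$ by such balls (bounded overlap, and $\calO(n)$ of them by doubling and separation), and observe that a point of $\calM$ can be left out of $\hcalM$ only if the net-ball containing it receives no sample from $\calX_n'$. A fixed region of mass exceeding $\tfrac{28\nu\log n}{3n}$ is sample-free with probability at most $(1-\rho)^n\le e^{-n\rho}$, which a Bernstein/relative-deviation inequality turns into a bound of the form $n^{-c\nu}$; the range and variance terms of Bernstein are exactly what produce the $28/3$. A union bound over the $\calO(n)$ net-balls then gives the failure probability $2n^{-\nu}$.

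The hard part is (A5), item 5a, i.e. transferring the spectral gap from balls to cells, since covariance matrices are not monotone under restriction of the measure. I would sandwich each cell between an inner ball $B^{\mathrm{in}}\subseteq\Cjk$ and an outer ball $B^{\mathrm{out}}\supseteq\Cjk$, both of radius $\asymp2^{-j}$, whose mass ratios to $\rho(\Cjk)$ are pinned between powers of $C_1$ and $12^{\pm d}$ by doubling. The workhorse identity is that $a\mapsto\int_S(v\cdot(x-a))^2\,d\rho$ is minimized at the conditional mean $a=c_S$; combined with the Courant--Fischer min-max principle this lets me push eigenvalues in the right directions. Testing the top-$d$ eigenspace of $\Sigma_{B^{\mathrm{in}}}$ gives $\lambda_d(\Sjk)\ge\tfrac{\rho(B^{\mathrm{in}})}{\rho(\Cjk)}\lambda_d(\Sigma_{B^{\mathrm{in}}})$, while restriction to $\Cjk\subseteq B^{\mathrm{out}}$ gives $\lambda_{d+1}(\Sjk)\le\tfrac{\rho(B^{\mathrm{out}})}{\rho(\Cjk)}\lambda_{d+1}(\Sigma_{B^{\mathrm{out}}})$. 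Feeding in the hypotheses $\lambda_d(\mathrm{Cov}(\rho|_{B_r(z)}))\ge\ttheta_3 r^2/d$ and $\lambda_{d+1}\le\ttheta_4\lambda_d$, using $\lambda_d(\Sigma_{B^{\mathrm{out}}})\le\diam(B^{\mathrm{out}})^2\asymp2^{-2j}$, and collecting the doubling factors yields (A5)(i) with $\theta_3=\ttheta_3(4C_1)^{-2}12^{-d}$ and (A5)(ii) with $\theta_4=(\ttheta_4/\ttheta_3)12^{2d+2}C_1^4$. Finally, for 5b I would check the ball-hypotheses of 5a directly for the volume measure on a $\calC^s$ manifold at small $r$: in a local chart $\rho|_{B_r(z)}$ is an $\calO(r^{s})$-perturbation of the uniform measure on a Euclidean $d$-ball, whose covariance has $\lambda_1=\cdots=\lambda_d\asymp r^2/d$ and $\lambda_{d+1}=0$; the perturbation keeps $\lambda_d\gtrsim r^2/d$ and forces $\lambda_{d+1}=\calO(r^{2s})=o(r^2)$, hence below $\ttheta_4\lambda_d$ once $j$ is large, so 5b reduces to 5a.
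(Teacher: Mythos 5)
Your verification of (A1), (A3), (A4) and \eqref{eqct1} is essentially sound, and in places takes a slightly different but equally valid route from the paper: you bound $\#\Lam_j$ and $\amax$ by packing disjoint balls around the $2^{-j}$-separated net centers, whereas the paper compares the $\rho$-mass of a cell (contained in $B_{3\cdot2^{-j}}(a_{j,k})$) with the mass of the inner balls of its children; and for \eqref{eqct1} you cover $\calM$ by $\calO(n)$ net-balls and union-bound, whereas the paper applies a single relative Bernstein bound to $\calM\setminus\hcalM$ using the fact that $\hrho(\calM\setminus\hcalM)=0$ by construction. (Note your union bound over $\calO(n)$ balls yields $\calO(n^{1-\nu})$ rather than $2n^{-\nu}$ unless the per-ball estimate is run at level $\nu+1$; a constant-level fix.) Item 5b is argued the same way as in the paper.

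The genuine gap is in item 5a, and it originates in the construction you fixed at the outset. Your cells are defined by the hierarchical nearest-leaf/ancestor-chain rule, but your proof of (A5) then assumes the sandwich $B^{\mathrm{in}}\subseteq\Cjk\subseteq B^{\mathrm{out}}$ with \emph{both} radii $\asymp 2^{-j}$ --- indeed the constants $12^{\pm d}$ and $(4C_1)^{-2}$ you quote come precisely from the ratio of an outer radius $3\cdot2^{-j}$ to an inner radius $2^{-j}/4$. For the pure Voronoi/ancestor construction the inner-ball property fails in general: a point $x$ with $\|x-a_{j,k}\|$ arbitrarily small may have, as its nearest leaf, a net point $a_{\jmax,m}$ whose ancestor chain terminates at a \emph{different} scale-$j$ node $a_{j,k'}$, since the chain can drift by up to $\sum_{\ell=j}^{\jmax-1}2^{-\ell}<2^{-j+1}$, which exceeds the $2^{-j}$ separation of the scale-$j$ net. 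Without an inner ball you lose the lower bound $\rho(\Cjk)\gtrsim C_1^{-1}2^{-jd}$ and the lower bound on $\lambda_d(\Sjk)$, so both halves of (A5) collapse. The paper avoids this by a modified construction (Algorithm \ref{TableCoverTree}): each cell is explicitly augmented with the balls $B_{\frac14 2^{-i}}(a_{i,k'})$ of all its descendant net points, and Lemma \ref{propcovertree1} then proves $B_{2^{-j}/4}(a_{j,k})\subseteq\Cjk\subseteq B_{3\cdot2^{-j}}(a_{j,k})$ while keeping the cells at each scale disjoint; the paper even remarks that for the simpler Voronoi construction the assumptions are only observed to hold empirically. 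Your eigenvalue-transfer argument itself (covariance about the mean is minimal, plus Courant--Fischer, i.e. the paper's Lemma \ref{l:covcompare}) is exactly right once the sandwich is available; what is missing is the proof that your cells admit it.
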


Even though the $\{\Cjk\}$ does not exactly satisfy Assumption (A2), we claim that \eqref{eqct1} is sufficient for our performance guarantees in the case that $\calM$ is bounded by $M$ and $d\ge 3$, since simply approximating points on $\calM \setminus \hcalM$ by $0$ gives the error: 
\beq
\label{eqct2}
\PP\left\{ \int_{\calM \setminus \hcalM} \|x\|^2 d\rho
\ge \frac{28M^2\log n}{3n}
\right\}  \le 2n^{-\nu}. 
\eeq

The constants in Proposition \ref{propcovertree2} are extremely pessimistic, due to the generality of the assumptions on the space $\mathcal{M}$. Indeed when $\mathcal{M}$ is a nice manifold as in case (5b), the statement in the Proposition says that the constants for the $C_{j,k}$'s we construct are similar to those of the ideal $C_{j,k}$'s. In practice we use a much simpler and more efficient tree construction method and we experimentally obtain the properties above with $\amax = C_1^2 4^d $ and $\amin =1$, at least for the vast majority of the points, and $\theta_{\{3,4\}}\approxeq\ttheta_{\{3,4\}}$.
We describe this simpler construction for the multiscale partitions in Appendix \ref{appatree}, together with experiments suggesting that 
$\theta_{\{3,4\}}\approxeq\ttheta_{\{3,4\}}$.

Besides cover tree, there are other methods that can be used in practice for the multiscale partition, such as METIS by \cite{KarypisSIAM99-METIS} that is used in \citet{CM:MGM2}, iterated PCA (see some analysis in \cite{Szlam:iteratedpartitioning}) or iterated $k$-means. These can be computationally more efficient than cover trees, with the downside being that they may lead to partitions not satisfying our usual assumptions.

\commentout{
\subsection{Construction of multiscale tree decompositions}
\label{secCoverTree}

Our multiscale tree decomposition is constructed from the cover tree algorithm \citep[see][]{LangfordICML06-CoverTree} applied on half of the data denoted by $\calX_n' $. 
In brief the cover tree $T(\calX_n')$ on $\calX_n'$ is a leveled tree where each level is a \lq\lq cover\rq\rq\ for the level beneath it. Each level is indexed $j$ and each node in $T(\calX_n')$ is associated with a point in $\calX'_n$. A point can be associated with multiple nodes in the tree but it can appear at most once at every level. Let $T_j(\calX_n') \subset \calX_n'$ be the set of nodes of $T$ at level $j$. The cover tree obeys the following properties for all $j$:

\begin{enumerate}
\item Nesting: $T_j(\calX_n')  \subset T_{j+1}(\calX_n') $;

\item Separation: for all distinct $p,q \in T_j(\calX_n') $, $\|p-q\|> 2^{-j}$;

\item Covering: for all $q \in T_{j+1}(\calX_n') $, there exists $p \in T_j(\calX_n') $ such that $\|p-q\|<2^{-j}$. The node in level $j$ associated with $p$ is a parent of the node in level $j+1$ associated with $q$.
\end{enumerate}

In the third property, {$q$ is called a child of $p$. Each node can have multiple parents but is only assigned to one of them in the tree.}
The properties above imply that for any $q \in \calX_n'$, there exists $p \in T_j$ such that $\|p-q\|<2^{-j+1}$. The authors  in \cite{LangfordICML06-CoverTree} showed that cover tree always exists and the construction takes time $\mathcal{O}(C^d D n\log n)$ .

Assume that $T_{j}(\calX_n') = \{a_{{j},k}\}_{k=1}^{N(j)}$. Let $j_{\max}$ be the maximal scale of $T(\calX_n')$.  
Define the indexing map
$$k_{j_{\max}}(x):= \operatornamewithlimits{argmin}\limits_{1\le k \le N(j_{\max})} \|x-a_{j_{\max},k}\|,$$
and let $C_{j_{\max},k}$ be the intersection of the Voronoi region and a ball of radius $2^{-j_{\max}}$:
$$C_{j_{\max},k} =  \{x\in \RR^D \ : \  k_{j_{\max}}(x)=k\} \cap B_{2^{-j_{\max}}}(a_{j_{\max},k}).$$
For any $j<j_{\max}$, we define 
$$\Cjk = \bigcup\limits_{\substack{a_{j-1,k'} \text{ is a} \\ \text{child of } a_{j,k}}} C_{j-1,k'}.$$
Let $\hcalM = \cup_{k=1}^{N(j_{\max})} C_{j_{\max},k} $. We will prove that $\calM \setminus \hcalM$ has a small measure (see Appendix \ref{appcovertree} for a proof).

\begin{proposition}
\label{propcovertree1}
The $\{\Cjk\}$ constructed above form a tree, satisfy the Assumption (A4) with $\theta_2 = 2$ and for any $\nu>0$, 
\beq
\label{eqct1}
\PP\left\{\rho(\calM\setminus \hcalM) > \frac{28\nu\log n}{3n}\right\} \le 2n^{-\nu}.
\eeq

\end{proposition}

Even though the $\{\Cjk\}$ does not exactly satisfy the Assumption (A2), we claim that, instead of having Assumption (A2),  Eq. \eqref{eqct1} is sufficient for our performance guarantees in the case that $\calM$ is bounded by $M$ and $d\ge 3$. Simply approximating points on $\calM \setminus \hcalM$ by $0$ gives the error: 
\beq
\label{eqct2}
\PP\left\{ \int_{\calM \setminus \hcalM} \|x\|^2 d\rho
\ge \frac{28M^2\log n}{3n}
\right\}  \le 2n^{-\nu}. 
\eeq

Additionally, we observe that, if $\rho$ is doubling, there exits $\theta_0 >0$ such that every $\Cjk$ constructed above contains a ball of radius $\theta_0 2^{-j}$ (see numerical experiments in Appendix \ref{appcovertree}). Under this assumption, the $\{\Cjk\}$ satisfies (A1-A5) for doubling probability measures on $\calC^s, s\in (1,2]$ manifolds.
\begin{proposition}
\label{propcovertree2}
Assume that $\rho$ is a doubling probability measure on $\calM$ with the doubling constant $C_1$, and Assumption (A2) is satisfied.
Suppose there exits $\theta_0 >0$ such that every $\Cjk$ contains a ball of radius $\theta_0 2^{-j}$. Then the $\{\Cjk\}$ satisfies 
\begin{enumerate}
\item Assumption (A1) with $\amax = C_1^2 (\theta_2/\theta_0)^d 2^d$ and $\amin = C_1^{-2}(\theta_0/\theta_2)^d2^d$;

\item Assumption (A3) with $\theta_3 = \theta_0^d/C_1$;
\end{enumerate}

Furthermore, let $\calM$ be  a compact manifold of class $\calC^s, s\in (1,2]$ without boundaries isometrically embedded in $\RR^D$.
Let $\epsilon := \epsilon(C_1,\theta_0,\theta_2,d) = C_1^2(\theta_2/\theta_0)^2-1$. If $4\epsilon (4+\epsilon) < 1/d$, then the $\{\Cjk\}$ satisfies (A5) when $j$ is sufficiently large.

\end{proposition}

Besides cover tree, there are other methods that can be used for the multiscale partition, such as METIS by \cite{KarypisSIAM99-METIS} that is used in \citet{CM:MGM2}, iterated PCA (see some analysis in the work by \cite{Szlam:iteratedpartitioning}) or iterated $k$-means.
}

\section{Numerical experiments}
\label{secnum}

We conduct numerical experiments on both synthetic and real data to demonstrate the performance of our algorithms. Given $\{x_i\}_{i=1}^n$, we split them to training data for the constructions of empirical GMRA and adaptive GMRA and test data for the evaluation of the approximation errors:
\begin{center}
\renewcommand{\arraystretch}{1.4}
\resizebox{0.8\columnwidth}{!}{
\begin{tabular}{|c | c | c  |}
\hline
    & $L^2$ error & $L^\infty$ error  \\
        \hline
    Absolute error & 
    $\bigg( \frac{1}{n^{\rm test}} \displaystyle\sum_{x_i \in {\rm test\ set}}
{\|x_i - \hcalP x_i\|^2}\bigg)^{\frac 1 2}$
    & $\displaystyle \max_{x_i \in {\rm test\ set}}
{\|x_i - \hcalP x_i\|}$
    \\
    \hline  
    Relative error & 
    $\bigg(\frac{1}{n^{\rm test}} \displaystyle\sum_{x_i \in {\rm test\ set}}
{\|x_i - \hcalP x_i\|^2}/{\|x_i\|^2} \bigg)^{\frac 1 2}$
    &
    $\displaystyle\max_{x_i \in {\rm test\ set}}
{\|x_i - \hcalP x_i\|}/{\|x_i\|} $
    \\
    \hline  
\end{tabular}}
\end{center}
where $n^{\rm test}$ is the cardinality of the test set and $\hcalP$ is the piecewise linear projection given by empirical GMRA or adaptive GMRA. In our experiments we use absolute error for synthetic data, 3D shape and relative error for the MNIST digit data, natural image patches.

\begin{figure}[th]
  \centering
  %
\begin{minipage}{0.2\columnwidth}
   \subfigure[Projection of S and Z manifolds]{
    \includegraphics[width=2.5cm,height=4.5cm]{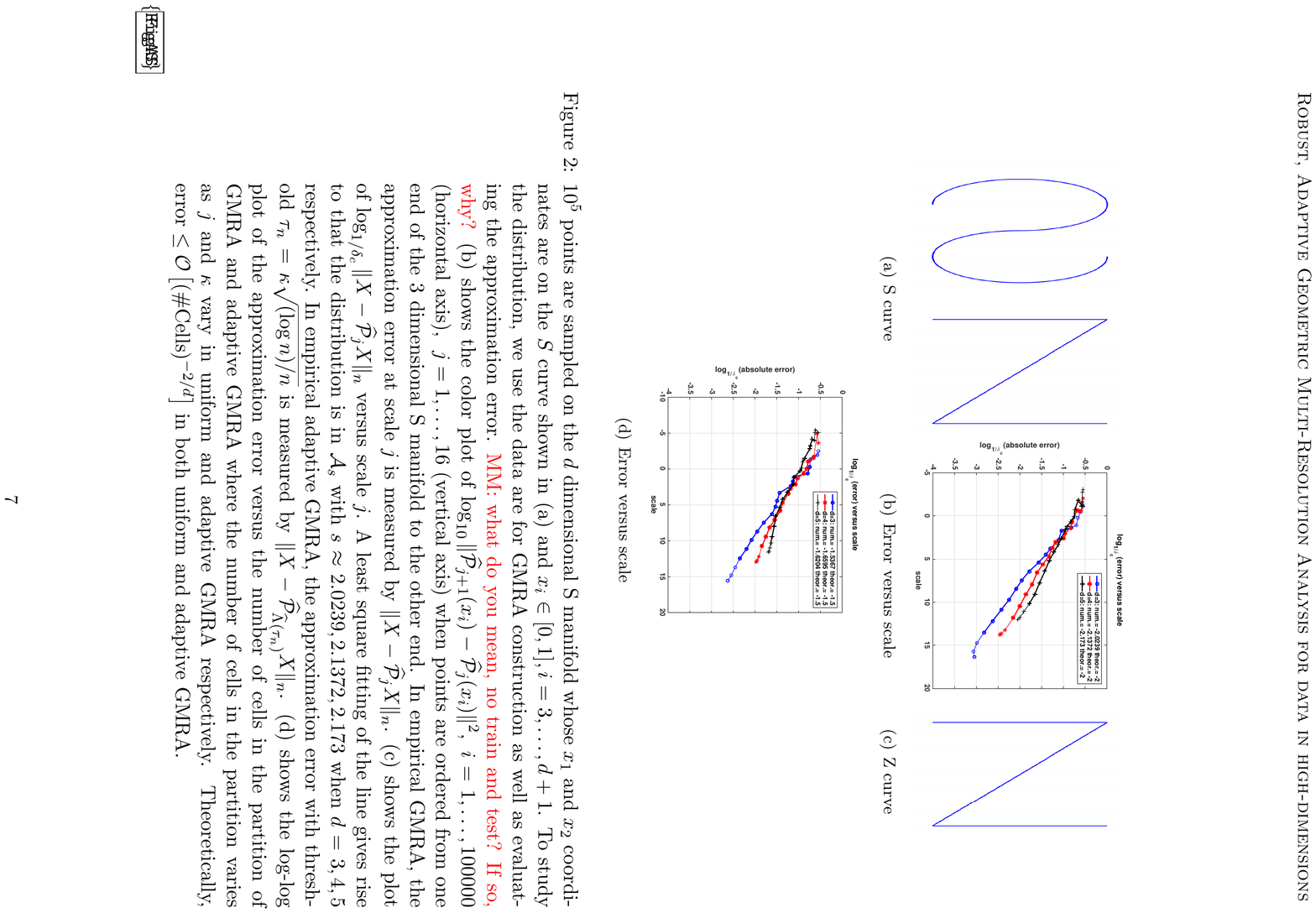}
    }
\end{minipage}
\begin{minipage}{0.79\columnwidth}
     \subfigure[S: error vs. scale]{
    \includegraphics[width=0.49\columnwidth]{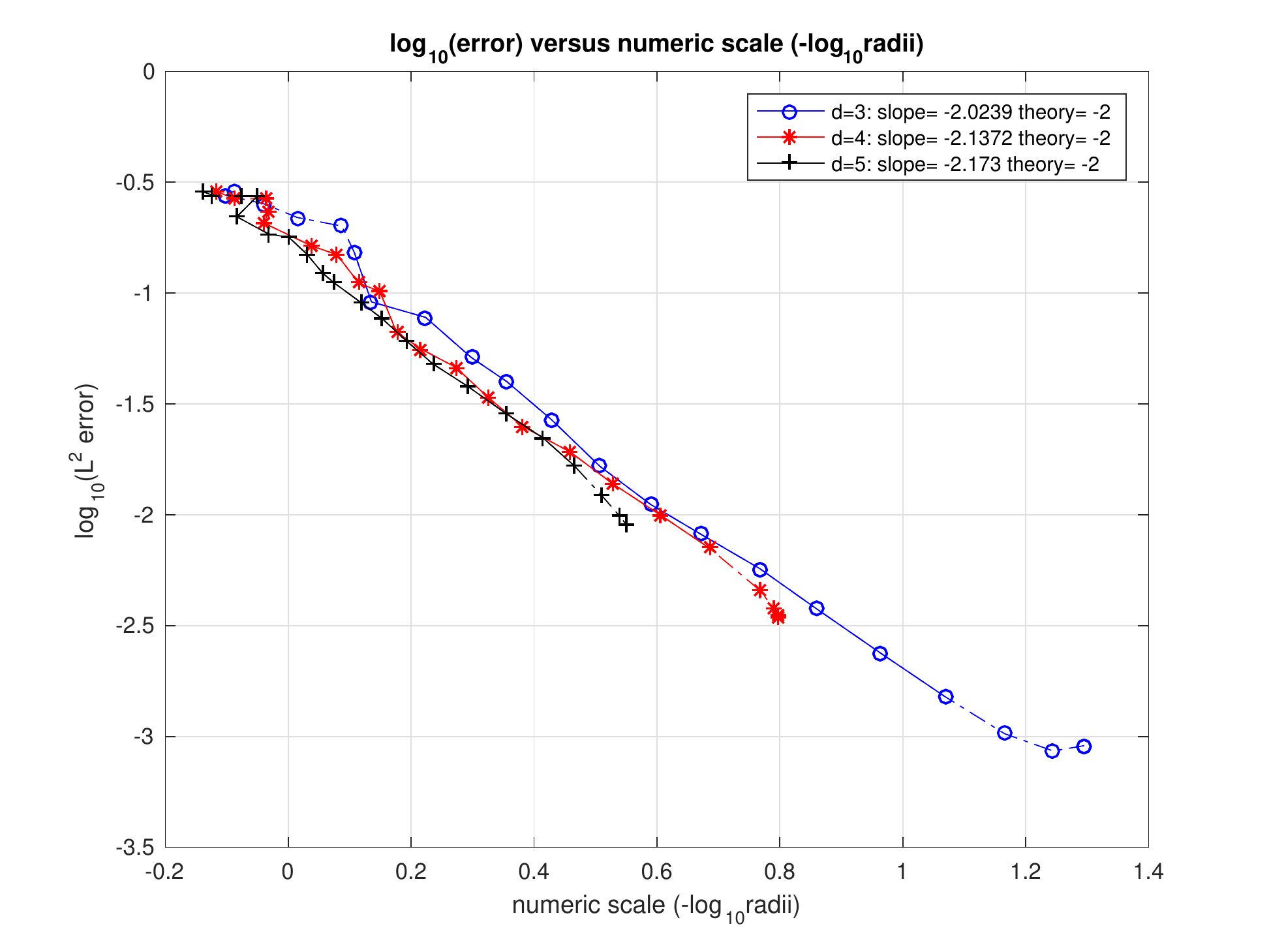}
    }
        \hspace{-0.6cm}
     \subfigure[Z: error vs. scale]{
    \includegraphics[width=0.49\columnwidth]{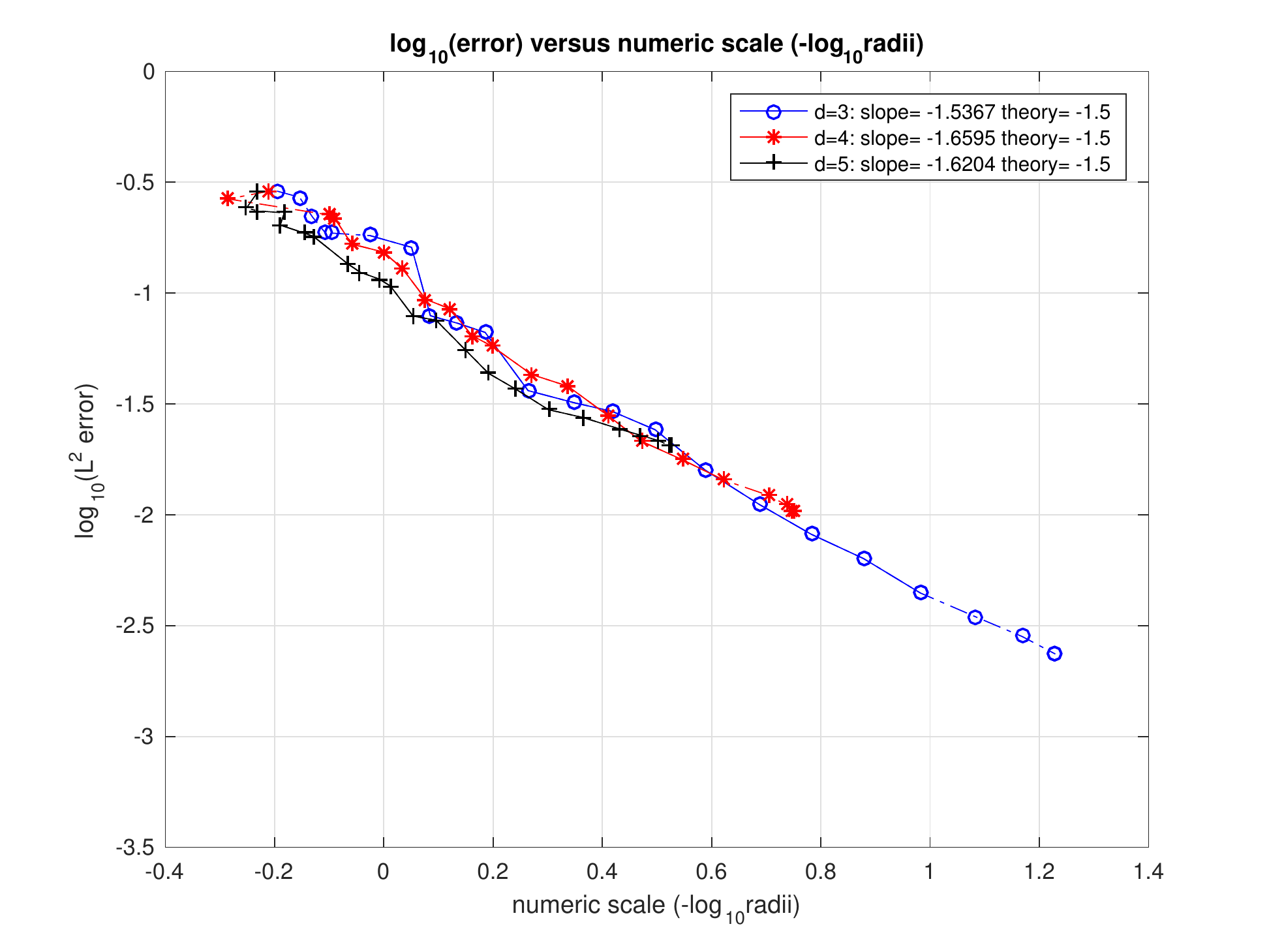}
    }
    \end{minipage}
    \\
\begin{minipage}{0.2\columnwidth}
\vskip-0.5cm
\center
    \resizebox{1\columnwidth}{!}{
      \begin{tabular}{| c  |c | c|}
    \hline
    & \multicolumn{2}{|c|}{the S manifold}	\\
     	\hline  	& theoretical $s$  & numerical $s$ 	\\
  	\hline    	$d=3$ & $2$&  $2.1\pm0.2$ \\
  	\hline	$d=4$ & $2$& $2.1\pm0.2$ \\
  	\hline	$d=5$ & $2$& $2.3\pm0.1$\\
    	\hline	
    & \multicolumn{2}{|c|}{ the Z manifold}  	\\
     	\hline  	& theoretical  $s$  & numerical $s$ 	\\
  	\hline	$d=3$ & $+\infty$& $3.4\pm0.3$\\
	\hline	$d=4$ & $3$& $3.1\pm0.2$\\
  	\hline 	$d=5$ & $2.25$&$2.5\pm0.2$\\
    	\hline	
    \end{tabular}	
    }
    \end{minipage}
\begin{minipage}{0.79\columnwidth}
\subfigure[S: error vs. tree complexity]{
{\includegraphics[width=0.49\columnwidth]{Fig23_SandZ/Sample100000/V2/MoreExperiments/S_Bs_1-eps-converted-to.pdf} }}
        \hspace{-0.6cm}
\subfigure[Z: error vs. tree complexity]{
{ \includegraphics[width=0.49\columnwidth]{Fig23_SandZ/Sample100000/V2/MoreExperiments/Z_Bs_1-eps-converted-to.pdf} }}
\end{minipage}
            \caption{
   $10^5$ training points are sampled on the $d$-dimensional S or Z manifold $(d=3,4,5)$. 
  In (b) and (c), we display $\log_{10} \|X-\hcalP_j X\|_n$, versus scale $j$.
  The negative of the slope on the solid portion of the line approximates the regularity parameter $s$ in the $\AS$ model. 
    In (d) and (e), we display the log-log plot of $\|X-\calP_{\hLameta} X\|_n^{d-2}$ versus the weighted complexity of the adaptive partition for the $d$-dimensional S and Z manifold. The negative of the slope on the solid portion of the line approximates the regularity parameter $s$ in the $\BS$ model. 
    Our five experiments give the $s$ in the table. For the $3$-dim Z manifold, while $s=+\infty$ in the case of infinite samples, we do obtain a large $s$ with $10^5$ samples. 
      }
    \label{FigASBS}
\end{figure}

\subsection{Synthetic data}
We take samples $\{x_i\}_{i=1}^n$ on the $d$-dim S and Z manifold 
whose $x_1,x_2$ coordinates are on the S and Z curve and $x_i\in [0,1], i=3,4,5$
and evenly split them to the training set and the test set.
In the noisy case, training data are corrupted by Gaussian noise: $\tilde{x}^{\rm train}_i = x^{\rm train}_i + \frac{\sigma}{\sqrt{D}} \xi_i, i=1,\ldots,\frac n 2$ where $\xi_i \sim \mathcal{N}(0,I_{D\times D})$, but test data are noise-free. Test data error below the noise level imply that we are denoising the data.

\subsubsection{Regularity parameter $s$ in the $\AS$ and $\BS$ model}
We sample $10^5$ training points on the $d$-dim S or Z manifold $(d=3,4,5)$. The measure on the S manifold is not exactly the volume measure but is comparable with the volume measure. 

The log-log plot of the approximation error versus scale in Figure \ref{FigASBS} (b) shows that volume measures on the $d$-dim S manifold are in $\AS$ with $s \approx 2.0239, 2.1372, 2.173$ when $d=3,4,5$, consistent with our theory which gives $s=2$. Figure \ref{FigASBS} (c) shows that volume measures on the $d$-dim Z manifold are in $\AS$ with $s \approx 1.5367, 1.6595, 1.6204$ when $d=3,4,5$, consistent with our theory which gives $s=1.5$.

The log-log plot of the approximation error versus the weighted complexity of the adaptive partition in Figure \ref{FigASBS} (d) and (e) gives rises to an approximation of the regularity parameter $s$ in the $\BS$ model in the table.

\subsubsection{Error versus sample size $n$} 
We take $n$ samples on the $4$-dim S and Z manifold. 
In Figure \ref{FigSZ}, we set the noise level $\sigma =0$ (a) and $\sigma = 0.05$ (b), display the log-log plot of the average  approximation error over 5 trails with respect to the sample size $n$ for empirical GMRA at scale $j^*$ which is chosen as per Theorem \ref{thm2}: $2^{-j^*} = [(\log n)/n]^{\frac{1}{2s+d-2}}$ with $d=4$ and $s=2$ for the S manifold and $s=1.5$ for the Z manifold. For adaptive GMRA, the ideal $\kappa$ increases as $\sigma$ increases.  We let $\kappa\in\{0.05,0.1\}$ when $\sigma =0$ and $\kappa\in\{1,2\}$ when $\sigma = 0.05$. We also test the Nearest Neighbor (NN) approximation. 
The negative of the slope, determined by least squared fit, gives rise to the rate of convergence: $L^2 \ {\rm error} \sim (1/n)^{-\rm slope}$. 
When $\sigma = 0$, the convergence rate for the nearest neighbor  approximation should be $1/d =0.25$. GMRA gives rise to a smaller error and a faster rate of convergence than the nearest neighbor approximation.
For the Z manifold, Adaptive GMRA yields a faster rate of convergence than GMRA. When $\sigma = 0.05$, adaptive GMRA with $\kappa = 0.5$ and $1$ gives rise to the fastest rate of convergence. Adaptive GMRA with $\kappa = 0.05$ has similar rate of convergence as the nearest neighbor approximation since the tree is almost truncated at the finest scales.
We note a de-noising effect when the approximation error falls below $\sigma$ as $n$ increases. In adaptive GMRA, when $\kappa$ is sufficiently large, i.e., $\kappa = 0.5,1$ in this example, different values of $\kappa$ do yield different errors up to a constant, but the rate of convergence is independent of $\kappa$, as predicted by Theorem \ref{thm3}.

\commentout{
\begin{figure}[h]
\begin{minipage}{0.35\columnwidth}
{\includegraphics[width=\columnwidth]{Fig23_SandZ/Sample100000/V2/MoreExperiments/S_Bs_1-eps-converted-to.pdf} }
\end{minipage}
\begin{minipage}{0.35\columnwidth}
{ \includegraphics[width=\columnwidth]{Fig23_SandZ/Sample100000/V2/MoreExperiments/Z_Bs_1-eps-converted-to.pdf} }
\end{minipage}
\begin{minipage}{0.2\columnwidth}
\vskip-0.5cm
\center
    \resizebox{1.25\columnwidth}{!}{
      \begin{tabular}{| c  |c | c|}
    \hline
    & \multicolumn{2}{|c|}{the S manifold}	\\
     	\hline  	$s$ & theoretical & numerical	\\
  	\hline    	$d=3$ & $2$&  $2.1\pm0.2$ \\
  	\hline	$d=4$ & $2$& $2.1\pm0.2$ \\
  	\hline	$d=5$ & $2$& $2.3\pm0.1$\\
    	\hline	
    & \multicolumn{2}{|c|}{ the Z manifold}  	\\
  	\hline	$d=3$ & $+\infty$& $3.4\pm0.3$\\
	\hline	$d=4$ & $3$& $3.1\pm0.2$\\
  	\hline 	$d=5$ & $2.25$&$2.5\pm0.2$\\
    	\hline	
    \end{tabular}	
    }
    \end{minipage}
    \caption{Log-log plot of $\|X-\calP_{\hLameta} X\|_n^{d-2}$ versus the weighted complexity of the adaptive partition on training data for the $d$-dimensional S manifold (a) and the $d$-dimensional Z manifold (b). The negative of the slope on the solid portion of the line approximates the regularity parameter $s$ in the $\BS$ model. 
    Our five experiments give the $s$ in the following table. For the $3$-dim Z manifold, while $s=+\infty$ in the case of infinite samples, we do obtain a large $s$ with $10^5$ samples. }
    \label{FigBS}
\end{figure}
}

\subsubsection{Robustness of GMRA and adaptive GMRA}
The robustness of the empirical GMRA and adaptive GMRA is tested on the $4$-dim S and Z manifold while $\sigma$ varies but $n$ is fixed to be $10^5$.
Figure \ref{FigSZ2} shows that the average $L^2$ approximation error in $5$ trails increases linearly with respect to $\sigma$ for both uniform and adaptive GMRA with $\kappa \in \{0.05,0.5,1\}$.


\begin{figure}[t]
 \centering
 \subfigure[the $4$-dim S manifold, $\sigma =0$]{
    \includegraphics[clip,trim=20 5 25 30,width=.45\textwidth]{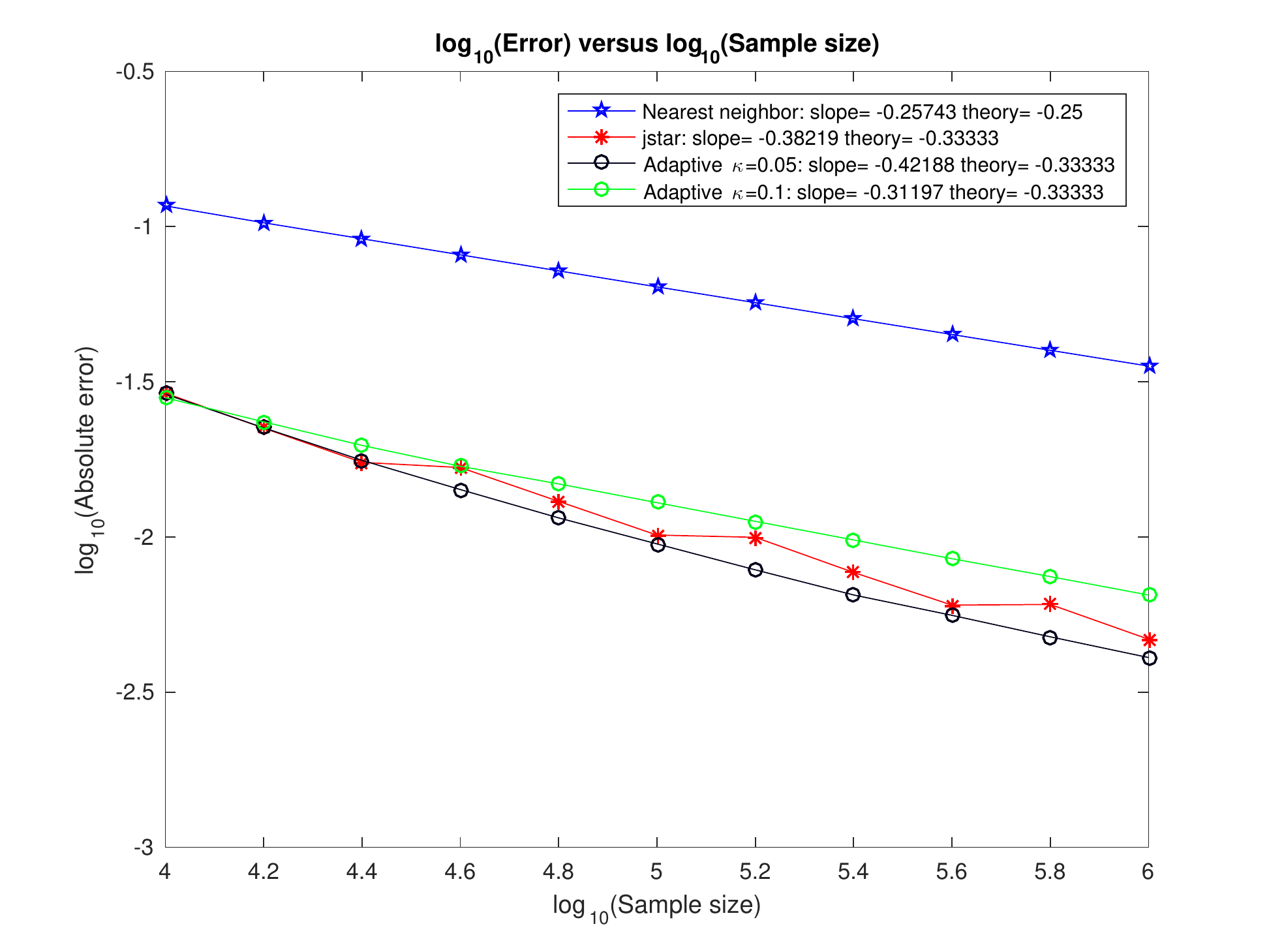}
    }
    \hspace{-0.8cm}
    \subfigure[the $4$-dim S manifold, $\sigma =0.05$]{
    \includegraphics[clip,trim=20 5 25 25,width=.45\textwidth]{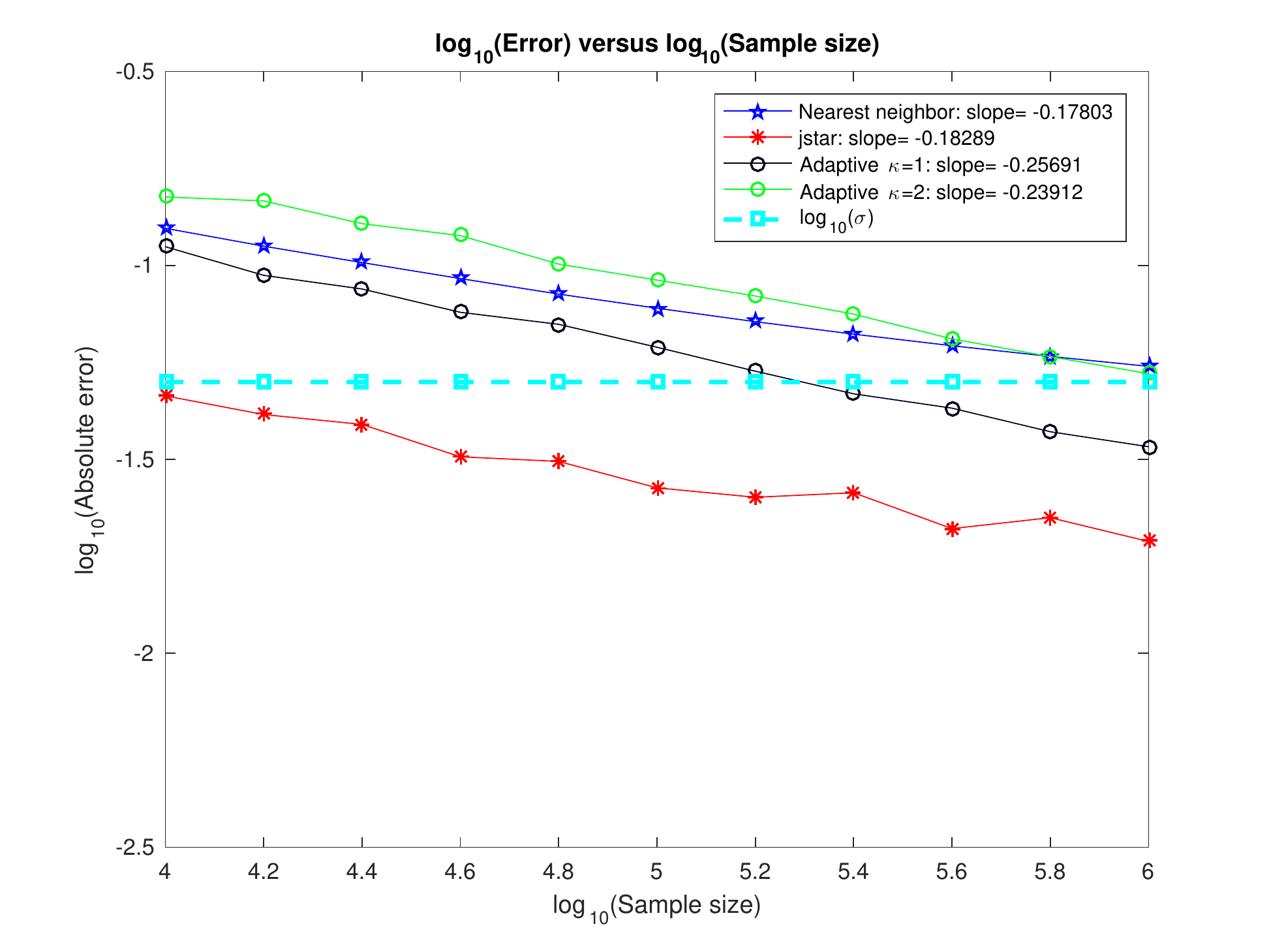}
    }
    \\
 \subfigure[the $4$-dim Z manifold, $\sigma =0$]{
     \includegraphics[clip,trim=20 5 25 30,width=.45\textwidth]{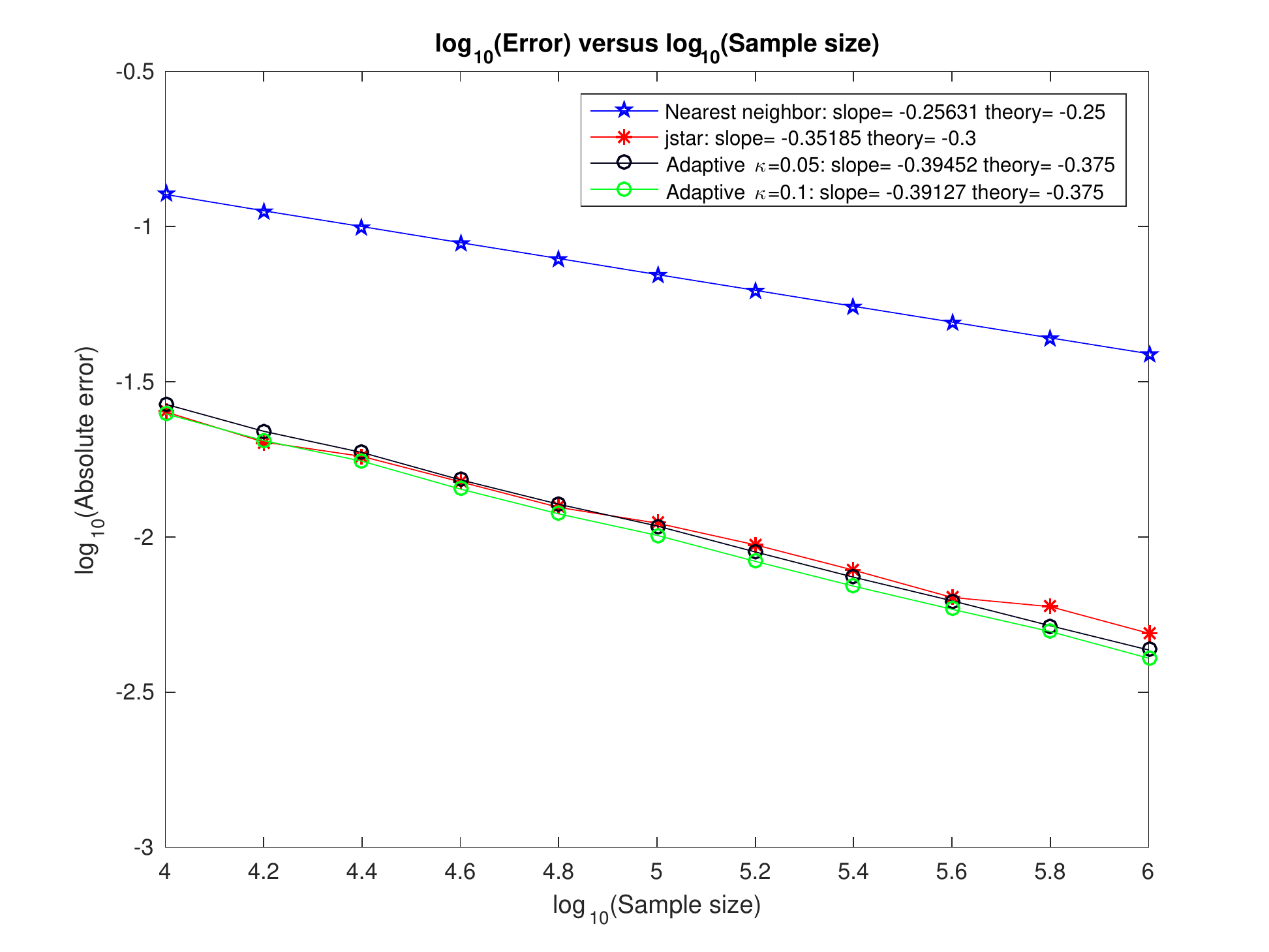}
    }
    \hspace{-0.8cm}
    \subfigure[the $4$-dim Z manifold, $\sigma =0.05$]{
    \includegraphics[clip,trim=20 5 25 25,width=.45\textwidth]{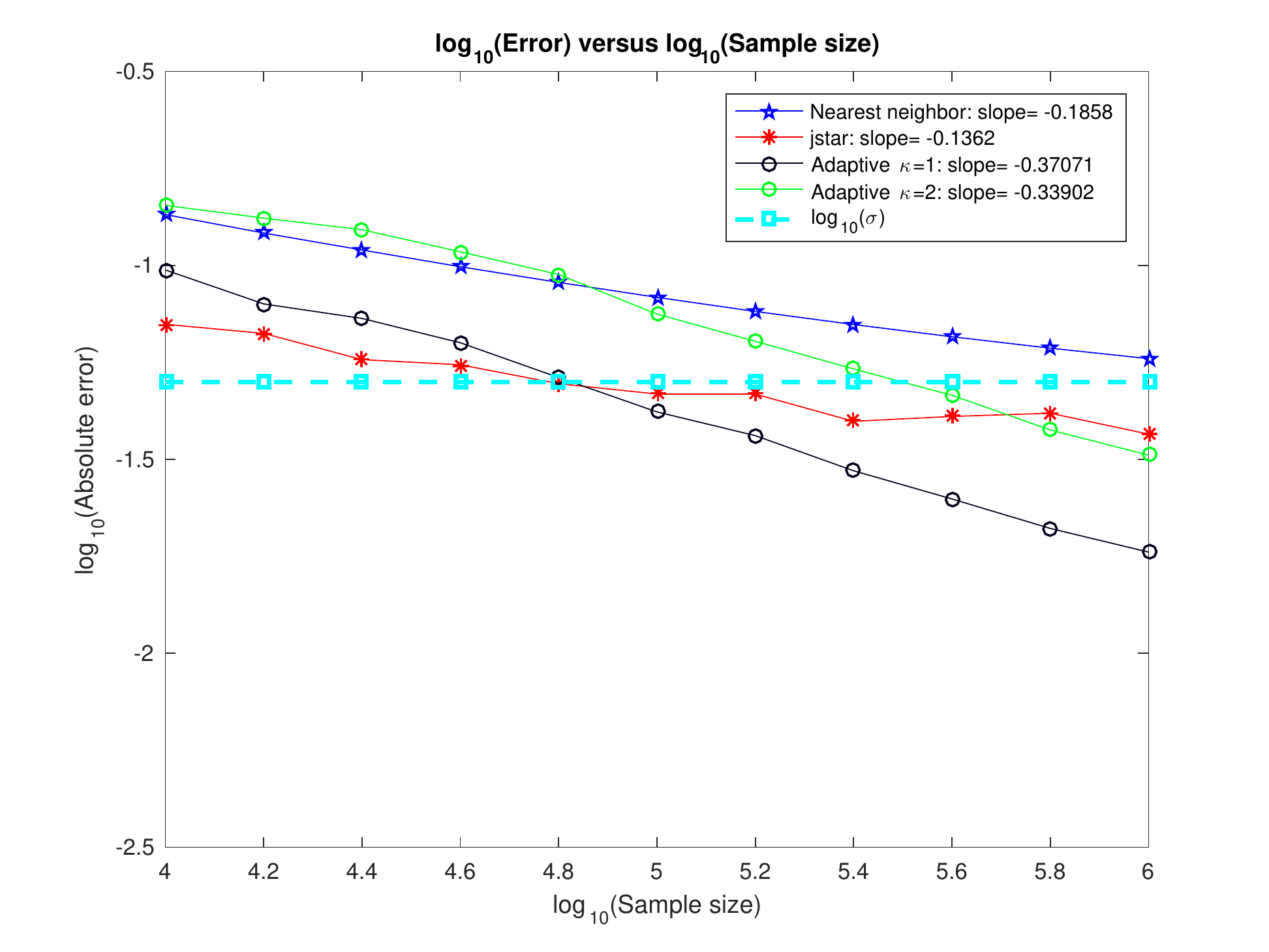}
    }
\\
               \caption{$L^2$ error versus the sample size $n$, for the $4$-dim S and Z manifolds $(d=4)$, of GMRA at the scale $j^*$ chosen as per Theorem \ref{thm2} and adaptive GMRA with varied $\kappa$. We let $\kappa\in\{0.05,0.1\}$ when $\sigma =0$ and $\kappa\in\{1,2\}$ when $\sigma = 0.05$.
               }    
  \label{FigSZ}
\end{figure}

\begin{figure}[ht]
\centering   
  \begin{minipage}[c]{0.59\textwidth}
  \vspace{-20pt}
       \caption{The average $L^2$ approximation error in $5$ trails versus $\sigma$ for GMRA and adaptive GMRA with $\kappa \in \{0.05,0.5,1\}$ on data sampled on the 4-dim S and Z manifolds. This shows the error of approximation grows linearly with the noise size, suggesting robustness in the construction.}
  \end{minipage}
 \begin{minipage}[c]{0.4\textwidth}
   \includegraphics[clip,trim=20 5 25 30,width=\textwidth]{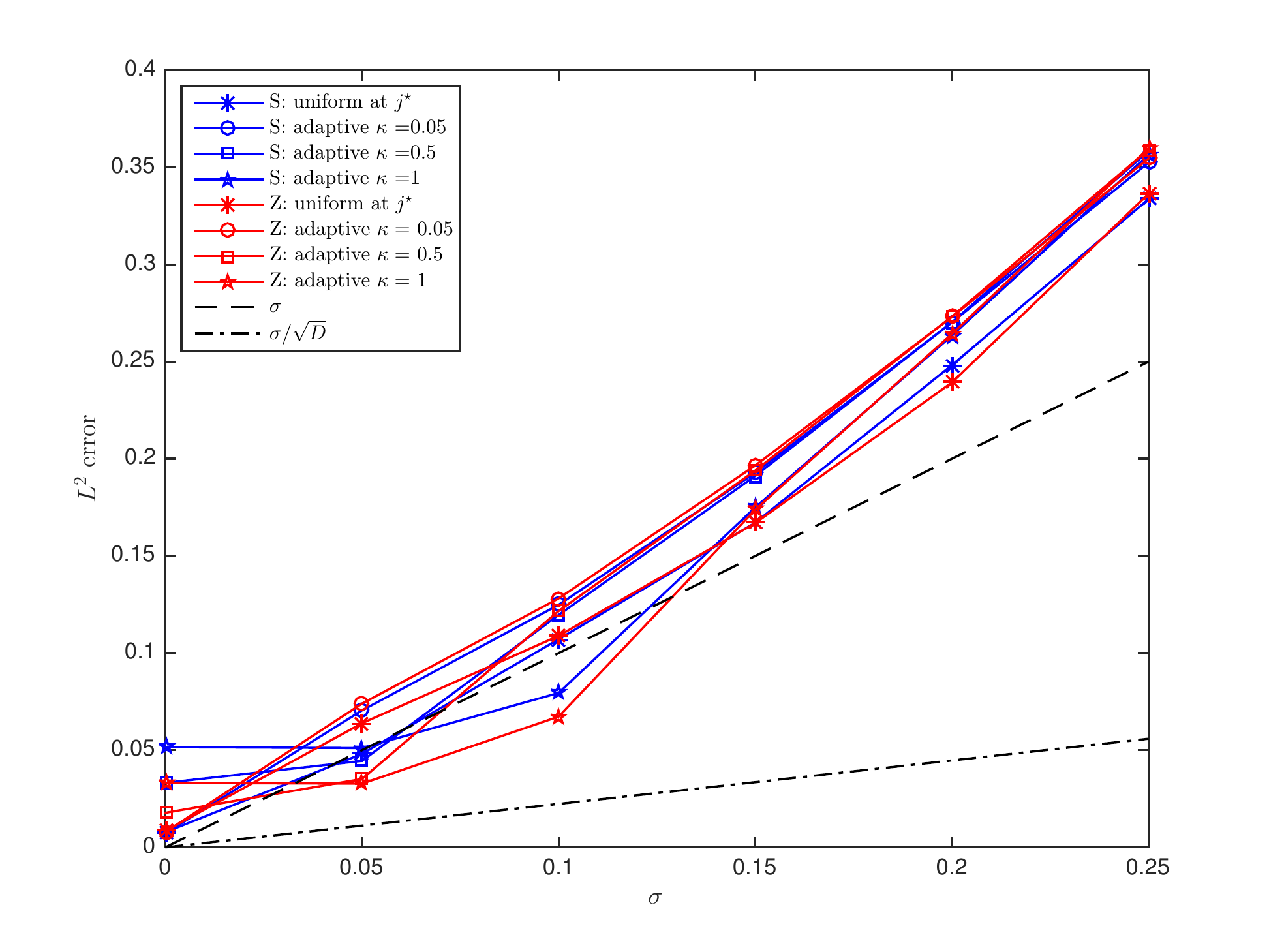}
  \end{minipage}\hfill
       \label{FigSZ2}
\end{figure}

\subsection{3D shapes}

\begin{figure}[ht]
\centering
\hspace{-0.5cm}
    \subfigure[41,472 points]{ 
    \includegraphics[clip,trim=120 10 20 5,width=4.5cm]{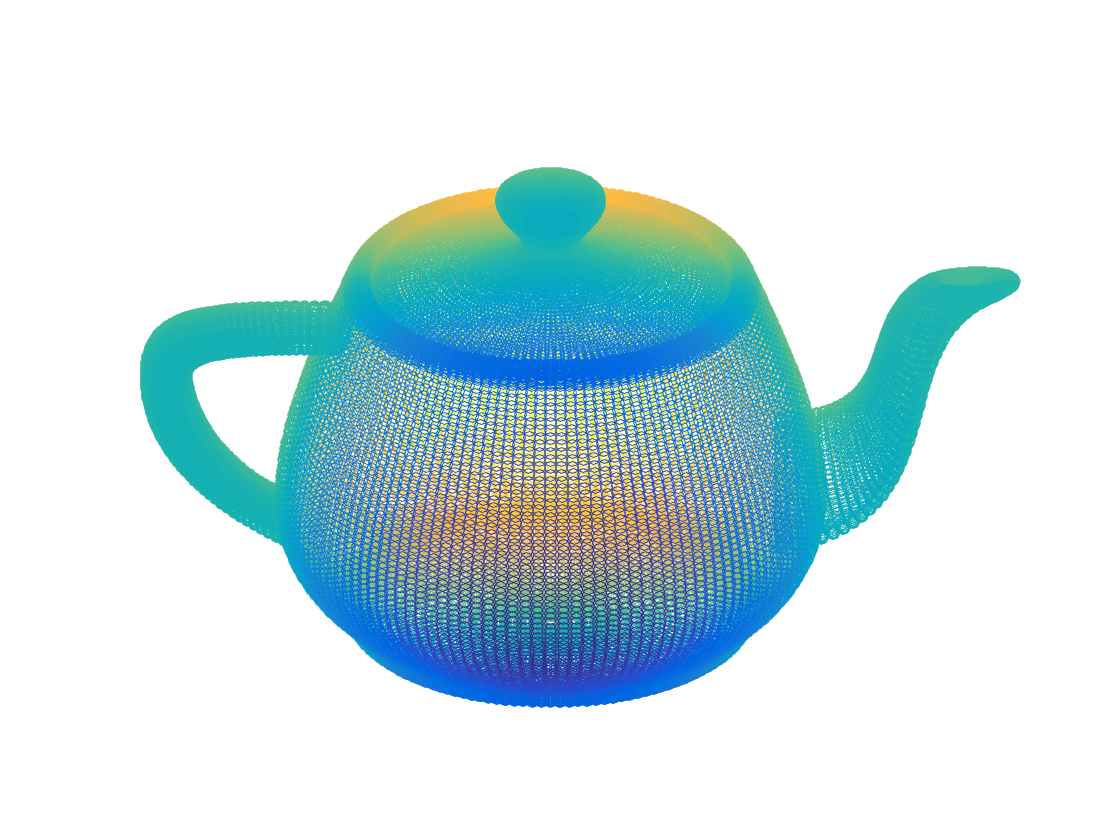}
     }
     \hspace{-0.3cm}
      \subfigure[165,954 points]{
    \includegraphics[clip,trim=10 10 20 5,width=4.4cm,height=3.8cm]{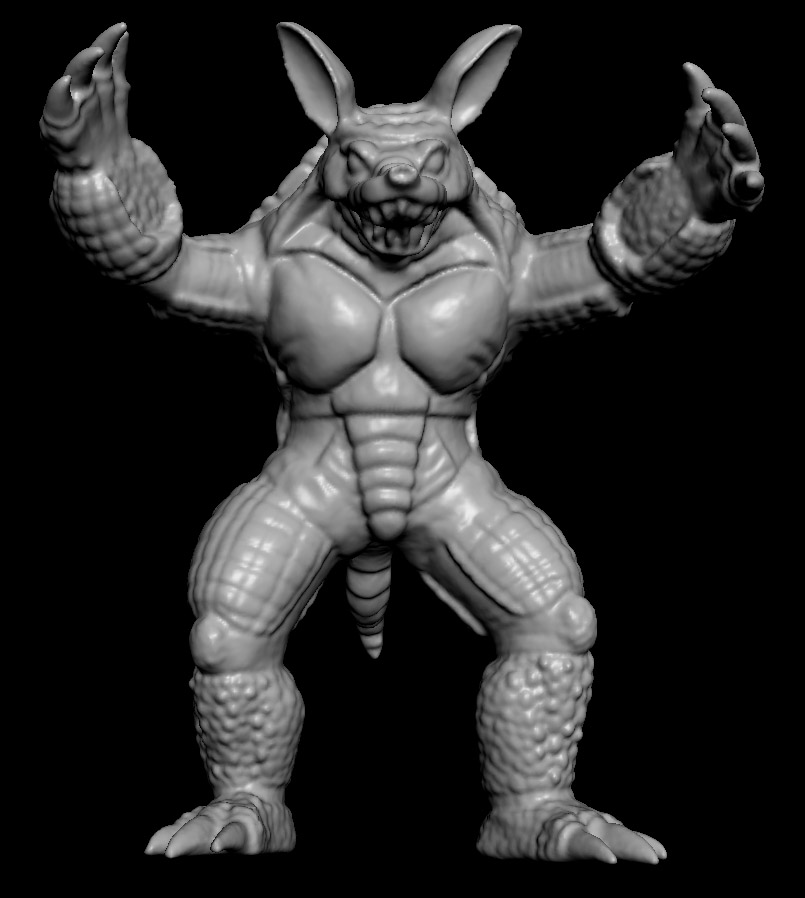}
    }
    \hspace{0.3cm}
  \subfigure[437,645 points]
    {    \includegraphics[clip,trim=10 10 20 5,width=4.4cm,height=3.8cm]{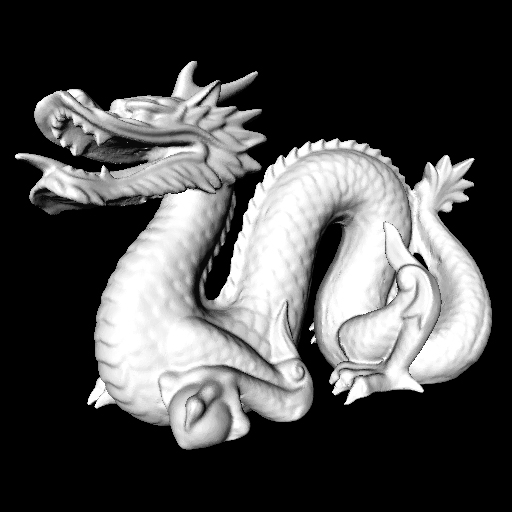}
}
      \subfigure[$\kappa \approx 0.18$, partition size $= 338$]{ 
    \includegraphics[clip,trim=125 10 20 15,width=5.05cm]{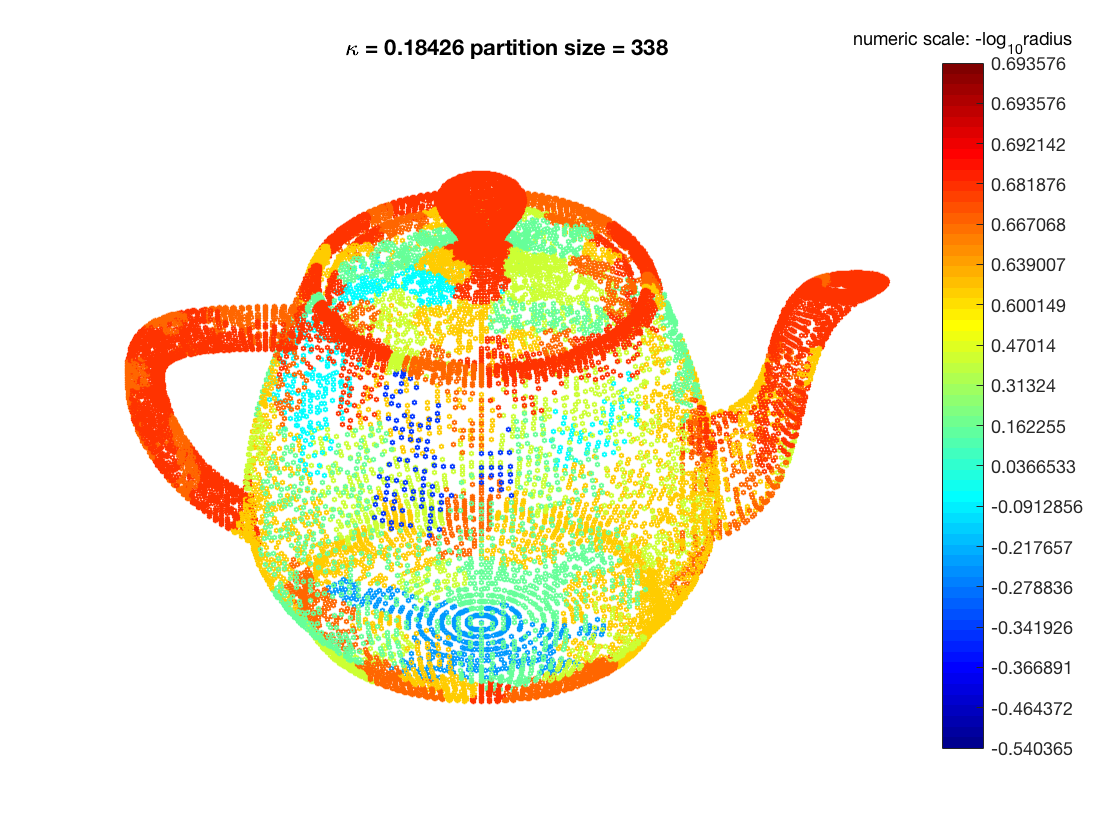}
     }      
     \hspace{-0.6cm}
      \subfigure[$\kappa \approx 0.41$, partition size $= 749$]{ 
    \includegraphics[clip,trim=120 10 20 10,width=5.1cm]{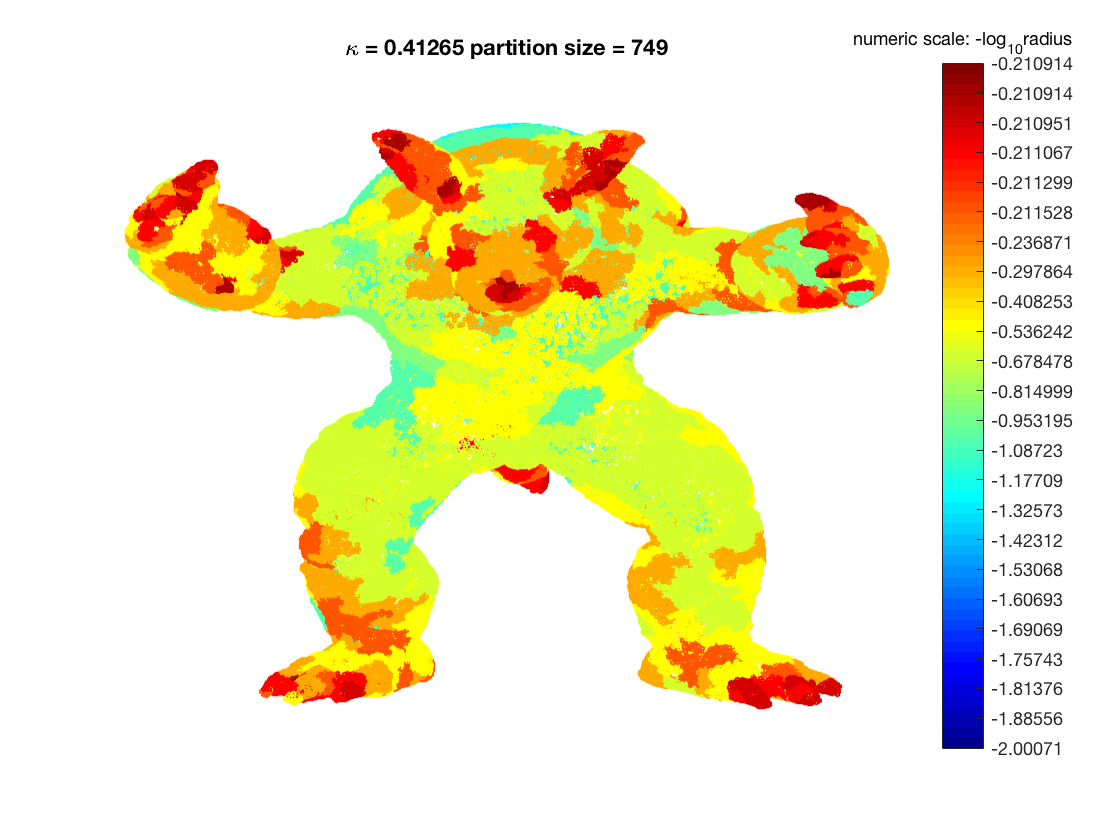}
     }     
     \hspace{-0.7cm}
      \subfigure[$\kappa \approx 0.63$, partition size $= 1141$]{ 
    \includegraphics[clip,trim=120 10 20 5,width=5.1cm]{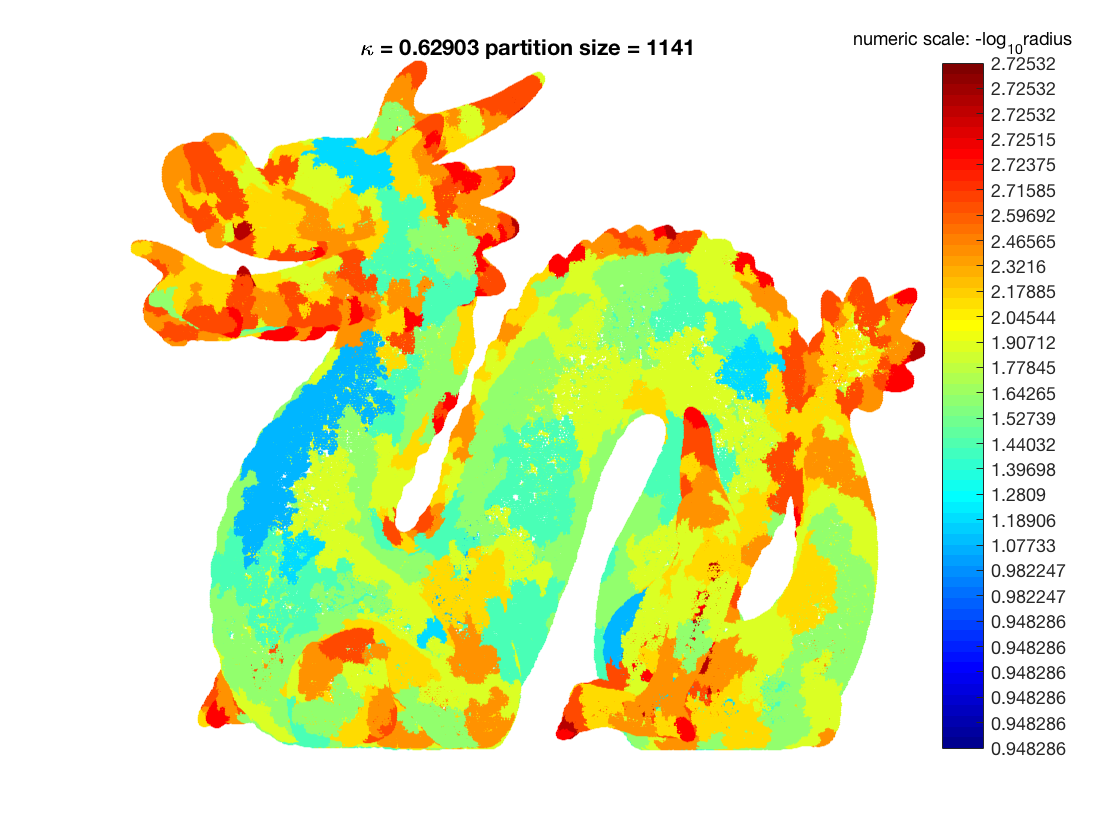}
     }      
\caption{Top line: 3D shapes; bottom line: adaptive partitions selected with refinement criterion $\hDeltajk \ge 2^{-j} \kappa\sqrt{(\log n)/n}$. Every cell is colored by 
scale. In the adaptive partition, at irregular locations cells are selected at finer scales than at \lq\lq flat\rq\rq\ locations.} 
\label{FigShapePartition}
\end{figure}

We run GMRA and adaptive GMRA on 3D points clouds on the teapot, armadillo and dragon in Figure \ref{FigShapePartition}. The teapot data are from the matlab toolbox and others are from the Stanford 3D Scanning Repository \url{http://graphics.stanford.edu/data/3Dscanrep/}.

Figure \ref{FigShapePartition} shows that the adaptive partitions chosen by adaptive GMRA matches our expectation that, at irregular locations, cells are selected at finer scales than at \lq\lq flat\rq\rq\ locations.

In Figure \ref{FigShape}, we display the absolute $L^2/L^\infty$ approximation error on test data versus scale and partition size. The left column shows the  $L^2$ approximation error versus scale for GMRA and the center approximation. While the GMRA approximation is piecewise linear, the center approximation is piecewise constant. Both approximation errors decay from coarse to fine scales, but GMRA yields a  smaller error than the approximation by local centers. In the middle column, we run GMRA and adaptive GMRA with the $L^2$ refinement criterion defined in Table \ref{TableRefinement} with scale-dependent ($\Deltajk \ge 2^{-j}\tau_n$) and scale-independent ($\Deltajk \ge \tau_n$) threshold respectively, and display the log-log plot of the $L^2$ approximation error versus the partition size. Overall adaptive GMRA yields the same $L^2$ approximation error as GMRA with a smaller partition size, but the difference is insignificant in the armadillo and dragon, as these 3D shapes are complicated and the $L^2$ error simply averages the error at all locations. Then we implement adaptive GMRA with the $L^\infty$ refinement criterion: $\hDeltajk^\infty  = \max_{x_i \in \Cjk} \|\hcalP_{j+1}x_i -\hcalP_j x_i\|$ and display the log-log plot of the $L^\infty$ approximation error versus the partition size in the right column. In the $L^\infty$ error, adaptive GMRA saves a considerable number (about half) of cells in order to achieve the same approximation error as GMRA. In this experiment, scale-independent threshold is slightly better than scale-dependent threshold in terms of saving the partition size.

\begin{figure}[th]
 \centering
Teapot
\\
   \includegraphics[clip,trim=10 10 20 5,width=5cm]{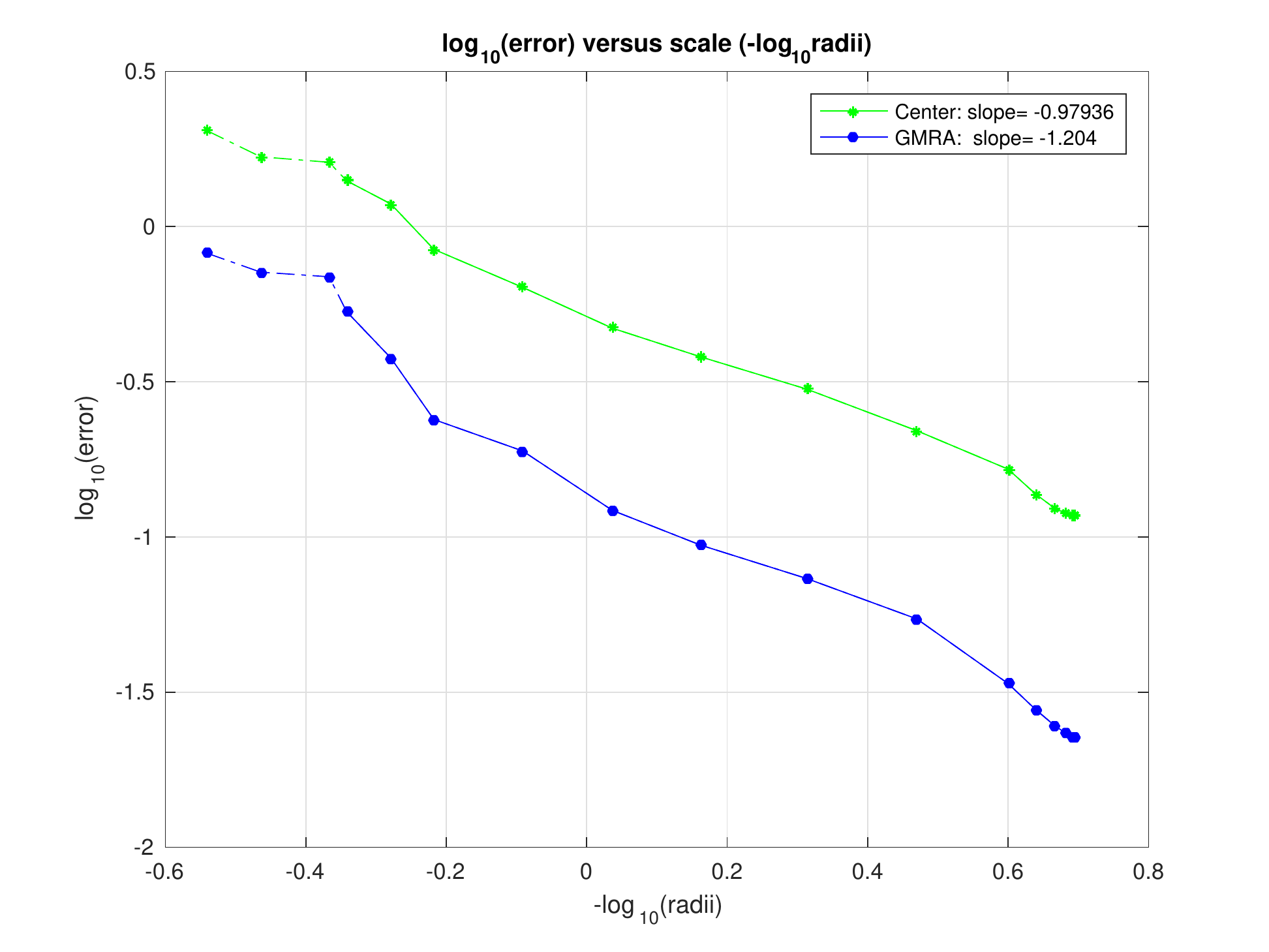}
      \hspace{-0.5cm}
      \includegraphics[clip,trim=10 10 20 5,width=5cm]{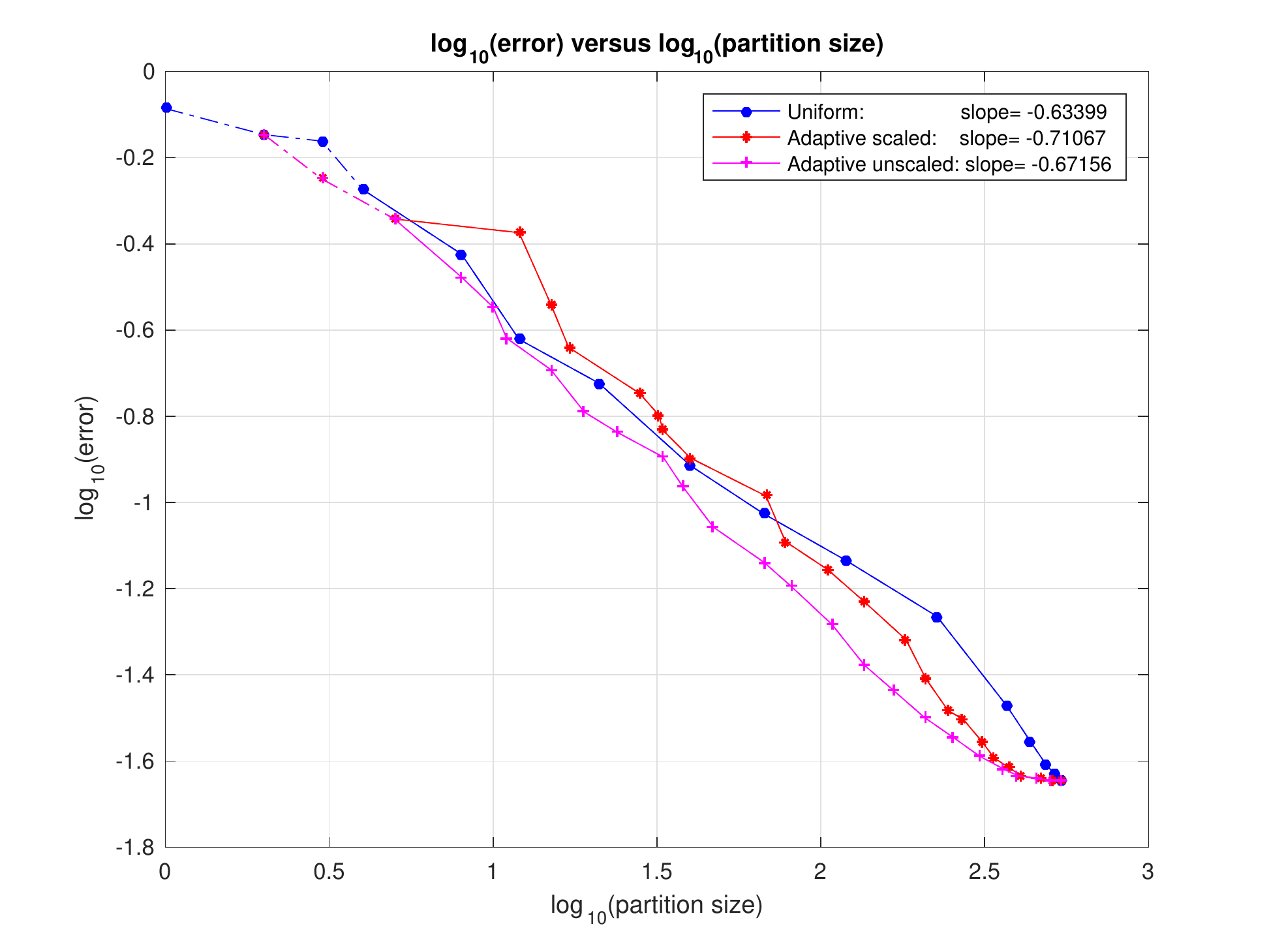}
      \hspace{-0.5cm}
   \includegraphics[clip,trim=10 10 20 5,width=5cm]{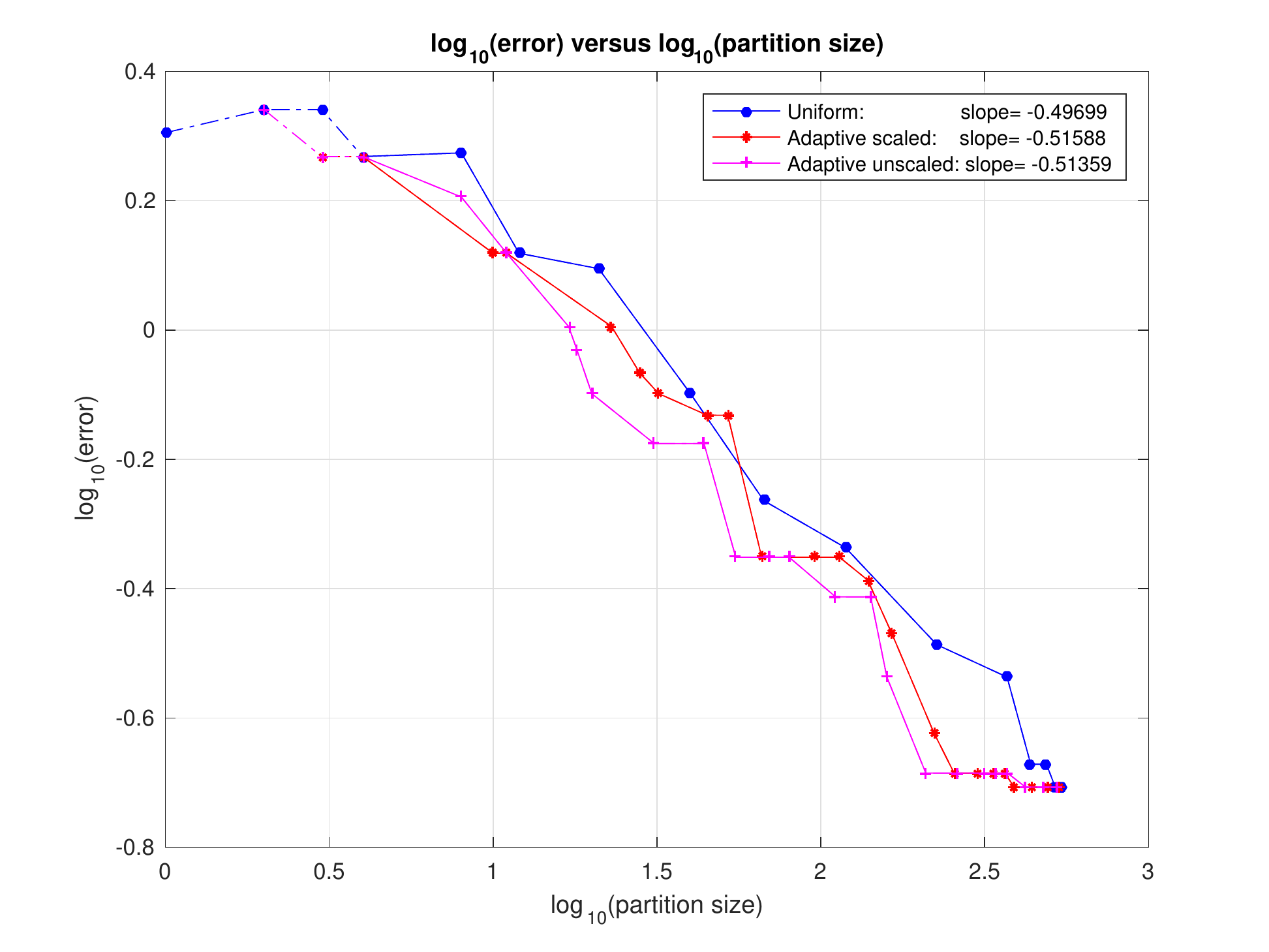}
Armadillo
\\
   \includegraphics[clip,trim=10 10 20 5,width=5cm]{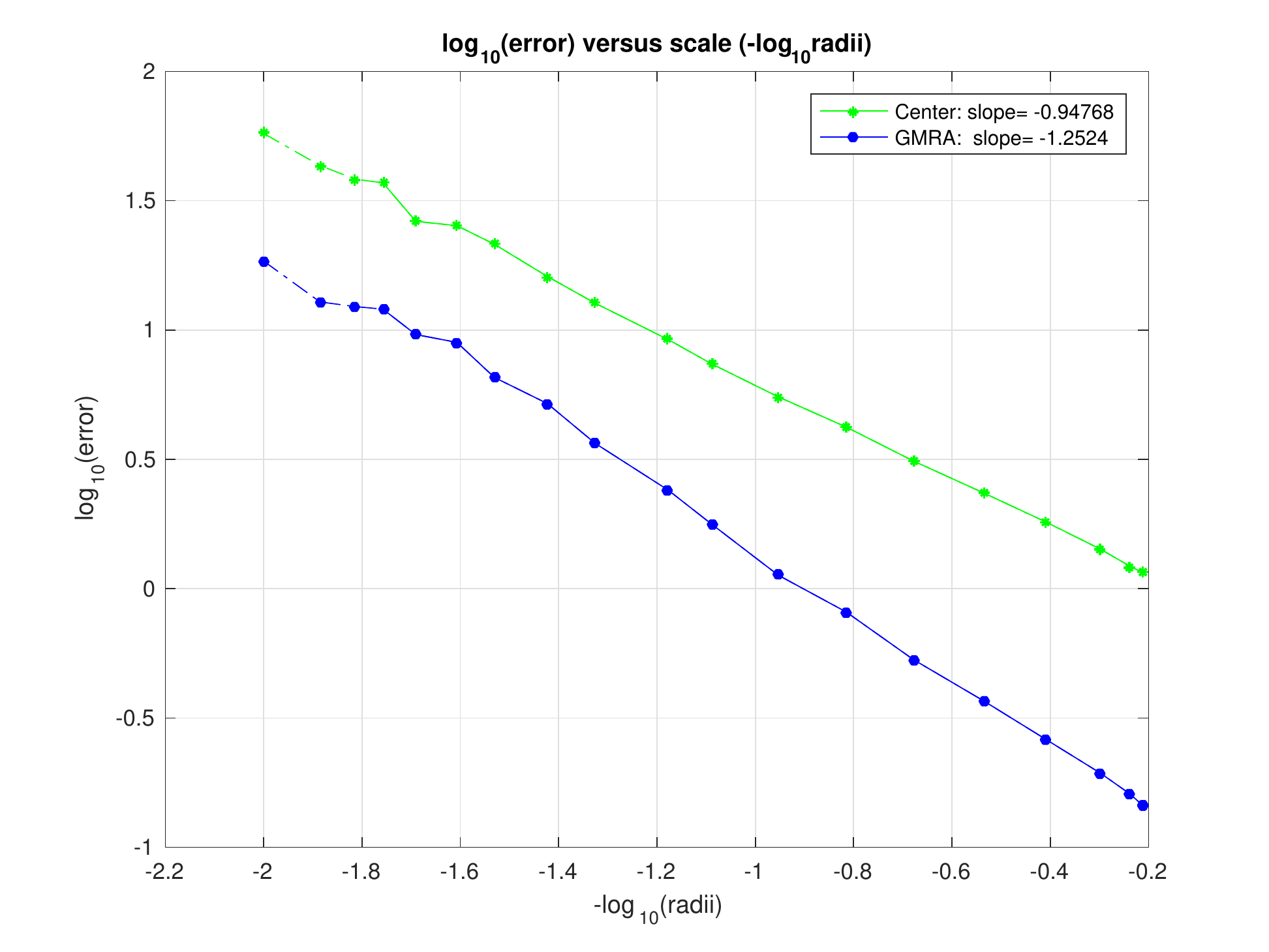}
      \hspace{-0.5cm}
      \includegraphics[clip,trim=10 10 20 5,width=5cm]{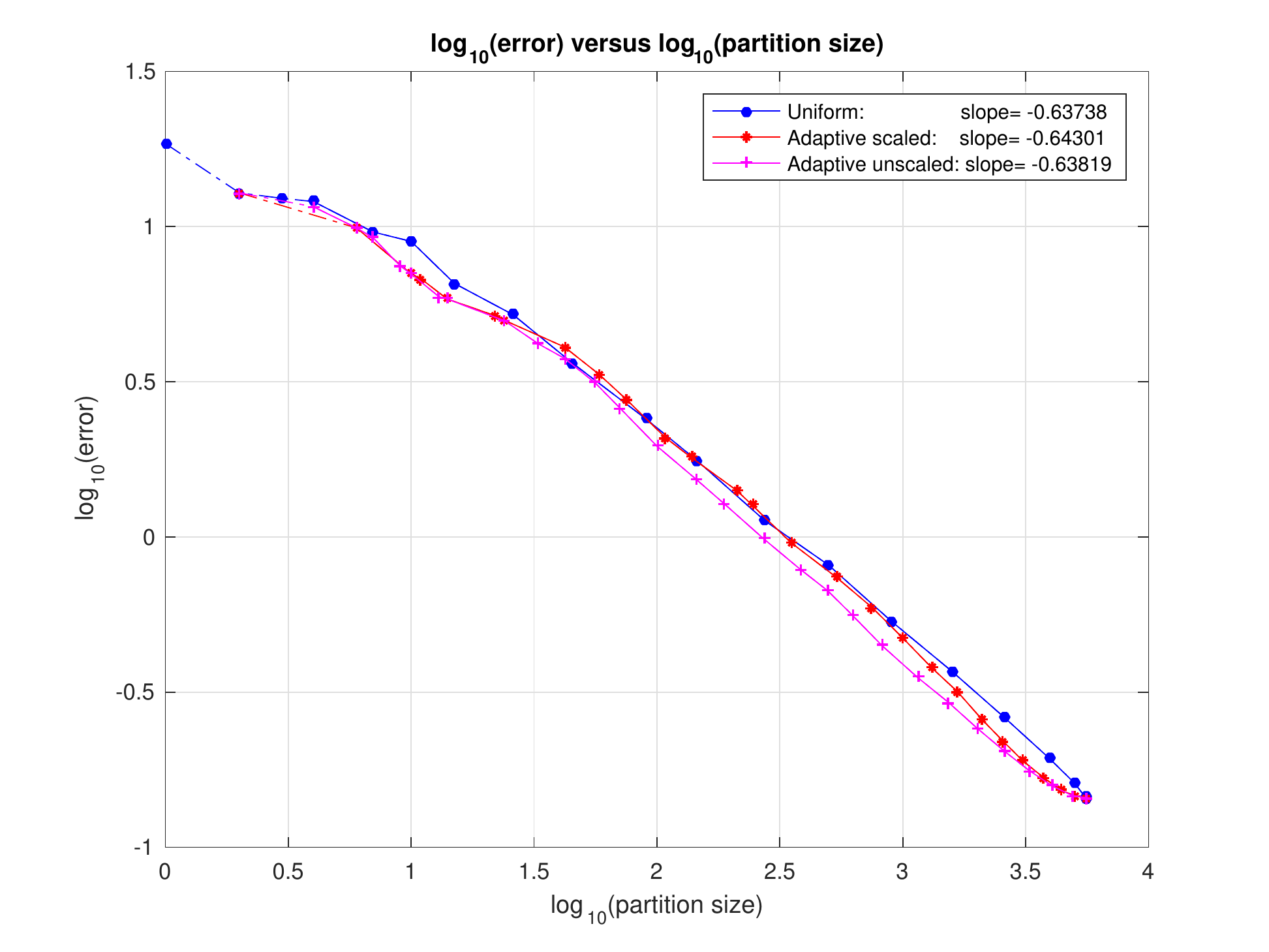}
      \hspace{-0.5cm}
   \includegraphics[clip,trim=10 10 20 5,width=5cm]{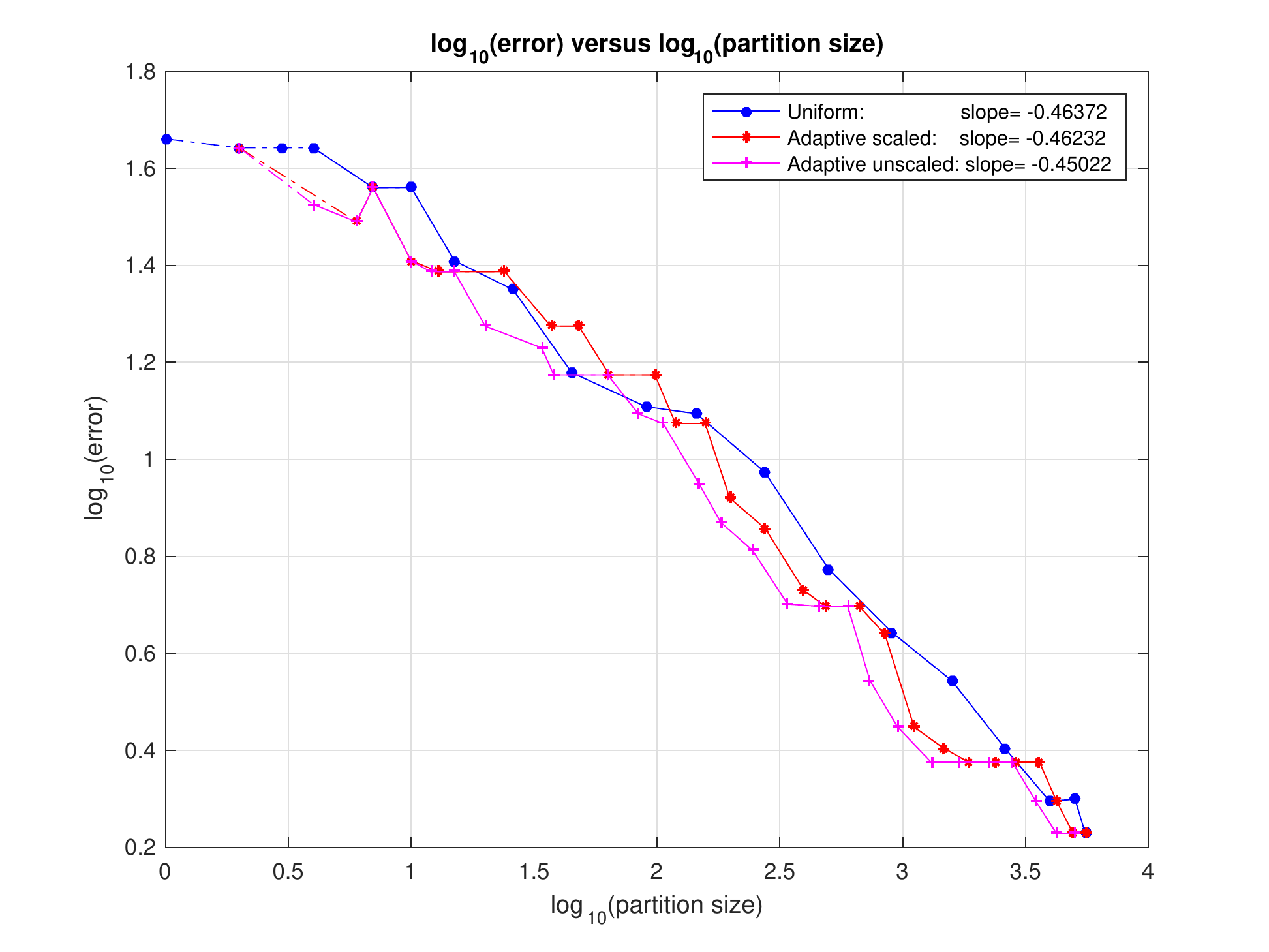}
Dragon
\\
   \includegraphics[clip,trim=10 10 20 5,width=5cm]{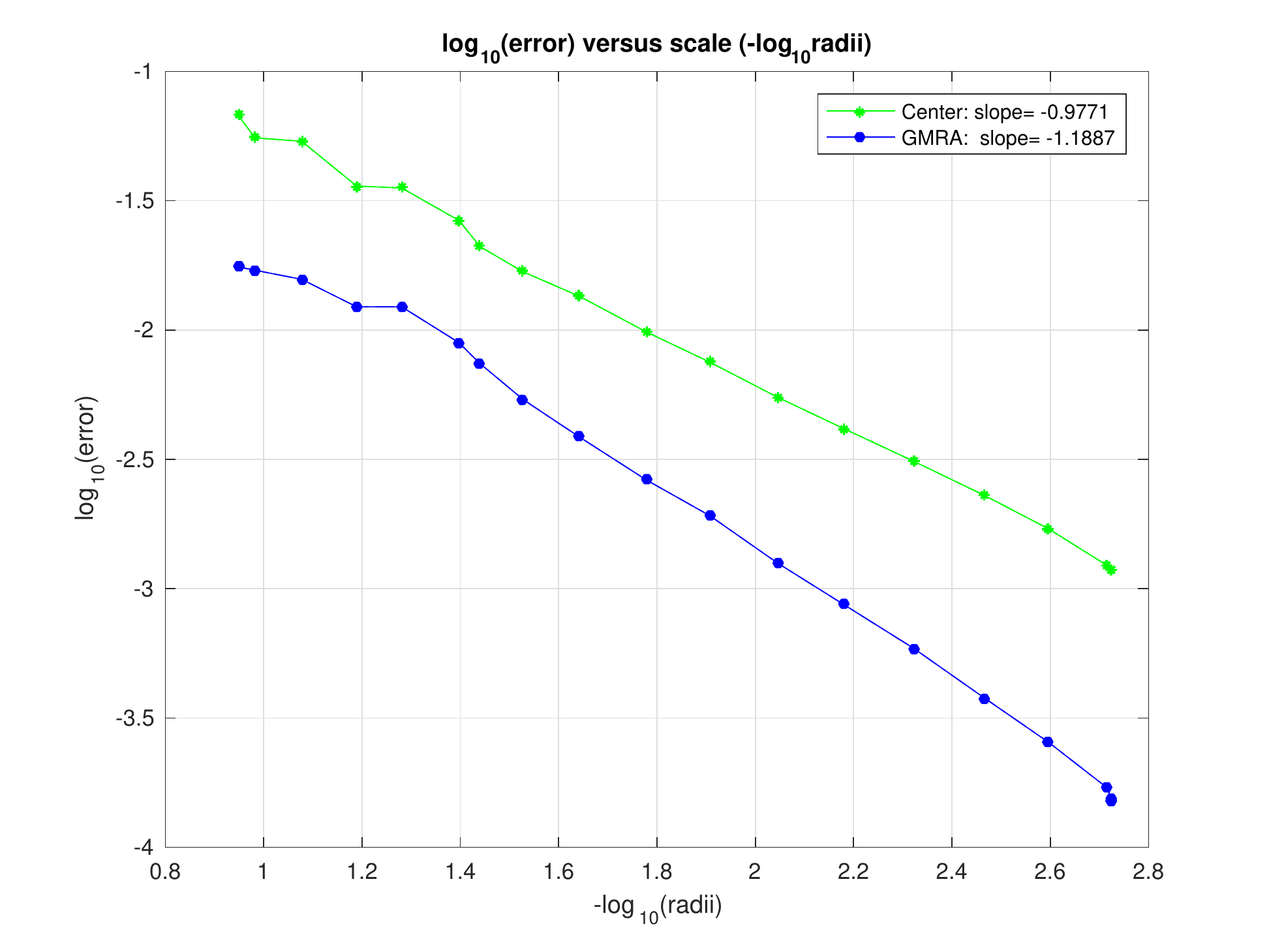}
      \hspace{-0.5cm}
      \includegraphics[clip,trim=10 10 20 5,width=5cm]{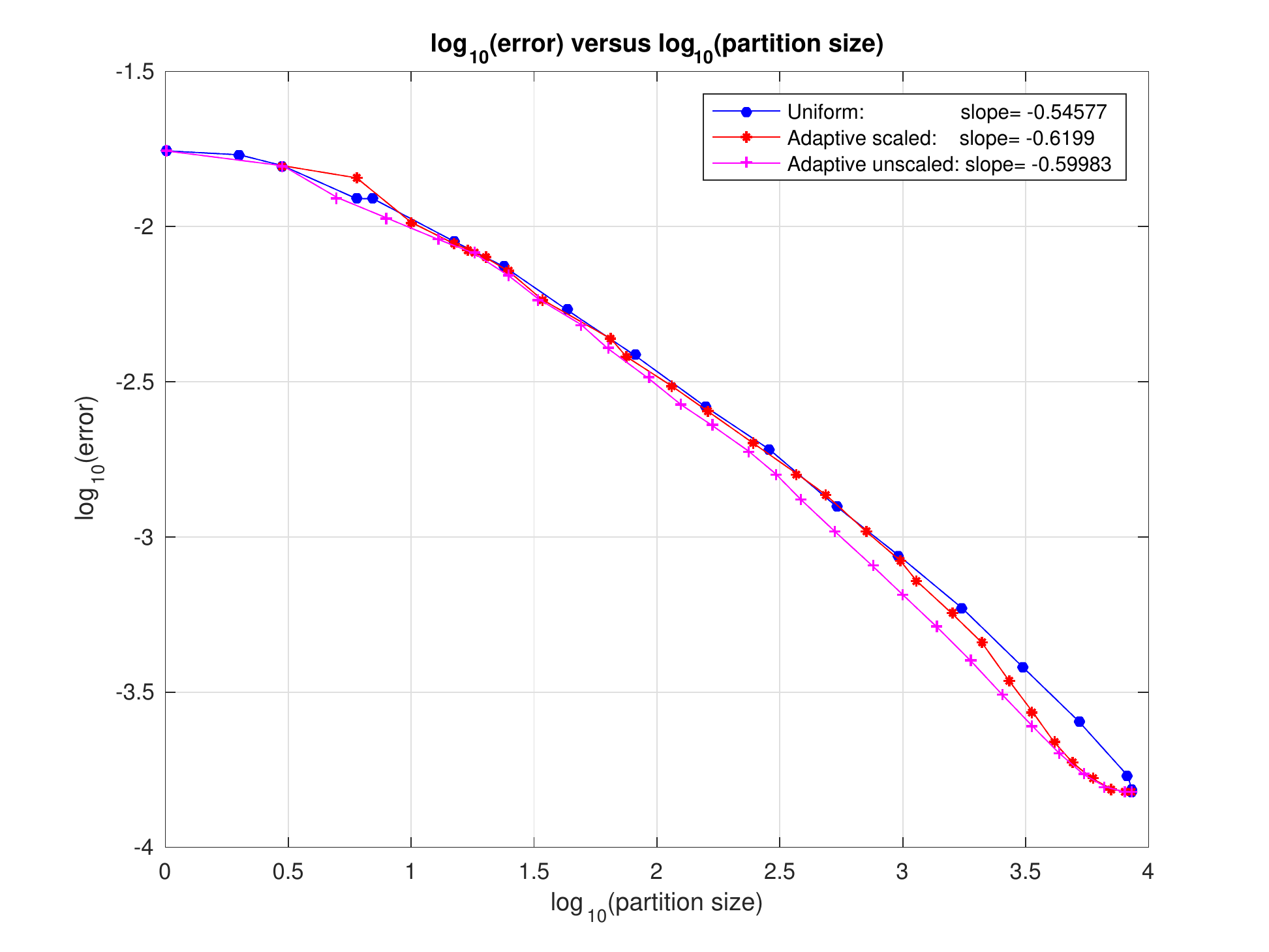}
      \hspace{-0.5cm}
   \includegraphics[clip,trim=10 10 20 5,width=5cm]{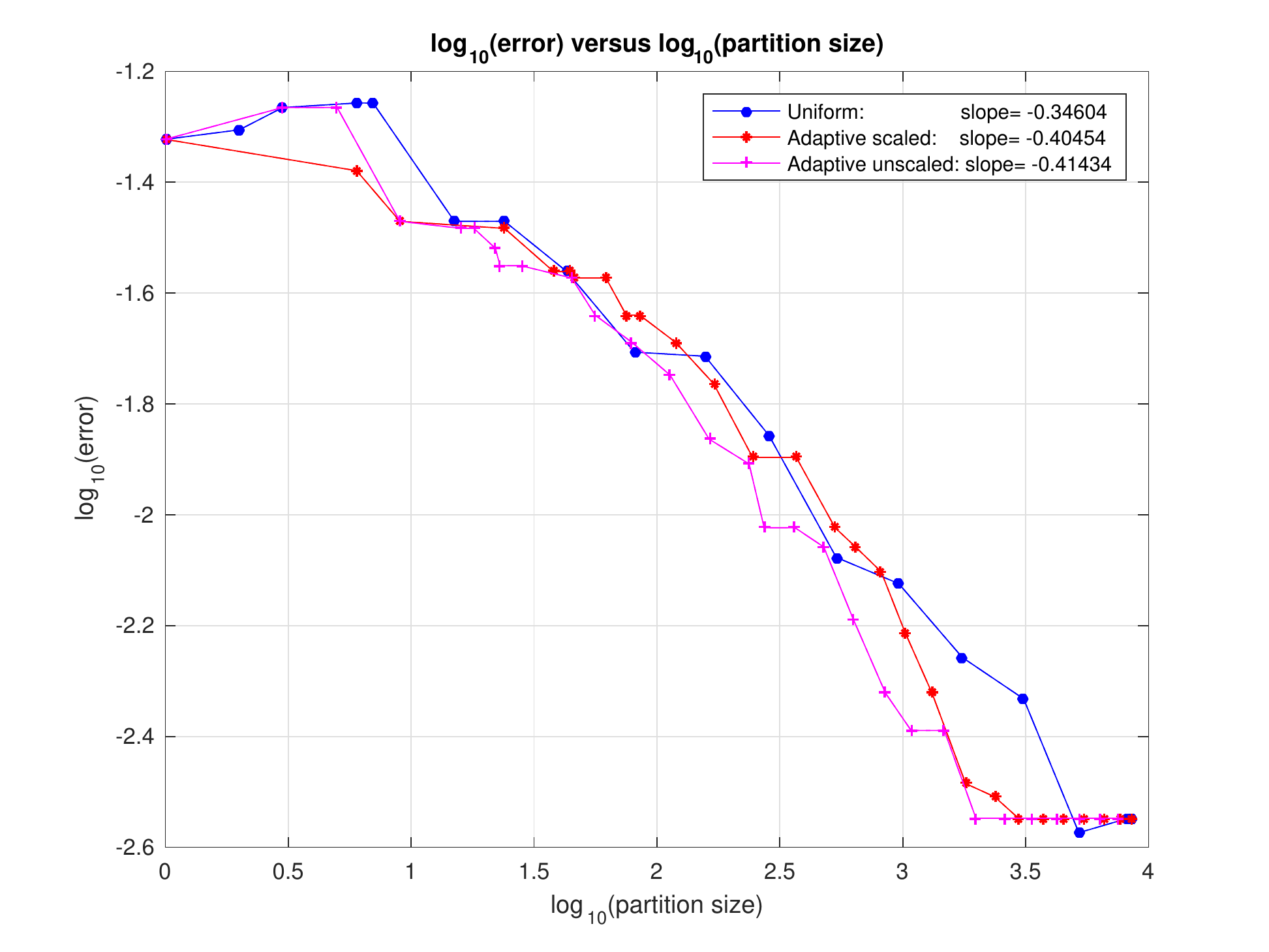}
       \caption{Left column: $\log_{10}(L^2$ error) versus scale
       for GMRA and center approximation; Middle column: log-log plot of the $L^2$ error versus partition size for GMRA and adaptive GMRA with scale-dependent and scale-independent threshold under the $L^2$ refinement defined in Table \ref{TableRefinement}; Right column: log-log plot of $L^\infty$ error versus partition size for GMRA and adaptive GMRA with scale-dependent and scale-independent threshold under the $L^\infty$ refinement.}
    \label{FigShape}
\end{figure}

\begin{figure}[thp]
 \centering
 \includegraphics[width=13cm,height=1.2cm]{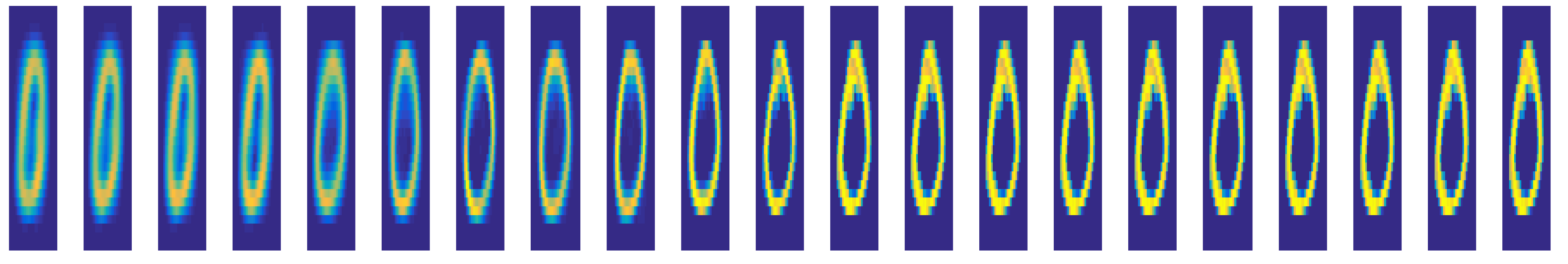}
\\
\includegraphics[width=13cm,height=1.2cm]{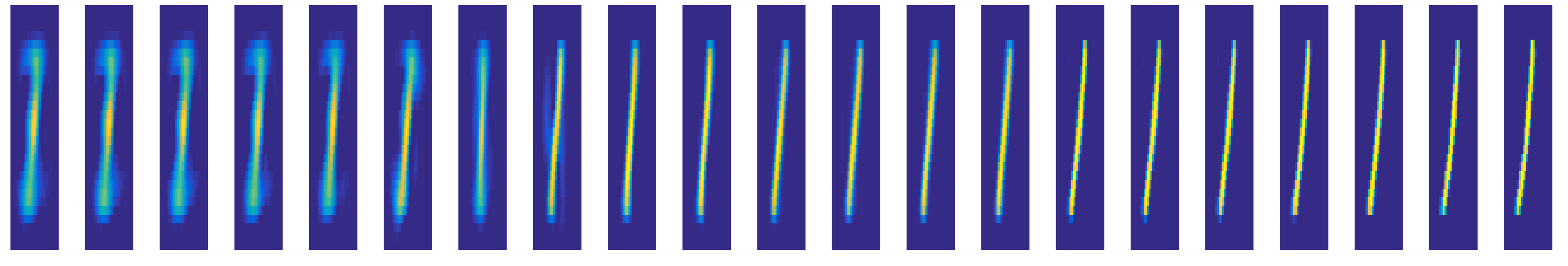}
\\
\includegraphics[width=13cm,,height=1.2cm]{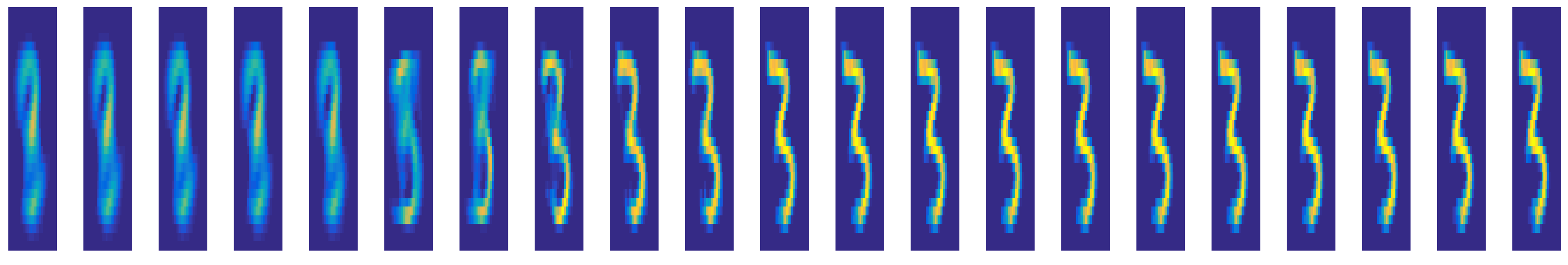}
 \subfigure[Scaling subspace]{ \includegraphics[clip,trim=10 10 20 5,width=6cm]{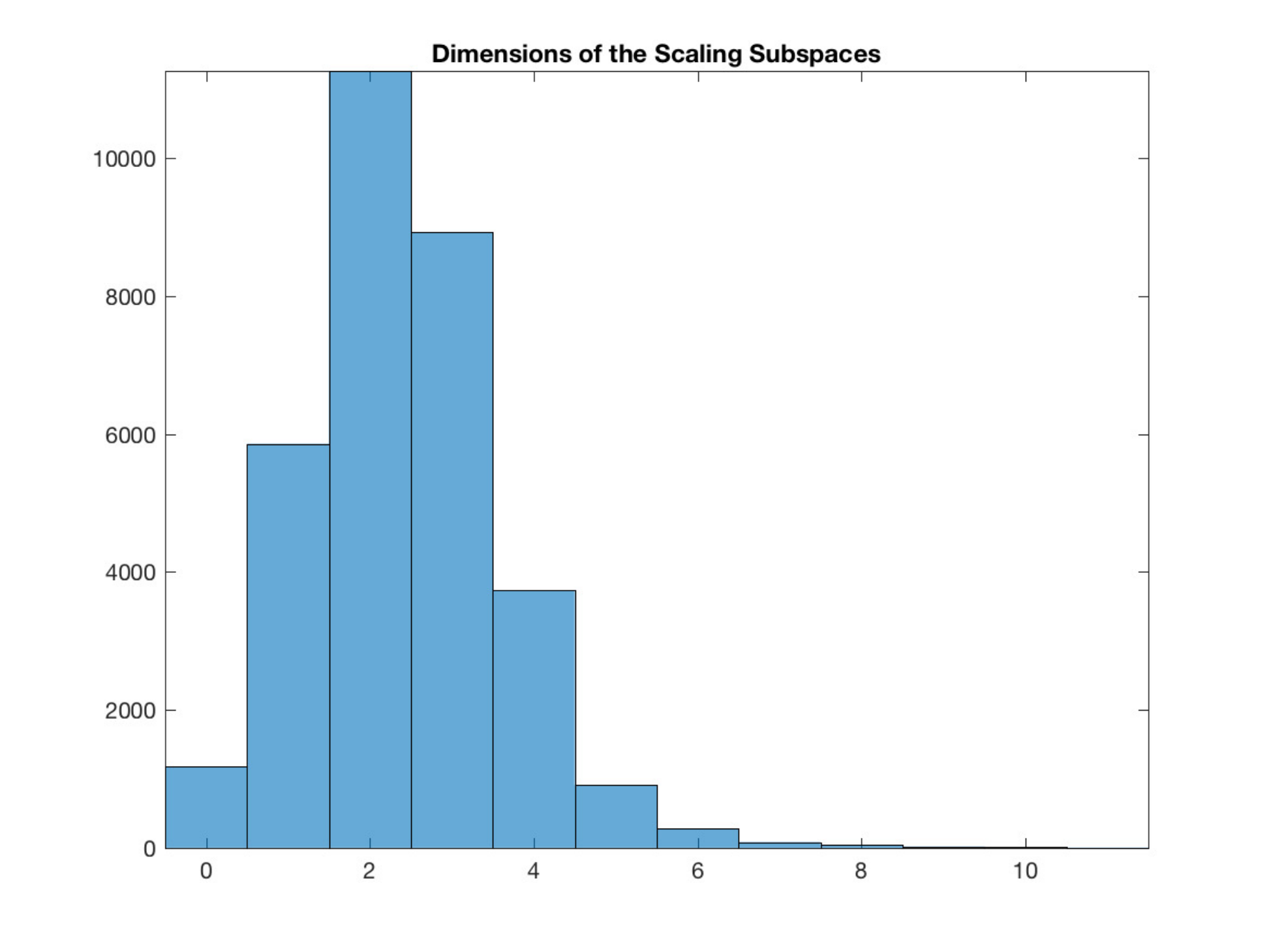}}    \hspace{-0.5cm}
    \subfigure[$\log_{10} \|\hcalP_{j+1} x_i - \hcalP_j x_i\|$]{
    \includegraphics[clip,trim=10 10 20 5,width=6cm]{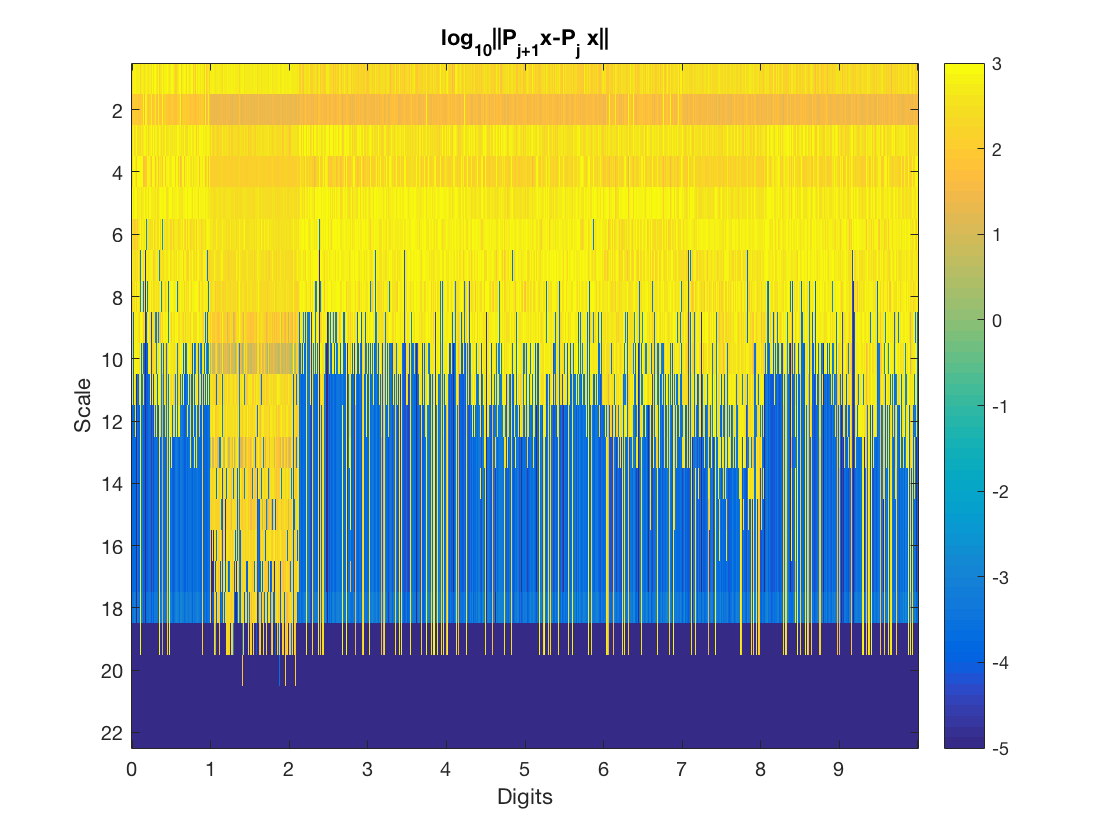}
    }
\\
        \subfigure[Error versus scale]{
     \includegraphics[clip,trim=10 0 20 5,width=6cm]{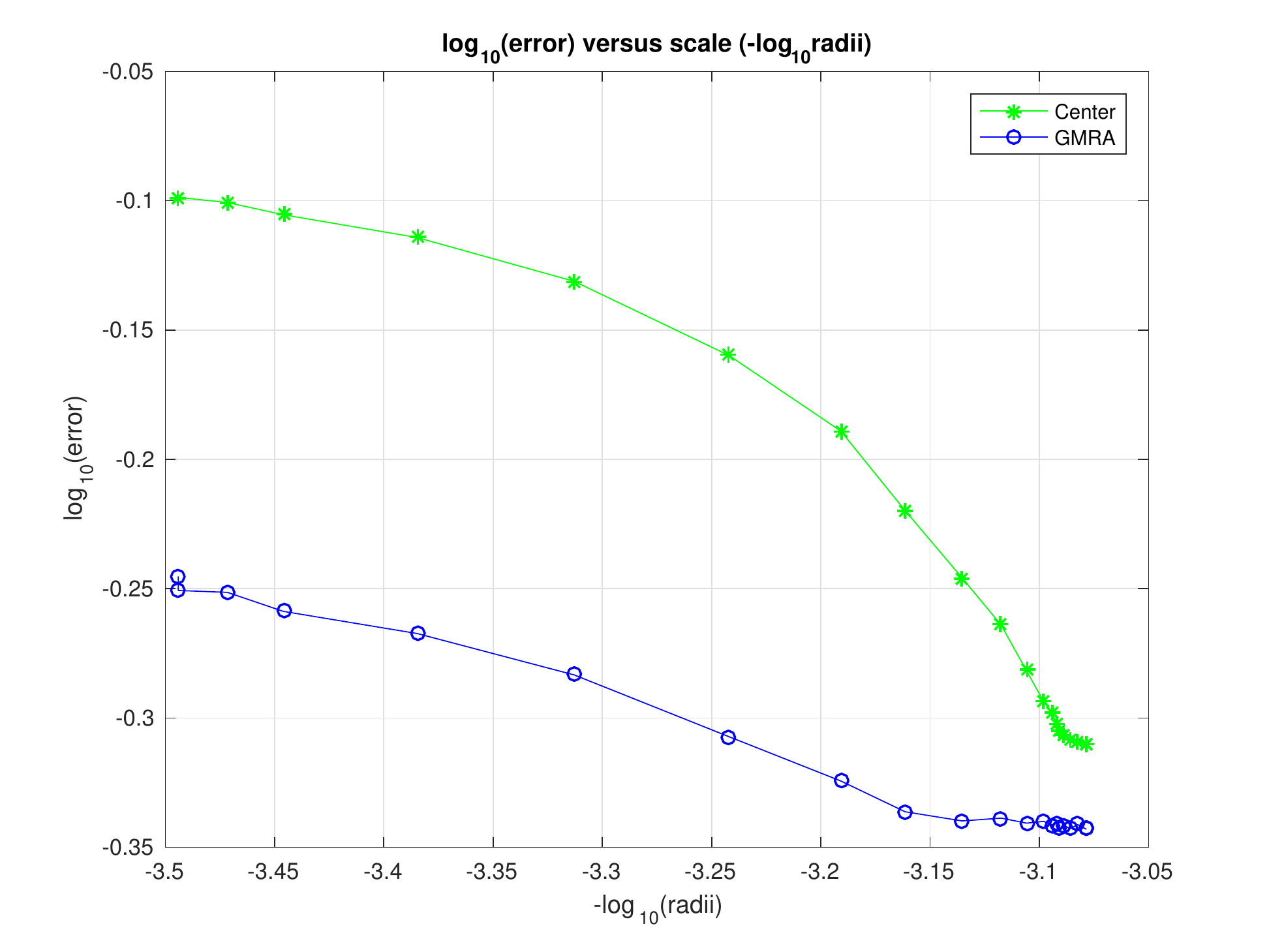}
     }
        \hspace{-0.6cm}
        \subfigure[Error versus partition size]{
     \includegraphics[clip,trim=10 0 20 5,width=6cm]{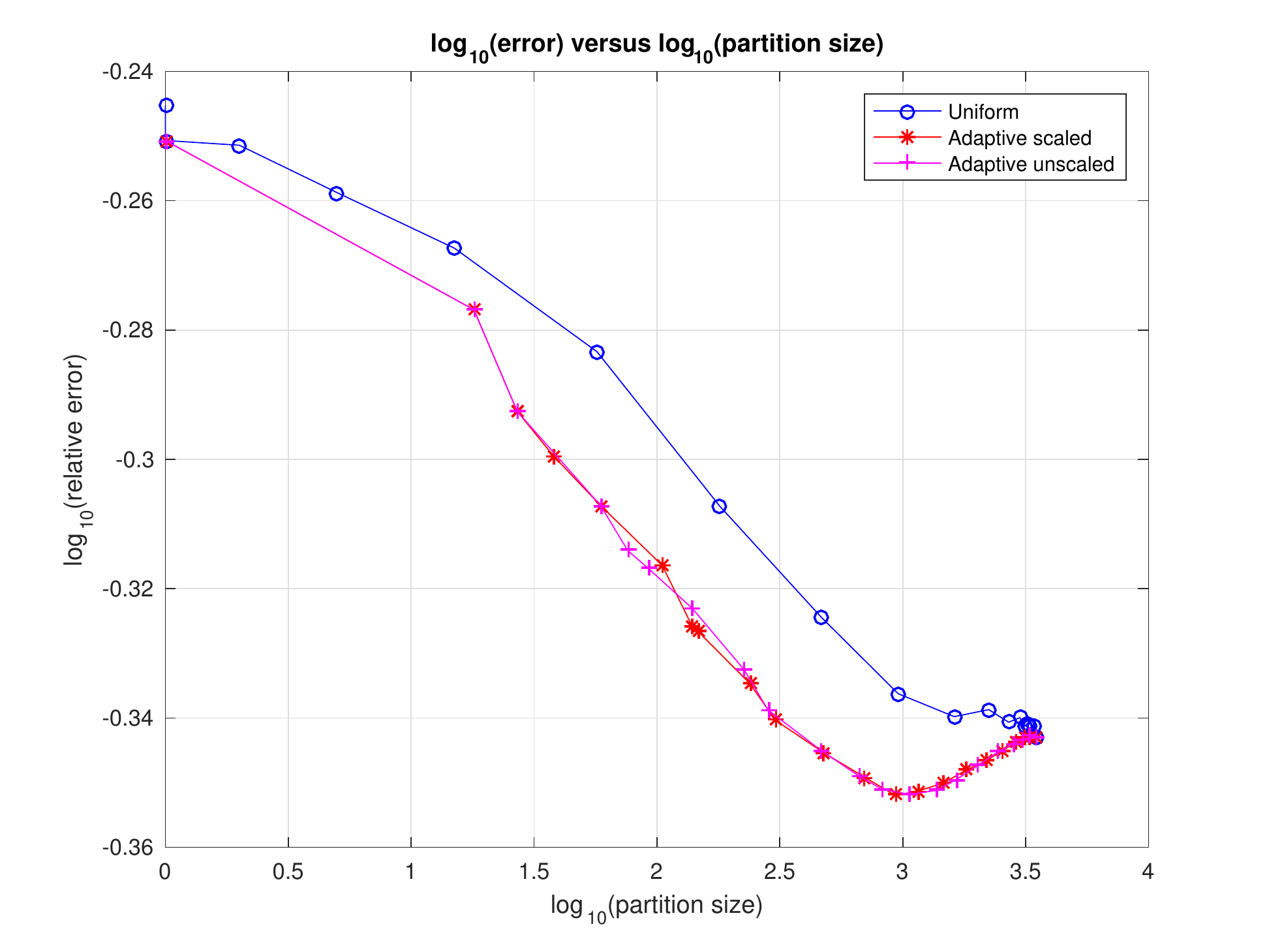}
     }
\\
               \caption{The top three rows: multiscale approximations of the digit $0,1,2$ in the MNIST data set, from the coarsest scale (left) to the finest scale (right).              (a) the histogram of dimensions of the subspaces $\hVjk$;              
                 (b) $\log_{10} \|\hcalP_{j+1} x_i - \hcalP_j x_i\|$ from the coarsest scale (top) to the finest scale (bottom), with columns indexed by the digits, sorted from $0$ to $9$;                 
                 (c) log-log plot of the relative $L^2$ error versus scale in GMRA and the center approximation;            
                    (d) log-log plot of the relative $L^2$ error versus partition size for GMRA, adaptive GMRA with scale-dependent and scale-independent threshold.              }
  \label{FigMNIST}
\end{figure}

\subsection{MNIST digit data}

We consider the MNIST data set from \url{http://yann.lecun.com/exdb/mnist/}, which contains images of $60,000$ handwritten digits, each of size $28 \times 28$,
grayscale. 
The intrinsic dimension of this data set varies for different digits and across scales, as it was observed in \cite{LMR:MGM1}. We run GMRA by setting the diameter of cells at scale $j$ to be $\mathcal{O}(0.9^j)$ in order to slowly zoom into the data at multiple scales. 

We evenly split the digits to the training set and the test set.
As the intrinsic dimension
is not well-defined, we set GMRA to pick the dimension of $\hVjk$ adaptively, as the smallest dimension needed to capture $50\%$ of the energy of the data in $\Cjk$. 
As an example, we display the GMRA approximations of the digit $0,1,2$ from coarse scales to fine scales in Figure \ref{FigMNIST}.
The histogram 
of the dimensions of the subspaces $\hVjk$ is displayed in (a). (b) represents $\log_{10} \|\hcalP_{j+1} x_i - \hcalP_j x_i\|$
from the coarsest scale (top) to the finest scale (bottom), with columns indexed by the digits, sorted from $0$ to $9$. We observe that $1$ has more fine scale information than the other digits.
In (c), we display the log-log plot of the relative $L^2$ error versus scale in GMRA and the center approximation. The improvement of GMRA over center approximation is noticeable. 
Then we compute the relative $L^2$ error for GMRA and adaptive GMRA when the partition size varies. Figure \ref{FigMNIST} (d) shows that adaptive GMRA achieves the same accuracy as GMRA with fewer cells in the partition. Errors increase when the partition size exceeds $10^3$ due to a large variance at fine scales.
In this experiment, scale-dependent threshold and scale-independent threshold yield similar performances.


\begin{figure}[th]
 \centering
 \commentout{
  \includegraphics[clip,trim=10 10 20 5,width=2cm,height=2cm]{Fig_Caltech101/Patches/h1.png}
  \hspace{-0.2cm}
    \includegraphics[clip,trim=10 10 20 5,width=2cm,height=2cm]{Fig_Caltech101/Patches/h5.png}
      \hspace{-0.2cm}
        \includegraphics[clip,trim=10 10 20 5,width=2cm,height=2cm]{Fig_Caltech101/Patches/v4.png}
          \hspace{-0.2cm}
                \includegraphics[clip,trim=10 10 20 5,width=2cm,height=2cm]{Fig_Caltech101/Patches/v5.png}
                  \hspace{-0.2cm}
            \includegraphics[clip,trim=10 10 20 5,width=2cm,height=2cm]{Fig_Caltech101/Patches/v7.png}
               \hspace{-0.2cm}
     \includegraphics[clip,trim=10 10 20 5,width=2cm,height=2cm]{Fig_Caltech101/Patches/d4.png}
       \hspace{-0.2cm}
            \includegraphics[clip,trim=10 10 20 5,width=2cm,height=2cm]{Fig_Caltech101/Patches/d5.png}
              \\
                    \includegraphics[clip,trim=10 10 20 5,width=2cm,height=2cm]{Fig_Caltech101/Patches/sd3.png}
        \hspace{-0.2cm}
         \includegraphics[clip,trim=10 10 20 5,width=2cm,height=2cm]{Fig_Caltech101/Patches/sd10.png}
           \hspace{-0.2cm}
                \includegraphics[clip,trim=10 10 20 5,width=2cm,height=2cm]{Fig_Caltech101/Patches/sd7.png}
                  \hspace{-0.2cm}
    \includegraphics[clip,trim=10 10 20 5,width=2cm,height=2cm]{Fig_Caltech101/Patches/c4.png}
      \hspace{-0.2cm}
            \includegraphics[clip,trim=10 10 20 5,width=2cm,height=2cm]{Fig_Caltech101/Patches/dsd1.png}
              \hspace{-0.2cm}
        \includegraphics[clip,trim=10 10 20 5,width=2cm,height=2cm]{Fig_Caltech101/Patches/b1.png}
          \hspace{-0.2cm}
                \includegraphics[clip,trim=10 10 20 5,width=2cm,height=2cm]{Fig_Caltech101/Patches/v2.png}
                  \hspace{-0.2cm}
 \\
 }
 Learning image patches
 \\
  \subfigure[Scaling subspace]{
    \includegraphics[clip,trim=10 10 20 5,width=5cm]{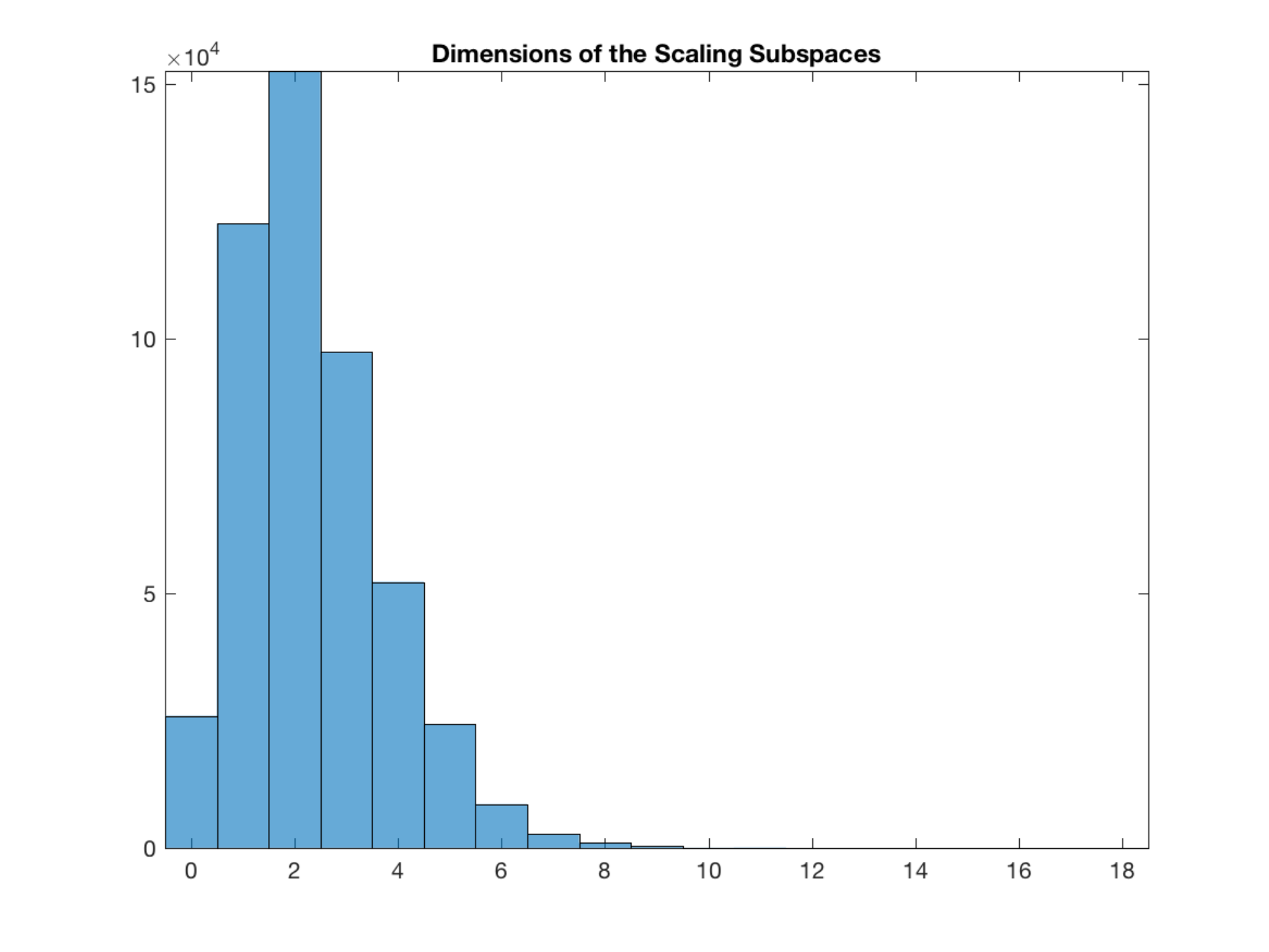}
    }
      \hspace{-0.6cm}
      \subfigure[Error versus scale]{
     \includegraphics[clip,trim=10 10 20 5,width=5cm]{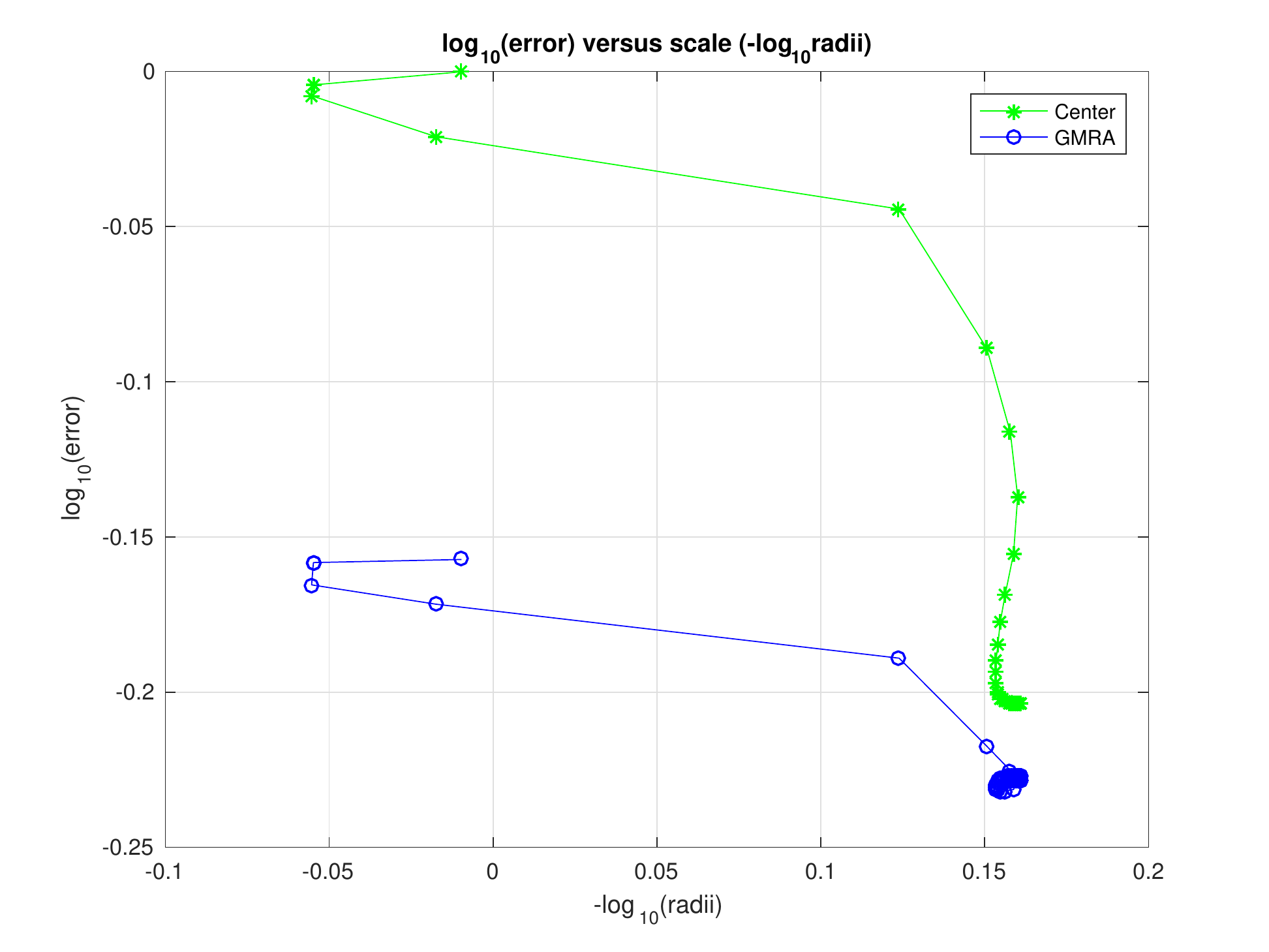} 
     }
        \hspace{-0.6cm}
        \subfigure[Error versus partition size]{
     \includegraphics[clip,trim=10 10 20 5,width=5cm]{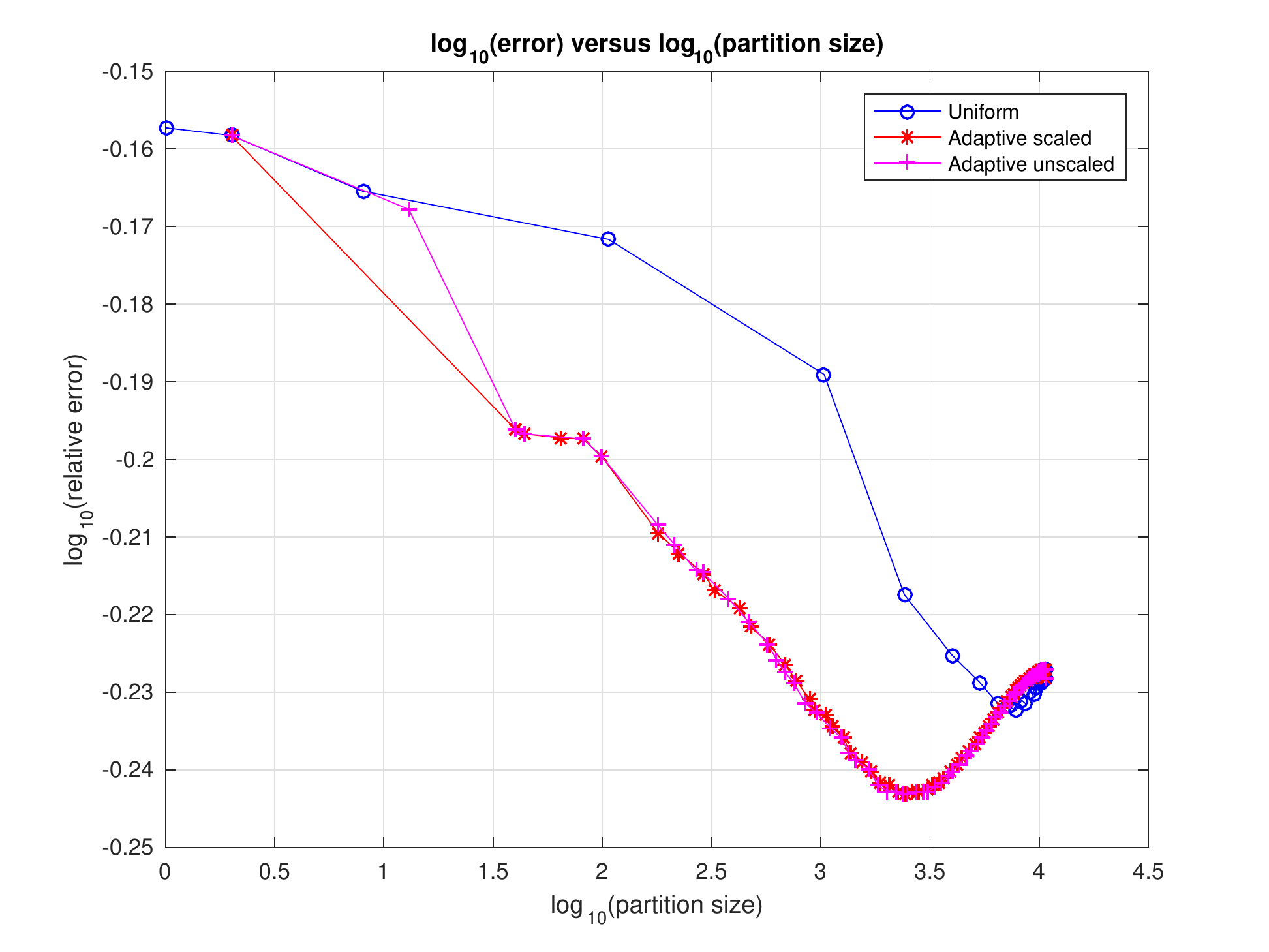} 
     }
     \\
      Learning the Fourier magnitudes of image patches
      \\
 \subfigure[Scaling subspace]{
    \includegraphics[clip,trim=10 10 20 5,width=5cm]{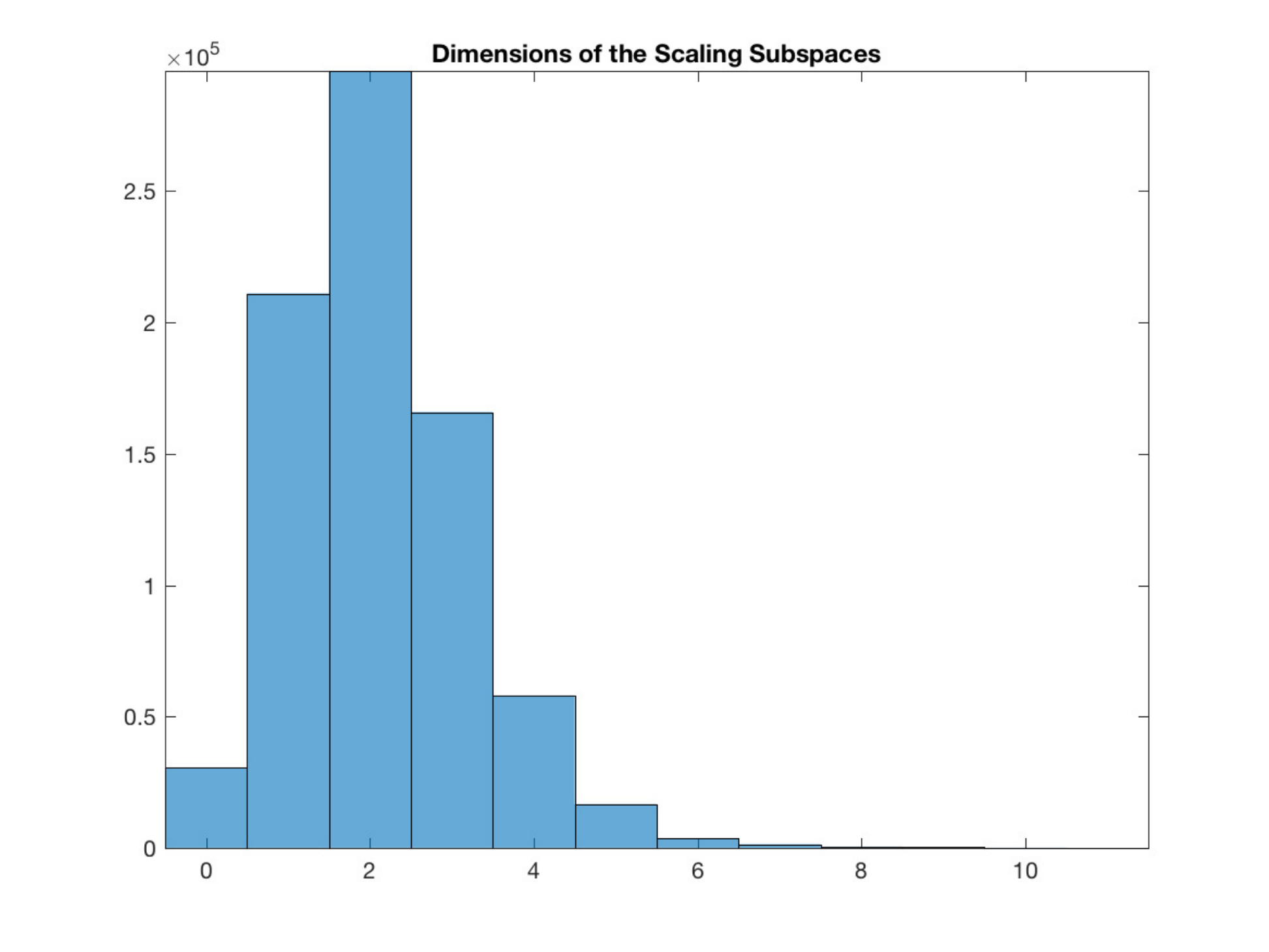}
    }
      \hspace{-0.6cm}
      \subfigure[Error versus scale]{
     \includegraphics[clip,trim=10 10 20 5,width=5cm]{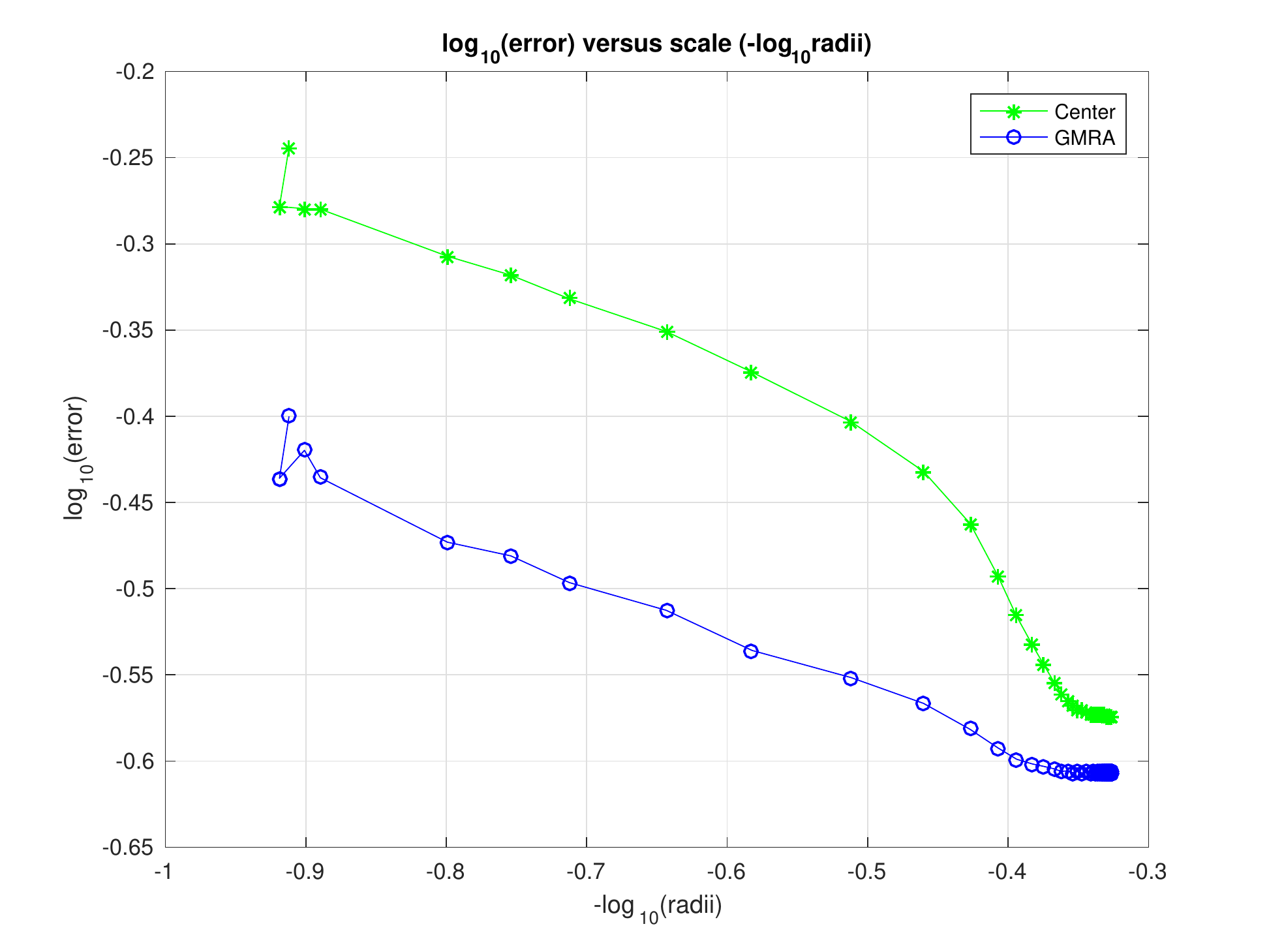} 
     }
        \hspace{-0.6cm}
        \subfigure[Error versus partition size]{
     \includegraphics[clip,trim=10 10 20 5,width=5cm]{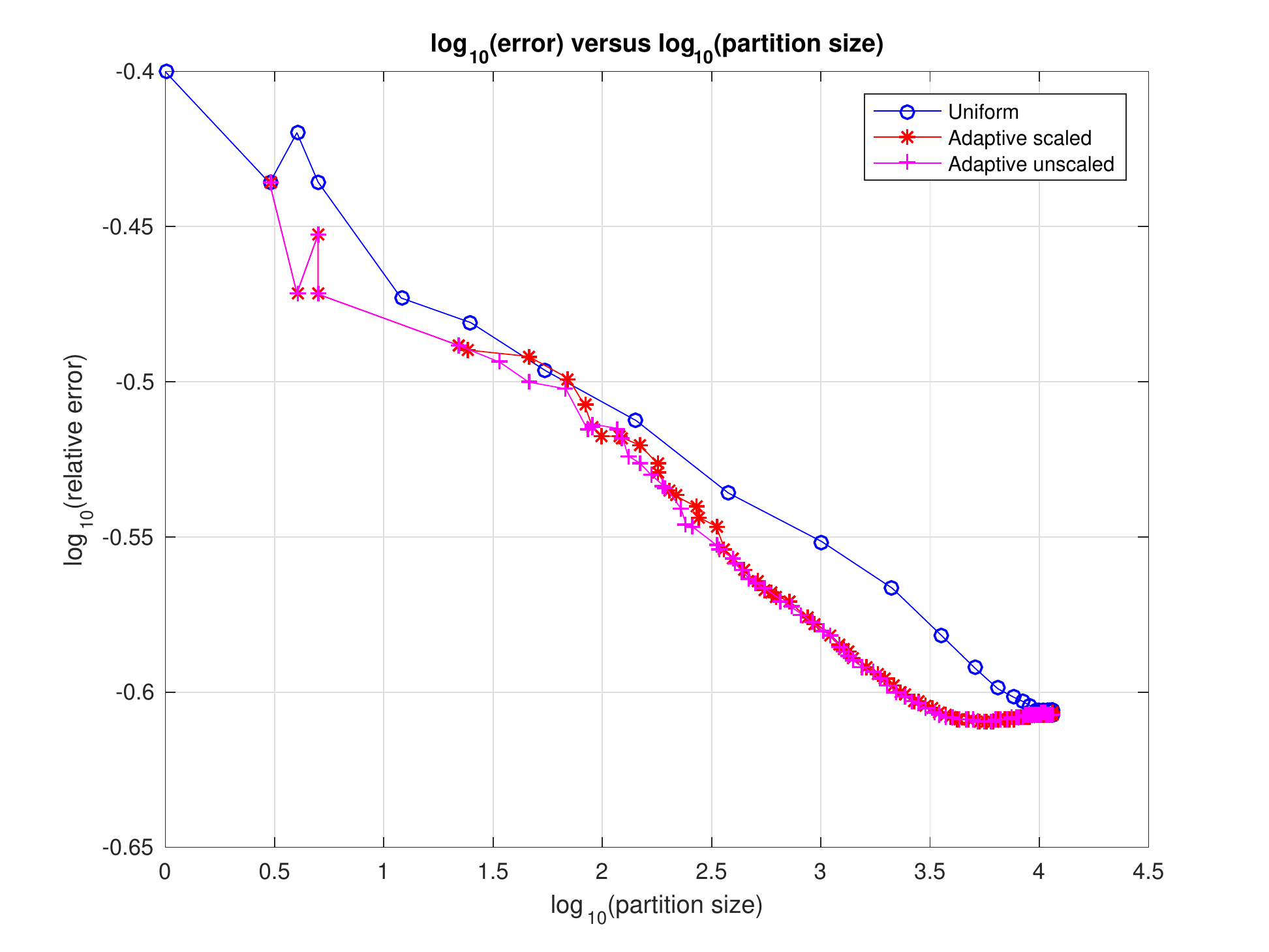} 
     }
     \caption{Caltech 101 image patches}
              \caption{             
             Top line: learning on $200,000$ image patches; bottom line: results of learning the Fourier magnitudes of the same image patches. 
             (a,d) histograms of the dimensions of the subspaces $\hVjk$;               
              (b,e) relative $L^2$ error versus scale for GMRA and the center approximation;             
              (c,f) relative $L^2$ error versus the partition size for GMRA, adaptive GMRA with scale-dependent and scale-independent threshold.                    }    
  \label{FigCaltech101}
\end{figure}

\subsection{Natural image patches}
It was argued in \cite{Peyre} that many sets of patches extracted from natural images can be modeled a low-dimensional manifold. We use the Caltech 101 dataset from \url{https://www.vision.caltech.edu/Image_Datasets/Caltech101/} \citep[see][]{FFLCaltech101}, take 40 images from four categories: accordion, airplanes, hedgehog and scissors and extract multiscale patches of size $8\times 8$ 
from these images. Specifically, if the image is of size $m \times m$, for $\ell = 1,\ldots,\log_2(m/8)$, we collect patches of size $2^\ell 8$, low-pass filter them and downsample them to become patches of size $8 \times 8$ (see \cite{GM_MultiscaleDictionariesSPIE} for a discussion about dictionary learning on patches of multiple sizes using multiscale ideas). 
Then we randomly pick $200,000$ patches, evenly split them to the training set and the test set.
In the construction of GMRA, we set  the diameter of cells at scale $j$ to be $\mathcal{O}(0.9^j)$ and the dimension of $\hVjk$ to be the smallest dimension needed to capture $50\%$ of the energy of the data in $\Cjk$. 
We also run GMRA and adaptive GMRA on the Fourier magnitudes of these image patches to take advantage of translation-invariance of the Fourier magnitudes. The results are shown in Figure \ref{FigCaltech101}.
The histograms 
of the dimensions of the subspaces $\hVjk$ are displayed in (a,e).
%
Figure \ref{FigCaltech101} (c) and (g) show the relative $L^2$ error versus 
scale for GMRA and the center approximation.
We then compute the relative $L^2$ error for GMRA and adaptive GMRA when the partition size varies and display the log-log plot in (d) and (h). It is noticeable that adaptive GMRA achieves the same accuracy as GMRA with a smaller partition size.
We conducted similar experiments on $200,000$ multiscale patches from CIFAR 10 from \url{https://www.cs.toronto.edu/~kriz/cifar.html} \citep[see][]{CIFAR10} with extremely similar results (not shown).

\section{Performance analysis of GMRA and adaptive GMRA}
\label{secGMRA}

This section is devoted to the performance analysis of empirical GMRA and adaptive GMRA. We will start with the following stochastic error estimate on any  partition.

\subsection{Stochastic error on a fixed partition}
Suppose $\tcalT$ is a finite proper subtree of the data master tree $\calTn$.
Let $\Lam$ be the partition consisting the outer leaves of $\tcalT$. 
The piecewise affine projector on $\Lam$ and its empirical version are 
$$\calP_\Lam = \sum_{\Cjk \in \Lam} \calPjk \mathbf{1}_{j,k}
\quad \text{and}
\quad
\hcalP_\Lam = \sum_{\Cjk \in \Lam} \hcalPjk \mathbf{1}_{j,k}.$$
A non-asymptotic probability bound on the stochastic error $\|\calP_\Lam X -\hcalP_\Lam X\|$ is given by:

\begin{lemma}
\label{thm1}
Let $\Lam$ be the partition associated a finite proper subtree $\tcalT$ of the data master tree $\calTn$. 
Suppose $\Lam$ contains $\#_j \Lam $ cells at scale $j$. 
Then for any $\eta > 0$,
\beq
\label{thm1eq}
\PP\{\|\calP_\Lam X - \hcalP_\Lam X\| \ge \eta \}
 \le 
\alpha d 
\cdot \# \Lam \cdot  e^{-
\frac{\beta  n \eta^2}{d^2\sum_{j} 2^{-2j} \#_j \Lam }}
\eeq
$$\EE \|\calP_\Lam X - \hcalP_\Lam X\|^2
\le 
\frac{d^2  \log ( \alpha d \#\Lam) \sum_{j} 2^{-2j}\#_j \Lam}{\beta n }$$
where $\alpha=\alpha(\theta_2,\theta_3)$
and $\beta=\beta(\theta_2,\theta_3,\theta_4)$.
\end{lemma}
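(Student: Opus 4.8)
The plan is to reduce the global stochastic error to independent single-cell contributions, concentrate each cell separately, and reassemble with a weighted union bound whose per-cell thresholds are tuned to the scale. Because the cells of $\Lam$ are pairwise disjoint up to $\rho$-null sets by (A1)--(A2), the operators $\calPjk\chijk$ and $\hcalPjk\chijk$ act on disjoint pieces of $\calM$, so the $L^2(\rho)$ norm in Table~\ref{TableGMRA} splits orthogonally. Writing $e_{j,k}:=\|(\calPjk-\hcalPjk)\chijk X\|$ and $V:=\sum_j 2^{-2j}\#_j\Lam$, the first step is the identity
$$\|\calP_\Lam X-\hcalP_\Lam X\|^2=\sum_{\Cjk\in\Lam}\|(\calPjk-\hcalPjk)\chijk X\|^2=\sum_{\Cjk\in\Lam}e_{j,k}^2,$$
which reduces everything to controlling the single-cell errors $e_{j,k}$. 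Since the tree (hence the cells) is built from $\calX_n'$ while the empirical centers and subspaces use the independent sample $\calX_n$, conditionally on $\calX_n'$ each cell is a fixed set populated by i.i.d. points from $\rho|_{\Cjk}$.

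The technical heart is a per-cell sub-Gaussian bound of the form $\PP\{e_{j,k}\ge t\}\le \alpha d\,\exp\!\Big(-\tfrac{\beta n 2^{2j}t^2}{d^2}\Big)$, valid in the relevant range of $t$. To obtain it I would decompose $\calPjk x-\hcalPjk x=(\cjk-\hcjk)+\proj_{\Vjk}(x-\cjk)-\proj_{\hVjk}(x-\hcjk)$ and bound the two sources of error. The center error $\|\cjk-\hcjk\|$ is an average of $\hnjk$ i.i.d. centered vectors bounded by $\theta_2 2^{-j}$ (by (A4)), so a vector Bernstein inequality gives sub-Gaussian concentration at scale $2^{-j}/\sqrt{\hnjk}$. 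The subspace error is controlled by first applying a matrix Bernstein inequality to $\|\hSjk-\Sjk\|$, whose summands are again bounded by $\lesssim 2^{-2j}$, and then converting this into a bound on $\|\proj_{\hVjk}-\proj_{\Vjk}\|$ through a Davis--Kahan $\sin\Theta$ estimate, whose applicability rests precisely on the spectral gap $\lambda_d^{j,k}-\lambda_{d+1}^{j,k}\ge(1-\theta_4)\theta_3 2^{-2j}/d$ guaranteed by (A5)(i)--(ii); this is where the factor $d$ and the dependence $\beta=\beta(\theta_2,\theta_3,\theta_4)$ enter. Since the displacement $x-\cjk$ is at scale $2^{-j}$, both contributions to $e_{j,k}$ have typical size $\asymp d\,2^{-j}/\sqrt{n}$ once the count is replaced by its mean $\hnjk\asymp n\rho(\Cjk)$, the deviation of $\hnjk$ from $n\rho(\Cjk)$ being handled by a multiplicative Chernoff bound and absorbed into the constants. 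I expect this per-cell estimate --- matrix concentration coupled with the eigengap-dependent perturbation bound --- to be the main obstacle.

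Granting the per-cell tails, the reassembly is clean and, notably, requires no independence across cells. For fixed $\eta$ choose cell thresholds $t_{j,k}$ with $\sum_{\Cjk\in\Lam}t_{j,k}^2=\eta^2$; then $\{\sum_{jk} e_{j,k}^2\ge\eta^2\}$ forces $e_{j,k}\ge t_{j,k}$ for at least one cell, since otherwise $\sum e_{j,k}^2<\sum t_{j,k}^2=\eta^2$. Hence by the union bound
$$\PP\Big\{\sum_{\Cjk\in\Lam}e_{j,k}^2\ge\eta^2\Big\}\le\sum_{\Cjk\in\Lam}\PP\{e_{j,k}\ge t_{j,k}\}\le\alpha d\sum_{\Cjk\in\Lam}\exp\!\Big(-\tfrac{\beta n 2^{2j}t_{j,k}^2}{d^2}\Big).$$
I would equalize the exponents by taking $t_{j,k}^2=\gamma\,d^2 2^{-2j}/(\beta n)$ with $\gamma$ fixed by $\sum t_{j,k}^2=\eta^2$; since $\sum_{\Cjk\in\Lam}d^2 2^{-2j}/(\beta n)=d^2 V/(\beta n)$, this forces $\gamma=\beta n\eta^2/(d^2V)$ and yields
$$\PP\{\|\calP_\Lam X-\hcalP_\Lam X\|\ge\eta\}\le\alpha d\,\#\Lam\,\exp\!\Big(-\tfrac{\beta n\eta^2}{d^2 V}\Big),$$
which is the claimed tail; the weighted complexity $V=\sum_j 2^{-2j}\#_j\Lam$ emerges exactly as the sum over cells of the reciprocal concentration rates $d^2 2^{-2j}/(\beta n)$. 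The expectation bound then follows by integrating the tail: setting $\eta_0^2:=d^2 V\log(\alpha d\#\Lam)/(\beta n)$ and bounding the probability by $1$ for $\eta\le\eta_0$ and by the exponential for $\eta>\eta_0$ in $\EE\|\calP_\Lam X-\hcalP_\Lam X\|^2=\int_0^\infty 2\eta\,\PP\{\,\cdot\,\ge\eta\}\,d\eta$, the Gaussian tail contributes a term of the same order as $\eta_0^2$, giving the stated $\EE\|\calP_\Lam X-\hcalP_\Lam X\|^2\lesssim d^2\log(\alpha d\#\Lam)\,V/(\beta n)$ up to the value of the constants.
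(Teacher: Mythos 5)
Your proposal is correct and follows essentially the same route as the paper's proof: the same orthogonal splitting into per-cell errors, the same per-cell machinery (vector/matrix Bernstein for $\hcjk$ and $\hSjk$, a Chernoff bound on $\hrho(\Cjk)$, and Davis--Kahan via the eigengap from (A5)), and the same reassembly in which the weighted complexity $\sum_j 2^{-2j}\#_j\Lam$ emerges from scale-adapted thresholds. The only difference is bookkeeping---the paper first allocates the budget $2^{-2j}\#_j\Lam\eta^2/V$ to each scale and then splits that scale's cells into light ones (handled deterministically) and heavy ones (handled by concentration), whereas you allocate $2^{-2j}\eta^2/V$ directly to each cell; the same light/heavy dichotomy is what justifies your per-cell tail ``in the relevant range of $t$,'' since the $t$-independent Chernoff term is only controlled when $\rho(\Cjk)\gtrsim 2^{2j}t^2$, and otherwise the event $\{e_{j,k}\ge t\}$ is empty.
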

Lemma \ref{thm1} and Proposition \ref{lemma0} below are proved in appendix \ref{app1} .

\subsection{Performance analysis of empirical GMRA}
\label{s:gmra}

According to Eq. \eqref{bv1}, the approximation error of empirical GMRA is split into the squared bias and the variance. A corollary of Lemma \ref{thm1} with $\Lam = \Lam_j$ results in an estimate of the variance term. 


\begin{proposition}
\label{lemma0}
For any $\eta \ge 0$,
\begin{eqnarray}
\PP\{\|\calP_j X - \hcalP_j X\| \ge \eta \}
& \le & 
\alpha d
  \#\Lam_j  e^{-
\frac{\beta 2^{2j} n \eta^2}{d^2 \# \Lam_j}}
\label{lemma0eq1}
\\
\EE \|\calP_j X - \hcalP_j X\|^2 
& \le &
\frac{d^2 \# \Lam_j \log[\alpha d \# \Lam_j ]}{\beta 2^{2j}n}\,.
\label{lemma0eq2}
\end{eqnarray}
\end{proposition}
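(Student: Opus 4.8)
The plan is to obtain Proposition \ref{lemma0} as a direct specialization of Lemma \ref{thm1} to the uniform partition $\Lam = \Lam_j$. First I would note that the scale-$j$ projectors $\calP_j$ and $\hcalP_j$ of Table \ref{TableGMRA} coincide with the partition projectors $\calP_{\Lam_j}$ and $\hcalP_{\Lam_j}$, since $\calP_j = \sum_{k\in\calK_j}\calPjk\chijk = \sum_{\Cjk\in\Lam_j}\calPjk\chijk$, and likewise for the empirical version. Moreover $\Lam_j = \{\Cjk\}_{k\in\calK_j}$ is precisely the partition of outer leaves of the finite proper subtree of $\calTn$ obtained by keeping all cells at scales strictly below $j$ (truncated to $\calTn$), so the hypotheses of Lemma \ref{thm1} are met.

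The one computation to carry out is the weighted tree-complexity appearing in the exponent and in the variance bound. Because every cell of the uniform partition $\Lam_j$ sits at the single scale $j$, we have $\#_{j'}\Lam_j = \#\Lam_j$ for $j'=j$ and $\#_{j'}\Lam_j = 0$ otherwise. Hence the sum collapses to a single term,
\[
\sum_{j'} 2^{-2j'}\#_{j'}\Lam_j = 2^{-2j}\,\#\Lam_j.
\]
Substituting this identity into the two bounds of Lemma \ref{thm1} (with $\#\Lam$ replaced by $\#\Lam_j$) turns $\exp\bigl(-\beta n\eta^2/(d^2 \sum_{j'}2^{-2j'}\#_{j'}\Lam_j)\bigr)$ into $\exp\bigl(-\beta 2^{2j} n\eta^2/(d^2\#\Lam_j)\bigr)$, yielding \eqref{lemma0eq1}, and turns the quantity $d^2\log(\alpha d\#\Lam_j)\sum_{j'}2^{-2j'}\#_{j'}\Lam_j/(\beta n)$ into $d^2\#\Lam_j\log[\alpha d\#\Lam_j]/(\beta 2^{2j}n)$, yielding \eqref{lemma0eq2}.

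There is essentially no hard step here: the result is a corollary, and all of the probabilistic work (the concentration argument and the union bound over cells producing the prefactor $\alpha d\#\Lam$ and the constants $\alpha,\beta$) is already carried out in the proof of Lemma \ref{thm1}. The only point that warrants a line of care is confirming that $\Lam_j$ genuinely arises from a proper subtree of $\calTn$ so that Lemma \ref{thm1} is applicable; this is immediate from assumption (A1), which guarantees the nested tree structure of the $\{\Cjk\}$.
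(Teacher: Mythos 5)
Your proposal matches the paper's own proof: the paper likewise obtains \eqref{lemma0eq1} by applying Lemma \ref{thm1} with $\Lam=\Lam_j$, where the weighted complexity $\sum_{j'}2^{-2j'}\#_{j'}\Lam_j$ collapses to $2^{-2j}\#\Lam_j$ since all cells lie at the single scale $j$, and then deduces \eqref{lemma0eq2} (the paper writes out the integration of the tail bound over $\eta$ explicitly, which is exactly the content of the expectation bound in Lemma \ref{thm1} that you invoke). Your remark that $\Lam_j$ arises as the outer-leaf partition of the proper subtree of all cells at scales below $j$ correctly justifies the applicability of Lemma \ref{thm1}.
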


In Eq. \eqref{bv1}, the squared bias decays like $\mathcal{O}(2^{-2js})$ whenever $\rho \in \AS$ and the variance scales like  $\mathcal{O}(j 2^{j(d-2)}/n)$. A proper choice of the scale $j$ gives rise to Theorem \ref{thm2} whose proof is given below.

\subsubsection*{Proof of Theorem \ref{thm2}}

\begin{proof}[Proof of Theorem \ref{thm2}]
\begin{align*}
&\EE \|X-\hcalP_j X\|^2
\le 
 \|X-\calP_j X\|^2
+
\EE \|\calP_j X-\hcalP_j X\|^2
\\
& 
\le |\rho|_{\calA_s}^2 2^{-2sj} + 
\frac{ d^2 \#\Lam_j \log[\alpha d  \#\Lam_j]}{\beta 2^{2j}n}
\le 
|\rho|_{\calA_s}^2 2^{-2sj} + 
\frac{ d^2 2^{j(d-2)} }{\theta_1 \beta n}\log\frac{\alpha   d 2^{jd}}{\theta_1}
\end{align*}
as $\#\Lam_j \le 2^{jd}/\theta_1$ due to Assumption (A3).

\noindent{\bf{Intrinsic dimension $d=1$:}} In this case, both the squared bias and the variance decrease as $j$ increases, so we should choose the scale $j^*$ as large as possible as long as most cells at scale $j^*$ have $d$ points. We will choose $j^*$ such that $2^{-j^*}  = \mu \lognn$ for some $\mu >0$. After grouping $\Lam_{j^*}$ into light and heavy cells whose measure is below or above $\frac{28(\nu+1)\log n}{3n}$, we can show that the error on light cells is upper bounded by $C(\lognn)^{2}$ and all heavy cells have at least $d$ points with high probability.


\begin{lemma}
\label{lemmathm2_1}
Suppose $j^*$ is chosen such that $2^{-j^* }  = \mu \lognn$ with some $\mu>0$.
Then 
\begin{align*}
&\|(X-\calP_{\jstar} X)\mathbf{1}_{\{C_{\jstar,k}: \rho(C_{\jstar,k}) \le \frac{28(\nu+1)\log n}{3n}\}}\|^2 
\le
 \frac{28(\nu+1) \theta_2^2 \mu}{3\theta_1} \left( \lognn\right)^{2}, 
\\
&\PP\left\{ \text{each } C_{j^*,k} \text{ satisfying } \rho(C_{\jstar,k}) >  \tfrac{28(\nu+1)\log n}{3n} \text{ has at least $d$ points} \right\} \ge 1- n^{-\nu}.
\end{align*}

\end{lemma}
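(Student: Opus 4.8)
The plan is to treat the two assertions separately. The deterministic bound on light cells rests on the diameter control (A4) together with the cell count (A3); the probabilistic statement about heavy cells rests on a lower-tail Bernstein inequality followed by a union bound, with the scale choice $2^{-\jstar}=\mu\lognn$ and (A3) used to tame the number of cells.

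For the light-cell bound I would unfold the $L^2(\rho)$ norm cell by cell. On each $C_{\jstar,k}$ the global projector acts as $\calP_{\jstar,k}$, and since $x-\calP_{\jstar,k}x=\proj_{V_{\jstar,k}^\perp}(x-c_{\jstar,k})$ its norm is at most $\|x-c_{\jstar,k}\|\le\theta_2 2^{-\jstar}$ almost surely by (A4). Hence
$$
\|(X-\calP_{\jstar}X)\mathbf{1}_{\{\text{light}\}}\|^2
=\!\!\sum_{k:\,\rho(C_{\jstar,k})\le\frac{28(\nu+1)\log n}{3n}}\!\!\int_{C_{\jstar,k}}\!\|x-\calP_{\jstar,k}x\|^2 d\rho
\le \theta_2^2 2^{-2\jstar}\!\!\sum_{k:\,\text{light}}\!\rho(C_{\jstar,k}).
$$
On light cells I would replace one factor $\rho(C_{\jstar,k})$ by the threshold $\frac{28(\nu+1)\log n}{3n}$ and bound the number of such cells by $\#\Lam_{\jstar}\le 2^{\jstar}/\theta_1$ (this is (A3) with $d=1$). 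Substituting $2^{-\jstar}=\mu\lognn$ turns the right-hand side into
$$
\theta_2^2 2^{-2\jstar}\cdot\frac{28(\nu+1)\log n}{3n}\cdot\frac{2^{\jstar}}{\theta_1}
=\frac{28(\nu+1)\theta_2^2}{3\theta_1}\,2^{-\jstar}\,\frac{\log n}{n}
=\frac{28(\nu+1)\theta_2^2\mu}{3\theta_1}\Big(\lognn\Big)^2,
$$
which is exactly the claimed constant.

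For the heavy cells I would fix one cell and note that $\hn_{\jstar,k}=\sum_{i=1}^n\mathbf{1}_{\jstar,k}(x_i)$ is a sum of $n$ i.i.d.\ Bernoulli variables with mean $p_k:=\rho(C_{\jstar,k})$, summands bounded by $1$ and variance at most $p_k$. Applying the lower-tail Bernstein inequality to the centered sum with deviation $t=p_k/2$ gives
$$
\PP\Big(\hrho(C_{\jstar,k})\le \tfrac{p_k}{2}\Big)\le \exp\Big(-\frac{3 n p_k}{28}\Big).
$$
The choice $t=p_k/2$ is precisely what produces the constant $\tfrac{28}{3}$ matching the heavy/light threshold: for a heavy cell $p_k>\frac{28(\nu+1)\log n}{3n}$ forces the exponent below $n^{-(\nu+1)}$, while simultaneously $np_k/2>\frac{14(\nu+1)\log n}{3}\ge d$ for $n$ large, so on the complementary event $\hn_{\jstar,k}>np_k/2\ge d$.

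Finally I would union-bound over all heavy cells, of which there are at most $\#\Lam_{\jstar}\le 2^{\jstar}/\theta_1=\frac{n}{\mu\theta_1\log n}$, yielding a failure probability at most $\frac{1}{\mu\theta_1\log n}\,n^{-\nu}\le n^{-\nu}$ once $n$ is large enough that $\mu\theta_1\log n\ge1$. The main (and only mild) obstacle is the bookkeeping: one must check that $np_k/2\ge d$ holds on heavy cells—immediate for $d=1$ and large $n$—and that the extra power $n$ gained by thresholding with $(\nu+1)$ rather than $\nu$ exactly absorbs the $O(n/\log n)$ cell count from the union bound. Both are routine consequences of (A3) and the scale choice $2^{-\jstar}=\mu\lognn$.
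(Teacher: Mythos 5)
Your proposal is correct and follows essentially the same route as the paper's own proof: the light-cell bound via (A4) for the pointwise error, the threshold for the cell measure, and (A3) for the cell count; the heavy-cell bound via a relative-deviation Bernstein inequality at $t=\rho(C_{\jstar,k})/2$ giving exponent $\tfrac{3}{28}n\rho(C_{\jstar,k})\ge(\nu+1)\log n$, followed by a union bound over the $O(n/\log n)$ cells. The only cosmetic differences are that the paper phrases the heavy-cell event as $\hrho(C_{\jstar,k})<d/n$ implying $|\hrho-\rho|>\rho/2$ (rather than invoking the one-sided tail directly) and carries a factor of $2$ from the two-sided bound, neither of which changes the argument.
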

Lemma \ref{lemmathm2_1} is proved in appendix \ref{app12}.
If $j^*$ is chosen as above,
The probability estimate in \eqref{thm2eq01} follows from
$$\|X-\calP_{j^*} X\| \le |\rho|_{\calA_s}2^{-sj^*} \le 
|\rho|_{\calA_s} \mu^s\left(\lognn \right)^{s  } \le |\rho|_{\calA_s} \mu^{s}\lognn,$$
\begin{align*}
&\PP\left\{\|\calP_{j^*} X - \hcalP_{j^*} X \| 
\ge C_1 \frac{\log n}{n} 
\right\}
\le \tfrac{\alpha  }{\theta_1\mu} \left( (\log n)/n\right)^{-1} e^{-\frac{{\mu\beta\theta_1
C_1^2}\log n}{d^2}}
\le \tfrac{\alpha  }{\theta_1\mu}\frac{n\,n^{-\frac{\mu\beta\theta_1
C_1^2}{d^2}} }{\log n} 
\le C_2  n^{-\nu}\,
\end{align*}
provided that ${\mu\beta\theta_1 C_1^2/d^2}-1 > \nu$.

\noindent{\bf{Intrinsic dimension $d\ge 2$:}} When $d\ge 2$, the squared bias decreases but the variance increases as $j$ gets large.
We choose $j^*$ such that $2^{-j^*}  = \mu \left( (\log n)/n\right)^{\frac{1}{2s+d-2}}$ to balance these two terms. 
We use the same technique as $d=1$ to group $\Lam_{j^*}$ into light and heavy cells whose measure is below or above $28/3\cdot(\nu+1)(\log n)/n$, we can show that the error on light cells is upper bounded by $C((\log n)/n)^{\frac{2s}{2s+d-2}}$ and all heavy cells have at least $d$ points with high probability.

\begin{lemma}
\label{lemmathm2_2}
Let $j^*$ be chosen such that $2^{-j^*}  = \mu \left( (\log n)/n\right)^{\frac{1}{2s+d-2}}$ with some $\mu>0$.
Then 
\begin{align*}
&\|(X-\calP_{\jstar} X)\mathbf{1}_{\{C_{\jstar,k}: \rho(C_{\jstar,k}) 
\le \frac{28(\nu+1)\log n}{3n}\}}\|^2 
\le
 \frac{28(\nu+1) \theta_2^2 \mu^{2-d}}{3\theta_1} \left( \lognn\right)^{\frac{2s}{2s+d-2}}, 
\\
&\PP\left\{ \forall\, C_{j^*,k} \,:\, \rho(C_{\jstar,k}) >  \frac{28(\nu+1)\log n}{3n},\,\,  C_{j^*,k} \text{ has at least $d$ points} \right\} \ge 1- n^{-\nu}.
\end{align*}

\end{lemma}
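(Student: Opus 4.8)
The statement is the $d\ge 2$ counterpart of Lemma \ref{lemmathm2_1}, and I would follow the same two-part template, only with the $d\ge2$ scale $2^{-\jstar}=\mu\left(\lognn\right)^{\frac{1}{2s+d-2}}$. The plan is to split the cells $\{C_{\jstar,k}\}$ at scale $\jstar$ into \emph{light} cells, those with $\rho(C_{\jstar,k})\le\frac{28(\nu+1)\log n}{3n}$, and \emph{heavy} cells, which exceed this threshold. The light cells contribute the deterministic bias bound in the first display; the heavy cells are shown to contain at least $d$ samples each, with high probability, in the second.

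For the bias estimate I would work cell by cell. Since $\calP_{\jstar,k}$ minimizes $\EE_{\jstar,k}\|x-c_{\jstar,k}-\proj_V(x-c_{\jstar,k})\|^2$ over $d$-dimensional subspaces $V$, and the integrand is pointwise $\le\|x-c_{\jstar,k}\|^2$ for any such $V$, Assumption (A4) gives $\int_{C_{\jstar,k}}\|x-\calP_{\jstar,k}x\|^2\,d\rho\le\theta_2^2 2^{-2\jstar}\rho(C_{\jstar,k})$. Summing over light cells, bounding each measure by the threshold and the number of cells by $\#\Lam_{\jstar}\le 2^{\jstar d}/\theta_1$ from (A3), I obtain
\[
\sum_{\text{light }k}\int_{C_{\jstar,k}}\|x-\calP_{\jstar,k}x\|^2\,d\rho\le\frac{\theta_2^2}{\theta_1}\,2^{\jstar(d-2)}\cdot\frac{28(\nu+1)\log n}{3n}.
\]
Substituting the scale and simplifying the exponent via $1-\frac{d-2}{2s+d-2}=\frac{2s}{2s+d-2}$ produces the claimed bound, with constant $\frac{28(\nu+1)\theta_2^2}{3\theta_1}\mu^{2-d}$.

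For the population claim, fix a heavy cell: the count $\widehat n_{\jstar,k}=\sum_{i=1}^n\mathbf 1_{\jstar,k}(x_i)$ is a sum of $n$ i.i.d.\ Bernoulli variables with mean $n\rho(C_{\jstar,k})>\frac{28(\nu+1)\log n}{3}$, which dominates the constant $d$ once $n$ is large. Applying Bernstein's lower-tail inequality to the event $\{\widehat n_{\jstar,k}-n\rho(C_{\jstar,k})<-\tfrac12 n\rho(C_{\jstar,k})\}$, which contains $\{\widehat n_{\jstar,k}<d\}$, yields an exponent at least $\frac{3}{28}n\rho(C_{\jstar,k})>(\nu+1)\log n$ --- this is exactly where the factor $\frac{28}{3}$ in the threshold is calibrated --- so the per-cell failure probability is $\le n^{-(\nu+1)}$. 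A union bound over the at most $2^{\jstar d}/\theta_1$ cells at scale $\jstar$ then gives the $1-n^{-\nu}$ guarantee.

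The one point requiring care is closing this union bound: the cardinality of $\Lam_{\jstar}$ must not overwhelm the per-cell probability. This is precisely where the hypothesis $s\ge1$ enters, since it forces $2^{\jstar d}=\mu^{-d}(n/\log n)^{\frac{d}{2s+d-2}}\lesssim n$, leaving a full factor of $n$ to absorb against $n^{-(\nu+1)}$. Everything else is routine: the light-cell estimate is deterministic given (A3)--(A4), and the exponent arithmetic mirrors the $d=1$ case of Lemma \ref{lemmathm2_1}.
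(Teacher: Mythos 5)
Your proof is correct and follows essentially the same route as the paper, which itself simply notes that the argument is identical to that of Lemma \ref{lemmathm2_1}: a deterministic light-cell bound from (A3)--(A4) combined with the cardinality $\#\Lam_{\jstar}\le 2^{\jstar d}/\theta_1$, and a Bernstein-type lower-tail bound plus union bound for the heavy cells. Your observation that $s\ge 1$ is what keeps $\#\Lam_{\jstar}\lesssim n/\log n$ so the union bound closes is accurate and matches the role this hypothesis plays in Theorem \ref{thm2}.
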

Proof of Lemma \ref{lemmathm2_2} is omitted since it is the same as the proof of Lemma \ref{lemmathm2_1}.
The probability estimate in \eqref{thm2eq1} follows from
$$\|X-\calP_{j^*} X\| \le |\rho|_{\calA_s}2^{-s{j^*}} \le 
|\rho|_{\calA_s} \mu^s \left(\lognn \right)^{\frac{s}{2s+d-2}},$$
\begin{align*}
&\PP\left\{\|\calP_{j^*} X - \hcalP_{j^*} X \| 
\ge C_1 \left (\frac{\log n}{n} \right)^{\frac{s}{2s+d-2}}
\right\}
\le \frac{\alpha d \mu^{-d} }{\theta_1} \left( \lognn\right)^{-\frac{d}{2s+d-2}} e^{-\frac{\beta\theta_1
C_1^2\mu^{d-2}\log n}{d^2}}
\le C_2  n^{-\nu}
\end{align*}
provided that ${\beta\theta_1 C_1^2 \mu^{d-2}}/d^2-1 > \nu$.


\end{proof}

\subsection{Performance analysis of Adaptive GMRA}
\label{s:adaptivegmra}


\begin{proof}[Proof of Theorem \ref{thm3}] In the case that $\calM$ is bounded by $M$, the minimum scale $\jmin = \log_2\frac{\theta_2}{M}$.
We first consider the case $d \ge 3$. 
In our proof $C$ stands for constants that may vary at different locations, but it is independent of $n$ and $D$. We will begin by defining several objects of interest:
\begin{itemize}
\item $\calTn$: the data master tree whose leaf contains at least $d$ points in $\calX_n$. It can be viewed as the part of a multiscale tree that our data have explored.

\item $\calT$: a complete multiscale tree containing $\calT^n$. 
$\calT$ can be viewed as the union $\calT^n$ and some empty cells, mostly at fine scales with high probability, that our data have not explored.  

\item $\calTrhoeta$: the smallest subtree of $\calT$ which contains $\{\Cjk \in \calT \,:\,\Deltajk \ge 2^{-j}\eta\}$.

\item $\calTeta = \calTrhoeta \cap \calTn$.

\item $\hcalTeta$: the smallest subtree of $\calTn$ which contains $\{\Cjk \in \calTn\,:\,\hDeltajk \ge 2^{-j}\eta\}$.

\item $\Lamrhoeta$: the partition associated with $\calTrhoeta$.

\item $\Lameta:$ the partition associated with $\calTeta$.



 
\item $\hLameta:$ the partition associated with $\hcalTeta$. 

\item Suppose $\calT^0$ and $\calT^1$ are two subtrees of $\calT$. If $\Lam^0$ and ${\Lam}^1$ are two adaptive partitions associated with $\calT^0$ and $\calT^1$ respectively, we denote by $\Lam^0 \vee \Lam^1$ and $\Lam^0 \wedge \Lam^1$ the partitions associated to the trees $\calT^0 \cup \calT^1$ and $\calT^0 \cap \calT^1$ respectively. 

\end{itemize}
We also let $b=2\amax+5$ where $\amax$ is the maximal number of children that a node has in $\calT$; 
$\kappa_0 = \max(\kappa_1,\kappa_2)$ where ${b^2\kappa_1^2}/(21\theta_2^2) =\nu + 1$ and $\alpha_2 \kappa_2^2/b^2  = \nu+1$ with $\alpha_2$ defined in Lemma \ref{lemma4}. In order the obtain the MSE bound, one can simply set $\nu = 1$.


The empirical adaptive GMRA projection is given by 
$\hcalP_{\hLamtaun} = \sum_{\Cjk \in \hLamtaun}
\hcalPjk \chijk.$
Using the triangle inequality, we split the error as follows:
$$
\| X - \hcalP_{\hLamtaun} X \| \le 
e_1 + e_2 + e_3 +e_4$$
where
\begin{align*}
e_1 & := 
\| X -\calP_{\hLamtaun\vee \Lambtaun} X\|\,,
&
e_2 &:=
\| \calP_{\hLamtaun\vee \Lambtaun} X
-
\calP_{\hLamtaun\wedge \Lamtaunb} X
\|
\\
e_3 &:=
\| 
\calP_{\hLamtaun\wedge \Lamtaunb} X
-
\hcalP_{\hLamtaun\wedge \Lamtaunb} X
\|\,,
&
e_4 &:=
\| 
\hcalP_{\hLamtaun\wedge \Lamtaunb} X
-
\hcalP_{\hLamtaun} X
\|.
\end{align*}
A similar split appears in the works of \citet{DeVore:UniversalAlgorithmsLearningTheoryI,DeVore:UniversalAlgorithmsLearningTheoryII}. The partition built from those $\Cjk$'s satisfying $\hDeltajk \ge 2^{-j}\tau_n$ does not exactly coincide with the partition chosen based on those $\Cjk$ satisfying $\Deltajk \ge 2^{-j}\tau_n$. This is accounted by $e_2$ and $e_4$, corresponding to those $\Cjk$'s whose $\hDeltajk$ is significantly larger or smaller than $\Deltajk$, which we will  prove to be small with high probability. The remaining terms $e_1$ and $e_3$ correspond to the bias and variance of the approximations on the partition obtained by thresholding $\Deltajk$.

\noindent{\bf{Term $e_1$:}} The first term $e_1$ is essentially the bias term. Since $\hLamtaun\vee \Lambtaun\supseteq\Lambtaun$,
\begin{align*}
e_1^2
&  = 
\| X -\calP_{\hLamtaun\vee \Lambtaun} X\|^2
 \le 
\| X -\calP_{ \Lambtaun} X\|^2
\le 
\underbrace{\| X -\calP_{ \Lamrhobtaun} X\|^2}_{e_{11}^2}
+
\underbrace{\|\calP_{ \Lamrhobtaun} X -\calP_{ \Lambtaun} X\|^2}_{e_{12}^2}\,.
\end{align*}
$e_{11}^2$ may be upper bounded deterministically from Eq. \eqref{Bs1}:
\beq
e_{11}^2
\le 
B_{s,d}|\rho|_{\BS}^p(b\tau_n)^{2-p}
\le 
B_{s,d} |\rho|_{\BS}^{\frac{2(d-2)}{2s+d-2}} (b\kappa)^{\frac{4s}{2s+d-2}} \left(\lognn\right)^{\frac{2s}{2s+d-2}}.
\label{thm3eq2}
\eeq
$e_{12}$ encodes the difference between thresholding $\calT$ and $\calTn$, but it is $0$ with high probability:

\begin{lemma}
\label{thm3_lemma1}
For any $\nu>0$, $\kappa$ such that $\kappa > \kappa_1$, where $b^2\kappa_1^2/(21\theta_2^2) = \nu+1$, 
\beq
\label{thm3eq21}
\PP\{e_{12}>0\} \le C(\theta_2,\amax,\amin,\kappa) n^{-\nu}
\eeq 
\end{lemma}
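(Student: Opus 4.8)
The plan is to show that the event $\{e_{12}>0\}$ can only be triggered by cells that the \emph{population} refinement criterion would select but that the \emph{samples} have not yet reached, and then to bound the probability of this ``dense-but-unsampled'' event by a binomial tail.

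First I would reduce $e_{12}$ to a combinatorial statement about the two trees. Recall that $\Lambtaun$ is the partition of $\calTbtaun = \calTrhobtaun \cap \calTn$, while $\Lamrhobtaun$ is the partition of $\calTrhobtaun$. If $\calTrhobtaun \subseteq \calTn$ then $\calTbtaun = \calTrhobtaun$, the two partitions coincide, the two projectors agree on $\calM$, and hence $e_{12}=0$. Thus $e_{12}>0$ forces $\calTrhobtaun \not\subseteq \calTn$, i.e. there is a node $\Cjk \in \calTrhobtaun \setminus \calTn$. Since $\calTrhobtaun$ is the smallest subtree containing all cells with $\Deltajk \ge 2^{-j} b\tau_n$, every one of its nodes has a descendant (possibly itself) meeting this criterion. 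Picking such a descendant $C_{j',k'}$ of $\Cjk$, and using that $\calTn$ consists exactly of cells with at least $d$ sample points (so $\Cjk\notin\calTn$ gives $\hnjk<d$ and hence $\widehat n_{j',k'}\le \hnjk<d$), I obtain a single cell $C_{j',k'}$ that simultaneously satisfies $\Delta_{j',k'} \ge 2^{-j'} b\tau_n$ and $\widehat n_{j',k'} < d$.

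Next I would convert the refinement criterion into a lower bound on the cell measure. Since $\calQjk = (\calP_j - \calP_{j+1})\chijk$ and each affine projector displaces a point by at most its distance to the cell center, Assumption (A4) gives $\|\calQjk x\| \le \tfrac32\theta_2 2^{-j}$ on $\Cjk$, whence $\Deltajk \le \tfrac32 \theta_2 2^{-j}\sqrt{\rho(\Cjk)}$. Together with the criterion $\Deltajk \ge 2^{-j} b\tau_n$ this yields the deterministic implication $\rho(\Cjk) \ge \tfrac49\theta_2^{-2}(b\tau_n)^2 = \tfrac49\theta_2^{-2} b^2\kappa^2 \tfrac{\log n}{n}=:q_0$, so the offending cell $C_{j',k'}$ has measure at least $q_0 \gtrsim b^2\kappa^2(\log n)/n$ yet contains fewer than $d$ points. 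It then remains to bound $\PP\{\exists\,\Cjk:\ \rho(\Cjk)\ge q_0,\ \hnjk<d\}$. For a fixed such cell $\hnjk \sim \mathrm{Bin}(n,\rho(\Cjk))$ with mean $\ge nq_0$, and a Chernoff/Bernstein lower-tail estimate (the one producing the recurring $28/3$ constant) gives $\PP\{\hnjk<d\} \le e^{-\frac{3}{28}nq_0} = n^{-b^2\kappa^2/(21\theta_2^2)}$, using $\tfrac49\cdot\tfrac{3}{28}=\tfrac1{21}$. I would then union bound: by the partition property (A2) at most $1/q_0$ cells per scale can have measure $\ge q_0$, and only $O(\log n)$ scales are relevant (the cover-tree/data-tree depth), so the number of cells entering the union is $O((\log n)/q_0)=O(n)$. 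Choosing $\kappa\ge\kappa_1$ with $b^2\kappa_1^2/(21\theta_2^2)=\nu+1$ makes the per-cell exponent beat the $O(n)$ union factor, giving $\PP\{e_{12}>0\}\le C(\theta_2,\amax,\amin,\kappa)\,n^{-\nu}$.

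The main obstacle is the first, conceptual step: establishing that a positive $e_{12}$ can be caused \emph{only} by cells that the population criterion would refine but that the empirical tree $\calTn$ has not explored. This is precisely the mechanism that lets us dispense with the extra $\calA_\gamma$-membership hypothesis used by Binev et al.\ (see the discussion in ``Connection to previous works''). Once this reduction is in place, the deterministic measure bound is a routine consequence of (A4), and the remaining probabilistic content is a standard binomial lower-tail bound combined with a careful cell count via (A2)--(A3).
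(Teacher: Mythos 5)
Your proposal is correct and follows essentially the same route as the paper's proof: reduce $e_{12}>0$ to the existence of a cell with $\Deltajk \ge 2^{-j}b\tau_n$ but empirical mass below $d/n$, use $\Deltajk \le \tfrac{3}{2}\theta_2 2^{-j}\sqrt{\rho(\Cjk)}$ to force $\rho(\Cjk) \ge \tfrac{4b^2\kappa^2}{9\theta_2^2}\tfrac{\log n}{n}$, and finish with the $e^{-\frac{3}{28}n\rho(\Cjk)}$ Bernstein tail and a union bound, landing on the identical threshold $\kappa_1$. The only cosmetic difference is the cell count in the union bound: the paper bounds $\#\calTrhobtaun \lesssim \amin\theta_2^2/(b^2\tau_n^2)$ via disjointness of the leaves of $\calTrhobtaun$, whereas you count $1/q_0$ cells per scale over $O(\log n)$ scales; both yield a polynomial factor that the exponent absorbs, so the conclusion is unchanged.
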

The proof is postponed, together with those of the Lemmata that follow, to appendix \ref{app12}).
If $\calM$ is bounded by $M$, then $e_{12}^2 \le 4M^2$ and 
\beq
\label{thm3eq3}
\EE e_{12}^2 \le 4M^2  \PP\{e_{12}>0\} 
\le 
4M^2 C n^{-{\nu}}
\le   4M^2 C\left(\lognn\right)^{\frac{2s}{2s+d-2}} 
\eeq
if ${\nu}>{2s}/(2s+d-2)$, for example ${\nu} = 1$. 

\noindent{\bf{Term $e_3$:}}
$e_3$ corresponds to the variance on the partition $\hLamtaun\wedge \Lamtaunb$. For any $\eta>0$,
$$\PP\{e_3 > \eta\} 
\le 
\alpha d \#(\hLamtaun\wedge \Lamtaunb) e^{
-\frac{\beta n \eta^2}{d^2 \sum_{j\ge \jmin}2^{-2j}\#_j( \hLamtaun\wedge \Lamtaunb )}
}$$
according to Lemma \ref{thm1}. Since $\hLamtaun\wedge \Lamtaunb \subset \calTtaunb$, for any $j\ge0$, regardless of $\hLamtaun$, we have $\#_j(\hLamtaun\wedge \Lamtaunb) \le \#_j \calTtaunb\le \# \calTtaunb$. Therefore
\begin{align}
\PP\{e_3 > \eta\} 
&\le \alpha d \# \calTtaunb  e^{-\frac{\beta n \eta^2}{d^2 \sum_{j\ge \jmin}2^{-2j}\#_j \calTtaunb }}
\le
\alpha d \#\calTtaunb e^{-\frac{\beta n \eta^2}{d^2 |\rho|_{\BS}^p (\tau_n/b)^{-p} }}\,,
\label{thm3eq4}
\end{align}
which implies
\begin{align*}
&\EE e_3^2 
= \int_{0}^{+\infty} \eta \PP\left\{e_3 > \eta \right\} d\eta
= \int_{0}^{+\infty} \eta \min\left(1, \alpha d \# \calTtaunb e^{
-\frac{\beta n \eta^2}{d^2 \sum_{j\ge \jmin}2^{-2j}\#_j \calTtaunb }
} \right) d\eta
\\
&
\le \frac{d^2\log \alpha d \# \calTtaunb}{\beta n} \!\sum_{j\ge \jmin} 2^{-2j} \#_j \calTtaunb
\le C \lognn \left(\frac{\tau_n}b\right)^{-p}\!\!
\le
C(\theta_2,\theta_3,d,\kappa,s,|\rho|_{\BS}) \left(\lognn \right)^{\frac{2s}{2s+d-2}}\!\!\!\!\!\!\!\!\!\!.
\end{align*}

\commentout{
Since $\hLamtaun\wedge \Lamtaunb$ is a more refined partition than $\Lamtaunb$, one have 
\begin{align*}
\#(\hLamtaun\wedge \Lamtaunb)
& \le 
\#\Lamtaunb,
\\
\sum_{j \ge 0}{2^{-2j}\#_j(\hLamtaun\wedge \Lamtaunb)}
&\le
\sum_{j \ge 0}{2^{-2j}\#_j \Lamtaunb}
\end{align*}
when $d \ge 2$, and therefore
$$\EE e_3^2 \le \frac{d^2\log[\alpha d \# \Lamtaunb] }{\beta n} \sum_{j \ge 0}{2^{-2j}\#_j \Lamtaunb}.$$
}

\noindent{\bf{Term $e_2$ and $e_4$:}} 
These terms account for the difference of truncating the master tree based on $\Deltajk$'s and its empirical counterparts $\hDeltajk$'s. We prove that $\hDeltajk$'s concentrate near $\Deltajk$'s with high probability if there are sufficient samples.

\begin{lemma}
\label{lemma4}
For any $\eta >0$ and any $\Cjk \in \calT$
\begin{align}
\max\left\{\PP\left\{\hDeltajk \le \eta \ \text{ and } \
\Deltajk \ge b\eta  \right\},
\PP\left\{\Deltajk \le \eta \ \text{ and } \
\hDeltajk \ge b\eta  \right\}\right\}
& \le 
\alpha_1 e^{-\alpha_2 2^{2j} n \eta^2}
\label{lemma4eq1}
\end{align}
for some constants $\alpha_1 := \alpha_1(\theta_2,\theta_3,\amax,d)$ and $\alpha_2 := \alpha_2(\theta_2,\theta_3,\theta_4,\amax,d)$.
\end{lemma}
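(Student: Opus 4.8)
The plan is to compare $\hDeltajk$ and $\Deltajk$ through the intermediate quantity $\bDeltajk^2 := \frac1n\sum_{x_i\in\Cjk}\|\calQjk x_i\|^2$, the empirical $L^2$-norm of the \emph{true} difference operator $\calQjk = (\calP_j - \calP_{j+1})\chijk$. This separates the two sources of discrepancy: $|\hDeltajk - \bDeltajk|$ measures how far the empirical projectors $\hcalP_j,\hcalP_{j+1}$ sit from the population projectors $\calP_j,\calP_{j+1}$ (an \emph{operator} error), while $|\bDeltajk - \Deltajk|$ measures how far the empirical mean $\frac1n\sum_{x_i\in\Cjk}$ sits from $\int_{\Cjk}d\rho$ applied to the fixed, bounded function $x\mapsto\|\calQjk x\|^2$ (a \emph{sampling} error). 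I would establish a bound of the required shape $\alpha_1 e^{-\alpha_2 2^{2j}n\eta^2}$ for each source and then combine them.

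For the operator step, the triangle inequality in the empirical norm gives
$$|\hDeltajk - \bDeltajk| \le \Big(\tfrac1n\!\sum_{x_i\in\Cjk}\!\|(\hcalQjk - \calQjk)x_i\|^2\Big)^{1/2} \le \Big(\tfrac1n\!\sum_{x_i\in\Cjk}\!\|(\hcalP_j-\calP_j)x_i\|^2\Big)^{1/2} + \Big(\tfrac1n\!\sum_{x_i\in\Cjk}\!\|(\hcalP_{j+1}-\calP_{j+1})x_i\|^2\Big)^{1/2}.$$
On $\Cjk$ the first term is the single-cell discrepancy between $\hcalPjk$ and $\calPjk$, and the second splits over the at most $\amax$ children $C_{j+1,k'}\in\Child(\Cjk)$ into the corresponding single-cell discrepancies at scale $j+1$. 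Each such term is controlled by the per-cell PCA perturbation estimates underlying Lemma \ref{thm1}: using Assumption (A4) to bound $\|x-\cjk\|\le\theta_2 2^{-j}$ pointwise, and (A5) (the eigengap $\theta_4$ together with $\theta_3$) to control the center and subspace perturbations $\|\hcjk-\cjk\|$ and $\|\proj_{\hVjk}-\proj_{\Vjk}\|$ via matrix-Bernstein/Davis--Kahan arguments, one obtains for each cell a bound of the form $\alpha d\, e^{-c\, 2^{2j}n t^2/d^2}$ for the event that the discrepancy exceeds $t$. A union bound over the $\le 1+\amax$ cells (the scale-$(j+1)$ terms only improve the exponent) yields $\PP\{|\hDeltajk-\bDeltajk| > 2\eta\} \le \alpha_1 e^{-\alpha_2 2^{2j}n\eta^2}$ with constants depending on $\theta_2,\theta_3,\theta_4,\amax,d$.

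For the sampling step I would apply Bernstein's inequality to the i.i.d.\ variables $Z_i := \|\calQjk x_i\|^2$, which vanish off $\Cjk$. Since $\calPjk$ and each $\calP_{j+1,k'}$ keep points within $O(2^{-j})$ of their centers by (A4), one has the pointwise bound $0\le Z_i \le C\theta_2^2\, 2^{-2j}$, hence $\EE[Z_i]=\Deltajk^2$ and $\mathrm{Var}(Z_i)\le C\theta_2^2\, 2^{-2j}\Deltajk^2$, so
$$\PP\{\Deltajk^2 - \bDeltajk^2 > t\} \le \exp\!\Big(-\tfrac{n t^2}{2C\theta_2^2 2^{-2j}\Deltajk^2 + \tfrac23 C\theta_2^2 2^{-2j}t}\Big).$$
This is where $b=2\amax+5$ enters: on the event $\{\hDeltajk\le\eta,\ \Deltajk\ge b\eta\}$, if the operator step succeeds ($|\hDeltajk-\bDeltajk|\le 2\eta$) then $\bDeltajk<3\eta$, so $\bDeltajk^2<9\eta^2$ while $\Deltajk^2\ge b^2\eta^2$, forcing $t:=\Deltajk^2-\bDeltajk^2 \ge \Deltajk^2-9\eta^2$. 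Substituting this $t$ and using $b^2>9$ to dominate the variance, the exponent is bounded below by $\alpha_2 2^{2j}n\eta^2$: in the large-$\Deltajk$ regime the numerator $(\Deltajk^2-9\eta^2)^2$ grows like $\Deltajk^4$ while the denominator grows only like $2^{-2j}\Deltajk^2$, so the bound only improves. The complementary event $\{\Deltajk\le\eta,\ \hDeltajk\ge b\eta\}$ is symmetric, now via the upper Bernstein tail for $\bDeltajk^2-\Deltajk^2$, where the variance is even smaller because $\Deltajk\le\eta$.

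Combining the two steps by a union bound gives the stated estimate. The main obstacle I anticipate is making the Bernstein exponent clean and \emph{uniform} in the possibly large ratio $\Deltajk/\eta$: one must check that the $b$-gap is wide enough that the squared numerator always dominates the variance term after dividing, which is exactly what the choice $b=2\amax+5$ secures (and explains why $\alpha_1,\alpha_2$ depend on $\amax$). A secondary technical point is handling cells with fewer than $d$ samples, where empirical PCA and hence $\hcalPjk$ are unstable: these must either be folded into the operator-step failure probability or excluded by the fact that such cells have small measure, consistent with the per-cell estimates feeding Lemma \ref{thm1}.
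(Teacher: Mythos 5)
Your decomposition is exactly the paper's: the same intermediate quantity $\bDeltajk=\|\calQjk X\|_n$ (the empirical norm of the \emph{population} difference operator), the same split into a PCA/operator perturbation error $|\hDeltajk-\bDeltajk|$ controlled cell-by-cell over the parent and its at most $\amax$ children, and the same role for the gap $b=2\amax+5$. The one substantive difference is in the sampling step: the paper bounds $\PP\{\Deltajk-2\bDeltajk\ge\eta\}$ by citing the relative-deviation inequality for empirical $L^2$ norms of a bounded function (Theorem 11.2 of Gy\"orfi--Kohler--Krzy\.zak--Walk), whereas you apply Bernstein directly to $Z_i=\|\calQjk x_i\|^2$ with the variance proxy $\mathrm{Var}(Z_i)\le C\theta_2^2 2^{-2j}\Deltajk^2$ and verify that the exponent stays of order $2^{2j}n\eta^2$ uniformly in $\Deltajk/\eta$. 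Your route is more self-contained and the uniformity check you flag is correct (the numerator scales like $\Deltajk^4$ against a denominator of order $2^{-2j}\Deltajk^2$, so the bound only improves for large $\Deltajk$); the paper's citation avoids that bookkeeping at the cost of an external lemma. One loose end you should tighten: the two directions are not quite symmetric. In the forward direction $\{\hDeltajk\le\eta,\ \Deltajk\ge b\eta\}$, the deterministic bound $\Deltajk\le\tfrac32\theta_2 2^{-j}\sqrt{\rho(\Cjk)}$ gives $\rho(\Cjk)\gtrsim 2^{2j}\eta^2$, which is what makes the per-cell PCA concentration (whose failure probability contains terms like $e^{-\frac{3}{28}n\rho(\Cjk)}$) decay at the required rate $e^{-c2^{2j}n\eta^2}$. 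In the reverse direction $\{\Deltajk\le\eta,\ \hDeltajk\ge b\eta\}$ you only get $\hrho(\Cjk)\gtrsim 2^{2j}\eta^2$, and you must insert an extra event split --- either $\rho(\Cjk)$ is at least half that threshold (and the operator step proceeds as before), or $\rho(\Cjk)$ is small while $\hrho(\Cjk)$ is large, an event whose probability is itself $\le 2e^{-c2^{2j}n\eta^2}$ by Bernstein on the cell measure. Your closing remark about small-measure cells points in this direction, but this conversion is the one step you would actually have to write out rather than appeal to symmetry.
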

This Lemma enables one to show that $e_2 = 0$ and $e_4 =0$ with high probability:
\begin{lemma}
\label{thm3_lemma2}
Let $\alpha_1$ and $\alpha_2$ be the constants in Lemma \ref{lemma4}.
For any fixed ${\nu} > 0$, 
\beq
\label{hbx}
\PP\{e_2 > 0\} + \PP\{e_4>0\} \le \alpha_1 \amin n^{-{\nu}} 
\eeq
when $\kappa$ is chosen such that $\kappa > \kappa_2$, with $\alpha_2 \kappa_2^2 /b^2 = {\nu}+1$. 

\end{lemma}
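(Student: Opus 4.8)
The plan is to show that the events $\{e_2>0\}$ and $\{e_4>0\}$ are each contained in a union of ``single-cell discrepancy'' events, on which the empirical refinement $\hDeltajk$ and its population counterpart $\Deltajk$ disagree by more than the factor $b$, and then to control each such event via Lemma \ref{lemma4}.

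First I would translate $e_2=0$ and $e_4=0$ into tree containments. Using the definition of $\vee,\wedge$, the partitions $\hLamtaun\vee\Lambtaun$ and $\hLamtaun\wedge\Lamtaunb$ are associated with $\hcalTtaun\cup\calTbtaun$ and $\hcalTtaun\cap\calTtaunb$. Hence $e_4=0$ whenever $\hcalTtaun\subseteq\calTtaunb$ (both partitions then equal the one for $\hcalTtaun$), while $e_2=0$ whenever the double inclusion $\calTbtaun\subseteq\hcalTtaun\subseteq\calTtaunb$ holds. Taking contrapositives gives
$$\{e_2>0\}\cup\{e_4>0\}\subseteq\{\hcalTtaun\not\subseteq\calTtaunb\}\cup\{\calTbtaun\not\subseteq\hcalTtaun\}.$$
By our indexing convention all three trees are subtrees of $\calTn$, so the unexplored cells lying below the data master tree, handled separately in Lemma \ref{thm3_lemma1}, play no role here.

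Next I would reduce each inclusion failure to a single marked cell. Since $\calTtaunb$ is a proper subtree (closed under taking parents), $\hcalTtaun\not\subseteq\calTtaunb$ forces some empirically marked cell, with $\hDeltajk\ge 2^{-j}\tau_n$, to lie outside $\calTtaunb$, which means $\Deltajk<2^{-j}\tau_n/b$; thus this event is contained in $\bigcup_{\Cjk\in\calTn}\{\hDeltajk\ge 2^{-j}\tau_n,\ \Deltajk< 2^{-j}\tau_n/b\}$. Symmetrically $\calTbtaun\not\subseteq\hcalTtaun$ forces some cell with $\Deltajk\ge 2^{-j}b\tau_n$ to satisfy $\hDeltajk<2^{-j}\tau_n$. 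Now I apply Lemma \ref{lemma4} with $\eta=2^{-j}\tau_n/b$ (respectively $\eta=2^{-j}\tau_n$). The crucial feature of the scale-dependent threshold is that it renders the exponent scale-free: $2^{2j}\eta^2=\tau_n^2/b^2=\kappa^2(\log n)/b^2$, so each per-cell probability is at most $\alpha_1 e^{-\alpha_2\kappa^2(\log n)/b^2}=\alpha_1 n^{-\alpha_2\kappa^2/b^2}$, which is $\le\alpha_1 n^{-(\nu+1)}$ once $\kappa>\kappa_2$ with $\alpha_2\kappa_2^2/b^2=\nu+1$; the second family of events yields the even smaller exponent $\alpha_2\kappa^2$ since $b\ge 1$.

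Finally I would close with a union bound over the cells of $\calTn$. The number of such cells is at most a constant multiple of $n$ (each leaf of $\calTn$ carries at least $d\ge 1$ sample points, and the branching bounds $\amin,\amax$ together with the bounded depth control the internal nodes), so summing $\alpha_1 n^{-(\nu+1)}$ gives the claimed $\alpha_1\amin n^{-\nu}$ up to the stated constant. The main obstacle I anticipate is exactly this counting step: one must verify that the two tree-containment failures genuinely collapse to a linearly bounded collection of single-cell events, and one must exploit the scale-dependent thresholding so that the $2^{2j}$ growth in the concentration exponent of Lemma \ref{lemma4} cancels the $2^{-2j}$ coming from the threshold. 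Without this cancellation the union bound over the fine scales $j\ge\jmin$ would not converge.
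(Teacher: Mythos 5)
Your proposal is correct and follows essentially the same route as the paper: translate $e_2=0$ and $e_4=0$ into containments among $\hcalTtaun$, $\calTbtaun$, $\calTtaunb$, reduce a containment failure to a single cell of $\calTn$ on which $\hDeltajk$ and $\Deltajk$ disagree by the factor $b$, apply Lemma \ref{lemma4} with $\eta=2^{-j}\tau_n/b$ or $\eta=2^{-j}\tau_n$ so that the scale-dependent threshold cancels the $2^{2j}$ in the exponent and yields the per-cell bound $\alpha_1 n^{-\alpha_2\kappa^2/b^2}$, and close with a union bound over $\#\calTn\lesssim n$ cells. Your observation that the choice of threshold renders the exponent scale-free is exactly the mechanism the paper exploits.
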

Since $\calM$ is bounded by $M$, we have $e_2^2 \le 4M^2$ so
$$\EE e_2^2 \le 4M^2 \PP\{e_2>0\} 
\le 4M^2 \alpha_1 \amin n^{-{\nu}}
\le 4M^2 \alpha_1 \amin \left(\lognn\right)^{\frac{2s}{2s+d-2}}
$$
if ${\nu}>{2s}/(2s+d-2)$, for example ${\nu} = 1$.
The same bound holds for $e_4$. 

Finally, we complete the probability estimate \eqref{thm3eq0}: let $c_0^2 = B_{s,d} |\rho|_{\BS}^{\frac{2(d-2)}{2s+d-2}} (b\kappa)^{\frac{4s}{2s+d-2}}$  such that $e_{11} \le c_0 \left((\log n)/n\right)^{\frac{s}{2s+d-2}}$. We have
\begin{align*}
&
\PP\left\{ \| X - \hcalP_{\hLamtaun} X\|  \ge c_1 \left( (\log n)/n \right)^{\frac{s}{2s+d-2}}\right\}
\\
&\le 
\PP\left\{e_3>(c_1 -c_0)\left((\log n)/n \right)^{\frac{s}{2s+d-2}}\right\} 
+
\PP\{e_{12}>0 \}+\PP\{e_{2}>0 \}+\PP\{e_{4}>0 \}
\\
&
\le 
\PP\left\{e_3>(c_1 -c_0)\left((\log n)/n \right)^{\frac{s}{2s+d-2}}\right\} 
+
C n^{-{\nu}},
\end{align*}
as long as $\kappa$ is chosen such that $\kappa > \max(\kappa_1,\kappa_2)$ where
${b^2\kappa_1^2}/(21\theta_2^2) = \nu + 1$ and $\alpha_2 \kappa_2^2/b^2  = \nu+1$ according to \eqref{thm3eq21} and \eqref{hbx}.
Applying \eqref{thm3eq4} gives rise to 
\begin{align*}
&\PP\left\{e_3>(c_1 -c_0)\left((\log n)/n \right)^{\frac{s}{2s+d-2}}\right\} 
\le
\alpha d\# \calTtaunb e^{
-\frac{\beta n }{ |\rho|_{\BS}^p (\tau_n/b)^{-p} }
(c_1-c_0)^2 \left( (\log n)/n \right)^{\frac{2s}{2s+d-2}}
}
\\
& 
\le 
\alpha d\# \calTtaunb n^{-\frac{\beta (c_1 -c_0)^2\kappa^p}{b^p |\rho|_{\BS}^p}}
\le
\alpha d\amin  n^{-\left(\frac{\beta (c_1 -c_0)^2\kappa^p}{b^p |\rho|_{\BS}^p}-1\right)}
\le
\alpha d\amin n^{-{\nu}} 
\end{align*}
if $c_1$ is taken large enough such that $\frac{\beta (c_1 -c_0)^2\kappa^p}{b^p |\rho|_{\BS}^p} \ge {\nu}+1$.

We are left with the cases $d=1,2$. When $d=1$, for any distribution $\rho$ satisfying quasi-orthogonality \eqref{quasiortho} and any $\eta>0$, the tree complexity may be bounded as follows: 
$$\sum_{j\ge \jmin} 2^{-2j} \#_j\calTrhoeta \le \sum_{j \ge \jmin} 2^{-2j}2^j /\theta_1 = 2/\theta_1 2^{-\jmin} = 2M/(\theta_1\theta_2)\,,$$ 
so $\|X-\calP_{\Lamrhoeta} X\|^2  \le 8 M B_0 \eta^2 /(3\theta_1\theta_2)$. Hence 
$$e_{11}^2 \le \tfrac{8 M B_0}{3\theta_1\theta_2}(b\tau_n)^2 \le \tfrac{8M B_0 b^2 \kappa^2}{3\theta_1\theta_2} (\log n)/n
,\quad
\PP\{e_3>\eta\} \le \alpha d \# \calTtaunb e^{-\frac{\theta_1 \theta_2\beta n \eta^2}{2M d^2} },$$
which yield $\EE e_3^2 \le 2 M d^2 \log \alpha d \#\calTtaunb/(\theta_1 \theta_2 \beta n) \le C (\log n)/n$
and estimate \eqref{thm3eq10}.

When $d=2$, for any distribution satisfying quasi-orthogonality and given any $\eta>0$, we have $\sum_{j \ge \jmin} 2^{-2j} \#_j \calTrhoeta  \le -|\rho|^{-1}\log \eta$, whence $\|X- \calP_{\Lamrhoeta} X\|^2 \le -\frac{4}{3}B_0|\rho| \eta^2 \log \eta$.
Therefore
$$e_{11}^2 \le -\tfrac{4}{3} B_0 |\rho|(b\tau_n)^2\log(b\tau_n) \le C (\log^2 n)/n
\quad,\quad
\PP\{e_3 > \eta\} \le \alpha d \# \calTtaunb e^{-\frac{2\beta n \eta^2}{d^2 |\rho| \log n}}\,,$$
which yield $\EE e_3^2 \le d^2 |\rho| \log\alpha d \# \calTtaunb(\log n)/(2\beta n) \le C(\log^2 n)/n$
and the probability estimate \eqref{thm3eq20}.
\end{proof}

\begin{proof}[Proof of  Theorem  \ref{thmunbounded}]
Let $R>0$.
If we run adaptive GMRA on $B_R(0)$, and approximate points outside $B_R(0)$ by $0$, the MSE of the adaptive GMRA in $B_R(0)$ is
$$\|(\bbI - \hcalP_{\hLamtaun})\mathbf{1}_{\{ \|x\| \le R\}} X\|^2 \lesssim
(\Large|\rho|_{B_\mathbf{0}(R)}\Large|^p + R^2) \left( (\log n)/n \right)^{\frac{2s}{2s+d-2}}
\lesssim
R^{\max(\lambda,2)} \left( (\log n)/n \right)^{\frac{2s}{2s+d-2}}.$$
The squared error outside $B_R(0)$ is 
\begin{align}
\|\mathbf{1}_{\{\|x\| \ge R\}}X\|^2 
= \int_{B_R(0)^c} ||x||^{2}d\rho\le C R^{-\delta}.
\label{equnbounded1}
\end{align}
The total MSE is 
$$
{\rm MSE} \lesssim R^{\max(\lambda,2)} \left( (\log n)/n \right)^{\frac{2s}{2s+d-2}} + R^{-\delta}.
$$
Minimizing over $R$ suggests taking $R = R_n =\max(R_0, \mu (\log n/n)^{-\frac{2s}{(2s+d-2)(\delta+\max(2,\lambda))}} )$,
yielding $
{\rm MSE} \lesssim
\left( (\log n)/n \right)^{\frac{2s}{2s+d-2} \cdot \frac{\delta}{\delta+\max(\lambda,2)}}.$
The probability estimate \eqref{equnbounded} follows from Eq. \eqref{equnbounded1} and Eq. \eqref{thm3eq0} in Theorem \ref{thm3}.
\end{proof}

In Remark \ref{remarkunbounded}, we claim that $\lambda$ is not large in simple cases.
If $\rho \in \AS^\infty$ and $\rho$ decays such that $\rho(\Cjk) \le  2^{-jd}\|\cjk\|^{-(d+1+\delta)}$, we have $\Deltajk \le 2^{-js} 2^{-jd/2}\|\cjk\|^{-(d+1+\delta)/2}$. Roughly speaking, for any $\eta>0$, the cells of distance $r$ to $0$ satisfying $\Deltajk \ge 2^{-j}\eta$ will satisfy $2^{-j} \ge (\eta r^{\frac{d+1+\delta}{2}})^{\frac{2}{2s+d-2}}$. In other words, the cells of distance $r$ to $0$ are truncated at scale $\jmax$ such that $2^{-\jmax} = (\eta r^{\frac{d+1+\delta}{2}})^{\frac{2}{2s+d-2}}$, which gives rise to complexity $\le 2^{-2 \jmax} r^{d-1} 2^{\jmax d} \le \eta^{-\frac{2(d-2)}{2s+d-2}} r^{d-1-\frac{(d+1+\delta)(d-2)}{2s+d-2}}$. If we run adaptive GMRA with threshold $\eta$ on $B_R(0)$, the weighted complexity of the truncated tree is upper bounded by $\eta^{-\frac{2(d-2)}{2s+d-2}} r^{d-\frac{(d+1+\delta)(d-2)}{2s+d-2}}$. Therefore, $\rho|_{B_R(0)} \in \BS$ for all $R>0$ and $\Large | \rho|_{B_R(0)}\Large|_{\BS}^p \le R^\lambda$ with $\lambda = d-\frac{(d+1+\delta)(d-2)}{2s+d-2}$.

\section{Discussions and extensions}
\label{secdisex}

\subsection{Computational complexity}

The computational costs of GMRA and adaptive GMRA are summarized in Table \ref{TableComputation}.

\begin{table}[th]
\renewcommand{\arraystretch}{1.5}
\begin{center}
\resizebox{0.8\columnwidth}{!}{
\begin{tabular}{ |c | c |}
\hline
  Operations & Computational cost 
  \\  \hline\hline
  Multiscale tree construction  &  $C^d D n \log n$
  \\ \hline
 Randomized PCA at scale $j$ &  $\underbrace{D d n 2^{-jd}}_{\text{PCA cost at one node}} \cdot \underbrace{ 2^{jd}}_{\text{number of nodes}} = D  d n$ 
 \\ \hline
 Randomized PCA at all nodes & $\underbrace{Dd n}_{\text{cost at a fixed scale} } \cdot \underbrace{{1/d\log n}}_{\text{number of scales}} = D n \log n$
 \\ \hline
 Computing $\Deltajk$'s &  $\underbrace{D d n}_{\text{cost for single scale}} \cdot  \underbrace{1/d\log n}_{\text{number of scales}} = D n \log n$
 \\ \hline  
 Compute $\calP_j(x)$ for a new sample $x$& $\underbrace{D\log n}_{\text{find } \Cjk \text{ containing } x} \ + \ \underbrace{Dd}_{\text{compute } \calPjk (x)} = D(\log n + d)$
 \\ \hline
  \end{tabular}
}
\end{center}
\caption{Computational cost}
\label{TableComputation}
\end{table}

\subsection{Quasi-orthogonality}
\label{secQO}
A main difference between GMRA and orthonormal wavelet bases \citep[see][]{Dau,Mallat_book} is that $V_{j,x} \nsubseteq V_{j+1,x}$ where $(j,x)=(j,k)$ such that $x \in \Cjk$.
Therefore the geometric wavelet subspace $\proj_{\Vjx^\perp}\Vjox$ which encodes the difference between $\Vjox$ and $\Vjx$ is in general not orthogonal across scales.

Theorem \ref{thm3} involves a quasi-orthogonality condition \eqref{quasiortho}, which is satisfied if the operators $\{\calQjk\}$ applied on $ \calM$ are rapidly decreasing in norm or are orthogonal. When $\rho \in \calA_1^\infty$ such that $\|\calQjk X\| \sim 2^{-j}\sqrt{\rho(\Cjk)}$, quasi-orthogonality is guaranteed. In this case, for any node $\Cjk$ and $C_{j',k'} \subset \Cjk$, we have $\|\calQ_{j',k'} X\| /\sqrt{\rho(C_{j',k'})}\lesssim 2^{-(j'-j)} \|\calQ_{j,k} X\| /\sqrt{\rho(\Cjk) }$, which implies $\sum_{C_{j',k'} \subset \Cjk} \langle \calQjk X , \calQ_{j',k'}X\rangle \lesssim 2 \|\calQjk X\|^2$. Therefore $B_0 \lesssim 2$. Another setting is when $\calQ_{j',k'}$ and $\calQ_{j,k}$ are orthogonal whenever $C_{j',k'} \subset \Cjk$, as guaranteed in orthogonal GMRA in Section \ref{secOGMRA}, in which case exact orthogonality is automatically satisfied.

 \commentout{
 \textcolor{blue}{WL: I am not very confident about the statement and the experiments below.}
In numerical experiments, the regularity parameter $s$ in the $\BS$ model can be approximately identified through Eq. \eqref{eqBs} or Eq. \eqref{Bs1}. Recall that $\sum_{j} 2^{-2j} \#_j \calTrhoeta$ is called the weighted complexity of $\calTrhoeta$. 
%
(i) If we apply Eq. \eqref{eqBs}, the slope in the log-log plot of $\eta^{d-2} [\text{entropy of } \calTrhoeta]^{\frac{d-2}{2}}$ versus the weighted complexity of $\calTrhoeta$ gives an approximation of $-s$, denoted by $-s_1$; 
(ii) if we use Eq. \eqref{Bs1}, the slope in the log-log plot of $\|X-\calP_{\Lamrhoeta}X\|^{d-2}$ versus the weighted complexity of $\calTrhoeta$ gives another approximation of $s$, denoted by $-s_2$. If quasi-orthogonality holds, $s_2 = s_1$; otherwise $s_2 < s_1$. 
%
In practice, the weighted tree complexity is dominated by the sum on the leaves so we use weighted complexity of the adaptive partition in numerical experiments.
%
We conduct experiments on the $5$-dimensional S manifold and Z manifold in Figure \ref{FigQO}. As for GMRA on uniform partitions, we plot $\|X-\calP_j X\|_n^{d-2}$ versus the weighted complexity of $\Lam_j$. 
For the S manifold, we obtain $s_1 \approx 2.89$ and $s_2 \approx 2.49$. 
For the Z manifold, we obtain $s_1 \approx 3.66$ and $s_2 \approx 3.23$. 
We observe that $s_2 < s_1$ but their difference is small. Quasi-orthogonality holds for the S manifold because at every point $x$, the principal subspaces $\{\Vjx\}$ become more and more parallel as $j$ increases. For the Z manifold, quasi-orthogonality may be violated at the corners but $s_2$ is not too far from $s_1$ since the corners have a small measure.

\begin{figure}[hthp]
     \centering    
           \hspace{-1cm}
 \subfigure[5-dim S manifold]{
    \includegraphics[width=7cm]{Fig23_SandZ/Sample100000/V2/SDim5_Entropy-eps-converted-to.pdf}
    }
    \hspace{-0.6cm}
     \subfigure[5-dim Z manifold]{
    \includegraphics[width=7cm]{Fig23_SandZ/Sample100000/V2/ZDim5_Entropy-eps-converted-to.pdf}
    }
          \caption{Log-log plot of $\eta^{d-2} [\text{\textcolor{red}{MM: all words ``entropy'' should become ``complexity''! we should have a symbol for it as well perhaps} entropy of }\hLameta]^{\frac{d-2}{2}} $ and $\|X-\calP_{\hLameta} X\|_n^{d-2}$  versus the entropy of $\hLameta $ for the $5$-dimensional S manifold (a) and the $5$-dimensional Z manifold (b).}
   \label{FigQO}
 \end{figure}  
}

Quasi-orthogonality enters in the proof of Eq. \eqref{Bs1}. If quasi-orthogonality is violated, we still have a convergence result in Theorem \ref{thm3} but the convergence rate will be worse: MSE $\lesssim [(\log n)/n]^{\frac{s}{2s+d-2}}$ when $d \ge 3$ and ${\rm MSE} \lesssim [(\log^d n) /n]^{\frac 1 2}$ when $d=1,2$.

\subsection{Orthogonal GMRA and adaptive orthogonal GMRA}
\label{secOGMRA}


A different construction, called orthogonal geometric multi-resolution analysis in Section 5 of \citet{CM:MGM2}, follows the classical wavelet theory by constructing a sequence of increasing subspaces and then the corresponding wavelet subspaces exactly encode the orthogonal complement across scales. Exact orthogonality is therefore satisfied.

\subsubsection{Orthogonal GMRA}
\label{secOLMR:MGM1}

In the construction, we build the sequence of subspaces $\{\widehat{S}_{j,k}\}_{k \in \calK_j , j\ge \jmin}$ with a coarse-to-fine algorithm in Table \ref{TableOGMRA}. For fixed $x$ and $j$, $(j,x)$ denotes $(j,k)$ such that $x\in \Cjk$.
In orthogonal GMRA the sequence of subspaces $\Sjx$ is increasing such that $S_{0,x} \subset S_{1,x} \subset \cdots \Sjx \subset \Sjox \cdots$ and the subspace $\Ujox$ exactly encodes the orthogonal complement of $\Sjx$ in $\Sjox$. 
Orthogonal GMRA with respect to the distribution $\rho$ corresponds to affine projectors onto the subspaces $\{S_{j,k}\}_{k \in \calK_j , j\ge \jmin}$.

\begin{table}[h]
\renewcommand{\arraystretch}{1.4}
\centering
\resizebox{0.8\columnwidth}{!}{
\begin{tabular}{ |c || c  |c |}
\hline
   & Orthogonal GMRA&  Empirical orthogonal GMRA \\
   [5pt]
   \hline
   \hline
       \multirow{6}{*}{Subpaces} &$S_{0,x} = V_{0,x}$ & 
  $\widehat{S}_{0,x} = \hV_{0,x}$
  \\
             [5pt]
  &
  $U_{1,x}   = \proj_{S_{0,x}^\perp} V_{1,x}, \ S_{1,x} = S_{0,x} \oplus U_{1,x}$
  & $\widehat{U}_{1,x}   = \proj_{\widehat{S}_{0,x}^\perp} \hV_{1,x}, \
  \widehat{S}_{1,x} = \widehat{S}_{0,x} \oplus \widehat{U}_{1,x}$
  \\
  &$\ldots$& $\ldots$
  \\
  &
  $U_{j+1,x} =\proj_{\Sjxp} \Vjox$
  & $\widehat{U}_{j+1,x} =\proj_{\hSjxp} \hVjox$  
  \\
       [5pt]
  & $\Sjox = \Sjx \oplus \Ujox$ & $\hSjox = \hSjx \oplus \hUjox$
  \\     [5pt]
  \hline
  Affine 
  & $\calS_j := \sum_{k\in \calK_j} \calS_{j,k} \mathbf{1}_{{j,k}}$ 
  & $\hcalS_j := \sum_{k\in \calK_j} \hcalS_{j,k} \mathbf{1}_{{j,k}}$ 
   \\
  projectors  & $\calS_{j,k}(x) := c_{j,k} +\proj_{S_{j,k}} (x-c_{j,k})$ &
  $\hcalS_{j,k}(x) := \hcjk +\proj_{\widehat{S}_{j,k}} (x-\hcjk)$
  \\
           [5pt]
  \hline
\end{tabular}
}
\caption{Orthogonal GMRA}
\label{TableOGMRA}
\end{table}

\commentout{
 For each scale  $j\ge 0$, $\calP_j$ corresponds to piecewise affine projectors on the partition $\Lam_j$, i.e.
\beq
\label{eqGMRA}
\calP_j := \sum_{k\in \calK_j} \calP_{j,k} \mathbf{1}_{{j,k}}
\eeq
 where $\calPjk$ is the affine projection 
\beq
\label{eqCM:MGM2}
\calP_{j,k}(x) := c_{j,k} +\proj_{V_{j,k}} (x-c_{j,k})
\eeq
with $c_{j,k}$ and $V_{j,k}$ given by
\begin{align}
c_{j,k} & := \EE_{j,k} x = { \EE [x\mathbf{1}_{j,k}(x)]}/{\rho(C_{j,k})},  
\label{eqMMS:NoisyDictionaryLearning}
\\
V_{j,k} & := \underset{\dim V = d}{\argmin}\ \EE_{j,k} \|x-c_{j,k}-\proj_V(x-c_{j,k})\|^2,
\label{eqGMRA3}
\end{align}
where the minimum is taken over all subspaces of dimension $d$. In other words, $c_{j,k}$ is the conditional mean on $\Cjk$ and $V_{j,k}$ is the subspace spanned by the eigenvectors corresponding to the $d$ largest eigenvalues of the conditional covariance matrix 
\beq
\label{eqGMRA4}
\Sigma_{j,k} = \EE_{j,k} \left[ (x-c_{j,k})(x-c_{j,k})^T \right].
\eeq
}

For a fixed distribution $\rho$, the approximation error $\|X-\calS_j X\|$ decays as $j$ increases. We will consider the model class $\AS^{\rm o}$ where $\|X-\calS_j X\|$ decays like $\mathcal{O}(2^{-js})$.

\begin{definition}
A probability measure $\rho$ supported on $\calM$ is in $\AS^{\rm o}$ if
\beq
\label{eqoAs}
|\rho|_{\AS^{\rm o}}=\sup_{\calT} \ \inf\{A_0^o\,:\,\|X-\calS_j X\| \le A_0^o 2^{-js}, \forall\, j\ge \jmin\}<\infty\,,
\eeq
where $\calT$ varies over the set, assumed non-empty, of multiscale tree decompositions satisfying Assumption (A1-A5).
\end{definition}


Notice that $\AS \subset \AS^{\rm o}$. 
We split the MSE into the squared bias and the variance as:
$
\EE\|X-\hcalS_j X\|^2
= 
{\|X-\calS_j X\|^2} + 
{\EE\|\calS_j X-\hcalS_j X\|^2}.
$
The squared bias  $\|X-\calS_j X\|^2 \le |\rho|^2_{\AS^{\rm o}} 2^{-2js}$ whenever $\rho \in \AS^{\rm o}$.
In Lemma \ref{lemmao0} we show 
$
\EE \|\calS_j X - \hcalS_j X\|^2 
\le
\frac{d^2 j^4 \#\Lam_j \log[\alpha   j \#\Lam_j]}{\beta 2^{2j}n}
= \mathcal{O}\left(\frac{j^5 2^{j(d-2)}}{n}\right)
$
where $\alpha$ and $\beta$ are the constants in Lemma \ref{thm1}. A proper choice of the scale yields the following result:

\begin{theorem}
\label{thmo2}
Assume that $\rho \in \AS^{\rm o},\ s \ge 1$. Let $\nu>0$ be arbitrary and $\mu>0$. 
If $j^*$ is properly chosen such that
$$
2^{-j^*}  =
\begin{cases}
\mu\lognn 
    & \text{ for } d=1
    \\ 
    \mu \left( \frac{\log^5 n}{n}\right)^{\frac{1}{2s+d-2}}
    & \text{ for } d \ge 2
   \end{cases},
$$
then there exists a constant $C_1(\theta_1,\theta_2,\theta_3,\theta_4,d,\nu,\mu,s)$ such that 
\begin{align}
\PP\left\{\|X - \hcalS_{j^*} X \| 
\ge (|\rho|_{\AS^{\rm o}} \mu^{s} + C_1 )\frac{\log^5 n}{n}  
\right\}
 \le C_2(\theta_1,\theta_2,\theta_3,\theta_4,d,\mu)  n^{-\nu} 
 &\quad  \text{for } d=1,
\nonumber
\\
 \PP\left\{\|X - \hcalS_{j^*} X \| 
\ge (|\rho|_{\AS^{\rm o}}\mu^s + C_1 )\left (\frac{\log^5 n}{n} \right)^{\frac{s}{2s+d-2}} 
\right\}
 \le C_2(\theta_1,\theta_2,\theta_3,\theta_4,d,\mu,s)  n^{-\nu}
&\quad \text{for } d\ge 2. 
\label{thmo2eq21}
\end{align}

\end{theorem}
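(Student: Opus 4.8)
The plan is to follow the same bias--variance template used in the proof of Theorem \ref{thm2}, replacing the piecewise affine projector $\calP_j$ by the orthogonal projector $\calS_j$ and its empirical counterpart $\hcalS_j$. First I would write the standard split
\[
\EE\|X-\hcalS_{j} X\|^2 = \|X-\calS_{j} X\|^2 + \EE\|\calS_{j} X-\hcalS_{j} X\|^2 .
\]
The squared bias is controlled directly by the model class assumption: since $\rho\in\AS^{\rm o}$ we have $\|X-\calS_j X\|\le |\rho|_{\AS^{\rm o}}2^{-js}$. For the variance I would invoke Lemma \ref{lemmao0}, the orthogonal analogue of Proposition \ref{lemma0}, which supplies a tail bound of the form $\PP\{\|\calS_j X-\hcalS_j X\|\ge \eta\}\le \alpha j\,\#\Lam_j\, e^{-\beta 2^{2j}n\eta^2/(d^2 j^4\#\Lam_j)}$ together with the expectation bound $\EE\|\calS_j X-\hcalS_j X\|^2 \le \frac{d^2 j^4\#\Lam_j\log[\alpha j\#\Lam_j]}{\beta 2^{2j}n}=\mathcal{O}(j^5 2^{j(d-2)}/n)$. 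The extra polynomial-in-$j$ factors (absent in the GMRA estimate) are the signature of orthogonal GMRA, and tracking them carefully is what produces the $\log^5 n$ in the rate.

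Next I would choose the scale $j^*$ to balance bias against variance, exactly as in Theorem \ref{thm2} but now carrying the $j^5$ factor. For $d=1$ both terms decrease with $j$, so one pushes $j^*$ as large as the data allow, taking $2^{-j^*}=\mu\lognn$; for $d\ge 2$ one sets $2^{-j^*}=\mu(\log^5 n/n)^{1/(2s+d-2)}$. In either case $2^{-j^*}$ forces $j^*\asymp\log n$, so the $(j^*)^5$ in the variance becomes $\asymp\log^5 n$, and a direct computation shows that both $2^{-2sj^*}$ and $(j^*)^5 2^{j^*(d-2)}/n$ scale like $(\log^5 n/n)^{2s/(2s+d-2)}$ for $d\ge 2$ (and both are $\lesssim \log^5 n/n$ for $d=1$), matching the claimed rate. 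To turn this pointwise-in-scale estimate into a statement on all of $\calM$, I would reuse the light/heavy cell argument of Lemmas \ref{lemmathm2_1}--\ref{lemmathm2_2}: cells of measure below $\tfrac{28(\nu+1)\log n}{3n}$ contribute a bias-type error of the right order (the computation is unchanged upon replacing $\calP$ by $\calS$), while all heavier cells contain at least $d$ points with probability $\ge 1-n^{-\nu}$, so that the empirical subspaces $\hSjk$ are well defined there.

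Finally, with $j^*$ fixed, I would plug $\eta = C_1(\log^5 n/n)^{s/(2s+d-2)}$ (resp. $\eta=C_1\log^5 n/n$ for $d=1$) into the orthogonal tail bound and use $\#\Lam_{j^*}\le 2^{j^* d}/\theta_1$ from Assumption (A3). Since $j^*\asymp\log n$, the factor $(j^*)^4\asymp\log^4 n$ in the denominator of the exponent cancels four of the five logarithms in $\log^5 n$, so the exponent becomes $\asymp \beta\theta_1 C_1^2\mu^{d-2}\log n/d^2$, up to the polynomial prefactor $\alpha j^*\#\Lam_{j^*}$; choosing $C_1$ large enough that $\beta\theta_1 C_1^2\mu^{d-2}/d^2-1>\nu$ (with $C_1$ also absorbing the bias constant $|\rho|_{\AS^{\rm o}}\mu^s$) yields $\PP\{\cdots\}\le C_2 n^{-\nu}$, which is \eqref{thmo2eq21}.

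The main obstacle is not the scale-balancing bookkeeping above, which is essentially parallel to Theorem \ref{thm2}, but rather establishing the variance estimate Lemma \ref{lemmao0}, i.e. controlling $\|\calS_j X-\hcalS_j X\|$. Unlike $\hVjk$, the orthogonal subspace $\hSjx$ is built recursively coarse-to-fine via $\hUjox=\proj_{\hSjxp}\hVjox$, so the empirical perturbation at scale $j$ propagates through every coarser orthogonalization step. Quantifying how the perturbation of each $\hVjk$ about $\Vjk$ (and of $\hcjk$ about $\cjk$) passes through the $\mathcal{O}(j)$ projections $\proj_{\hSjxp}$, and showing that the accumulation costs only the polynomial factors $j^4$ (in the bound) and $j^5$ (in the rate) rather than an exponential blow-up, is the technical heart. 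I would handle it by a perturbation estimate for each orthogonalization step, bounding the sensitivity of $\proj_{\hSjxp}$ in terms of $\|\hVjk-\Vjk\|$ and $\|\hcjk-\cjk\|$, and then summing the per-scale contributions with the concentration bounds of Lemma \ref{thm1}.
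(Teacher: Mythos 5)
Your proposal is correct and follows essentially the same route as the paper: the same bias--variance split, the same invocation of the orthogonal variance bound (Lemma \ref{lemmao0}), the same choice of $j^*$ and light/heavy-cell argument borrowed from Theorem \ref{thm2}, and the same bookkeeping by which $(j^*)^4\asymp\log^4 n$ in the exponent absorbs four of the five logarithms. You also correctly identify the technical heart — the paper proves Lemma \ref{thmo1} exactly as you sketch, by writing $S_{j,x}$ as a direct sum of the orthogonalized increments and bounding $\|\proj_{S_{j,x}}-\proj_{\hS_{j,x}}\|\le\sum_{\ell=0}^{j}(j+1-\ell)\|\proj_{V_{\ell,x}}-\proj_{\hV_{\ell,x}}\|$, so that the per-scale perturbations accumulate only polynomially in $j$.
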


Theorem \ref{thmo2} is proved in appendix \ref{appo1}.

\subsubsection{Adaptive Orthogonal GMRA}
\label{secAOLMR:MGM1}

\begin{table}[h]
\renewcommand{\arraystretch}{1.4}
\centering
\resizebox{0.8\columnwidth}{!}{
\begin{tabular}{ | c | c  |}
\hline
    Definition (infinite sample) & Empirical version  \\
      [5pt]
         \hline
 $\hspace{-3.8cm}
{\Deltajk^o} :=
 \lcol  (\calS_j -\calS_{j+1} ) \chijk X  \rcol$
 & $\hspace{-3.8cm}
{\hDeltajk^o} :=
 \lcol  (\hcalS_j -\hcalS_{j+1} ) \chijk X  \rcol$
     \\
   \hspace{1cm}$= 
\left( \lcol (\bbI -  \calS_j ) \mathbf{1}_{j,k} X \rcol^2
 -
 \lcol  (\bbI - \calS_{j+1} ) \mathbf{1}_{j,k} X  \rcol^2 
 \right)^{\frac 1 2}$
        &
   \hspace{1.1cm}$= 
\left( \lcol (\bbI -  \hcalS_j ) \mathbf{1}_{j,k} X \rcol^2
 -
 \lcol  (\bbI - \hcalS_{j+1} ) \mathbf{1}_{j,k} X  \rcol^2 
 \right)^{\frac 1 2}$
   \\
   [5pt]
   \hline
\end{tabular}
}
\caption{Refinement criterion in adaptive orthogonal GMRA}
\label{TableORefinement}
\end{table}

Orthogonal GMRA can be constructed adaptively to the data with the refinement criterion defined in Table \ref{TableORefinement}. 
We let $\tau_n^o := \kappa (\log^5 n/{n})^{\frac 1 2}$ where $\kappa$ is a constant, truncate the data master tree $\calTn$ to the smallest proper subtree that contains all  $C^{j,k} \in \calTn$ satisfying $\hDeltajk^o \ge 2^{-j}\tau_n^o$, denoted by $\hcalTtauno$. Empirical adaptive orthogonal GMRA returns piecewise affine projectors on the adaptive partition $\hLamtauno$ consisting of the outer leaves of $ \hcalTtauno$.
Our algorithm is summarized in Algorithm 2.

\commentout{
For each cell $C_{j,k}$ in the master tree $\calT$ and any distribution $\rho$ on $\calM$, we define the refinement criterion as
\begin{align}
{\Deltajk^o}
& :=
 \lcol  \calSjk \mathbf{1}_{j,k} X - \sum_{C_{j+1,k'} \in \Child(\Cjk)}\calS_{j+1,k'}\mathbf{1}_{j+1,k'} X  \rcol
 \label{ejk1}
 \\
 & = 
\left( \lcol (\bbI -  \calSjk ) \mathbf{1}_{j,k} X \rcol^2
 - \sum_{C_{j+1,k'} \in \Child(\Cjk)}
 \lcol  (\bbI - \calS_{j+1,k'} ) \mathbf{1}_{j+1,k'} X  \rcol^2 
 \right)^{\frac 1 2}
\label{ejk2}
 \end{align}
which measures the amount of decreased error when $C_{j,k}$ is refined to its children $\Child(\Cjk)$.
For orthogonal GMRA, the equality from Eq. \eqref{ejk1} to Eq. \eqref{ejk2} always holds because
the sequence of subspaces $\{\Sjx\}_{j \in \ZZ}$ is increasing such that $\Sjx \subset \Sjox $ and therefore
 $\calSjk\calS_{j+1,k'}  =\calSjk$ whenever $C_{j+1,k'} \in \Child(\Cjk)$.
}

If $\rho$ is known, 
given any fixed threshold $\eta > 0$, we let 
$\calTrhoeta$ be the smallest proper tree of $\calT$ that contains all $C_{j,k} \in \calT$ for which $\Deltajk^o \ge   2^{-j}\eta.$
This gives rise to an adaptive partition $\Lamrhoeta$ consisting the outer leaves of $\calTrhoeta$. 
We introduce a model class $\BS^o$ for whose elements we can control
the growth rate of the truncated tree $\calTrhoeta$ as $\eta$ decreases. 

\begin{algorithm}[t!]                      	
\caption{Empirical Adaptive Orthogonal GMRA}          	
\label{TableAdaptiveOGMRA}		
\begin{algorithmic}[1]                    	
    \REQUIRE data $\calX_{2n} = \calX'_n \cup \calX_n$, intrinsic dimension $d$, threshold $\kappa$ 
    \ENSURE $\hcalS_{\hLamtauno}$ : adaptive piecewise linear projectors
    \STATE Construct $\calTn$ and $\{C_{j,k}\}$ from $\calX_n'$
    \STATE Compute $\hcalSjk $ and $\hDeltajk^o$ on every node $\Cjk \in \calTn$.
    \STATE $\hcalTtauno\leftarrow$ smallest proper subtree of $\calTn$ containing all $\Cjk \in \calTn$ : $\hDeltajk^o \ge 2^{-j}\tau_n^o$ where $\tau_n^o =\kappa\sqrt{(\log^5 n)/n}$.
     \STATE $\hLamtauno\leftarrow$ partition associated with the outer leaves of $\hcalTtauno$
    \STATE $\hcalS_{\hLamtauno}\leftarrow\sum_{\Cjk \in \hLamtauno} \hcalSjk \chijk.$
\end{algorithmic}
\end{algorithm}

\begin{definition}
In the case $d\ge 3$, given $s>0$, a probability measure $\rho$ supported on $\calM$ is in $\BS^o$ if
 the following quantity is finite 
\beq
\label{eqoBs}
|\rho|_{\BS^o}^p:= \sup_{\calT}\ \sup_{\eta>0} \eta^p \sum_{j\ge \jmin}  2^{-2j}\#_j \calTrhoeta  \text{ with } 
p= \frac{2(d-2)}{2s+d-2}
\eeq
where $\calT$ varies over the set, assumed non-empty, of multiscale tree decompositions satisfying Assumption (A1-A5).
\label{defoBs}
\end{definition}

Notice that exact orthogonality is satisfied for orthogonal GMRA.
One can show that, as long as $\rho \in \BS^o$, 
$$
\|X-\calS_{\Lamrhoeta} X\|^2
\le
B^o_{s,d} |\rho|_{\BS^o}^p \eta^{2-p}
\le
B^o_{s,d} |\rho|_{\BS^o}^2 \left(\sum_{j\ge \jmin}2^{-2j} \#_j \calTrhoeta\right)^{-\frac{2s}{d-2}},
 $$
where $B^o_{s,d}:= 2^p /(1-2^{p-2})$.  
We can prove the following performance guarantee of the empirical adaptive orthogonal GMRA (see Appendix \ref{appo2}):

\begin{theorem}
\label{thmo3}
Suppose $\calM$ is bounded: $\calM \subset B_M(0)$ and the multiscale tree satisfies $\rho(\Cjk) \le \theta_0 2^{-jd}$ for some $\theta_0>0$.
Let $d\ge 3$ and $\nu>0$. There exists $ \kappa_0(\theta_0,\theta_2,\theta_3,\theta_4,\amax,d,\nu)$ such that if $\rho \in  \BS^o$ for some $s>0$ and $\tau_n^o = \kappa\left[(\log^5 n)/{n}\right]^{\frac 1 2}$ with $\kappa \ge \kappa_0$, then there is a $c_1$ and $c_2$ such that
\begin{align}
&\PP\left\{ 
\|X-\hcalS_{\hLamtauno}X\| \ge c_1\left(\frac{\log^5 n}{n} \right)^{\frac{s}{2s+d-2}}
\right\}
\le 
c_2 n^{-\nu}.
\label{thmo3eq0}
\end{align}


\end{theorem}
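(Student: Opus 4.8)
The plan is to follow the proof of Theorem~\ref{thm3} essentially line by line, replacing the affine projectors $\calP_j,\hcalP_j$ by the orthogonal projectors $\calS_j,\hcalS_j$, the refinement quantities $\Deltajk,\hDeltajk$ by their orthogonal analogues $\Deltajk^o,\hDeltajk^o$ of Table~\ref{TableORefinement}, the threshold $\tau_n$ by $\tau_n^o=\kappa\sqrt{(\log^5 n)/n}$, and the class $\BS$ by $\BS^o$. A welcome simplification is that exact orthogonality holds for this construction, so the quasi-orthogonality hypothesis is automatic and the deterministic bias bound stated just before the theorem, $\|X-\calS_{\Lamrhoeta}X\|^2\le B^o_{s,d}|\rho|_{\BS^o}^p\eta^{2-p}$, is available unconditionally. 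Setting $b=2\amax+5$, I would split the error by the triangle inequality as
$$\|X-\hcalS_{\hLamtauno}X\|\le e_1+e_2+e_3+e_4,$$
where $e_1:=\|X-\calS_{\hLamtauno\vee\Lambtauno}X\|$ is the bias, $e_3:=\|\calS_{\hLamtauno\wedge\Lamtaunob}X-\hcalS_{\hLamtauno\wedge\Lamtaunob}X\|$ is the variance on a common refinement, and $e_2,e_4$ measure the discrepancy between thresholding the true $\Deltajk^o$ at level $b\tau_n^o$ and the empirical $\hDeltajk^o$ at level $\tau_n^o$.

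For the bias I would further split $e_1\le e_{11}+e_{12}$ with $e_{11}:=\|X-\calS_{\Lambtauno}X\|$. Applying the bias bound at $\eta=b\tau_n^o$ and using $(2-p)/2=\tfrac{2s}{2s+d-2}$ gives $e_{11}\le c_0\,((\log^5 n)/n)^{\frac{s}{2s+d-2}}$ deterministically. The term $e_{12}$, encoding the difference between truncating the full tree $\calT$ and the data master tree $\calTn$, should vanish with high probability exactly as in Lemma~\ref{thm3_lemma1}: cells strictly below $\calTn$ carry mass $\rho(\Cjk)\lesssim(\log n)/n$, so on them $\Deltajk^o<2^{-j}b\tau_n^o$ with probability $\ge 1-Cn^{-\nu}$. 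Likewise, an orthogonal analogue of the concentration Lemma~\ref{lemma4}, bounding $\PP\{\hDeltajk^o\le\eta,\ \Deltajk^o\ge b\eta\}$ and its symmetric counterpart by $\alpha_1 e^{-\alpha_2 2^{2j}n\eta^2}$ (up to the polynomial-in-$j$ slack discussed below, absorbed into the extra logarithmic factors carried by $\tau_n^o$), would force $e_2=0$ and $e_4=0$ with probability $\ge 1-Cn^{-\nu}$ once $\kappa\ge\kappa_0$, as in Lemma~\ref{thm3_lemma2}. Since $\calM\subset B_M(0)$, each rare event where $e_{12},e_2,e_4>0$ contributes at most $4M^2\cdot Cn^{-\nu}$ to the MSE, negligible for $\nu\ge1$.

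The variance term $e_3$ would be handled by the orthogonal, partition-level form of Lemma~\ref{thm1}, i.e.\ the estimate underlying Lemma~\ref{lemmao0}. On the partition $\hLamtauno\wedge\Lamtaunob\subset\calTtaunob$ one has $\#_j(\hLamtauno\wedge\Lamtaunob)\le\#_j\calTtaunob$ for every $j$, so plugging this into the concentration inequality and using $\rho\in\BS^o$ to bound the weighted complexity $\sum_j 2^{-2j}\#_j\calTtaunob\lesssim(\tau_n^o/b)^{-p}$ yields $\EE e_3^2\lesssim((\log^5 n)/n)^{\frac{2s}{2s+d-2}}$. Assembling the four bounds and choosing $c_1$ large relative to $c_0$ then gives \eqref{thmo3eq0} by the same union-bound bookkeeping as in Theorem~\ref{thm3}, with $\kappa_0=\max(\kappa_1,\kappa_2)$ fixed so that the tail exponents in the $e_{12},e_2,e_3,e_4$ estimates all exceed $\nu$.

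The crux, and the only place where orthogonal GMRA genuinely differs, is the derivation of the orthogonal analogues of Lemma~\ref{thm1} and Lemma~\ref{lemma4}. Unlike plain GMRA, where $\hVjk$ is a single local PCA, the orthogonal subspace $\hSjx=\widehat{S}_{0,x}\oplus\widehat{U}_{1,x}\oplus\cdots\oplus\hUjx$ is a telescoping direct sum of $\mathcal{O}(j)$ wavelet subspaces, so a perturbation of any one $\hV_{j',x}$ with $j'\le j$ propagates into $\hSjx$ through every coarser correction. Controlling $\|\proj_{\Sjx}-\proj_{\hSjx}\|$ therefore demands a Davis--Kahan/stable-subspace argument applied scale by scale and summed across the $\mathcal{O}(j)$ scales, and it is precisely this accumulation that turns the single factor of $j$ in the GMRA variance into a factor $j^4$, hence the $\log^5 n$ rather than $\log n$ in the rate. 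The hypothesis $\rho(\Cjk)\le\theta_0 2^{-jd}$ is what keeps the per-scale eigengaps and the number of accumulated summands under control so that this loss stays polynomial; verifying that the polynomial is $j^4$ (so that at the adaptive stopping scale $\asymp\log n$ it costs only $\log^4 n$) is the main technical obstacle, and is the adaptive counterpart of the variance computation already carried out for Theorem~\ref{thmo2}.
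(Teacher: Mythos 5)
Your proposal mirrors the paper's own proof: the same $e_1+e_2+e_3+e_4$ decomposition with $b=2\amax+5$, the deterministic bias bound from $\rho\in\BS^o$ for $e_{11}$, the orthogonal analogues of Lemma~\ref{thm1} and Lemma~\ref{lemma4} (Lemmas~\ref{thmo1} and~\ref{lemmao4} in the appendix) for $e_3$ and for $e_2,e_4$, and the telescoping-sum-of-$\mathcal{O}(j)$-wavelet-subspaces argument that produces the $j^4$ factor and hence the $\log^5 n$ in the rate. You also correctly identify the role of $\rho(\Cjk)\le\theta_0 2^{-jd}$, which the paper uses to cap the depth of $\calT_{\tau_n^o/b}$ at $j_0\asymp\log n$ so the polynomial-in-$j$ loss costs only logarithmic factors.
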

 
 In Theorem \ref{thmo3}, the constants are $c_1 := c_1(\theta_0,\theta_2,\theta_3,\theta_4,\amax,d,s,\kappa,|\rho|_{\BS^o},\nu)$ and $c_2 := c_2(\theta_0,\theta_2,\theta_3,\theta_4,\amin,\amax,d,s,\kappa,|\rho|_{\BS^o})$.
Eq. \eqref{thmo3eq0} implies that ${\rm MSE} \lesssim (\frac{\log^5 n}{n})^{\frac{2s}{2s+d-2}} $ for orthogonal adaptive GMRA when $d\ge 3$. In the case of $d=1,2$, we can prove that ${\rm MSE} \lesssim \frac{\log^{4+d} n}{n}$.

\acks{This research was partially funded by ONR N00014-12-1-0601, NSF-DMS-ATD-1222567, 1708553, 1724979  and AFOSR FA9550-14-1-0033.}

\appendix 

\section{Tree construction, regularity of geometric spaces}
\label{appa}
\subsection{Tree construction}
\label{apptree}


We now show that from a set of nets $\{T_j(\calX_n')\}_{j \in [\jmin,\jmax]}$ from the cover tree algorithm we can construct a set of $C_{j,k}$ with desired properties.
Let $\{a_{{j},k}\}_{k=1}^{N(j)}$ be the set of points in $T_{j}(\calX_n')$. Given a set of points $\{z_1,\ldots,z_m\} \subset \RR^D$, the Voronoi cell of $z_\ell$ with respect to $\{z_1,\ldots,z_m\} $ is defined as 
$${\rm Voronoi}(z_\ell,\{z_1,\ldots,z_m\}) = \{x\in \RR^D: \|x-z_\ell\| \le \|x-z_i\| \text{ for all } i \neq \ell  \}.$$
Let 
\begin{equation}
\hcalM =\bigcup_{j=\jmin}^{\jmax} \bigcup_{a_{j,k} \in T_j(\calX_n')} B_{\frac 1 4 2^{-j}} (a_{j,k})\,.
\label{e:tildeM}
\end{equation}
Our $\Cjk$'s are constructed in Algorithm \ref{TableCoverTree}. These $\Cjk$'s form a multiscale tree decomposition of $\hcalM$.
We will prove that $\calM \setminus \hcalM$ has a negligible measure and $\{\Cjk \}_{k \in \calK_j, j \in [\jmin,\jmax]}$ satisfies Assumptions (A1-A5). The key is that every $\Cjk$ is contained in a ball of radius $3\cdot 2^{-j}$ and also contains a ball of radius $ 2^{-j}/4$.

\begin{algorithm}[th]                      	
\caption{Construction of a multiscale tree decomposition $\{\Cjk\}$}         	
\label{TableCoverTree}		
\begin{algorithmic}[1]                    	
    \REQUIRE  data $ \calX_n'$
    \ENSURE A multiscale tree decomposition $\{\Cjk\}$
    \STATE Run cover tree on $\calX_n'$ to obtain a set of nets $\{T_j(\calX_n')\}_{j \in [\jmin,\jmax]}$
    \STATE $j=\jmin$: $C_{\jmin,0} = \hcalM$ defined in \eqref{e:tildeM}
    \STATE for $j=\jmin+1,\dots,\jmax$ : For every $C_{j-1,k_0}$ at scale $j-1$, $C_{j-1,k_0}$ has $\#\left( T_j(\calX_n') \cap C_{j-1,k_0}\right)$ children indexed by $a_{j,k} \in T_j(\calX_n') \cap C_{j-1,k_0}$ with corresponding $C_{j,k}$'s constructed as follows:
\begin{align*}
C_{j,k}^{(j)} =\hcalM \ \bigcap \ \Voronoi(a_{j,k}, T_j(\calX_n') \cap C_{j-1,k_0}) 
\end{align*}
and for $i=j+1,\dots,\jmax$
$$C_{j,k}^{(i)} =\Big( \bigcup_{a_{i,k'} \in C_{j,k}^{(i-1)}} B_{\frac 1 4 2^{-i}}(a_{i,k'}) \Big) \bigcup C_{j,k}^{(i-1)}$$
Finally, let $C_{j,k} = C_{j,k}^{(\jmax)}$.
\end{algorithmic}
\end{algorithm}

\begin{lemma}
\label{propcovertree1}

Every $\Cjk$ constructed in Algorithm \ref{TableCoverTree} 
satisfies $B_{\frac{2^{-j}}4}(a_{j,k})\subseteq C_{j,k}\subseteq B_{3\cdot 2^{-j}}(a_{j,k})$
\end{lemma}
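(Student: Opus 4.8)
The plan is to prove both inclusions by induction on the scale $j$, after separating the two mechanisms that build $\Cjk$: the \emph{seed} set $C_{j,k}^{(j)}=\hcalM\cap\Voronoi(a_{j,k},T_j(\calX_n')\cap C_{j-1,k_0})$, and the successive ball-additions at scales $i>j$. The lower inclusion is easy; the upper inclusion splits into a deterministic telescoping estimate for the ball-additions plus a ``seed bound'' $C_{j,k}^{(j)}\subseteq B_{\frac{11}{4}2^{-j}}(a_{j,k})$, which is where the real work lies. The base case is $C_{\jmin,0}=\hcalM$: the lower bound is immediate, and the upper bound $\hcalM\subseteq B_{3\cdot2^{-\jmin}}(a_{\jmin,0})$ follows because every net-point center is within $2^{-\jmin+1}$ of the unique root (ancestor estimate below), so each ball $B_{\frac14 2^{-m}}(a_{m,k'})$ of $\hcalM$ lies within $2\cdot2^{-\jmin}+\frac14 2^{-\jmin}\le 3\cdot2^{-\jmin}$ of $a_{\jmin,0}$.

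For the lower inclusion I would show $B_{\frac14 2^{-j}}(a_{j,k})\subseteq C_{j,k}^{(j)}\subseteq\Cjk$. Membership in $\hcalM$ is immediate since $a_{j,k}\in T_j(\calX_n')$ contributes the ball $B_{\frac14 2^{-j}}(a_{j,k})$ to $\hcalM$. Membership in the Voronoi cell uses the separation property: for $x\in B_{\frac14 2^{-j}}(a_{j,k})$ and any $q\in T_j(\calX_n')\cap C_{j-1,k_0}$ with $q\neq a_{j,k}$ we have $\|x-q\|\ge\|a_{j,k}-q\|-\|x-a_{j,k}\|>2^{-j}-\tfrac14 2^{-j}>\tfrac14 2^{-j}>\|x-a_{j,k}\|$, so $a_{j,k}$ is strictly the nearest center. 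For the upper inclusion I would first control the ball-additions: if $C_{j,k}^{(j)}\subseteq B_r(a_{j,k})$, then by induction on $i=j+1,\dots,\jmax$ one gets $C_{j,k}^{(i)}\subseteq B_{r+\sum_{\ell=j+1}^{i}\frac14 2^{-\ell}}(a_{j,k})$, because each added ball $B_{\frac14 2^{-i}}(a_{i,k'})$ is centered at a point $a_{i,k'}\in C_{j,k}^{(i-1)}$ already within the current radius of $a_{j,k}$, and enlarges that radius by at most $\frac14 2^{-i}$. Since $\sum_{\ell>j}\frac14 2^{-\ell}=\frac14 2^{-j}$, the seed bound $r=\frac{11}{4}2^{-j}$ yields $\Cjk\subseteq B_{3\cdot2^{-j}}(a_{j,k})$.

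The heart is the seed bound. Fix $x\in C_{j,k}^{(j)}$. Since $x\in\hcalM$ there are a scale $m$ and a center $a_{m,k'}\in T_m(\calX_n')$ with $\|x-a_{m,k'}\|<\frac14 2^{-m}$. When $m\ge j$, iterating the covering property along the cover-tree ancestor chain of $a_{m,k'}$ produces a scale-$j$ net point $p\in T_j(\calX_n')$ with $\|a_{m,k'}-p\|<\sum_{\ell=j}^{m-1}2^{-\ell}<2^{-j+1}$, hence $\|x-p\|<\frac14 2^{-m}+2^{-j+1}\le\frac14 2^{-j}+2^{-j+1}$; when $m\le j$ the nesting $T_m\subseteq T_j$ lets us take $p=a_{m,k'}$. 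Provided $p\in T_j(\calX_n')\cap C_{j-1,k_0}$, the defining Voronoi property gives $\|x-a_{j,k}\|\le\|x-p\|<\frac{11}{4}2^{-j}$, as required.

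The main obstacle is precisely the bracketed hypothesis $p\in C_{j-1,k_0}$ together with the coarse regime $m<j$. When $m<j$ the ball $B_{\frac14 2^{-m}}(a_{m,k'})$ has radius exceeding $2^{-j}$, so an $\hcalM$-point sitting far out in such a coarse ball is not automatically near a scale-$j$ center; one must show that either a nearer competitor in $T_j(\calX_n')\cap C_{j-1,k_0}$ excludes $x$ from the Voronoi cell of $a_{j,k}$, or that the ancestor $p$ of $a_{m,k'}$ actually lies in $C_{j-1,k_0}$ and thus constrains the cell. Resolving this requires tying the cover-tree parent structure to the recursively built cells: concretely, I would carry through the induction the auxiliary fact that the scale-$j$ centers relevant to any $x\in C_{j,k}^{(j)}$ all descend from $a_{j-1,k_0}$, so that the telescoping ancestor estimate stays inside the current branch of the tree. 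This bookkeeping is the only non-routine part; the triangle-inequality and geometric-series estimates above are elementary.
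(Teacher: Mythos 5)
Your skeleton is sensible and the routine parts are right: the lower inclusion via the separation property, and the telescoping estimate $\sum_{\ell>j}\tfrac14 2^{-\ell}=\tfrac14 2^{-j}$ for the successive ball-additions, are both correct and are exactly the ingredients the paper uses. The problem is that you have located the entire difficulty in the seed bound $C_{j,k}^{(j)}\subseteq B_{\frac{11}{4}2^{-j}}(a_{j,k})$ and then left it unproved, and the repair you sketch would not prove it. In the regime $m<j$ your competitor is $p=a_{m,k'}$ with $\|x-p\|<\tfrac14 2^{-m}$, and $\tfrac14 2^{-m}$ can exceed $2^{-j}$ by an arbitrarily large factor; the Voronoi comparison then yields only $\|x-a_{j,k}\|\le\tfrac14 2^{-m}$, not $\tfrac{11}{4}2^{-j}$. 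This is not a bookkeeping issue about which branch of the tree $p$ descends through: a point $x$ near the boundary of a coarse ball $B_{\frac14 2^{-m}}(a_{m,k'})$ of $\hcalM$, displaced transversally to the data, is at distance $\approx\tfrac14 2^{-m}$ from \emph{every} point of $\calX_n'$, hence from every scale-$j$ net point whatsoever, so no choice of competitor in any branch can force it within $O(2^{-j})$ of $a_{j,k}$. The auxiliary induction you propose (``the relevant scale-$j$ centers all descend from $a_{j-1,k_0}$'') therefore cannot close the gap; the membership $p\in T_j(\calX_n')\cap C_{j-1,k_0}$ that you flag in the case $m\ge j$ is a legitimate but secondary concern.

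The paper's proof never bounds the Voronoi seed at all. It decomposes $\Cjk$ as the ball $B_{\frac14 2^{-j}}(a_{j,k})$ together with the children cells $C_{j+1,k'}$ indexed by the net points $a_{j+1,k'}\in\Cjk$, bounds $\diam(C_{j+1,k'},a_{j+1,k'})\le\tfrac14 2^{-(j+1)}+\tfrac14 2^{-(j+2)}+\cdots<\tfrac12 2^{-(j+1)}$ using only the ball-additions, and then invokes the cover-tree covering property ($\calX_n'\subset\cup_{a_{j,k}\in T_j(\calX_n')} B_{2\cdot 2^{-j}}(a_{j,k})$) to place each such net point within $2\cdot 2^{-j}$ of $a_{j,k}$, giving $2\cdot2^{-j}+\tfrac14 2^{-j}\le 3\cdot2^{-j}$. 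The quantity being controlled is thus the distance from $a_{j,k}$ to the \emph{net points} assigned to $\Cjk$ --- which the covering property does control --- rather than to arbitrary points of $\hcalM\cap\Voronoi(\cdot)$, which it does not. To repair your argument you would need either to adopt this children-plus-ball decomposition, or to first show that the coarse-scale portion of $\hcalM$ contributes nothing to the seed beyond what the balls at scales $\ge j$ already cover; as written, your proof does not establish the upper inclusion.
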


\begin{proof}
For any $x \in \RR^D$ and any set $C \in \RR^D$, the diameter of $C$ with respect to $x$ is defined as $\diam(C,x) : = \sup_{z\in C} \|z-x\|$.
First, we prove that, for every $j$, $C_{j,k_1} \cap C_{j,k_2} = \emptyset$ whenever $k_1 \neq k_2$. Take any $a_{j+1,k_1'} \in C_{j,k_1}$ and $a_{j+1,k_2'} \in C_{j,k_2}$. Our construction guarantees that 
$$\diam(C_{j+1,k_1'},a_{j+1,k_1'}) \le \tfrac 1 4 2^{-(j+1)} + \tfrac 1 4 2^{-(j+2)}+\ldots < \tfrac 1 2 2^{-(j+1)}$$
and similarly for $\diam(C_{j+1,k_2'},a_{j+1,k_2'})$.
Since $\|a_{j+1,k_1'}-a_{j+1,k_2'}\| \ge 2^{-(j+1)}$, this implies that $C_{j+1,k_1'} \cap C_{j+1,k_2'} = \emptyset$. In our construction, 
$$C_{j,k_1} =\Big( \bigcup_{a_{j+1,k_1'} \in C_{j,k_1}} C_{j+1,k_1'} \Big) \bigcup B_{\frac{2^{-j}}{4}}(a_{j,k_1}), \ 
C_{j,k_2} =\Big( \bigcup_{a_{j+1,k_2'} \in C_{j,k_2}} C_{j+1,k_2'} \Big) \bigcup B_{\frac{2^{-j}}{4}}(a_{j,k_2}).$$ Since $\|a_{j,k_1} - a_{j,k_2}\| \ge 2^{-j}$, we observe that $B_{\frac 1 4 2^{-j}}(a_{j,k_1}) \cap B_{\frac 1 4 2^{-j}}(a_{j,k_2}) = \emptyset$, 
$C_{j+1,k_1'} \cap B_{\frac 1 4 2^{-j}}(a_{j,k_2}) = \emptyset$ for every $a_{j+1,k_1'} \in C_{j,k_1}$,
and
$C_{j+1,k_2'} \cap B_{\frac 1 4 2^{-j}}(a_{j,k_1}) = \emptyset$ for every $a_{j+1,k_2'} \in C_{j,k_2}$.
Therefore $C_{j,k_1}  \cap C_{j,k_2} = \emptyset$. 

Our construction of $C_{j,k}$'s guarantees that every $\Cjk $ contains a ball of radius $\frac 1 4 \cdot 2^{-j}$. Next we prove that every $\Cjk$ is contained in a ball of radius $3\cdot 2^{-j}$. The cover tree structure guarantees that $\calX_n' \subset \cup_{a_{j,k}\in T_j(\calX_n')} B_{2\cdot 2^{-j}}(a_{j,k})$ for every $j$. Hence, for every $a_{j,k}$ and every $a_{j+1,k'}\in \Cjk$, we obtain $\|a_{j+1,k'} -a_{j,k}\| \le 2\cdot 2^{-j}$ and the computation above yields $\diam(C_{j+1,k'},a_{j+1,k'}) \le 2^{-j}/4$, and therefore $\diam(\Cjk,a_{j,k}) \le 2\cdot 2^{-j} +2^{-j}/4 \le  3\cdot 2^{-j}$. In summary $\Cjk$ is contained in the ball of radius $3\cdot 2^{-j}$ centered at $a_{j,k}$.
\end{proof}

The following Lemma will be useful when comparing comparing covariances of sets:
\begin{lemma} 
\label{l:covcompare}
If $B\subseteq A$, then we have $\lambda_{d}(\mathrm{cov}(\rho|_A)) \ge \frac{\rho(B)}{\rho(A)} \lambda_{d}(\mathrm{cov}(\rho|_B)).$
\end{lemma}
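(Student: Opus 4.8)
The plan is to first upgrade the claim to a statement about the full positive semidefinite (Loewner) order, namely
\[
\mathrm{cov}(\rho|_A) \succeq \frac{\rho(B)}{\rho(A)}\,\mathrm{cov}(\rho|_B),
\]
and then to read off the inequality for the $d$-th eigenvalue by monotonicity of eigenvalues under the Loewner order. Proving the matrix inequality directly is cleaner than comparing $\lambda_d$'s head-on, because the optimal $d$-dimensional subspaces in the Courant--Fischer characterizations for $A$ and for $B$ need not agree, and the conditional means differ.

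To establish the Loewner inequality I would argue one direction at a time. Fix a unit vector $u \in \RR^D$ and use the variational characterization of variance: since the conditional mean minimizes mean squared deviation,
\[
u^T \mathrm{cov}(\rho|_A)\, u = \min_{t \in \RR} \EE_{x \sim \rho|_A}\big[(u^T x - t)^2\big] = \frac{1}{\rho(A)} \min_{t \in \RR} \int_A (u^T x - t)^2\, d\rho,
\]
and likewise for $B$. The key point is that for every fixed $t$ the integrand is nonnegative, so shrinking the domain from $A$ to $B \subseteq A$ can only decrease the integral:
\[
\int_A (u^T x - t)^2\, d\rho \ge \int_B (u^T x - t)^2\, d\rho \ge \min_{s \in \RR} \int_B (u^T x - s)^2\, d\rho.
\]
Taking the minimum over $t$ on the far left then gives
\[
\min_{t} \int_A (u^T x - t)^2\, d\rho \ge \min_{s} \int_B (u^T x - s)^2\, d\rho = \rho(B)\, u^T \mathrm{cov}(\rho|_B)\, u.
\]
Dividing by $\rho(A)$ yields $u^T \mathrm{cov}(\rho|_A)\, u \ge \frac{\rho(B)}{\rho(A)}\, u^T \mathrm{cov}(\rho|_B)\, u$, and since $u$ was arbitrary this is exactly the asserted Loewner inequality.

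Finally I would invoke the fact that each eigenvalue map is monotone for the Loewner order: if $M \succeq N$ then $\lambda_k(M) \ge \lambda_k(N)$ for every $k$, a direct consequence of the Courant--Fischer min-max formula $\lambda_k(M) = \max_{\dim V = k} \min_{0 \ne u \in V} (u^T M u)/(u^T u)$. Applying this with $M = \mathrm{cov}(\rho|_A)$ and $N = \tfrac{\rho(B)}{\rho(A)} \mathrm{cov}(\rho|_B)$, and using the homogeneity $\lambda_d(cN) = c\, \lambda_d(N)$ for the nonnegative scalar $c = \rho(B)/\rho(A)$, produces $\lambda_d(\mathrm{cov}(\rho|_A)) \ge \frac{\rho(B)}{\rho(A)} \lambda_d(\mathrm{cov}(\rho|_B))$, as required.

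As for where the (minor) care is needed: there is no real obstacle, but one should resist comparing the $d$-th eigenvalues directly through Courant--Fischer, since the centering constants and optimal subspaces for $A$ and $B$ differ. Reducing to a per-direction variance comparison, in which the mean enters only through the benign inner minimization over the centering scalar $t$, is precisely what lets the domain restriction $B \subseteq A$ be applied cleanly; the resulting Loewner bound then transfers to all eigenvalues simultaneously.
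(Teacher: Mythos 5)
Your proof is correct, and it in fact establishes something slightly stronger than the paper's statement. The paper's own argument works directly with the $d$-th eigenvalue: it declares ``without loss of generality both $A$ and $B$ are centered at $x_0$,'' takes the top-$d$ eigenspace $V$ of $\mathrm{cov}(\rho|_B)$ as the test subspace in Courant--Fischer for $\mathrm{cov}(\rho|_A)$, splits $\int_A = \int_B + \int_{A\setminus B}$, and discards the nonnegative contribution from $A\setminus B$. You instead prove the full Loewner inequality $\mathrm{cov}(\rho|_A)\succeq \tfrac{\rho(B)}{\rho(A)}\,\mathrm{cov}(\rho|_B)$ by a per-direction variance comparison, writing the variance as a minimum over centering constants $t$, and then pass to $\lambda_d$ via monotonicity of eigenvalues under the positive semidefinite order. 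The underlying mechanism is the same in both arguments --- the integrand is nonnegative, so shrinking the domain of integration from $A$ to $B$ can only decrease the integral --- but your handling of the centering is cleaner: the conditional means of $\rho|_A$ and $\rho|_B$ genuinely differ, and the paper's ``WLOG'' glosses over this (its final displayed equality should really be a ``$\ge$,'' which happens to be the direction needed, since replacing the true mean by a fixed $x_0$ only inflates the quadratic form); your inner minimization over $t$ absorbs this issue rigorously. Your version also yields the inequality for every eigenvalue simultaneously, not just the $d$-th, at no extra cost.
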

\begin{proof}
Without loss of generality, we assume both $A$ and $B$ are centered at $x_0$.
Let $V$ be the eigenspace associated with the largest $d$ eigenvalues of $\mathrm{cov}(\rho|_B)$. Then
\begin{align*}
\lambda_{d}(\mathrm{cov}(\rho|_A))
& = \max_{{\rm dim}U = d} \min_{u \in U} \frac{u^T \mathrm{cov}(\rho|_A) u}{u^T u}
\ge \min_{v \in V} \frac{v^T \mathrm{cov}(\rho|_A) v}{v^T v}
\\
& \ge  \min_{v \in V} \frac{v^T \left( \int_{A} (x-x_0)(x-x_0)^T d\rho\right) v}{\rho(A)v^T v} 
\\
& =  \min_{v \in V}  \left(\frac{v^T \left( \int_{B} (x-x_0)(x-x_0)^T d\rho\right) v}{\rho(A)v^T v} +  \frac{v^T \left( \int_{A\setminus B} (x-x_0)(x-x_0)^T d\rho\right) v}{\rho(A)v^T v} \right)
\\
& \ge \min_{v \in V}  \frac{v^T \left( \int_{B} (x-x_0)(x-x_0)^T d\rho\right) v}{\rho(A)v^T v} = \frac{\rho(B)}{\rho(A)} \lambda_{d}(\mathrm{cov}(\rho|_B)).
\end{align*}
\end{proof}

\commentout{
\begin{proof}[proof of Proposition \ref{propcovertree2}]
Claim (A1) follows by a simple volume argument: $\Cjk$ is contained in a ball of radius $3\cdot2^{-j}$, and therefore has volume at most $C_1(3\cdot2^{-j})^d$, and each child contains a ball of radius $2^{-(j+1)}/4$, and therefore volume at least $C_1^{-1}(2^{-(j+1)}/4)^d$. It follows that $\amax$ cannot be larger than $C_1^2(3\cdot2^{-j}/2^{-(j+1)}\cdot4)^d$. Clearly $\amin\ge1$ since every $a_{j,k}$ belongs to both $T_j(\calX_n')$ and $T_{j'}(\calX_n')$ with $j'\ge j$.
The other statements, except the last one, are straightforward consequences of the doubling assumption and Lemma \ref{propcovertree1}.

In order to prove the last statement about property (A5), observe that $B_{\frac{2^{-j}}4}(a_{j,k})\subseteq\rho(C_{j,k})\subseteq B_{3\cdot 2^{-j}}(a_{j,k})$. By Lemma \ref{l:covcompare} we have 
\begin{align*}
\frac{(2^{-j}/4)^d}{\rho(C_{j,k})}\lambda_d(\mathrm{cov}(\rho|_{B_{\frac{2^{-j}}4}(a_{j,k})})
\le 
\lambda_d(\mathrm{cov}(\rho|_{C_{j,k}})
\le 
\frac{\rho(C_{j,k})}{(3\cdot 2^{-j})^d}\lambda_d(\mathrm{cov}(\rho|_{B_{3\cdot 2^{-j}}(a_{j,k})})
\end{align*}
and therefore $\lambda_d(\mathrm{cov}(\rho|_{C_{j,k}})\ge (1/12)^d\lambda_d(\mathrm{cov}(\rho|_{B_{\frac{2^{-j}}4}(a_{j,k})})\ge (1/12)^d \theta_3(2^{-j}/4)^2/d$, so that (A5)-(i) holds with $\tilde\theta_3=\theta_3(1/12)^d/16$.
Proceeding similarly for $\lambda_{d+1}$, we obtain from the upper bound above that $\lambda_{d+1}(\mathrm{cov}(\rho|_{C_{j,k}})\le\frac{(3\cdot 2^{-j})^d}{(2^{-j}/4)^d}\lambda_{d+1}(\mathrm{cov}(\rho|_{B_{3\cdot 2^{-j}}(a_{j,k})})\le(12^d)^2\cdot144\theta_4\lambda_d(\mathrm{cov}(\rho|_{C_{j,k}})$ so that (A5)-(ii) holds with $\tilde\theta_4=(12^d)^2\cdot144\theta_4$, which is less than $1/2$ for $\theta_4$ small enough.
\end{proof}
 }

\subsection{Regularity of geometric spaces}
\label{s:AppRegSpaces}

To fix the ideas, consider the case where $\mathcal{M}$ is a manifold of class $\mathcal{C}^{s}$, $s \in \RR^+ \setminus \ZZ$, i.e. around every point $x_0$ there is a neighborhood $U_{x_0}$ that is parametrized by a function $f: V\rightarrow U_{x_0}$, where $V$ is an open connected set of $\mathbb{R}^d$, and $f\in\mathcal{C}^{s}$, i.e. $f$ is $\fs$ times continuously differentiable and the $\fs$-th derivative $f^{\fs}$ is H\"older continuous of order $s-\fs$, i.e. $||f^{\fs}(x)-f^{\fs}(y)||\le ||f^{\fs}||_{\calC^{s-\fs}} ||x-y||^{s-\fs}$.
In particular, for $s\in(0,1)$, $f$ is simply a H\"older function of order $s$. 
For simplicity we assume $x =f(x^d)$ where $x^d \in V$.

If $\calM$ is a manifold of class $\calC^s, s \in (0,1)$, a constant approximation of $f$ on a set $I$ by the value $x_0 :=f(x_0^d)$ on such set yields
\begin{align*}
\frac{1}{\rho(I)}\int_I | f(x^d)- f(x_0^d) |^2d\rho(x)
\le \frac{1}{\rho(I)}\int_I || x^d-x^d_0 ||^{2 s} ||f||_{\mathcal{C}^s}^2 d\rho(x)
\le ||f||_{\mathcal{C}^s}^2 \diam(I)^{2s}
\end{align*}
where we used continuity of $f$.
If $I$ was a ball, we would obtain a bound which would be better by a multiplicative constant no larger than $1/d$.
Moreover, the left hand side is minimized by the mean $\frac{1}{\rho(I)}\int_I f(y)d\rho(y)$ of $f$ on $I$, and so the bound on the right hand side holds a fortiori by replacing $f(x_0^d)$ by the mean.


Next we consider the linear approximation of $\calM$ on $I \subset \calM$. Suppose there exits $\theta_0, \theta_2$ such that $I$ is contained in a ball of radius $\theta_2 r$ and contains a ball of radius $\theta_0 r$.
Let $x_0 \in I$ be the closest point on $I$ to the mean. Then $I$ is the graph of a $\calC^s$ function $f$: $P_{T_{x_0}(I)} \rightarrow P_{T^\perp_{x_0}(I)}$ where $T_{x_0}(I)$ is the plane tangent to $I$ at $x_0$ and $T^\perp_{x_0}(I)$ is the orthogonal complement of $T_{x_0}(I)$.
Since all the quantities involved are invariant under rotations and translations, up to a change of coordinates, we may assume $x^d = (x_1,\ldots,x_d)$ and $f=(f_1,\ldots,f_{D-d})$ where $f_i := f_i(x^d),\ i=d+1,\ldots,D.$
A linear approximation of $f=(f_{d+1},\dots,f_D)$ based on Taylor expansion and an application of the mean value theorem yields the error estimates.

\begin{itemize}
\item Case 1: $s \in (1,2)$
\begin{align*}
&\frac{1}{\rho(I)}\int_{I} \left\| f(x^d)- f(x^d_0)-\nabla f(x^d_0)\cdot(x^d-x^d_0)\right\|^2d\rho 
\\ 
=&\sum_{i=d+1}^D 
\frac{1}{\rho(I)}\sup_{\xi_i \in {\rm domain}(f_i)}\int_{I} \left | \nabla f_i(\xi_i)(x^d-x^d_0)-\nabla f_i(x^d_0)\cdot(x^d-x^d_0)\right|^2d\rho 
\\
\le& \sum_{i=d+1}^D\frac{1}{\rho(I)}\sup_{\xi_i \in {\rm domain}(f_i)}\int_{\Cjk} || x^d -x^d_0 ||^{2} \|\xi_i -x^d_0\|^{2(s-\fs)} ||\nabla f_i||_{\mathcal{C}^{s-\fs}}^2 d\rho
\\
\le & D \max_{i=1,\ldots,D-d} ||\nabla f_i||_{\mathcal{C}^{s-\fs}}^2 \diam(I)^{2s}\,.
\end{align*}

\item Case 2: $s=2$
\begin{align*}
&\frac{1}{\rho(I)}\int_{I} \left\| f(x^d)- f(x^d_0)-\nabla f(x^d_0)\cdot(x^d-x^d_0)\right\|^2d\rho 
 \\ 
=&\sum_{i=d+1}^D \frac{1}{\rho(I)} \int_{I}\left\| f_i(x^d)- f_i(x^d_0)-\nabla f_i(x^d_0)\cdot(x^d-x^d_0)\right\|^2d\rho  
 \\
\le& \sum_{i=d+1}^D\frac{1}{\rho(I)}\sup_{\xi_i \in {\rm domain}(f_i)} \int_{I} 
\left\|  \frac 1 2 (\xi_i -x_0^d)^T D^2 f_i |_{x_0^d} (\xi_i - x_0^d) + o(\|\xi_i-x_0^d\|^2)\right\|^2 d\rho
 \\
\le &  \frac D 2 \max_{i=1,\ldots,D-d} \|D^2 f_i\| \diam(I)^{4}+ o(2^{-4j}).
\end{align*}

\end{itemize}

$\calM$ does not have boundaries, so the Taylor expansion in the computations above can be performed on the convex hull of $P_{T_{x_0}(I)}$, whose diameter is no larger than $ \diam(I)$.
Note that this bound then holds for other linear approximations which are at least as good, in $L^2(\rho|_{I})$, as Taylor expansion. One such approximation is, by definition, the linear least square fit of $f$ in $L^2(\rho|_{I})$. Let $L_{I}$ be the least square fit to the function $x\mapsto f(x)$. Then
\begin{align}
&\sum_{i=d+1}^D \lambda_{i}(\mathrm{cov}(\rho|_{I}))^2
 = \frac{1}{\rho(I)}\int_{I} || f(x) -L_{I}(x)||^2d\rho(x) \nonumber
\\
& \le
\left\{\begin{array}{ll}
          D\max_{i=1,\ldots,D-d} ||\nabla f_i||_{\mathcal{C}^{s-\fs}}^2 {\diam(I)}^{2s}, & s \in (1,2) \\
        \frac D 2 \max_{i=1,\ldots,D-d} \|D^2 f_i\| {\diam(I)}^{4}, & s=2
                \end{array}   \right. . 
                \label{eqlambdad1}
\end{align}

\begin{proof}[Proof of Proposition \ref{propcovertree2}]
Claim (A1) follows by a simple volume argument: $\Cjk$ is contained in a ball of radius $3\cdot2^{-j}$, and therefore has volume at most $C_1(3\cdot2^{-j})^d$, and each child contains a ball of radius $2^{-(j+1)}/4$, and therefore volume at least $C_1^{-1}(2^{-(j+1)}/4)^d$. It follows that $\amax \le C_1^2(3\cdot2^{-j}/2^{-(j+1)}\cdot4)^d$. Clearly $\amin\ge1$ since every $a_{j,k}$ belongs to both $T_j(\calX_n')$ and $T_{j'}(\calX_n')$ with $j'\ge j$. (A1),(A3), (A4) are straightforward consequences of the doubling assumption and Lemma \ref{propcovertree1}.
As for (A2), for any $\nu>0$, we have 
\begin{align*}
&
\PP\left\{\rho(\calM\setminus \hcalM) > \frac{28\nu\log n}{3n}\right\}
= \PP\left\{\hrho(\calM\setminus \hcalM)=0 \text{ and }\rho(\calM\setminus \hcalM) > \frac{28\nu\log n}{3n}\right\}
\\
&
\le \PP\left\{ |\hrho(\calM\setminus \hcalM)-\rho(\calM\setminus \hcalM)| >\frac 1 2 \rho(\calM\setminus \hcalM)
\text{ and }
\rho(\calM\setminus \hcalM) > \frac{28\nu\log n}{3n}
\right\}
\\
&
\le 2e^{-\frac{3}{28}n\rho(\calM\setminus \hcalM) } 
\le 2 n^{-\nu}.
\end{align*}

In order to prove the last statement about property (A5) in the case of 5a, observe that $B_{{2^{-j}}/4}(a_{j,k})\subseteq \Cjk \subseteq B_{3\cdot 2^{-j}}(a_{j,k})$. By Lemma \ref{l:covcompare} we have 
\begin{align*}
\frac{C_1^{-1}(2^{-j}/4)^d}{\rho(C_{j,k})}\lambda_d(\mathrm{cov}(\rho|_{B_{{2^{-j}}/4}(a_{j,k})})
\le 
\lambda_d(\mathrm{cov}(\rho|_{C_{j,k}})
\le 
\frac{C_1(3\cdot 2^{-j})^d}{\rho(\Cjk)}\lambda_d(\mathrm{cov}(\rho|_{B_{3\cdot 2^{-j}}(a_{j,k})})
\end{align*}
and therefore $\lambda_d(\mathrm{cov}(\rho|_{C_{j,k}})\ge C_1^{-2} (1/12)^d\lambda_d(\mathrm{cov}(\rho|_{B_{{2^{-j}}/4}(a_{j,k})})\ge C_1^{-2}(1/12)^d \ttheta_3(2^{-j}/4)^2/d$, so that (A5)-(i) holds with $\theta_3=\ttheta_3 (4C_1)^{-2}(1/12)^d$.
Proceeding similarly for $\lambda_{d+1}$, we obtain from the upper bound above that $$\lambda_{d+1}(\mathrm{cov}(\rho|_{C_{j,k}})\le\frac{C_1(3\cdot 2^{-j})^d}{C_1^{-1}(2^{-j}/4)^d}\lambda_{d+1}(\mathrm{cov}(\rho|_{B_{3\cdot 2^{-j}}(a_{j,k})})\le(12^d)^2\cdot144 C_1^4\ttheta_4 /\ttheta_3\lambda_d(\mathrm{cov}(\rho|_{C_{j,k}})$$ so that (A5)-(ii) holds with $\theta_4=(12^d)^2\cdot144 C_1^4\ttheta_4/\ttheta_3$. 

In order to prove (A5) in the case of 5b, we use calculations as in \citet{LMR:MGM1,MMS:NoisyDictionaryLearning} where one obtains that the first $d$ eigenvalues of the covariance matrix of $\rho|_{B_{r}(z)}$ with $z \in \calM$, is lower bounded by $\tilde\theta_3 r^2/d$ for some $\tilde\theta_3 >0$. Then
(A5)-(i) holds for $\Cjk$ with $\theta_3=\ttheta_3 (4C_1)^{-2}(1/12)^d$.
The estimate of $\lambda_{d+1}(\mathrm{cov}(\rho|_{C_{j,k}}))$ follows from \eqref{eqlambdad1} such that 
$$
\sum_{i=d+1}^D \lambda_{i}(\mathrm{cov}(\rho|_{C_{j,k}}))^2
 \le
 \left\{\begin{array}{ll}
          D\max_{i=1,\ldots,D-d} ||\nabla f_i||_{\mathcal{C}^{s-\fs}}^2 (6 \cdot 2^{-j})^{2s}, & s \in (1,2)\\
        \frac D 2 \max_{i=1,\ldots,D-d} \|D^2 f_i\| (6 \cdot 2^{-j})^4, & s=2
                \end{array}   \right. .            %
$$
Therefore, there exists $j_0$ such that $\lambda_{d+1}(\mathrm{cov}(\rho|_{C_{j,k}})) < \theta_4 \lambda_{d}(\mathrm{cov}(\rho|_{C_{j,k}}))$ when $j \ge j_0$.
The calculation above also implies that $\rho \in \calA_{s}^\infty$ if $\max_{i=1,\ldots,D-d} ||\nabla f_i||_{\mathcal{C}^{s-\fs}}^2$ for $s\in (1,2)$ or $ \max_{i=1,\ldots,D-d} \|D^2 f_i\|$ for $s=2$ is uniformly upper bounded.

\end{proof}

\subsection{An alternative tree construction method}
\label{appatree}

The $\{\Cjk\}$ constructed by Algorithm \ref{TableCoverTree} is proved to satisfy Assumption (A1-A5). In numerical experiments, we use a much simpler algorithm to construct $\{\Cjk\}$ as follows:
$$C_{j_{\max},k} = \Voronoi(a_{j_{\max},k},T_{j_{\max}}(\calX_n')) \cap B_{2^{-j_{\max}}}(a_{j_{\max},k}),$$
and for any $j<j_{\max}$, we define 
$\Cjk = \bigcup\limits_{\substack{a_{j-1,k'} \text{ child of } a_{j,k}}} C_{j-1,k'}.$

We observe that the vast majority of $\Cjk$'s constructed above satisfy Assumption (A1-A5) in our numerical experiments. In Fig. \ref{FigTheta34},  we will show that (A5) is satisfied when we experiment on volume measures on the $3$-dim S and Z manifold. Here we sample $10^5$ training data, perform multiscale tree decomposition as stated above, and compute $\theta_3^{j,k}, \theta_4^{j,k}$ at every $\Cjk$. In Fig. \ref{FigTheta34}, we display the mean of $\{\theta_3^{j,k}\}_{k \in \calK_j}$ or $\{\theta_4^{j,k}\}_{k \in \calK_j}$ versus scale $j$, with a vertical error bar representing the standard deviation of $\{\theta_3^{j,k}\}_{k \in \calK_j}$ or $\{\theta_4^{j,k}\}_{k \in \calK_j}$ at each scale. We observe that $\theta_3 = \min_{j,k}\theta_3^{j,k} \ge 0.05$ at all scales and $\theta_4 = \max_{j,k}\theta_4^{j,k} \le 1/2$ except at very coarse scales, which  demonstrates Assumption (A5) is satisfied here. Indeed $\theta_4$ is not only bounded, but also decreases from coarse scales to fine scales.

\begin{figure}[ht]
 \centering
 \subfigure[$\{\theta_3^{j,k}\}_{k \in \calK_j}$ versus $j$ of the $3$-dim S manifold]{
    \includegraphics[width=.22\textwidth]{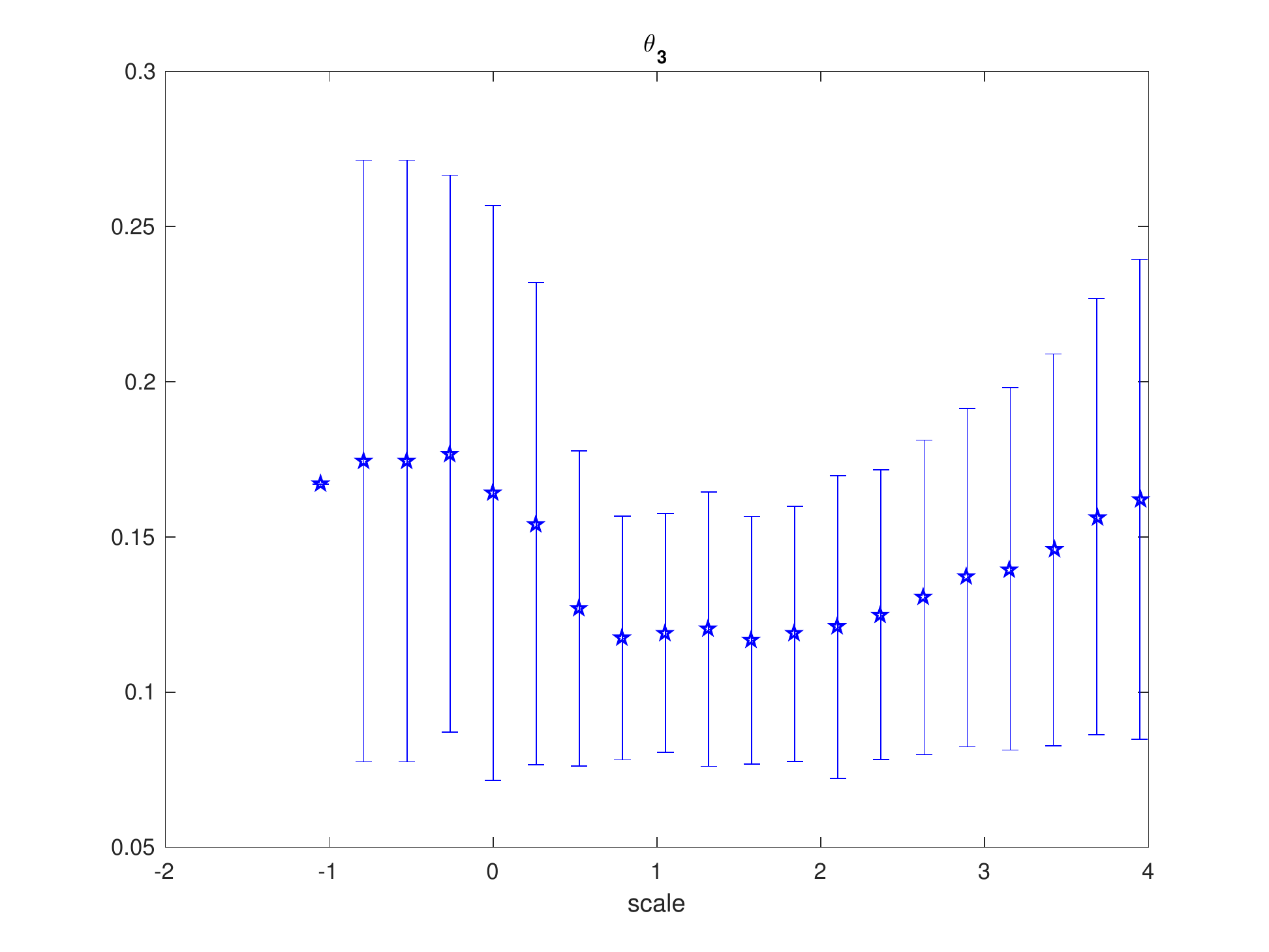}
    }
   \hspace{-0.1cm}
    \subfigure[$\{\theta_4^{j,k}\}_{k \in \calK_j}$ versus $j$ of the $3$-dim S manifold]{
    \includegraphics[width=.22\textwidth]{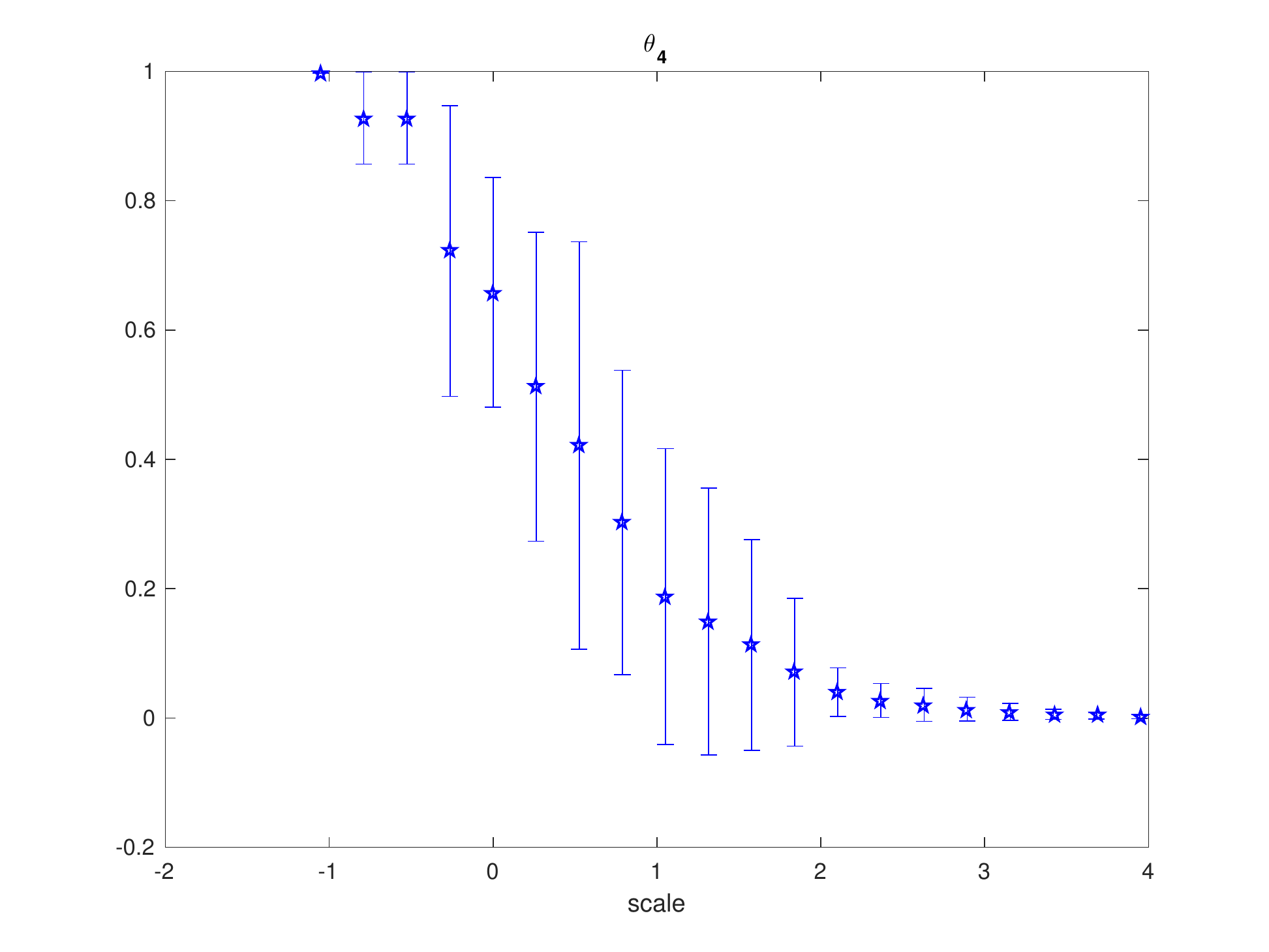}
    }
       \hspace{-0.1cm}
    \subfigure[$\{\theta_3^{j,k}\}_{k \in \calK_j}$ versus $j$ of the $3$-dim Z manifold]{
     \includegraphics[width=.22\textwidth]{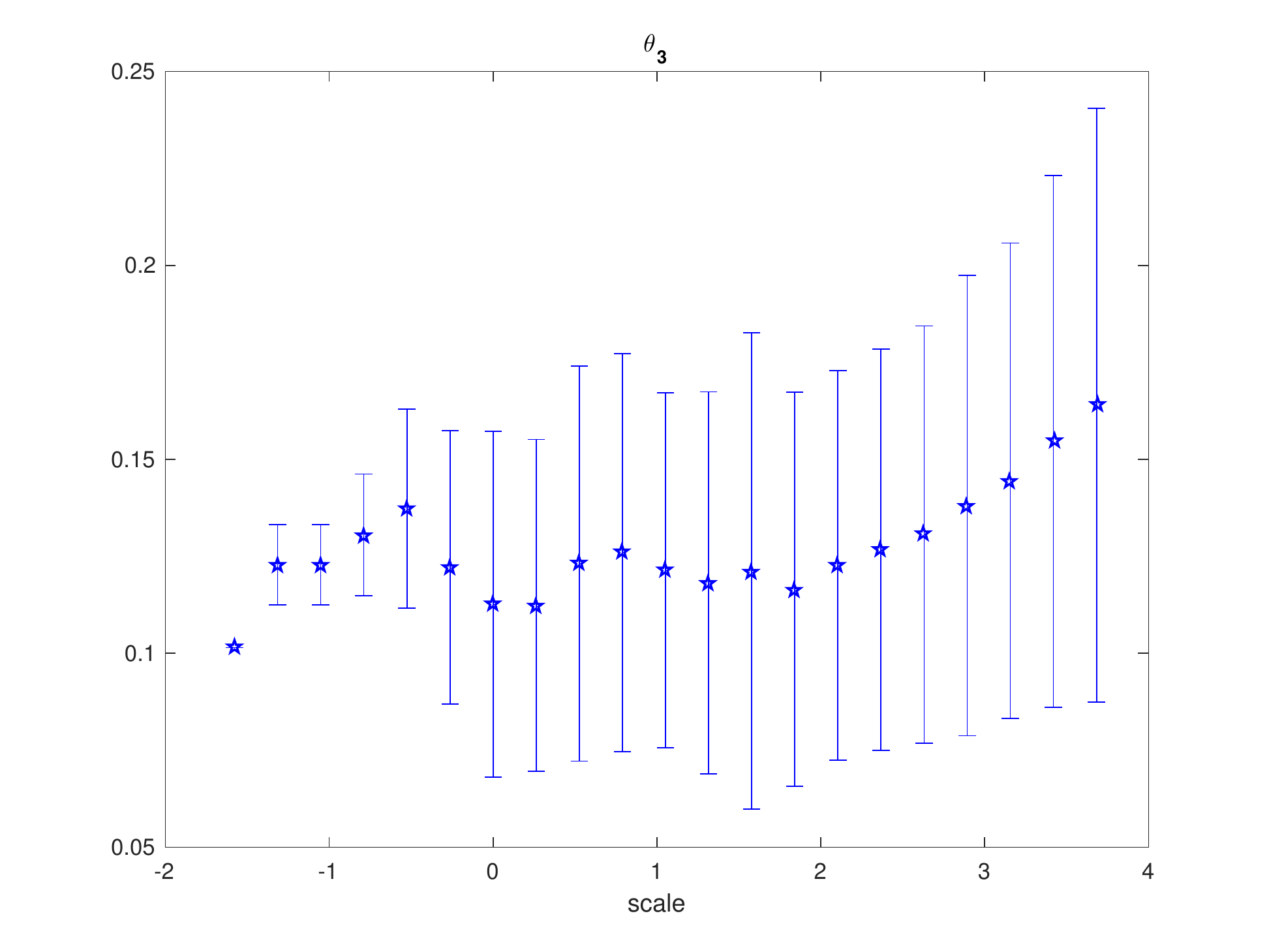}
     }
        \hspace{-0.1cm}
     \subfigure[$\{\theta_4^{j,k}\}_{k \in \calK_j}$ versus $j$ of the $3$-dim Z manifold]{
    \includegraphics[width=.22\textwidth]{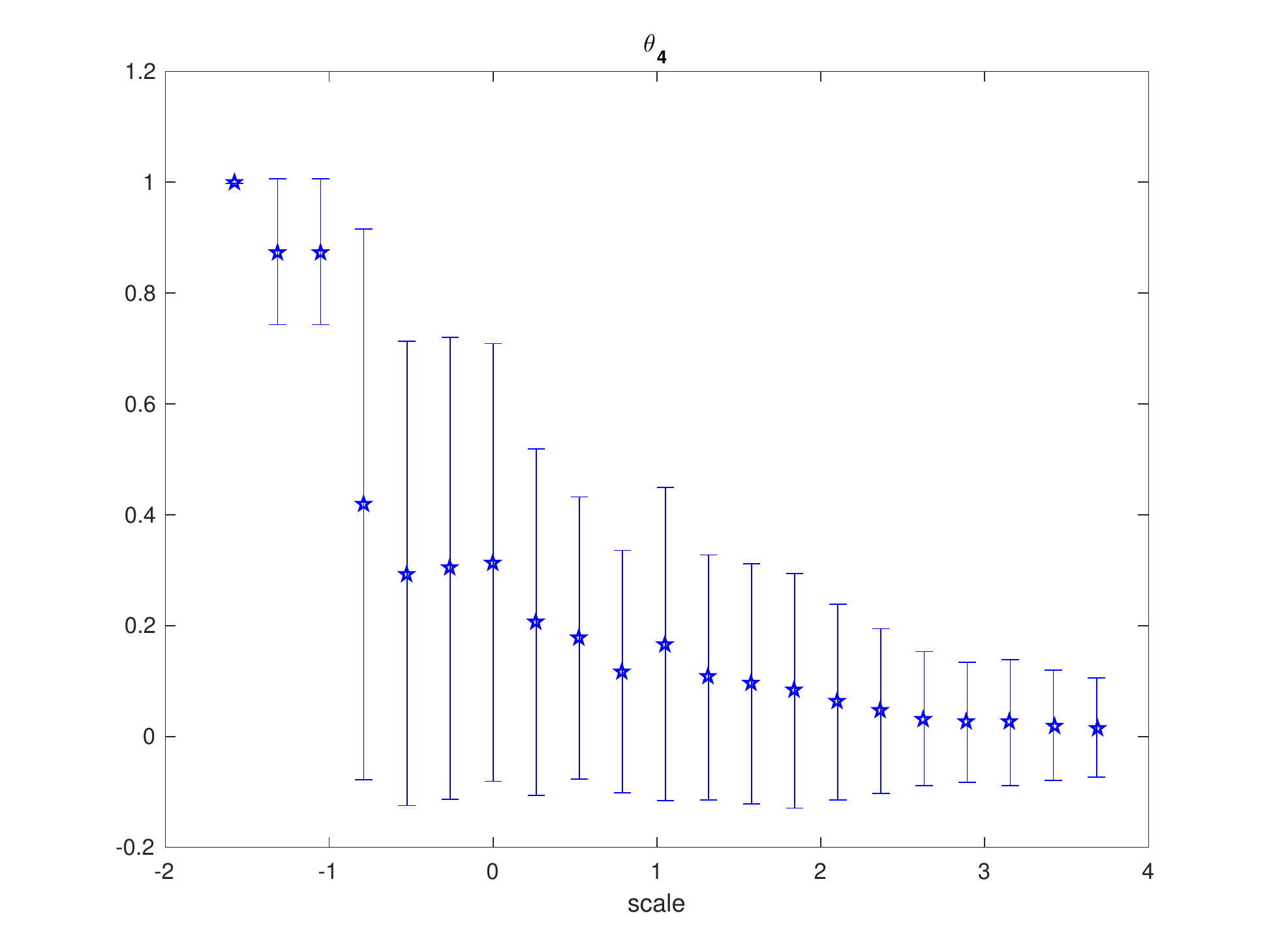}
    }
         \caption{
         The mean of $\{\theta_3^{j,k}\}_{k \in \calK_j}$ or $\{\theta_4^{j,k}\}_{k \in \calK_j}$ versus scale $j$, with a vertical error bar representing the standard deviation of $\{\theta_3^{j,k}\}_{k \in \calK_j}$ or $\{\theta_4^{j,k}\}_{k \in \calK_j}$ at each scale.
               }    
  \label{FigTheta34}
\end{figure}

We observe that, every $\Cjk$ constructed above is contained in a ball of radius $\theta_2 2^{-j}$ and contains a ball of radius $\theta_0 2^{-j}$, with $\theta_2 / \theta_0 \in [1, 2]$ for the majority of $\Cjk$'s.
 In Fig. \ref{FigCovertree}, we take the volume measures on the $3$-dim S and Z manifold, and plot $\log_2$ of the outer-radius and the statistics a lower bound for the in-radius\footnote{The in-radius of $\Cjk$ is approximately computed as follows: we randomly pick a center, and evaluate the largest radius with which the ball contains at least $95\%$ points from $\Cjk$. This procedure is repeated for two centers, and then we pick the maximal radius as an approximation of the in-radius.} versus the scale of cover tree. 
Notice that the in-radius is a fraction of the outer-radius at all scales, and $\log_2\theta_2 - \log_2 \theta_0 \le 1$ for the majority of cells.

\begin{figure}[ht]
 \centering
    \includegraphics[width=0.15\textwidth]{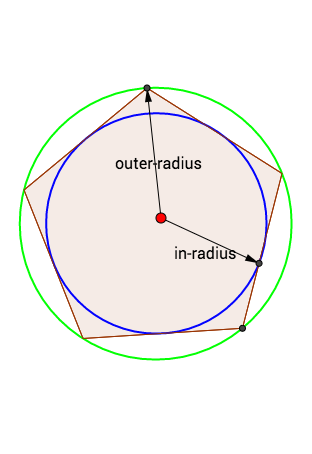}
    \hspace{-0.3cm}
    \includegraphics[width=.35\textwidth]{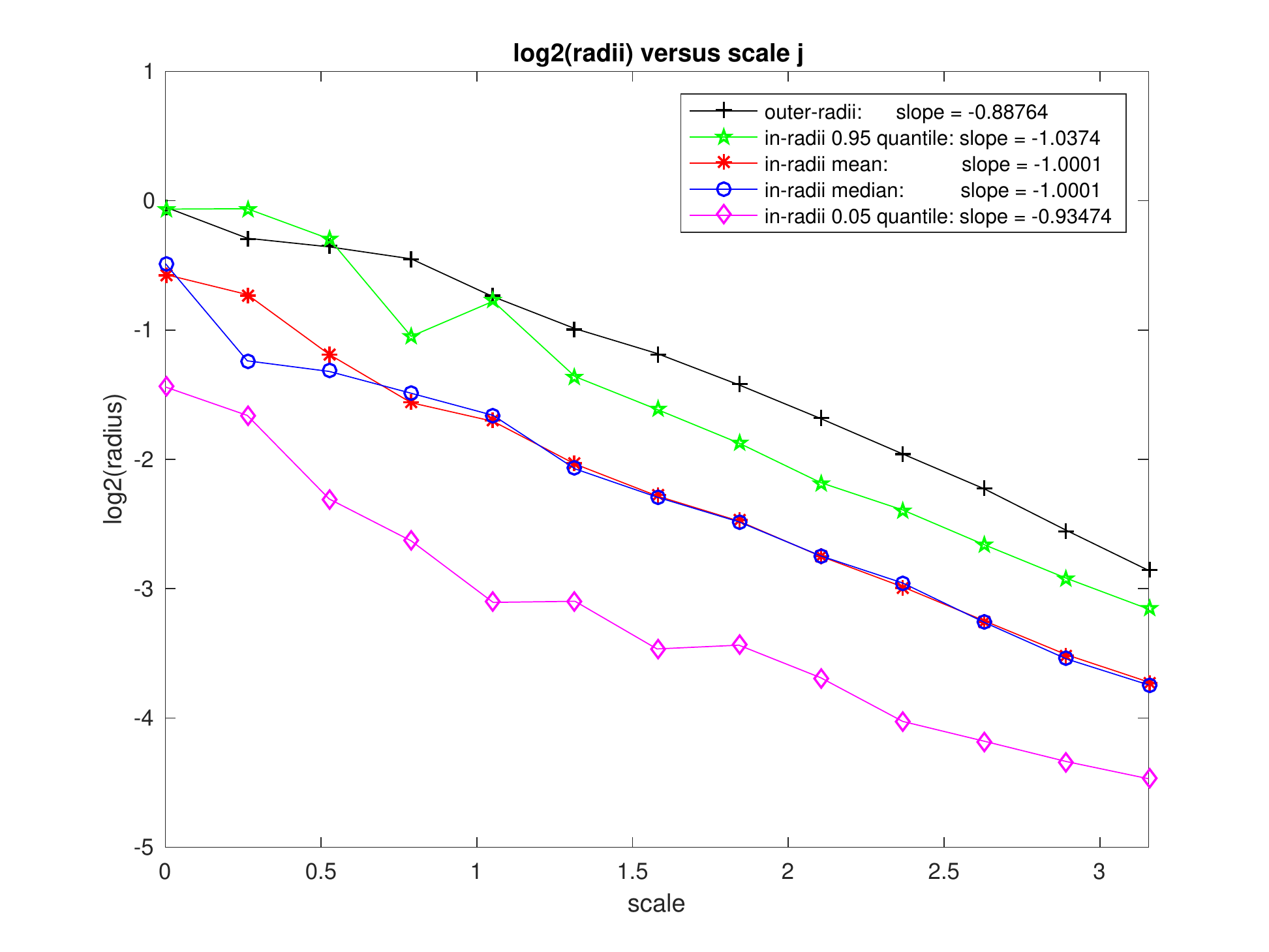}
        \hspace{-0.5cm}
    \includegraphics[width=.35\textwidth]{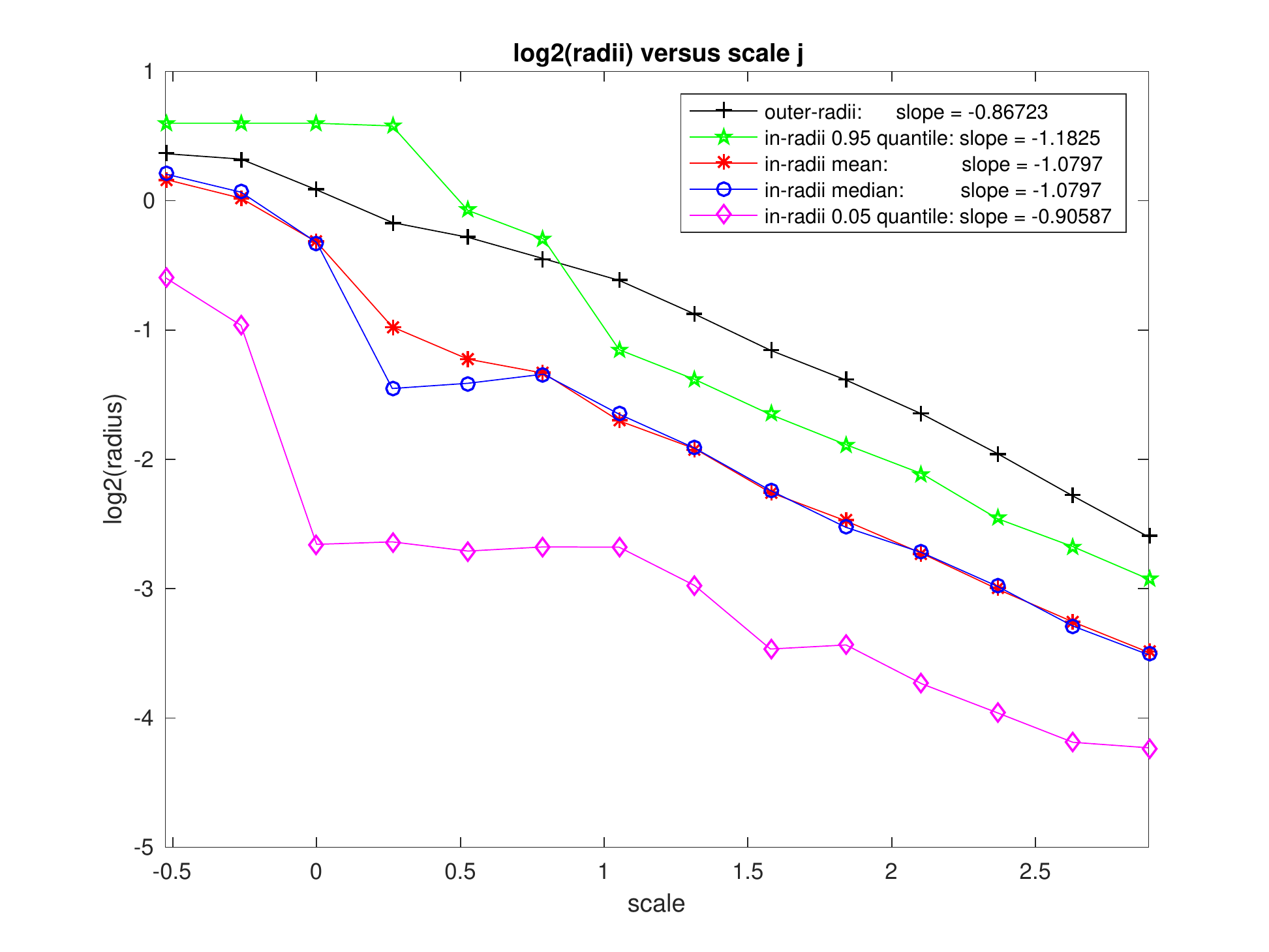} 
    \begin{minipage}{0.14\textwidth}
    \vspace{-1.8\textwidth}
     \includegraphics[height=0.8\textwidth]{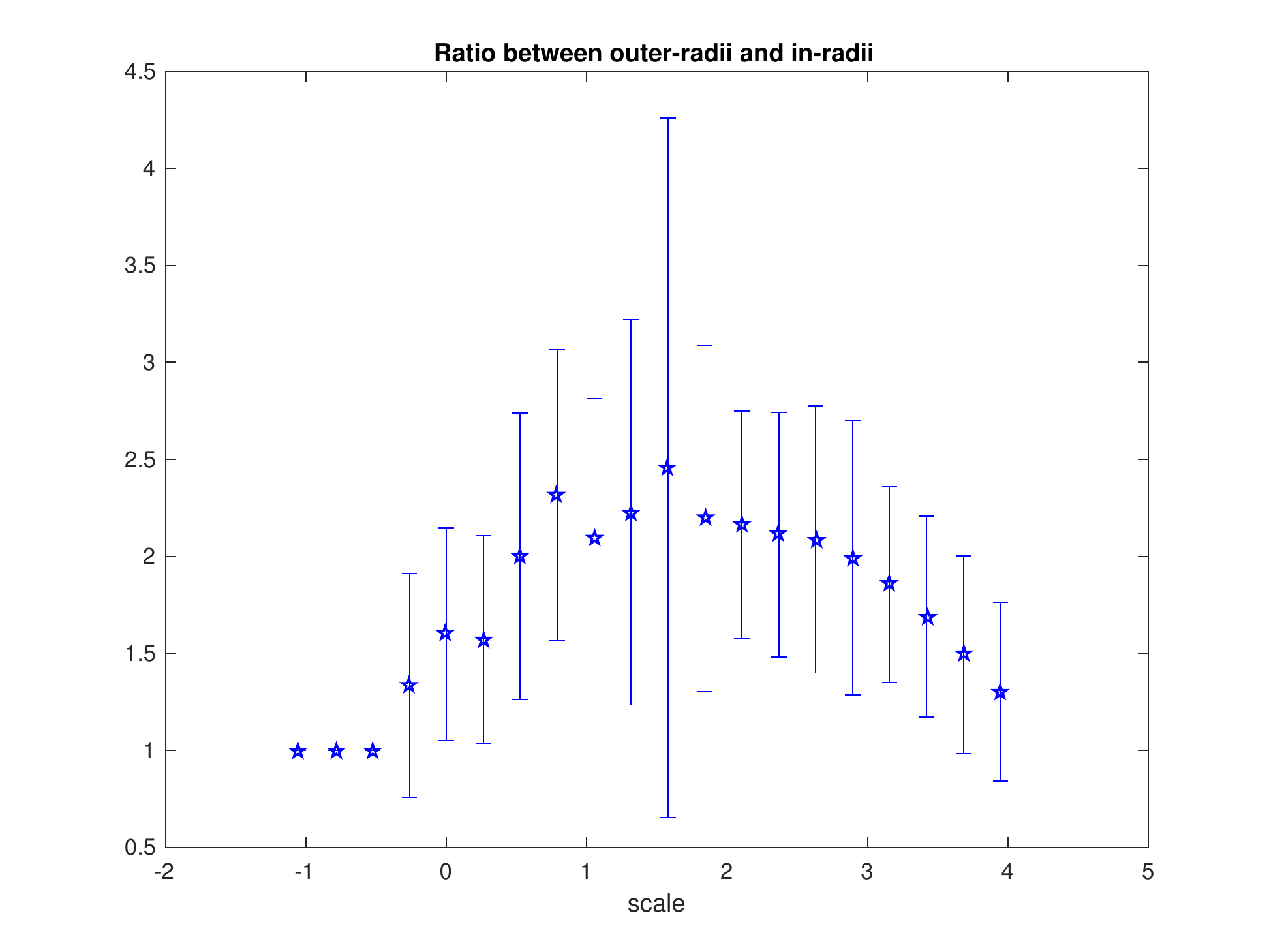}   
      \includegraphics[height=0.8\textwidth]{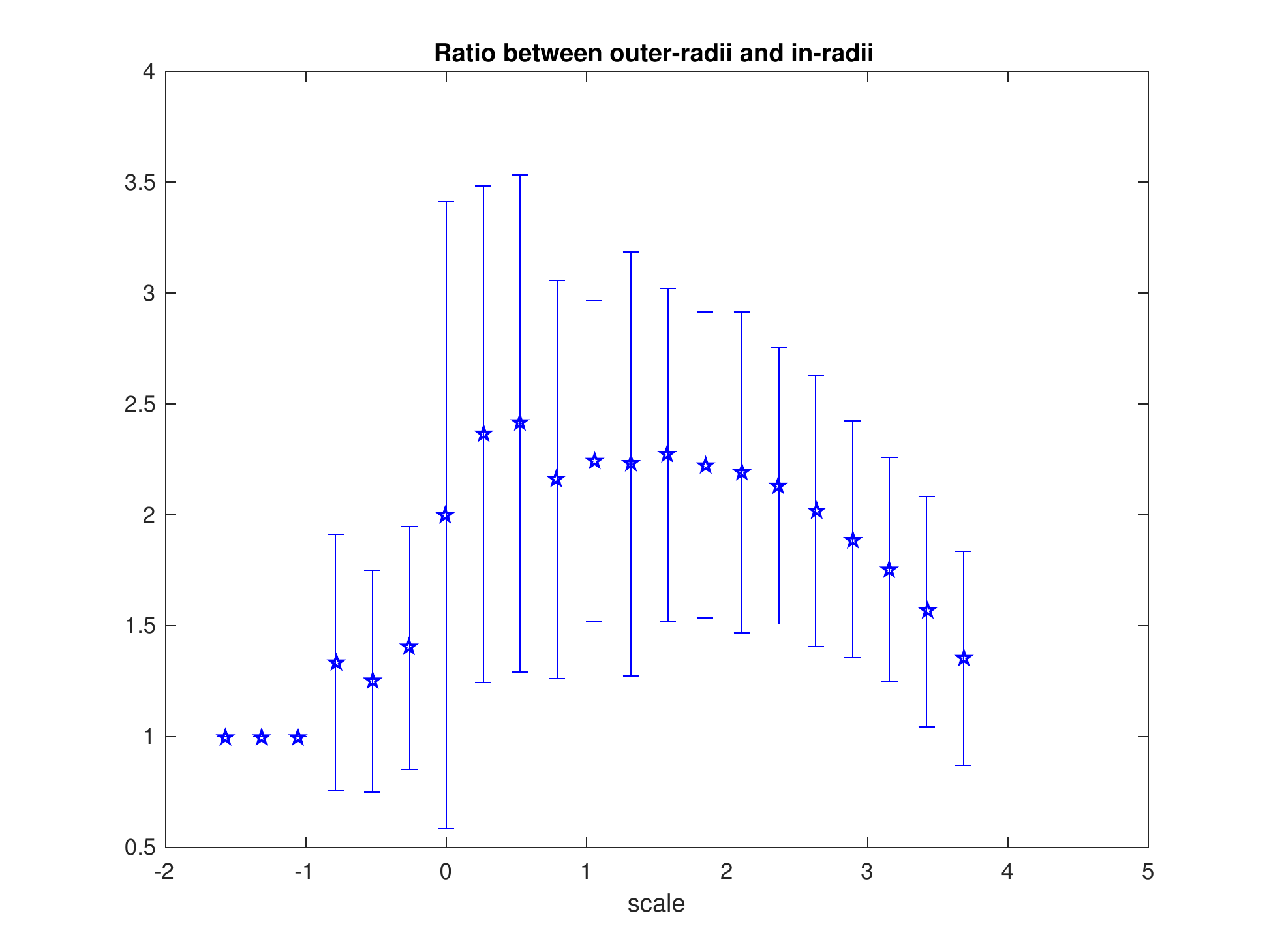}     
      \end{minipage}
      \caption{From left to right: the in-radius and outer-radius of a pentagon;  $\log_2$ of the outer-radius and the statistics of the in-radius versus the scale of cover tree for the $3$-dim S manifold, and then the same plot for the $3$-dim Z manifold; ratio between outer-radii and in-radii, for the $3$-dim S manifold top) and the $3$-dim Z manifold (bottom).
               }    
  \label{FigCovertree}
\end{figure}

\subsection{$\AS^\infty \subset \BS$}
\label{appasinf}

\begin{proof}[proof of Lemma \ref{l:AsinBs}]
Assume $\rho(\Cjk) \asymp 2^{-jd}$.
According to Definition \ref{defAsInf}, $\rho\in \AS^\infty$ if $\| (X - \calPjk  X)\chijk\| \le |\rho|_{\AS^\infty} 2^{-js} \sqrt{\rho(\Cjk)}, \ \forall k \in \calK_j, j \ge \jmin$, which implies
$\Deltajk \le 2 |\rho|_{\AS^\infty} 2^{-js} \sqrt{\rho(\Cjk)} \lesssim |\rho|_{\AS^\infty} 2^{-j(s+\frac d 2)}$. 

Let $\eta >0$ and $\calTrhoeta$ be the smallest proper subtree of $\calT$ that contains all $\Cjk$ for which $\Deltajk \ \ge 2^{-j}\eta$. All the nodes satisfying $\Deltajk \ge 2^{-j}\eta$ will satisfy $|\rho|_{\AS^\infty} 2^{-j(s+\frac d 2)} \gtrsim 2^{-j} \eta$ which implies $2^{-j} \gtrsim ({\eta}/{|\rho|_{\AS^\infty}})^{\frac{2}{2s+d-2}}$. Therefore, the truncated tree $\calTrhoeta$ is contained in $\calT_{j^*} = \cup_{j \le j^*} \Lam_j$ with $2^{-j^*} \asymp (\eta/|\rho|_{\AS^\infty})^{\frac{2}{2s+d-2}}$, so the entropy of $\calTrhoeta$ is upper bounded by the entropy of $\calT_{j^*}$, which is $\sum_{j \le j^*} 2^{-2j}\#\Lam_j \asymp 2^{j^* (d-2)} \asymp (\eta/|\rho|_{\AS^\infty})^{-\frac{2(d-2)}{2s+d-2}}$. Then $\rho\in\calB_{s}$ and $|\rho|_{\BS} \lesssim |\rho|_{\AS^\infty}$ according to Definition \ref{defBs}.
\end{proof}



\commentout{
\subsection{Regularity of $\Cjk$}

In Proposition \ref{propcovertree1}, we prove that every $\Cjk$ constructed by the cover tree algorithm is contained in a ball of radius $2\cdot 2^{-j}$ on $\calM$. The ideal $\Cjk$ should also contain a ball of radius $\theta_0 2^{-j}$ on $\calM$ for some $\theta_0 >0$.
In numerical experiments, we observe that if $\rho$ is doubling, then the majority of the $\Cjk$'s constructed by the cover tree algorithm satisfy this condition. In Fig. \ref{FigCovertree}, we take the volume measures on the $3$-dim S and Z manifold, and plot the outer-radius and the statistics of the in-radius 
\footnote{The in-radius of $\Cjk$ is approximately computed as follows: we randomly pick a center, and evaluate the largest radius with which the ball contains at least $95\%$ points from $\Cjk$. This procedure is repeated for two centers, and then we pick the maximal radius as an approximation of the in-radius.} 
in the $\log_2$ scale versus the scale of cover tree. Our results show that the in-radius is a fraction of the outer-radius at all scales.

\begin{figure}[hthp]
 \centering
    \includegraphics[width=3.4cm]{Fig23_SandZ/Sample100000/V2/InOuterRadii.png}
    \hspace{-0.4cm}
    \includegraphics[clip,trim=20 5 25 30,width=.4\textwidth]{Fig23_SandZ/Sample100000/V2/InRadii_SDim3-eps-converted-to.pdf}
        \hspace{-0.5cm}
    \includegraphics[clip,trim=20 5 25 30,width=.4\textwidth]{Fig23_SandZ/Sample100000/V2/InRadii_ZDim3-eps-converted-to.pdf}                   \caption{Left: the in-radius and outer-radius of a pentagon;  Middle: the outer-radius and the statistics of the in-radius in the $\log_2$ scale versus the scale of cover tree for the $3$-dim S manifold; Right: the same plot for the $3$-dim Z manifold.
               }    
  \label{FigCovertree}
\end{figure}
}

\section{S manifold and Z manifold}
\label{AppSZ}

We consider volume measures on the $d$ dimensional S manifold and Z manifold whose $x_1$ and $x_2$ coordinates are on the S curve and Z curve in Figure \ref{FigASBS} (a) and $x_i, i=3,\ldots,d+1$ are uniformly distributed in $[0,1]$.

\subsection{S manifold}
\label{AppS}

Since S manifold is smooth and has a bounded curvature, the volume measure on the S manifold is in $\calA_2^\infty$.
Therefore, the volume measure on the S manifold is in $\calA_2$ and $\calB_2$ when $d \ge 3$.

\subsection{Z manifold}
\label{AppZ}

\subsubsection{The volume on the Z manifold is in $\calA_{1.5}$}
\label{appZAs}

The uniform distribution on the $d$ dimensional Z manifold is in $\calA_1$ at two corners and satisfies $\|( X - \calPjk X)\chijk\| = 0$ when $\Cjk$ is away from the corners.
There exists $A_0>0$ such that $\|(X - \calPjk X)\chijk\| \le A_0 2^{-j}\sqrt{\rho(\Cjk)}$ when $\Cjk$ intersects with the corners. At scale $j$, there are about $2^{jd}$ cells away from the corners and there are about $2^{j(d-1)}$ cells which intersect with the corners. As a result,
\begin{align*}
\|X-\calP_j X\| & \le \mathcal{O}\left( \sqrt{2^{jd} \cdot 0 \cdot 2^{-jd} + 2^{j(d-1)} \cdot 2^{-2j} \cdot 2^{-jd}}\right  ) = \mathcal{O}(2^{-1.5j}),
\end{align*} 
so the volume measure on Z manifold is in $\calA_{1.5}$.

\subsubsection{Model class $\BS$}
\label{appZBs}
Assume $\rho(\Cjk) \asymp 2^{-jd}$.
We compute the regularity parameter $s$ in the $\BS$ model class when $d \ge  3$. It is easy to see that $\Deltajk = 0$ when $\Cjk$ is away from the corners and $\Deltajk \le 2 A_0 2^{-j} \sqrt{\rho(\Cjk)} \lesssim 2^{-j(\frac d 2 +1)}$ when $\Cjk$ intersects with the corners. Given any fixed threshold $\eta > 0$, in the truncated tree $\calTrhoeta$, the parent of the leaves intersecting with the corners satisfy $2^{-j(\frac d 2 +1)} \gtrsim 2^{-j} \eta $. In other words, at the corners the tree is truncated at a scale coarser than $j^*$ such that $2^{-j^*} = \mathcal{O} (\eta^{\frac 2 d})$. Since $\Deltajk = 0$ when $\Cjk$ is away from the corners, the entropy of $\calTrhoeta$ is dominated by the nodes intersecting with the corners whose cardinality is $2^{j(d-1)}$ at scale $j$. Therefore
\begin{align*}
\text{Entropy of } \calTrhoeta \lesssim 
 \sum_{j \le j^*} 2^{-2j} 2^{j(d-1)} 
 = \mathcal{O} \left( \eta^{-\frac{2(d-3)}{d}}\right),
\end{align*}
which implies that $p \le \frac{2(d-3)}{d}$ and $s \ge \frac{3(d-2)}{2(d-3)} > 1.5$. 

Then we study the relation between the error $\|X-\calP_{\Lamrhoeta}X\|$ and the partition size $\#\Lamrhoeta$, which is numerically verified in Figure \ref{FigDelta}. Since all the nodes in $\calTrhoeta$ that intersect with corners are at a scale coarser than $j^*$, $\#\Lamrhoeta \approx 2^{j^* (d-1)} \asymp \eta^{-\frac{2(d-1)}{d}}$. Therefore, $\eta \lesssim [\#\Lamrhoeta]^{-\frac{d}{2(d-1)}}$ and $$\|X-\calP_{\Lamrhoeta}X\| \lesssim \eta^{\frac{2-p}{2}} = \eta^{\frac{2s}{2s+d-2}} \lesssim [\#\Lamrhoeta]^{-\frac{2sd}{2(d-1)(2s+d-2)}} =  [\#\Lamrhoeta]^{-\frac{3}{2(d-1)}}.$$ 


\section{Proofs of Lemma \ref{thm1} and Proposition \ref{lemma0}}
\label{app1}

\commentout{
We first recall two Bernstein inequalities from \cite{Tropp:userfriendlyRMT}. The first inequality is an exponential inequality for the spectral norm of a sum of independent random vectors.

\begin{proposition}[A rank one case of Corollary 7.3.2 in \cite{Tropp:userfriendlyRMT}]
\label{prop1}
Let $\xi_1,\ldots,\xi_n$ be independent random vectors in $\RR^D$ that satisfy
$$\EE\xi_i = \mathbf{0}\ \text{and}\ \|\xi_i \|\le R, \ i=1,\ldots,n.$$
Form the mean $\xi = \frac{1}{n}\sum_{i=1}^n \xi_i,$ and introduce a variance parameter
$$\sigma^2 = \|\EE(\xi^T\xi)\|.$$
Then 
$$\PP\{\|\xi\|\ge t\} \le 8e^{-\frac{t^2/2}{\sigma^2+Rt/3}} \ \text{ for all } t\ge \sigma + R/3.$$
\end{proposition}
}

\subsection{Concentration inequalities}

We first recall a Bernstein inequality from \citet{Tropp:userfriendlyRMT}
which is an exponential inequality to estimate the spectral norm of a sum independent random Hermitian matrices of size $D\times D$. It features the dependence on an intrinsic dimension parameter which is usually much smaller than the ambient dimension $D$. For a positive-semidefinite matrix $A$, the intrinsic dimension is the quantity
$${\rm intdim}(A) = \frac{{\trace}(A)}{\|A\|}.$$

\begin{proposition}[Theorem 7.3.1 in \cite{Tropp:userfriendlyRMT}]
\label{prop2}
Let $\xi_1,\ldots,\xi_n$ be $D\times D$ independent random Hermitian matrices that satisfy
$$\EE\xi_i = 0 \ \text{and} \ \|\xi_i\|\le R, \ i=1,\ldots,n.$$
Form the mean $\xi =\frac{1}{n}\sum_{i=1}^n \xi_i$. Suppose 
$\EE(\xi^2) \preceq \Phi.$ Introduce the intrinsic dimension parameter
$\din = \intdim(\Phi).$
Then, for $n t \ge n\|\Phi\|^{1/2}+R/3$,
$$\PP\{\|\xi\|\ge t\} \le 4\din e^{-\frac{n t^2/2}{n\|\Phi\|+Rt/3}}.$$
\end{proposition}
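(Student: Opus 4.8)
The statement to be established is the intrinsic-dimension matrix Bernstein inequality, and the plan is to reproduce the matrix Laplace-transform method, but to replace the usual ambient-dimension factor $D$ by $\din=\intdim(\Phi)$ via a carefully chosen scalar test function. First I would reduce the spectral-norm tail $\PP\{\|\xi\|\ge t\}$ to two one-sided bounds on $\lambda_{\max}(\xi)$ and $\lambda_{\max}(-\xi)$, combined by a union bound; this accounts for one factor of $2$ in the constant $4\din$. From here everything is carried out for $\lambda_{\max}$.

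The second step is to control the matrix moment generating function of $\xi=\frac1n\sum_i\xi_i$. Writing $X_i=\xi_i/n$, so that $\|X_i\|\le R/n$ and $\EE X_i=0$, I would invoke subadditivity of matrix cumulant generating functions (a consequence of Lieb's concavity theorem) to obtain $\log\EE e^{\theta\xi}\preceq\sum_i\log\EE e^{\theta X_i}$ for $\theta>0$. For a single summand, the scalar inequality $e^{\theta x}\le 1+\theta x+g(\theta)x^2$, valid for $\theta x\le\theta R/n$ because $(e^s-s-1)/s^2$ is increasing, transfers to Hermitian matrices and gives $\EE e^{\theta X_i}\preceq I+g(\theta)\,\EE X_i^2$, with $g(\theta)=(e^{\theta R/n}-\theta R/n-1)/(R/n)^2$; the linear term drops since $\EE X_i=0$. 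Using $\log(I+A)\preceq A$, summing over $i$, and noting $\sum_i\EE X_i^2=\EE\xi^2\preceq\Phi$, I arrive at the key operator bound $\EE e^{\theta\xi}\preceq e^{g(\theta)\Phi}$.

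The crux, and the step I expect to be the main obstacle, is the intrinsic-dimension Laplace argument. Rather than the standard estimate $\PP\{\lambda_{\max}(\xi)\ge t\}\le e^{-\theta t}\trace\EE e^{\theta\xi}$, which forces a factor $\trace I=D$, I would use the test function $\psi(s)=e^s-s-1$, which is nonnegative, increasing on $[0,\infty)$, and vanishes to second order at the origin. Since $\lambda_{\max}(\xi)\ge t>0$ implies $\psi(\theta\lambda_{\max}(\xi))\ge\psi(\theta t)$, Markov's inequality together with $\psi(\theta\lambda_{\max}(\xi))\le\trace\psi(\theta\xi)$ (valid as $\psi\ge0$) gives $\PP\{\lambda_{\max}(\xi)\ge t\}\le\psi(\theta t)^{-1}\trace\EE\psi(\theta\xi)$. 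Because $\EE\xi=0$ we have $\trace\EE\psi(\theta\xi)=\trace(\EE e^{\theta\xi}-I)\le\trace(e^{g(\theta)\Phi}-I)$, and convexity of $x\mapsto e^x-1$ on $[0,g(\theta)\|\Phi\|]$ yields
$$\trace\bigl(e^{g(\theta)\Phi}-I\bigr)=\sum_i\bigl(e^{g(\theta)\lambda_i(\Phi)}-1\bigr)\le\frac{\trace\Phi}{\|\Phi\|}\bigl(e^{g(\theta)\|\Phi\|}-1\bigr)=\din\,\bigl(e^{g(\theta)\|\Phi\|}-1\bigr).$$
This is precisely where the ambient dimension $D$ is replaced by $\din=\trace\Phi/\|\Phi\|$: the double zero of $\psi$ at the origin discounts the eigenvalues of $\Phi$ near zero, which otherwise each contribute $1$ to the trace.

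Finally I would assemble the one-sided bound $\PP\{\lambda_{\max}(\xi)\ge t\}\le\din\,(e^{g(\theta)\|\Phi\|}-1)/(e^{\theta t}-\theta t-1)$, estimate $g(\theta)\le\tfrac{\theta^2/2}{1-\theta R/(3n)}$ through the standard bound on $(e^s-s-1)/s^2$, and optimize over $\theta$ with the choice $\theta=nt/(n\|\Phi\|+Rt/3)$ to produce the Bernstein exponent $-\tfrac{nt^2/2}{n\|\Phi\|+Rt/3}$. The side condition $nt\ge n\|\Phi\|^{1/2}+R/3$ enters exactly in simplifying the ratio $(e^{g\|\Phi\|}-1)/(e^{\theta t}-\theta t-1)$ into the clean exponential form (it guarantees $\theta t$ is large enough that $e^{\theta t}-\theta t-1\ge\tfrac12 e^{\theta t}$ and lets the lower-order terms be absorbed into a constant), which contributes the remaining factor of $2$; the union over $\pm\xi$ then gives the stated constant $4\din$. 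I expect steps one, two, and four to be routine given Lieb's inequality, while the conceptual content sits entirely in the choice of $\psi$ and the trace convexity estimate of step three.
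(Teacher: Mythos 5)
This proposition is not proved in the paper at all: it is imported verbatim, with a citation, as Theorem 7.3.1 of Tropp's monograph, so there is no internal proof to compare yours against. What you have written is, in outline, Tropp's own argument: the reduction to $\lambda_{\max}$, the master bound $\EE e^{\theta\xi}\preceq e^{g(\theta)\Phi}$ via Lieb's theorem and subadditivity of matrix cumulant generating functions, the modified Laplace-transform step with $\psi(s)=e^s-s-1$ (whose double zero at the origin is exactly what lets the trace be paid for by $\din$ rather than $D$), and the trace-convexity estimate $\trace\bigl(\varphi(\Phi)\bigr)\le\din\,\varphi(\|\Phi\|)$ for convex $\varphi$ with $\varphi(0)=0$. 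All of these steps are sound, and the third one is indeed the conceptual heart of the intrinsic-dimension improvement.

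The one place your bookkeeping goes wrong is the final constant. The side condition $nt\ge n\|\Phi\|^{1/2}+R/3$ only guarantees $\theta t=\frac{nt^2}{n\|\Phi\|+Rt/3}\ge 1$ (take $R=0$ and $t=\|\Phi\|^{1/2}$ to get $\theta t=1$ exactly), and at $q=\theta t=1$ your claimed inequality $e^{q}-q-1\ge\tfrac12 e^{q}$ fails ($e-2\approx0.72$ versus $e/2\approx1.36$); it only holds for $q$ above roughly $1.68$. Tropp instead bounds the prefactor by $\frac{e^{q}}{e^{q}-q-1}\le 1+\frac{3}{q^{2}}\le 4$ for $q\ge1$, and this is the sole source of the constant $4$ in his one-sided bound for $\lambda_{\max}$. (Strictly speaking, the two-sided form $\PP\{\|\xi\|\ge t\}\le 4\din e^{-\cdots}$ as quoted in the paper costs an extra union-bound factor of $2$ over Tropp's Hermitian statement, unless one routes through the dilation corollary; that looseness sits in the paper's transcription, not in your argument.) With that single repair to the prefactor estimate, your proof is complete and matches the source.
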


We use the above inequalities to estimate the deviation of the empirical mean from the mean and the deviation of the empirical covariance matrix from the covariance matrix when the data $\calX_{j,k} = \{x_1,\ldots,x_n\}$ (with a slight abuse of notations) are i.i.d. samples from the distribution $\rho|_{\Cjk}$.

\begin{lemma}
\label{lemma1}
Suppose $x_1,\ldots,x_n$ are i.i.d. samples from $\rho|_{\Cjk}$. 
Let 
\begin{eqnarray*}
c_{j,k} = \int_{C_{j,k}} xd\rho|_{\Cjk} &,& \hc_{j,k} := \frac 1 n \sum_{i=1}^n x_i  \\
\Sigma_{j,k} = \int_{C_{j,k}} (x-c_{j,k})(x-c_{j,k})^T d\rho|_{\Cjk} &,&\hS_{j,k} := \frac{1}{n}\sum_{i=1}^n (x_i- \hc_{j,k})(x_i -\hc_{j,k})^T
\end{eqnarray*}
Then
\begin{eqnarray}
\label{eqc1}
\PP\{\|\hc_{j,k} -c_{j,k}\| \ge t \}  &\le& 8e^{-\frac{3n t^2}{6\theta_2^2 2^{-2j}+2\theta_2 2^{-j}t} }\,,\\
\PP\{\|\hSjk - \Sjk\| \ge t\} &\le& \left(\frac{4\theta_2^2 }{\theta_3}d+8\right)e^{-\frac{3nt^2}{24\theta_2^4 2^{-4j} +8\theta_2^2 2^{-2j}t}}\,.
\label{eqS1}
\end{eqnarray}
\end{lemma}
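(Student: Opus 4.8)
The plan is to apply the intrinsic-dimension Bernstein inequality (Proposition \ref{prop2}) to both quantities, after reducing each to the empirical mean of i.i.d., bounded, centered Hermitian matrices, and to read off the prefactors $8$ and $\frac{4\theta_2^2}{\theta_3}d+8$ from the intrinsic dimension together with Assumptions (A4) and (A5)(i).

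For the mean bound \eqref{eqc1}, since Proposition \ref{prop2} is stated for Hermitian matrices I would first pass to the self-adjoint dilation. Writing $w_i := x_i - \cjk$, set
\[
H_i := \begin{pmatrix} 0 & w_i \\ w_i^T & 0 \end{pmatrix},
\]
a centered $(D+1)\times(D+1)$ Hermitian matrix with $\|H_i\|=\|w_i\|\le\theta_2 2^{-j}$ by (A4), whose empirical mean $\frac1n\sum_i H_i$ has norm exactly $\|\hcjk-\cjk\|$. Since $H_i^2=\mathrm{diag}(w_iw_i^T,\|w_i\|^2)$, one gets $\EE\big[(\tfrac1n\sum_iH_i)^2\big]=\tfrac1n\,\mathrm{diag}(\Sjk,\trace\Sjk)=:\Phi$. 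Because $\trace\Sjk\ge\|\Sjk\|$, we have $\|\Phi\|=\trace(\Sjk)/n\le\theta_2^2 2^{-2j}/n$ and $\din=\intdim(\Phi)=2$, so Proposition \ref{prop2} delivers the factor $4\din=8$ and the exponent $\frac{nt^2/2}{\theta_2^2 2^{-2j}+\theta_2 2^{-j}t/3}$, which is \eqref{eqc1} after clearing denominators.

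For the covariance bound \eqref{eqS1} I would use the identity $\hSjk=\widetilde\Sigma_{j,k}-(\hcjk-\cjk)(\hcjk-\cjk)^T$, where $\widetilde\Sigma_{j,k}:=\frac1n\sum_i w_iw_i^T$ is the covariance centered at the true mean, so that $\|\hSjk-\Sjk\|\le\|\widetilde\Sigma_{j,k}-\Sjk\|+\|\hcjk-\cjk\|^2$. Applying Proposition \ref{prop2} to $Y_i:=w_iw_i^T-\Sjk$ (Hermitian, mean zero, with $\|Y_i\|\le 2\theta_2^2 2^{-2j}$) and using $\EE[\|w_i\|^2 w_iw_i^T]\preceq\theta_2^2 2^{-2j}\Sjk$ gives $\EE\big[(\frac1n\sum_iY_i)^2\big]\preceq\frac{\theta_2^2 2^{-2j}}{n}\Sjk=:\Phi'$. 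Here $\|\Phi'\|\le\theta_2^4 2^{-4j}/n$ while $\intdim(\Phi')=\trace(\Sjk)/\|\Sjk\|\le\frac{\theta_2^2 2^{-2j}}{\theta_3 2^{-2j}/d}=\frac{\theta_2^2}{\theta_3}d$, using $\trace\Sjk\le\theta_2^2 2^{-2j}$ (from (A4)) and $\|\Sjk\|\ge\lambda_d^{j,k}\ge\theta_3 2^{-2j}/d$ (from (A5)(i)); this is precisely where the ambient dimension $D$ is replaced by $d$. Evaluating at level $t/2$ yields $\PP\{\|\widetilde\Sigma_{j,k}-\Sjk\|\ge t/2\}\le\frac{4\theta_2^2}{\theta_3}d\,e^{-\frac{3nt^2}{24\theta_2^4 2^{-4j}+8\theta_2^2 2^{-2j}t}}$, matching both the target exponent and the first part of the prefactor.

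The final, and only genuinely delicate, step is to combine the two contributions via $\{\|\hSjk-\Sjk\|\ge t\}\subseteq\{\|\widetilde\Sigma_{j,k}-\Sjk\|\ge t/2\}\cup\{\|\hcjk-\cjk\|^2\ge t/2\}$ and to re-use \eqref{eqc1} at the level $\sqrt{t/2}$ on the second event. Since $\|w_i\|\le\theta_2 2^{-j}$ forces $\|\hcjk-\cjk\|\le\theta_2 2^{-j}$, and $0\preceq\hSjk\preceq\widetilde\Sigma_{j,k}$ forces $\|\hSjk-\Sjk\|\le 2\theta_2^2 2^{-2j}$ deterministically, it suffices to treat $t\le 2\theta_2^2 2^{-2j}$; in that range $\sqrt{t/2}\le\theta_2 2^{-j}$, so the exponent produced by \eqref{eqc1} is at least $\frac{3nt}{16\theta_2^2 2^{-2j}}$, which dominates the target exponent exactly when $t\le 3\theta_2^2 2^{-2j}$ — hence on the whole relevant range. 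The mean-squared term therefore contributes at most $8\,e^{-\frac{3nt^2}{24\theta_2^4 2^{-4j}+8\theta_2^2 2^{-2j}t}}$, and summing the two bounds gives the prefactor $\frac{4\theta_2^2}{\theta_3}d+8$. I expect this bookkeeping — matching exponents after the $t/2$ split and verifying that the mean term dominates on the admissible range — to be the main obstacle; the small-$t$ regime where the side condition $nt\ge n\|\Phi\|^{1/2}+R/3$ of Proposition \ref{prop2} fails is harmless, because there the claimed right-hand side already exceeds $1$.
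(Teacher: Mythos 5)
Your proof is correct and follows essentially the same route as the paper's: the same decomposition $\hSjk=\bar\Sigma_{j,k}-(\hcjk-\cjk)(\hcjk-\cjk)^T$, the same application of Proposition \ref{prop2} with the intrinsic dimension bounded via (A4) and (A5)(i), and the same reuse of \eqref{eqc1} for the rank-one correction. The only (harmless) deviations are that you obtain the vector Bernstein bound by self-adjoint dilation of Proposition \ref{prop2} rather than citing a separate rank-one corollary, and that you split the correction term at $\|\hcjk-\cjk\|^2\ge t/2$ with a range restriction on $t$, whereas the paper first uses $\|\hcjk-\cjk\|^2\le\theta_2 2^{-j}\|\hcjk-\cjk\|$ and thresholds at $2^jt/(2\theta_2)$; both yield the stated constants.
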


\begin{proof}
We start by proving \eqref{eqc1}.
We will apply Bernstein inequality with $\xi_i = x_i - c_{j,k} \in \RR^{D}$. Clearly $\EE\xi_i = 0$, and $\|\xi_i\| \le {\theta_2 2^{-j}}$
due to Assumption (A4). We form the mean $\xi = \frac{1}{n}\sum_{i=1}^n \xi_i = \hc_{j,k} - c_{j,k}$
and compute the variance
\begin{align*}
\sigma^2 
 &= n^2\|\EE\xi^T\xi\|
= \left\| \EE\left (\sum_{i=1}^{n} {x_i-c_{j,k}}\right)^T \left (\sum_{i=1}^{n} {x_i-c_{j,k}}\right)\right\|
 = \left\| \sum_{i=1}^n {\EE (x_i -c_{j,k})^T (x_i -c_{j,k})}\right\|\le n{\theta_2^2 2^{-2j}}.
\end{align*}
Then for $nt \ge \sigma+\theta_2 2^{-j}/3$,
$$\PP\{\|\hc_{j,k} -c_{j,k}\| \ge t \} \le 
8 e^{-\frac{n^2 t^2/3}{\sigma^2 + \theta_2 2^{-j}nt/3} }
\le8e^{-\frac{3n t^2}{6\theta_2^2 2^{-2j}+2\theta_2 2^{-j}t} }.$$

We now prove \eqref{eqS1}. Define the intermediate matrix $\bar\Sigma_{j,k} =\frac 1 n \sum_{i=1}^n (x_i -c_{j,k})(x_i - c_{j,k})^T.$
Since 
$\hS_{j,k} - \Sigma_{j,k} = \bar\Sigma_{j,k} -\Sjk - (\hcjk-\cjk)(\hcjk-\cjk)^T,$
we have
$$\|\hS_{j,k} - \Sigma_{j,k}\| \le
\|\bar\Sigma_{j,k} -\Sjk\| + \| \hcjk-\cjk\|^2
\le
\|\bar\Sigma_{j,k} -\Sjk\| + \theta_2 2^{-j}\| \hcjk-\cjk\|.$$
A sufficient condition for 
$\|\hS_{j,k} - \Sigma_{j,k}\| < t$ is $\|\bar\Sigma_{j,k} -\Sjk\| < t/2$ 
and 
$\| \hcjk-\cjk\|<2^j t/(2\theta_2)$. 
We apply Proposition \ref{prop2} to estimate $\PP\{\|\bar\Sigma_{j,k} -\Sjk\| \ge t/2\}$:
let 
$\xi_i =  (x_i -\cjk)(x_i -\cjk)^T - \Sjk \in \RR^{D \times D}.$ One can verify that 
$\EE\xi_i = 0
\ \text{and} \ 
\|\xi_i\| \le {2\theta_2^2 2^{-2j}}.$
We form the mean
$\xi = \frac{1}{n}\sum_{i=1}^n \xi_i = \bar\Sigma_{j,k} -\Sjk$,
and then
\begin{align*}
\EE\xi^2 
& = \EE \left(\frac{1}{n^2} \sum_{i=1}^n \xi_i  \sum_{i=1}^n \xi_i \right)
= \frac{1}{n^2}\sum_{i=1}^n \EE \xi_i^2 
\preceq
\frac{1}{n^2}\sum_{i=1}^n  {\theta_2^2 2^{-2j}}\Sjk
\preceq \frac{\theta_2^2 2^{-2j}}{n}\Sjk,
\end{align*}
which satisfies 
$\left\|\frac{\theta_2^2 2^{-2j}}{n}\Sjk\right\| \le {\theta_2^4 2^{-4j}}/{n}.$
Meanwhile
$$\din = \intdim(\Sjk) = \frac{\trace(\Sjk)}{\|\Sjk\|} 
\le
\frac{\theta_2^2 2^{-2j}}{\theta_3 2^{-2j}/d} = \frac{\theta_2^2}{\theta_3} d.$$
Then, Proposition \ref{prop2} implies
$$
\PP\{\| \bar\Sigma_{j,k} -\Sjk\| \ge t/2 \}
\le
  \frac{4\theta_2^2}{\theta_3} d 
e^{
\frac{-n t^2/8} {\theta_2^4 2^{-4j} + \frac{\theta_2^2 2^{-2j}  t}{3}}
}
=
 \frac{4\theta_2^2}{\theta_3} d 
e^{
\frac{-3nt^2}{24\theta_2^4 2^{-4j} +8\theta_2^2 2^{-2j}t}
}\,.
$$
Combining with \eqref{eqc1}, we obtain
\begin{align*}
\PP\{\|\hSjk - \Sjk\| \ge t\}
&\le
\PP\{\| \bar\Sigma_{j,k} -\Sjk\| \ge t/2 \} 
+
\PP\left\{\|\hcjk-\cjk \| \ge \frac{2^j t}{2\theta_2}\right\} 
\\
& \le \left(\frac{4\theta_2^2 }{\theta_3}d+8\right)
e^{-
\frac{3nt^2}{24\theta_2^4 2^{-4j} +8\theta_2^2 2^{-2j}t}}\,.
\end{align*}

\end{proof}

In Lemma \ref{lemma1} data are assumed to be i.i.d. samples from the conditional distribution $\rho|_{\Cjk}$. Given $\calX_n = \{x_1,\ldots,x_n\}$ which contains i.i.d. samples from $\rho$, we will show that the empirical measure $\hrho(C_{j,k}) = \hnjk/n$ is close to $\rho(C_{j,k})$ with high probability.

\begin{lemma}
\label{lemma2}
Suppose $x_1,\ldots,x_n$ are i.i.d. samples from $\rho$. Let $\rho(\Cjk) = \int_{\Cjk} 1 d\rho$ and $\hrho(\Cjk) = \hnjk/n$ where $\hnjk$ is the number of points in $\Cjk$. Then
\beq
\label{eqrho1}
\PP\{\left|\hrho(\Cjk)-\rho(\Cjk)\right| \ge t \} 
\le 
2e^{-\frac{3nt^2}{6\rho(C_{j,k}) +2t}}
\eeq
for all $t \ge 0$. Setting $t = \frac 1 2 \rho(\Cjk)$ gives rise to
\beq
\label{eqrho2}
\PP\left\{|\hrho(\Cjk)-\rho(\Cjk)| \ge \frac{1}{2}\rho(\Cjk) \right\} 
\le 
2e^{-\frac{3}{28}n\rho(\Cjk)}.
\eeq
\end{lemma}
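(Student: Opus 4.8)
The plan is to recognize that $\hnjk=\sum_{i=1}^n\mathbf{1}_{\Cjk}(x_i)$ is a sum of $n$ i.i.d. Bernoulli random variables with parameter $\rho(\Cjk)$, so that the statement is a routine Bernstein-type concentration bound for a sample average. Concretely, I would set $Y_i:=\mathbf{1}_{\Cjk}(x_i)-\rho(\Cjk)$, so that $n(\hrho(\Cjk)-\rho(\Cjk))=\sum_{i=1}^n Y_i$. The $Y_i$ are i.i.d., mean zero, bounded by $|Y_i|\le\max(\rho(\Cjk),1-\rho(\Cjk))\le 1$, and have variance $\EE Y_i^2=\rho(\Cjk)(1-\rho(\Cjk))\le\rho(\Cjk)$.

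First I would invoke the scalar, two-sided Bernstein inequality: for i.i.d. mean-zero $Y_i$ with $|Y_i|\le M$ almost surely and $\sum_i\EE Y_i^2\le v$, one has $\PP\{|\sum_i Y_i|\ge s\}\le 2\exp(-\tfrac{s^2/2}{v+Ms/3})$. Applying it with $M=1$, $v=n\rho(\Cjk)$, and $s=nt$, and then simplifying the exponent, gives
\[
\PP\{|\hrho(\Cjk)-\rho(\Cjk)|\ge t\}\le 2\exp\Big(-\frac{n^2t^2/2}{n\rho(\Cjk)+nt/3}\Big)=2\exp\Big(-\frac{3nt^2}{6\rho(\Cjk)+2t}\Big),
\]
which is exactly \eqref{eqrho1}. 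I would remark that this is simply the specialization to $1\times1$ Hermitian matrices of the matrix Bernstein inequality of Proposition \ref{prop2}: the intrinsic dimension collapses to $\din=1$, and treating the two tails separately and applying a union bound produces the prefactor $2$ in place of $4\din$.

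For the corollary \eqref{eqrho2} I would substitute $t=\tfrac12\rho(\Cjk)$ into the exponent and compute $\frac{3n(\rho(\Cjk)/2)^2}{6\rho(\Cjk)+2\cdot\rho(\Cjk)/2}=\frac{(3/4)n\rho(\Cjk)^2}{7\rho(\Cjk)}=\frac{3}{28}n\rho(\Cjk)$, yielding the stated $2e^{-\frac{3}{28}n\rho(\Cjk)}$.

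There is no genuine obstacle here; the only care needed is in the constants. Using the crude bounds $|Y_i|\le 1$ and $\EE Y_i^2\le\rho(\Cjk)$ (discarding the factor $1-\rho(\Cjk)$) is precisely what yields the exponent $3nt^2/(6\rho(\Cjk)+2t)$, and using the two-sided scalar form rather than the matrix form with its $4\din$ prefactor is what yields the factor $2$. If one preferred to stay strictly within the framework already set up in the paper, one would apply Proposition \ref{prop2} with $\xi_i=Y_i$ viewed as $1\times1$ matrices, $\Phi=\rho(\Cjk)/n$, $R=1$, $\din=1$, and recover the factor $2$ by the separate two-tail argument.
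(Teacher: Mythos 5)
Your proof is correct: the paper states Lemma \ref{lemma2} without proof, treating it as the standard scalar two-sided Bernstein inequality for the Bernoulli variables $\mathbf{1}_{\Cjk}(x_i)$, and your exponent $\frac{3nt^2}{6\rho(\Cjk)+2t}$ is exactly the form $\frac{s^2/2}{v+Ms/3}$ with $v=n\rho(\Cjk)$, $M=1$, $s=nt$ that the paper uses in the neighbouring Lemma \ref{lemma1}. The substitution $t=\tfrac12\rho(\Cjk)$ giving $\tfrac{3}{28}n\rho(\Cjk)$ is also computed correctly, so there is nothing to add.
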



Combining Lemma \ref{lemma1} and Lemma \ref{lemma2} gives rise to probability bounds on $\|\hcjk-\cjk\|$ and $\|\hSjk-\Sjk\|$ where $\cjk$, $\hcjk$, $\Sjk$ and $\hSjk$ are the conditional mean, empirical conditional mean, conditional covariance matrix and empirical conditional covariance matrix on $\Cjk$, respectively.

\begin{lemma}
\label{lemma3}
Suppose $x_1,\ldots,x_n$ are i.i.d. samples from $\rho$. Define $\cjk,\Sjk$ and $\hcjk,\hSjk$ as Table \ref{TableGMRA}. Then given any $t>0$,
\begin{eqnarray}
\label{eqc2}
\PP\left\{\|\hcjk-\cjk\|\ge t \right\}
&\le&
2e^{-\frac{3}{28}n\rho(\Cjk)}
+ 
8e^{-\frac{3 n\rho(\Cjk) t^2}{12\theta_2^2 2^{-2j}+4\theta_2 2^{-j}t} }\,,\\
\label{eqS2}
\PP\left\{\|\hSjk-\Sjk\|\ge t \right\}
&\le& 
2e^{-\frac{3}{28}n\rho(\Cjk)}
+
\left(\frac{4\theta_2^2 }{\theta_3}d+8\right)
e^{-
\frac{3 n\rho(\Cjk) t^2}{96\theta_2^4 2^{-4j} +16\theta_2^2 2^{-2j}t}
}\,.
\end{eqnarray}

\end{lemma}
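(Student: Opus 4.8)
The plan is to bootstrap from Lemma \ref{lemma1}, which gives concentration for i.i.d. samples drawn from the \emph{conditional} distribution $\rho|_{\Cjk}$, to the setting of Lemma \ref{lemma3}, where the samples $x_1,\dots,x_n$ are i.i.d. from $\rho$ and only a random subset (of size $\hnjk$) actually lands in $\Cjk$. The key coupling observation is that, conditioned on the event $\{\hnjk = m\}$ and on which indices fall into $\Cjk$, the points inside $\Cjk$ are exactly $m$ i.i.d. draws from $\rho|_{\Cjk}$, so Lemma \ref{lemma1} applies verbatim with $n$ replaced by the realized count $m$. Thus the only extra ingredient needed is control on how small $\hnjk$ can be, which is precisely what Lemma \ref{lemma2} provides.

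First I would decompose the probability by conditioning on the number of points in the cell. Write
\beq
\PP\{\|\hcjk - \cjk\| \ge t\}
\le
\PP\Big\{\hnjk < \tfrac12 n\rho(\Cjk)\Big\}
+
\PP\Big\{\|\hcjk - \cjk\| \ge t \ \text{ and } \ \hnjk \ge \tfrac12 n\rho(\Cjk)\Big\}.
\eeq
The first term is bounded by \eqref{eqrho2} in Lemma \ref{lemma2}, which gives $2e^{-\frac{3}{28}n\rho(\Cjk)}$, matching the leading term in \eqref{eqc2}. For the second term, I would condition on $\hnjk = m$ with $m \ge \frac12 n\rho(\Cjk)$; given this, $\hcjk$ is the empirical mean of $m$ i.i.d. samples from $\rho|_{\Cjk}$, so the bound \eqref{eqc1} applies with $n$ replaced by $m$. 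Since the exponent in \eqref{eqc1} is monotone increasing in the sample count, substituting the worst case $m = \frac12 n\rho(\Cjk)$ yields a uniform bound
$$
8\,e^{-\frac{3\,(n\rho(\Cjk)/2)\, t^2}{6\theta_2^2 2^{-2j} + 2\theta_2 2^{-j} t}}
= 8\,e^{-\frac{3 n\rho(\Cjk)\, t^2}{12\theta_2^2 2^{-2j} + 4\theta_2 2^{-j} t}},
$$
which is exactly the second term of \eqref{eqc2}. The proof of \eqref{eqS2} is identical in structure: decompose on the same low-count event, bound it by Lemma \ref{lemma2}, and on the complement invoke \eqref{eqS1} with the substitution $n \mapsto \frac12 n\rho(\Cjk)$, producing the stated constants (the factor $2$ in the denominator doubles the $24$ and $8$ into $96$ and $16$).

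The only point requiring a little care — and what I expect to be the main (though modest) obstacle — is justifying the conditioning step rigorously: one must check that, given the count $\hnjk = m$ and the identity of the in-cell indices, the in-cell points are genuinely i.i.d.\ $\rho|_{\Cjk}$ and independent of which indices were selected, so that Lemma \ref{lemma1} can be applied conditionally and then the bound, being uniform over all $m \ge \frac12 n\rho(\Cjk)$, can be pulled back out of the conditional expectation via the tower property. This is a standard binomial-thinning argument, but it is the step where the ``samples from $\rho$'' versus ``samples from $\rho|_{\Cjk}$'' distinction must be handled explicitly; everything else is a direct substitution into the already-established inequalities of Lemmata \ref{lemma1} and \ref{lemma2}.
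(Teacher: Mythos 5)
Your proposal is correct and follows essentially the same route as the paper: the paper likewise conditions on the event $A_\calI$ that a fixed index set lands in $\Cjk$ (so the in-cell points are i.i.d.\ from $\rho|_{\Cjk}$ and Lemma \ref{lemma1} applies with $n$ replaced by the realized count), restricts to $\hnjk \ge \tfrac12 n\rho(\Cjk)$ via Lemma \ref{lemma2}, and adds the two contributions. One cosmetic remark: substituting $n\mapsto \tfrac12 n\rho(\Cjk)$ into \eqref{eqS1} turns $24$ into $48$, not $96$; the paper's own proof also arrives at $48$, and the stated bound with $96$ is just the weaker (hence still valid) version.
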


\begin{proof}
The number of samples on $\Cjk$ is $\hnjk = \sum_{i=1}^n \chijk(x_i)$. Clearly $\EE [\hnjk] = n\rho(\Cjk)$. Let $\calI \subset \{1,\ldots,n\}$ and $|\calI| = s$. Conditionally on the event $A_\calI := \{x_i \in \Cjk \text{ for } i \in \calI \text{ and } x_i \notin \Cjk \text{ for } i \notin \calI\}$, the random variables $\{x_i , i \in \calI\}$ are i.i.d. samples from $\rho|_{\Cjk}$. According to Lemma \ref{lemma2},
\begin{align*}
& \PP\{\|\hcjk-\cjk\| \ge t \ | \ \hn_{j,k} = s\}  
= \sum_{\substack{\calI \subset \{1,\ldots,n\} \\
|\calI| = s}} \PP\{\|\hcjk-\cjk\| \ge t \ | \ A_\calI\} \frac{1}{\binom{n}{s}}
\\
 & = 
 \PP\{\|\hcjk - \cjk\| \ge t \ |\ A_{\{1,\ldots,s\}} \}
 \le
8e^{-\frac{3s t^2}{6\theta_2^2 2^{-2j}+2\theta_2 2^{-j}t} },
\end{align*}
and
$$\PP\{\|\hSjk -\Sjk\| \ge t \ | \ \hn_{j,k} = s \} \le
\left(\frac{4\theta_2^2 }{\theta_3}d+8\right)
e^{-
\frac{3 s t^2}{24\theta_2^4 2^{-4j} +8\theta_2^2 2^{-2j}t}
}\,.$$
Furthermore $|\hrho(\Cjk) -\rho(\Cjk)| \le \frac{1}{2}\rho(\Cjk)$ yields $\hn_{j,k} \ge \frac 1 2 n \rho(\Cjk)$ and then
\beq
\label{lemma3eq1}
\PP\left\{\|\hcjk-\cjk\|\ge t\ \Big|  \ |\hrho(\Cjk) -\rho(\Cjk)| \le \frac{1}{2}\rho(\Cjk)\right\}
\le 
8e^{-\frac{3 n\rho(\Cjk) t^2}{12\theta_2^2 2^{-2j}+4\theta_2 2^{-j}t}}\,,
\eeq
\beq
\label{lemma3eq2}
\PP\left\{\|\hSjk-\Sjk\|\ge t\ \Big|  \ |\hrho(\Cjk) -\rho(\Cjk)| \le \frac{1}{2}\rho(\Cjk)\right\}
\le 
\left(\frac{4\theta_2^2 }{\theta_3}d+8\right)
e^{-
\frac{3 n\rho(\Cjk) t^2}{48\theta_2^4 2^{-4j} +16\theta_2^2 2^{-2j}t}}\,.
\eeq
Eq. \eqref{lemma3eq1} \eqref{lemma3eq2} along with Lemma \ref{lemma2} gives rise to
$$
\PP\left\{\|\hcjk-\cjk\|\ge t \right\}
\le
2e^{-\frac{3}{28}n\rho(\Cjk)}
+ 
8e^{-\frac{3 n\rho(\Cjk) t^2}{12\theta_2^2 2^{-2j}+4\theta_2 2^{-j}t}}\,,
$$
$$
\PP\left\{\|\hSjk-\Sjk\|\ge t \right\}
\le 
2e^{-\frac{3}{28}n\rho(\Cjk)}
+
\left(\frac{4\theta_2^2 }{\theta_3}d+8\right)
e^{-\frac{3 n\rho(\Cjk) t^2}{48\theta_2^4 2^{-4j} +16\theta_2^2 2^{-2j}t}}\,.
$$
\end{proof}

Given $\|\hSjk-\Sjk\|$, we can estimate the angle between the eigenspace of $\hSjk$ and $\Sjk$ with the following proposition.
\begin{proposition}[\citet{DKPerturbation} or Theorem 3 in \citet{ZB_eigenspace}]
\label{prop3}
Let $\delta_d(\Sjk) = \frac 1 2 (\lambda_d^{j,k}-\lambda_{d+1}^{j,k})$. If $\|\hSjk - \Sjk\| \le \frac{1}{2} {\delta_d(\Sjk)} $, then
$$\left\|\proj_{\Vjk} - \proj_{\hVjk}\right\|
\le 
\frac{\|\hSjk - \Sjk\|}{\delta_d(\Sjk)}.$$
\end{proposition}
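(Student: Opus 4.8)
The plan is to recognize this as an instance of the Davis--Kahan $\sin\Theta$ theorem and to reduce the stated inequality to it; the two cited references establish a bound of exactly this form, so the real work is in setting up the hypotheses correctly. Write $E := \hSjk - \Sjk$, let $P := \proj_{\Vjk}$ and $\hat P := \proj_{\hVjk}$ denote the orthogonal projectors onto the top-$d$ eigenspaces of $\Sjk$ and $\hSjk$, and recall the standard fact that for two orthogonal projectors onto subspaces of the same dimension $d$ one has $\|P - \hat P\| = \|(\bbI - \hat P)P\| = \|\sin\Theta\|$, the sine of the largest principal angle between $\Vjk$ and $\hVjk$. Thus it suffices to bound $\|(\bbI - \hat P)P\|$.

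First I would control the spectrum of the perturbed matrix by Weyl's inequality: $|\lambda_i(\hSjk) - \lambda_i^{j,k}| \le \|E\|$ for all $i$. Combined with the smallness assumption $\|E\| \le \tfrac12\delta_d(\Sjk) = \tfrac14(\lambda_d^{j,k} - \lambda_{d+1}^{j,k})$, this guarantees a strict spectral gap for $\hSjk$ (so that $\hVjk$, hence $\hat P$, is well defined) and, crucially, that every eigenvalue of $\hSjk$ on the range of $\bbI - \hat P$ lies at or below $\lambda_{d+1}^{j,k} + \|E\|$, while every eigenvalue of $\Sjk$ on the range of $P$ lies at or above $\lambda_d^{j,k}$. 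Hence the two relevant spectra are separated by
\[
g := \lambda_d^{j,k} - \big(\lambda_{d+1}^{j,k} + \|E\|\big) \ge 2\,\delta_d(\Sjk) - \|E\| \ge \tfrac32\,\delta_d(\Sjk) \ge \delta_d(\Sjk).
\]

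Now I would apply the gap (generalized $\sin\Theta$) form of Davis--Kahan, which bounds $\|(\bbI - \hat P)P\| \le \|E\|/g$. Since $g \ge \delta_d(\Sjk)$, this yields $\|P - \hat P\| \le \|E\|/\delta_d(\Sjk)$, which is the claim. A fully self-contained alternative would represent both $P$ and $\hat P$ as Riesz spectral projectors $\frac{1}{2\pi i}\oint_\Gamma (z\bbI - \,\cdot\,)^{-1}\,dz$ along a \emph{common} contour $\Gamma$ separating the top $d$ eigenvalues from the rest --- the smallness condition being exactly what lets one use the same $\Gamma$ for both matrices by Weyl --- and then estimate, via the resolvent identity, $\hat P - P = \frac{1}{2\pi i}\oint_\Gamma (z\bbI - \hSjk)^{-1}\,E\,(z\bbI - \Sjk)^{-1}\,dz$, using $\|(z\bbI - \,\cdot\,)^{-1}\| = 1/\mathrm{dist}(z,\mathrm{spec})$ on $\Gamma$.

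The main (and essentially only) obstacle is the bookkeeping of constants: the $\sin\Theta$ theorem is the genuine content, and one must be careful that the gap $g$ is measured between the \emph{unperturbed} spectrum of $\Sjk$ on one subspace and the \emph{perturbed} spectrum of $\hSjk$ on the complementary subspace --- not between the two unperturbed spectra --- since it is this asymmetric gap that appears in the sharp bound. In the contour route the analogous subtlety is that a naive circular contour enclosing all of $\lambda_1^{j,k},\dots,\lambda_d^{j,k}$ produces a length factor proportional to the spread $\lambda_1^{j,k} - \lambda_{d+1}^{j,k}$, giving a weaker estimate; obtaining the clean factor $\|E\|/\delta_d(\Sjk)$ requires either the Sylvester-equation argument underlying Davis--Kahan or a contour tailored to the gap rather than to the diameter of the top spectrum.
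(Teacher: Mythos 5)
Your reconstruction is correct: the paper does not prove this proposition but imports it directly from the cited references (it is precisely the Davis--Kahan $\sin\Theta$ bound, as in Theorem 3 of Zwald--Blanchard), and your argument --- Weyl's inequality to certify the perturbed spectral gap $g = \lambda_d^{j,k} - \lambda_{d+1}^{j,k} - \|E\| \ge \tfrac{3}{2}\delta_d(\Sjk)$, followed by the gap form of the $\sin\Theta$ theorem and the identity $\|\proj_{\Vjk}-\proj_{\hVjk}\| = \|(\bbI-\proj_{\hVjk})\proj_{\Vjk}\|$ for equal-rank projectors --- is exactly the standard derivation behind that citation, and in fact yields the slightly stronger constant $\tfrac{2}{3}$. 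Your cautionary remarks about the asymmetric gap and the contour-integral alternative are apt but not needed beyond what you already wrote.
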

According to Assumption (A4) and (A5), $\delta_d(\Sjk) \ge \theta_3 2^{-2j}/(4d)$.
An application of Proposition \ref{prop3} yields
\begin{align}
&\PP\left\{ \|\proj_{\Vjk} - \proj_{\hVjk}
\right\| \ge t\}
 \le
\PP\left\{\|\hSjk-\Sjk\|\ge \frac{\theta_3(1-\theta_4) t}{2d2^{2j}} \right\}
\nonumber
\\ &
\le 
2e^{-\frac{3}{28}n\rho(\Cjk)}
+
\left(\frac{4\theta_2^2 }{\theta_3}d+8\right)
e^{-\frac{3\theta_3^2 (1-\theta_4)^2 n \rho(\Cjk)t^2}{384\theta_2^4 d^2 + 32 \theta_2^2 \theta_3(1-\theta_4) d t}}\,.
\label{prop3eq1}
\end{align}

\begin{proof}[Proof of Lemma \ref{thm1}]
Since
$$\|\calP_\Lam X - \hcalP_\Lam X\|^2
= \sum_{\Cjk \in \Lam}\int_{\Cjk} \|\calP_{j,k} x -\hcalPjk x \|^2 d\rho
= 
\sum_{j}
\sum_{k:\Cjk \in \Lam} \int_{\Cjk} 
\|\calP_{j,k} x -\hcalPjk x \|^2 d\rho,
$$
we obtain the estimate 
\beq
\label{thm1p1}
\PP
\left\{
\|\calP_\Lam X - \hcalP_\Lam X\| \ge \eta
\right\} 
\le
\sum_{j} 
\PP\left\{
\sum_{k:\Cjk \in \Lam} \int_{\Cjk} 
\|\calP_{j,k} x -\hcalPjk x \|^2 d\rho
\ge \frac{2^{-2j} \#_j\Lam\eta^2}{\sum_{j\ge \jmin}2^{-2j}\#_j\Lam}
\right\}.
\eeq
Next we prove that, for any fixed scale $j$, 
\beq
\label{thm1p2}
\PP\left\{
\sum_{k:\Cjk \in \Lam} \int_{\Cjk} 
\|\calP_{j,k} x -\hcalPjk x \|^2 d\rho
\ge t^2
\right\} 
\le
\alpha \#_j\Lam e^{-\frac{\beta 2^{2j} n t^2}{\#_j\Lam}}.
\eeq
Then Lemma \ref{thm1} is proved by setting $t^2 = 2^{-2j} \#_j\Lam \eta^2/(\sum_{j\ge 0} 2^{-2j} \#_j\Lam)$.

The proof of \eqref{thm1p2} starts with the following calculation:
\begin{align*}
&\sum_{k:\Cjk \in \Lam} \int_{\Cjk} 
\|\calP_{j,k} x -\hcalPjk x \|^2 d\rho
\\
& = \sum_{k:\Cjk \in \Lam}
\int_{\Cjk} \|\cjk+\proj_{\Vjk} (x-\cjk)-\hcjk -\proj_{\hVjk}(x-\hcjk)\|^2 d\rho
\\
& 
\le 
\sum_{k:\Cjk \in \Lam}
 \int_{\Cjk} \| (\bbI - \proj_{\hVjk})(\cjk-\hcjk)
+
(\proj_{\Vjk} -\proj_{\hVjk})(x-\cjk)\|^2 d\rho
\\
&
\le 
2\sum_{k:\Cjk \in \Lam}
\int_{\Cjk} 
\left[
\|\cjk-\hcjk\|^2
+
\|(\proj_{\Vjk} -\proj_{\hVjk})(x-\cjk)\|^2 \right] d\rho
\\
& 
\le 
2\sum_{k:\Cjk \in \Lam}
\int_{\Cjk} 
\left[
\|\cjk-\hcjk\|^2
+
\theta_2^2 2^{-2j}\|\proj_{\Vjk} -\proj_{\hVjk}\|^2 \right] d\rho
\end{align*}
For any fixed $j$ and given $t>0$, we divide $\Lam$ into light cells $\Lam_{j,t}^{-}$ and heavy cells $\Lam_{j,t}^{+}$, where
$$\Lam^{-}_{j,t} := \left\{ \Cjk \in \Lam : \rho(\Cjk) \le \frac{t^2}{20\theta_2^2 2^{-2j} \#_j\Lam  }\right\}
\ \text{and} \
\Lam^{+}_{j,t} := \Lam\setminus\Lam^-_{j,t}\,.$$ 
Since $ \int_{\Cjk} 
\left[
\|\cjk-\hcjk\|^2
+
\theta_2^2 2^{-2j}\|\proj_{\Vjk} -\proj_{\hVjk}\|^2 \right] d\rho \le 5\theta_2^2  2^{-2j}\rho(\Cjk)$, for light sets we have
\beq
\label{thm1eq1}
2\sum_{k:\Cjk \in \Lam_{j,t}^{-}} 
\int_{\Cjk}
\left[
\|\cjk-\hcjk\|^2
+
\theta_2^2 2^{-2j}\|\proj_{\Vjk} -\proj_{\hVjk}\|^2 \right] d\rho
\le \frac{t^2}{2}.
\eeq
\commentout{
Notice that Lemma \ref{thm1} holds if $\tilde\calT$ is a proper subtree of the complete master tree $\calT$
since
all cells in $\calT\setminus \calTn$ belong to $\cup_{j\ge 0}\Lam_{j,t}^{-}$ with high probability. Every $\Cjk \in \calT\setminus \calTn$ satisfies $\hrho(\Cjk) = 0$ and we can show that
$$\left\{\hrho(\Cjk)=0 \text{ and } \rho(\Cjk) > \frac{t^2}{20\theta_2^2 2^{-2j} \#_j\Lam}\right\} \le 2e^{-\frac{3}{560\theta_2^2} \cdot \frac{2^{2j}n t^2}{\#_j\Lam}}$$
by applying Lemma \ref{lemma2}. This implies that we will obtain the same probability estimate no matter whether we work on the complete master tree $\calT$ or the \textcolor{red}{data} master tree $\calTn$. 
}
Next we consider 
$\Cjk\in \Lam_{j,t}^{+}$. We have
 \begin{align} 
&\PP\left\{\|\hcjk-\cjk\|\ge \frac{t}{\sqrt{8\#_j\Lam\rho(\Cjk)}} \right\}
\nonumber
\\
&\le
2\exp \left(-\frac{3}{28}n\rho(\Cjk)\right)
+ 
8e^{-\frac{3 n\rho(\Cjk) \frac{t^2}{8\#_j\Lam \rho(\Cjk)}}{12\theta_2^2 2^{-2j}+4\theta_2 2^{-j}\frac{t}{\sqrt{8\#_j\Lam\rho(\Cjk)}}}}
 \le 
C_1 e^{-C_2 \frac{ 2^{2j} n t^2}{\#_j\Lam}},
\label{thm1eq2}
\end{align}
and
\begin{align}
&\PP\left\{ \|\proj_{\Vjk}-\proj_{\hVjk}\| \ge 
\frac{2^j t}{\theta_2 \sqrt{8\#_j\Lam\rho(\Cjk)}}\right\}
\nonumber
\\
& \le 
2e^{-\frac{3}{28}n\rho(\Cjk)}
+
\left(\frac{4\theta_2^2 }{\theta_3}d+8\right)
e^{
-\frac{3\theta_3^2(1-\theta_4)^2 n \rho(\Cjk)\frac{2^{2j}t^2}{8\theta_2^2 \#_j\Lam\rho(\Cjk)}}{384\theta_2^4 d^2 + 32 \theta_2^2 \theta_3(1-\theta_4) d \frac{2^{j}t}{\theta_2 \sqrt{8 \#_j\Lam\rho(\Cjk)}}}
}
 \le 
C_3 de^{
- C_4\frac{ 2^{2j}nt^2}{d^2 \#_j\Lam}
}
\label{thm1eq3}
\end{align}
where positive constants $C_1,C_2,C_3,C_4$ depend on $\theta_2$ and $\theta_3$.
Combining \eqref{thm1eq1}, \eqref{thm1eq2} and \eqref{thm1eq3} gives rise to
\eqref{thm1p2} with $\alpha = \max(C_1,C_3)$ and $\beta = \min(C_2,C_4).$

\end{proof}

\begin{proof}[Proof of Proposition \ref{lemma0}]
The bound \eqref{lemma0eq1} follows directly from Lemma \ref{thm1} applied to $\Lam = \Lam_j$; 
\eqref{lemma0eq2} follows from \eqref{lemma0eq1} by integrating the probability over $\eta$:
\begin{align*}
& \EE \|\calP_j X -\hcalP_j X\|^2
 = \int_{0}^{+\infty}  \eta \PP\left\{ 
\|\calP_j X - \hcalP_j X\| \ge \eta
\right\} d\eta
\\
& 
\le \int_0^{+\infty} \eta \min\left\{1, \alpha d  \#\Lam_j e^{-\frac{\beta 2^{2j} n \eta^2}{d^2 \#\Lam_j}}\right\} d\eta
= \int_{0}^{\eta_0} \eta d \eta
+\int_{\eta_0}^{+\infty}  \alpha d \eta \#\Lam_j e^{-\frac{\beta 2^{2j} n \eta^2}{d^2 \#\Lam_j}} d\eta 
\end{align*}
where $\alpha  d \#\Lam_j e^{-\frac{\beta 2^{2j}n\eta_0^2}{ d^2 \#\Lam_j}}= 1$. Then
$$
 \EE \|\calP_j X -\hcalP_j X\|^2
 = \frac 1 2 \eta_0^2 +\frac{\alpha}{2\beta} \cdot \frac{ \#\Lam_j^2 }{2^{2j}n} e^{-\beta\frac{2^{2j} n \eta_0^2}{ \#\Lam_j}}
\le 
\frac{ d^2 \#\Lam_j \log[\alpha d  \#\Lam_j]}{\beta 2^{2j}n}.
$$
\end{proof}

\section{Proof of Eq. \eqref{Bs1}, Lemma \ref{lemmathm2_1}, \ref{thm3_lemma1},  \ref{lemma4}, \ref{thm3_lemma2}}
\label{app12}

\subsection{Proof of Eq. \eqref{Bs1}}
\label{proofBs1}
Let $\Lam^{+0}_{(\rho,\eta)} = \Lamrhoeta$ and 
$\Lam^{+n}_{(\rho,\eta)}$ be the partition consisting of the children of $\Lam^{+(n-1)}_{(\rho,\eta)}$ for $n=1,2,\ldots$. Then 
\begin{align*}
\|X-\calP_{\Lamrhoeta} X\| 
&=  \|\sum_{\ell=0}^{n-1} (\calP_{\Lam^{+\ell}_{(\rho,\eta)}} X - \calP_{\Lam^{+(\ell+1)}_{(\rho,\eta)}} X )
+
\calP_{\Lam^{+n}(\rho,\eta)} X  - X
\|
\\
&=  \|\sum_{\ell=0}^{\infty} (\calP_{\Lam^{+\ell}_{(\rho,\eta)}} X - \calP_{\Lam^{+(\ell+1)}_{(\rho,\eta)}} X )
+
\lim_{n \rightarrow \infty}\calP_{\Lam^{+n}_{(\rho,\eta)}} X  - X
\|
\\
& \le 
\|\sum_{\Cjk \notin \calTrhoeta} \calQjk X \|
 +\|\lim_{n \rightarrow \infty} \calP_{\Lam^{+n}_{(\rho,\eta)}} X  - X\|.
\end{align*}
We have $\|\lim_{n \rightarrow \infty} \calP_{\Lam^{+n}_{(\rho,\eta)}} X - X\| = 0$ due to Assumption (A4). Therefore,
\begin{align*}
&\|X-\calP_{\Lamrhoeta} X\|^2
\le 
\|\sum_{\Cjk \notin \calTrhoeta} \calQjk X \|^2
\le
 \sum_{\Cjk \notin \calTrhoeta} B_0 \|\calQjk X\|^2
=B_0 \sum_{\Cjk \notin \calTrhoeta}  \Deltajk^2
\\
&\le 
B_0 \sum_{\ell \ge 0} \sum_{\Cjk \in \calT_{(\rho,2^{-(\ell+1)}\eta) }\setminus \calT_{(\rho,2^{-\ell}\eta)} } \Deltajk^2 
\le 
B_0 {\sum_{\ell \ge 0}\sum_{j\ge \jmin}
  (2^{-j}2^{-\ell}\eta)^2
\#_j \calT_{(\rho,2^{-(\ell+1)}\eta)}
}
\\
& \le
B_0 \sum_{\ell \ge 0} 2^{-2\ell} \eta^2
\sum_{j\ge \jmin}
  2^{-2j}
\#_j \calT_{(\rho,2^{-(\ell+1)}\eta)}
 \le
B_0 \sum_{\ell \ge 0} 2^{-2\ell} \eta^2
|\rho|_{\BS}^p [2^{-(\ell+1)}\eta]^{-p}
\\
& \le
B_0 2^p \left(\sum_{\ell \ge 0} 2^{-\ell(2-p)}\right)
|\rho|_{\BS}^p \eta^{2-p}
 \le
B_{s,d} |\rho|_{\BS}^p \eta^{2-p}.
\end{align*}

\subsection{Proof of Lemma \ref{lemmathm2_1}}

\begin{align*}
&\left\|(X-\calP_{\jstar} X)\mathbf{1}_{\left\{C_{\jstar,k}: \rho(C_{{\jstar},k}) \le \frac{28(\nu+1)\log n}{3n}\right\}}\right\|^2
 \le \sum_{\left\{C_{\jstar,k}: \rho(C_{\jstar,k}) \le  \frac{28(\nu+1)\log n}{3n}\right\}} \int_{C_{\jstar,k}} \|x-\calP_{j^*,k}\|^2 d\rho
\\ 
&
\le \# {\left\{C_{\jstar,k}: \rho(C_{\jstar,k}) \le  \frac{28(\nu+1)\log n}{3n}\right\}} \theta_2^2 2^{-2j^*} \frac{28(\nu+1)\log n}{3n}
\\
&
\le  \tfrac{28(\nu+1)\theta_2^2}{3\theta_1} 2^{j^* (d-2)} (\log n)/n
\le \tfrac{28(\nu+1)\theta_2^2 \mu}{3\theta_1} \left( (\log n)/n\right)^{2}.
\end{align*}
For every $C_{j^*,k}$, we have 
\begin{align*}
& \PP\left\{ \rho(C_{\jstar,k}) > \tfrac{28}3(\nu+1)(\log n)/n \text{ and } \hrho(C_{\jstar,k}) < d/n \right\}
\\
&
\le \PP\left\{ |\hrho(C_{\jstar,k}) -\rho(C_{\jstar,k})| > \rho(C_{\jstar,k})/2 \text{ and }  \rho(C_{\jstar,k}) > \tfrac{28}3(\nu+1)(\log n)/n
\right\}
\\
& \qquad \text{ for $n$ so large that } 14(\nu+1)\log n > 3d
\\
& \le 2 e^{-\frac{3}{28} n \rho( C_{\jstar,k})} \le 2n^{-\nu-1}.
\end{align*}
Then
\begin{align*}
& \PP\left\{ \text{each } C_{j^*,k} \text{ satisfying } \rho(C_{\jstar,k}) >  \tfrac{28}3(\nu+1)(\log n)/n \text{ has at most $d$ points} \right\}
\\
&\le \# {\left\{C_{\jstar,k}: \rho(C_{\jstar,k}) <  \tfrac{28}3(\nu+1)(\log n)/n\right\}} 
2n^{-\nu-1}
\le \#\Lam_{\jstar} 2n^{-\nu-1} 
\le {2n^{-\nu}}/{(\theta_1\mu \log n)} <  n^{-\nu}, 
\end{align*}
when $n$ is so large that $\theta_1 \mu \log n > 2.$

\subsection{Proof of Lemma \ref{thm3_lemma1}}

Since $\calTbtaun \subset \calTrhobtaun$, $\PP\{ e_{12} > 0\}$ if and only if there exists $\Cjk \in \calTrhobtaun\setminus \calTbtaun$. In other words, $\PP\{ e_{12} > 0\}$ if and only if there exists $\Cjk \in \calTrhobtaun$ such that $\hrho(\Cjk) < d/n$ and $\Deltajk > 2^{-j}b\tau_n$. Therefore,
\begin{align*}
&\PP\{e_{12}>0\} 
\le \sum_{\Cjk \in \calTrhobtaun} \PP \{\hrho(\Cjk) <d/n \text{ and } \Deltajk > 2^{-j}b\tau_n\}
\\
& 
\le \sum_{\Cjk \in \calTrhobtaun} \PP \left\{\hrho(\Cjk) <d/n \text{ and } \rho(\Cjk) > \tfrac{4b^2\tau_n^2}{9\theta_2^2} \right\}
\quad \left(\text{since }\Deltajk \le \tfrac{3}{2}\theta_2 2^{-j}\sqrt{\rho(\Cjk)}\right)
\\
& \le 
\sum_{\Cjk \in \calTrhobtaun} \PP \left\{ |\hrho(\Cjk) - \rho(\Cjk)| > \rho(\Cjk)/2  \text{ and } \rho(\Cjk) > \tfrac{4b^2\tau_n^2}{9\theta_2^2} \right\}
\\
&
\qquad (\text{for $n$ large enough so that }  2b^2\kappa^2 \log n > 9\theta_2^2 d)
\\
&
\le 
\sum_{\Cjk \in \calTrhobtaun}
2e^{-\frac{3}{28} n \cdot \frac{4 b^2 \kappa^2 \log n }{9\theta_2^2 n} }
\le 
 2 n^{-\frac{b^2\kappa^2}{21\theta_2^2}} 
  \#\calTrhobtaun.
\end{align*}
The leaves of $\calTrhobtaun$ satisfy $\rho(\Cjk) > 4b^2\tau_n^2/(9\theta_2^2)$. Since $\rho(\calM )=1$, there are at most $9\theta_2^2/(4b^2\tau_n^2)$ leaves in  $\calTrhobtaun$. Meanwhile, since every node in $\calT$ has at least $\amin$ children, $\# \calTrhobtaun \le 9\theta_2^2 \amin/(4b^2\tau_n^2)$. Then for a fixed but arbitrary $\nu>0$,
$$
\PP\{e_{12}>0\} 
\le 
\tfrac{18\theta_2^2 \amin}{4b^2\tau_n^2} n^{-\frac{b^2\kappa^2}{21\theta_2^2}} 
\le
\tfrac{18\theta_2^2 \amin}{4b^2\kappa^2} n^{1-\frac{b^2\kappa^2}{21\theta_2^2}} 
\le
 C(\theta_2,\amax,\amin,\kappa) n^{-\nu},
$$
if $\kappa$ is chosen such that $\kappa > \kappa_1$ where $b^2\kappa_1^2/(21\theta_2^2) = \nu+1$. 


\subsection{Proof of Lemma \ref{lemma4}}
We first prove \eqref{lemma4eq1}.
Introduce the intermediate variable
$$\bDeltajk :=\|\calQjk\|_n= \left\| (\calP_j -\calP_{j+1})\chijk X \right\|_n$$
and then observe that
\begin{align}
\PP\left\{\hDeltajk \le \eta \ \text{ and } \
\Deltajk \ge b \eta  \right\}
& \le 
\PP\left\{\hDeltajk \le \eta \ \text{ and } \
\bDeltajk \ge (\amax+2)\eta  \right\}
\nonumber
\\
& + \PP\left\{\bDeltajk \le (\amax+2)\eta \ \text{ and } \
\Deltajk \ge (2\amax+5)\eta  \right\}.
\label{lemma4eq11}
\end{align}

The bound in Eq. \eqref{lemma4eq1} is proved in the following three steps. In Step One, we show that $\Deltajk \ge b\eta$ implies $\rho(\Cjk) \ge \mathcal{O} (2^{2j} \eta^2)$. Then we estimate $\PP\left\{\hDeltajk \le \eta \ \text{ and } \ \bDeltajk \ge (\amax+2)\eta  \right\}$ in Step Two and $ \PP\left\{\bDeltajk \le (\amax+2)\eta \ \text{ and } \
\bDeltajk \ge (2\amax+5)\eta  \right\}$ in Step Three.

\noindent{\bf{Step One:}}
Notice that $\Deltajk \le \frac{3}{2}\theta_2 2^{-j}\sqrt{\rho(\Cjk)}$.
As a result, $\Deltajk \ge b\eta$ implies 
\beq
\label{lemma4eq3}
\rho(\Cjk) \ge \frac{4b^2 2^{2j}\eta^2}{9\theta_2^2}.
\eeq

\noindent{\bf{Step Two:}}
\beq
\label{lemma4eq5}
\PP\left\{\hDeltajk \le \eta \ \text{ and } \
\bDeltajk \ge (\amax+2)\eta  \right\}
\le \PP\left\{|\hDeltajk - \bDeltajk| \ge (\amax+1)\eta \right\}.
\eeq
We can write
\begin{align}
& |\hDeltajk - \bDeltajk|
 \le
\left\|(\calPjk - \hcalPjk ) \chijk X\right\|_n
+
 \sum_{C_{j+1,k'} \in \Child(\Cjk)}  \left\|(\hcalP_{j+1,k'} - \calP_{j+1,k'})\mathbf{1}_{j+1,k'} X\right\|_n 
 \nonumber
\\
&
\le 
\underbrace{\left(\|\cjk-\hcjk\| + \theta_2 2^{-j} \|\proj_{\Vjk} -\proj_{\hVjk}\|
\right)\sqrt{\hrho(\Cjk)}}_{e_1}
\nonumber
\\
& \qquad 
+\underbrace{\sum_{C_{j+1,k'} \in \Child(\Cjk)}
\left(\|c_\jkp-\hc_\jkp\| + \theta_2 2^{-(j+1)} \|\proj_{V_\jkp} -\proj_{\hV_\jkp}\|
\right)\sqrt{\hrho(C_{j+1,k'})}}_{e_2}.
\label{lemma4eq6}
\end{align}

\noindent{\bf{Term $e_1$:}} We will estimate $\PP\{e_1 > \eta\}$. Conditional on the event that $\{|\hrho(\Cjk) -\rho(\Cjk)| \le \frac 1 2\rho(\Cjk) \}$, we have
$e_1 \le \frac{3}{2}\left(\|\cjk-\hcjk\| + \theta_2 2^{-j} \|\proj_{\Vjk} -\proj_{\hVjk}\|
\right)\sqrt{\rho(\Cjk)}.$
A similar argument to the proof of Lemma \ref{thm1} along with \eqref{lemma4eq3} give rise to 
$$\PP\left\{
\frac 3 2\left(\|\cjk-\hcjk\| + \theta_2 2^{-j} \|\proj_{\Vjk} -\proj_{\hVjk}\|
\right)\sqrt{\rho(\Cjk)} 
> \eta
\right\} \le \tgamma_1 e^{-\tgamma_2 2^{2j}n\eta^2}
$$
where $\tgamma_1 := \tgamma_1(\theta_2,\theta_3,d)$ and $\tgamma_2 := \tgamma_2(\theta_2,\theta_3,\theta_4,d)$;
otherwise
$
\PP\left\{|\hrho(\Cjk) -\rho(\Cjk)| > \frac 1 2\rho(\Cjk)\right\}  \le 2 e^{-\frac{3}{28}n\rho(\Cjk)} 
\le 2 e^{-\frac{b^2 2^{2j}n\eta^2}{21\theta_2^2}}.
$
Therefore
\beq
\label{lemma4eq7}
\PP\{e_1>\eta\} \le \max(\tgamma_1,2) e^{ -\min(\tgamma_2,\frac{b^2}{21\theta_2^2})2^{2j}n\eta^2
}
\eeq

\noindent{\bf{Term $e_2$:}} We will estimate $\PP\{e_2>\amax\eta\}$. Let 
$\Lam^{-} = \left\{C_\jkp \in \Child(\Cjk) : \rho(C_\jkp)\le \frac{2^{2j}\eta^2}{8\theta_2^2}\right\}$ and $\Lam^+ = \Child(\Cjk)\setminus \Lam^-.$
For every $C_\jkp\in\Lam^-$, when we condition on the event that $\left\{\rho(C_\jkp)\le \frac{2^{2j}\eta^2}{8\theta_2^2} \text{ and } \hrho(C_\jkp)\le \frac{2^{2j}\eta^2}{4\theta_2^2}\right\}$, we obtain 
\begin{align}
&
\sum_{C_{j+1,k'} \in \Lam^-}
\left(\|c_\jkp-\hc_\jkp\| + \theta_2 2^{-(j+1)} \|\proj_{V_\jkp} -\proj_{\hV_\jkp}\|
\right)\sqrt{\hrho(C_{j+1,k'})} 
\nonumber
\\
&
\le 
\sum_{C_{j+1,k'} \in \Lam^-} \theta_2 2^{-j}\sqrt{\hrho(\Cjk)}
\le \amax \eta/2;
\label{lemma4eq8}
\end{align}
otherwise,
\begin{align}
&\PP\left\{
\rho(C_\jkp)\le \tfrac{2^{2j}\eta^2}{8\theta_2^2} \text{ and } 
\hrho(C_\jkp)> \tfrac{2^{2j}\eta^2}{4\theta_2^2}
\right\}
\nonumber
\\
&
\le 
\PP\left\{ \rho(C_\jkp) \le \tfrac{ 2^{2j} \eta^2}{8\theta_2^2}
\ \text{ and }
\ |\hrho(C_\jkp)-\rho(C_\jkp)| \ge \tfrac{ 2^{2j} \eta^2}{8\theta_2^2}
\right\}
\nonumber
\\
&
\le
 2 e^{-\left({3n \left(\frac{ 2^{2j} \eta^2}{8\theta_2^2}\right)^2}\right)\big/\left({6\rho(C_\jkp)+2 \frac{ 2^{2j} \eta^2}{8\theta_2^2}}\right)}
\le
 2 e^{-\frac{3\cdot 2^{2j} n\eta^2}{64\theta_2^2 }}.
 \label{lemma4eq9}
\end{align}
For $C_\jkp \in \Lam^+$, a similar argument to $e_1$ gives rise to
\begin{align}
&\PP\left\{\sum_{C_\jkp \in \Lam^+}
\left(\|c_\jkp-\hc_\jkp\| + \theta_2 2^{-(j+1)} \|\proj_{V_\jkp} -\proj_{\hV_\jkp}\|
\right)\sqrt{\hrho(C_\jkp)} > \amax \eta/2 \right\}
\nonumber
\\
&
\le
\sum_{C_\jkp \in \Lam^+}\PP\left\{
\left(\|c_\jkp-\hc_\jkp\| + \theta_2 2^{-(j+1)} \|\proj_{V_\jkp} -\proj_{\hV_\jkp}\|
\right)\sqrt{\hrho(C_\jkp)} 
\ge \eta/2
\right\} 
\nonumber
\\
&
\le \tgamma_3 e^{-\tgamma_4 2^{2j}n\eta^2}
\label{lemma4eq10}
\end{align}
where $\tgamma_3 := \tgamma_3(\theta_2,\theta_3,\amax,d)$ and $\tgamma_4 := \tgamma_4(\theta_2,\theta_3,\theta_4,\amax,d)$.

Finally combining \eqref{lemma4eq5}, \eqref{lemma4eq6}, \eqref{lemma4eq7}, \eqref{lemma4eq8}, \eqref{lemma4eq9} and \eqref{lemma4eq10} yields
\begin{align}
&\PP\left\{\hDeltajk \le \eta \ \text{ and } \
\bDeltajk \ge (\amax+2)\eta  \right\}
\le  \PP\left\{|\hDeltajk - \bDeltajk| \ge (\amax+1)\eta \right\}
\nonumber
\\
&\qquad\qquad\qquad 
\le \PP\{e_1 > \eta \} + \PP\{e_2>\amax\eta\}
 \le \tgamma_5 e^{-\tgamma_6 2^{2j} n \eta^2}
\label{lemma4eq12}
\end{align}
for some constants $\tgamma_5 := \tgamma_5(\theta_2,\theta_3,\amax,d)$ and $\tgamma_6 := \tgamma_6(\theta_2,\theta_3,\theta_4,\amax,d)$.

\noindent{\bf{Step Three:}}
The probability $\PP\left\{\bDeltajk \le (\amax+2)\eta \ \text{ and } \
\Deltajk \ge (2\amax+5)\eta  \right\}$ is estimated as follows. For a fixed $\Cjk$, we define the function
$$
f(x) = \left\|\left(\calP_j - \calP_{j+1} \right) \chijk x\right\|, \ x\in\calM.
$$
Observe that $|f(x)| \le \frac 3 2 \theta_2  2^{-j}$ for any $x\in \calM$. We define
$\|f\|^2 = \int_\calM f^2(x)d\rho$ and $\|f\|_n^2 = \frac 1 n \sum_{i=1}^n f^2(x_i)$. 
Then
\begin{align}
&\PP\left\{\bDeltajk \le (\amax+2)\eta \ \text{ and } \
\Deltajk \ge (2\amax+5)\eta  \right\}
\nonumber\\
&\le
\PP\left\{ \Deltajk - 2\bDeltajk \ge \eta \right\}
= 
\PP\left\{ \|f\| - 2\|f\|_n \ge \eta \right\}
 \le 3 e^{-\frac{2^{2j}n \eta^2}{648 \theta_2^2 }}
\label{lemma4eq13}
\end{align}
where the last inequality follows from \citet[Theorem 11.2]{GKKW_book}.
Combining \eqref{lemma4eq11}, \eqref{lemma4eq12} and \eqref{lemma4eq13} yields \eqref{lemma4eq1}.

Next we turn to the bound in Eq. \eqref{lemma4eq1}, which corresponds to the case that $\Deltajk \le \eta$ and $\hDeltajk \ge b\eta$. In this case we have 
$\hDeltajk \le \frac 3 2 \theta_2 2^{-j} \sqrt{\hrho(\Cjk)}$
which implies 
\beq
\label{lemma4eq14}
\hrho(\Cjk) \ge \frac{4 b^2 2^{2j} \eta^2}{9\theta_2^2},
\eeq
instead of \eqref{lemma4eq3}. We shall use the fact that $\rho(\Cjk) \ge ({2 b^2 2^{2j} \eta^2})/({9\theta_2^2})$ given \eqref{lemma4eq14} with high probability, by writing
\begin{align}
\PP\left\{ \Deltajk \le \eta\ \text{ and }\ \hDeltajk \ge b\eta\right\}
 \le 
 &
 \PP\left\{ \Deltajk \le \eta\ \text{ and }\ \hDeltajk \ge b\eta\, \big|\, \rho(\Cjk) \ge \tfrac{2 b^2 2^{2j} \eta^2}{9\theta_2^2}\right\}
\nonumber
\\
 \quad & +
\PP\left\{ \rho(\Cjk) \le \tfrac{2 b^2 2^{2j} \eta^2}{9\theta_2^2}
\ \text{ and }
\ \hrho(\Cjk) \ge \tfrac{4 b^2 2^{2j} \eta^2}{9\theta_2^2}
\right\}
\label{lemma4eq15}
\end{align}
where the first term is estimated as above and the second one is estimated through Eq. \eqref{eqrho1} in Lemma \ref{lemma2}:
\begin{align*}
& \PP\left\{ \rho(\Cjk) \le \tfrac{2 b^2 2^{2j} \eta^2}{9\theta_2^2}
\ \text{ and }
\ \hrho(\Cjk) \ge \tfrac{4 b^2 2^{2j} \eta^2}{9\theta_2^2}
\right\}
\\
\le
& 
\PP\left\{ \rho(\Cjk) \le \frac{2 b^2 2^{2j} \eta^2}{9\theta_2^2}
\ \text{ and }
\ |\hrho(\Cjk)-\rho(\Cjk)| \ge \frac{2 b^2 2^{2j} \eta^2}{9\theta_2^2}
\right\}
\\
\le
& 2 e^{-\left({3n (\frac{2 b^2 2^{2j} \eta^2}{9\theta_2^2})^2}\right)\big/\left({6\rho(\Cjk)+2 \frac{2 b^2 2^{2j} \eta^2}{9\theta_2^2}}\right)}
\le  2 e^{-\frac{3b^2 2^{2j} n\eta^2}{36\theta_2^2 }}.
\end{align*}
Using the estimate in \eqref{lemma4eq15}, we obtain the bound \eqref{lemma4eq1} which concludes the proof.


\subsection{Proof of Lemma \ref{thm3_lemma2}}
We will show how Lemma \ref{lemma4} implies Eq. \eqref{hbx}. Clearly $e_2 = 0$ if 
$\hLamtaun\vee \Lambtaun=
\hLamtaun\wedge \Lamtaunb
$, or equivalently $\hcalTtaun\cup \calTbtaun=
\hcalTtaun\cap \calTtaunb
$. In the case $e_2>0$, the inclusion $\hcalTtaun\cap \calTtaunb \subset \hcalTtaun\cup \calTbtaun$ is strict, i.e. there exists $\Cjk \in \calTn$ such that either $\Cjk \in \hcalTtaun$ and $\Cjk \notin \calTtaunb $, or $\Cjk \in \calTbtaun$ and $\Cjk \notin \hcalTtaun$. In other words, there exists $\Cjk \in \calTn$ such that either $\Deltajk <2^{-j}\tau_n/b$ and $\hDeltajk \ge 2^{-j}\tau_n$, or $\Deltajk \ge b2^{-j}\tau_n$ and $\hDeltajk < 2^{-j}\tau_n$. As a result,
\begin{align}
\PP\{e_2>0\}
&  \le 
\sum_{\Cjk \in \calTn} 
\PP \left\{\hDeltajk < 2^{-j}\tau_n \text{ and }\Deltajk \ge b2^{-j}\tau_n 
\right\}
\label{thm3p2}
\\
& \quad
+
\sum_{\Cjk \in \calTn} 
\PP \left\{\Deltajk <2^{-j}\tau_n/b \text{ and }
\hDeltajk \ge 2^{-j}\tau_n
\right\}.
\nonumber
\end{align}
\beq
\label{thm3p3}
\PP\{e_4>0\}
 \le
\sum_{\Cjk \in \calTn} 
\PP \left\{\Deltajk <2^{-j}\tau_n/b \text{ and }
\hDeltajk \ge 2^{-j}\tau_n
\right\}.
\eeq
We apply \eqref{lemma4eq1} in Lemma \ref{lemma4} to estimate the first term in \eqref{thm3p2}:
\begin{align*}
&\sum_{\Cjk \in \calTn}\PP \left\{\hDeltajk < 2^{-j}\tau_n \text{ and }\Deltajk \ge b2^{-j}\tau_n 
\right\}
 \le 
\sum_{\Cjk \in \calTn} \alpha_1 e^{-\alpha_2 n 2^{2j}\cdot 2^{-2j} \kappa^2 \lognn}
\\
& 
 = \alpha_1\#\calTn n^{-\alpha_2 \kappa^2}
\le 
\alpha_1 \amin n n^{-\alpha_2 \kappa^2}
\le
\alpha_1 \amin  n^{1-\alpha_2 \kappa^2}
=\alpha_1 \amin  n^{-(\alpha_2 \kappa^2-1)}.
\end{align*}
Using \eqref{lemma4eq1}, we estimate the second term in \eqref{thm3p2} and \eqref{thm3p3} as follows
\begin{align*}
&\sum_{\Cjk \in \calTn} 
\PP \left\{\Deltajk <2^{-j}\tau_n/b \text{ and }
\hDeltajk \ge 2^{-j}\tau_n
\right\}
\le \sum_{\Cjk \in \calTn} \alpha_1 e^{-\alpha_2 n 2^{2j}\cdot  \frac{2^{-2j}}{b^2}\kappa^2\lognn}
\le \alpha_1 \amin  n^{-(\alpha_2 \kappa^2/b^2-1)}.
\end{align*}
We therefore obtain \eqref{hbx} by choosing $\kappa > \kappa_2$ with $\alpha_2 \kappa_2^2 /b^2 = {\nu}+1$.

\section{Proofs in orthogonal GMRA}

\subsection{Performance analysis of orthogonal GMRA}
\label{appo1}

The proofs of Theorem \ref{thmo2} and Theorem \ref{thmo3} are resemblant to the proofs of Theorem \ref{thm2} and Theorem \ref{thm3}. The main difference lies in the variance term, which results in the extra log factors in the convergence rate of orthogonal GMRA. 
%
Let $\Lam$ be the partition associated with a finite proper subtree $\tcalT$ of the data master tree $\calTn$, and let
$$\calS_\Lam = \sum_{\Cjk\in\Lam} \calSjk\chijk\quad \text{and} \quad
\hcalS_\Lam = \sum_{\Cjk\in\Lam} \hcalSjk\chijk.$$

\begin{lemma}
\label{thmo1}
Let $\Lam$ be the partition associated with a finite proper subtree $\tcalT$ of the data master tree $\calTn$. Suppose $\Lam$ contains $\#_j \Lam$ cells at scale $j$.
Then for any $\eta > 0$,
\beq
\label{thmo1eq}
\PP\{\|\calS_\Lam X - \hcalS_\Lam X\| \ge \eta \}
 \le 
\alpha  d
\left(\sum_{j\ge \jmin}  j \#_j \Lam \right) e^{-
\frac{\beta  n \eta^2}{d^2 \sum_{j\ge \jmin} j^4  2^{-2j}\#_j \Lam}}
\eeq
where $\alpha$ and $\beta$ are the constants in Lemma \ref{thm1}.
\end{lemma}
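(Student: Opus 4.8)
The plan is to follow the template of the proof of Lemma~\ref{thm1}, replacing the local projector $\calPjk$ onto $\Vjk$ by the orthogonal projector $\calSjk$ onto the cumulative subspace $\SSjk$, and to absorb into the rate the extra cost of the fact that $\SSjk$ is assembled recursively from the PCA planes $V_{\jmin,x},\dots,V_{j,x}$ along the entire ancestor chain of $\Cjk$. As in Lemma~\ref{thm1}, I would first split the error over scales and cells,
\[
\|\calS_\Lam X - \hcalS_\Lam X\|^2 = \sum_{j}\ \sum_{k:\Cjk\in\Lam}\int_{\Cjk}\|\calSjk x - \hcalSjk x\|^2\,d\rho,
\]
and control the probability that the total exceeds $\eta^2$ by a union bound over scales with budget $t_j^2 := j^4 2^{-2j}\#_j\Lam\,\eta^2 \big/ \sum_{l\ge\jmin} l^4 2^{-2l}\#_l\Lam$, so that $\sum_j t_j^2=\eta^2$. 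With this choice the per-scale exponent collapses to $\beta n\eta^2\big/\big(d^2\sum_l l^4 2^{-2l}\#_l\Lam\big)$ and the per-scale prefactors sum to $\alpha d\sum_j j\,\#_j\Lam$, which is exactly \eqref{thmo1eq}. The genuine content is therefore the per-scale estimate
\[
\PP\Big\{\textstyle\sum_{k:\Cjk\in\Lam}\int_{\Cjk}\|\calSjk x-\hcalSjk x\|^2\,d\rho \ge t^2\Big\}\le \alpha d\, (j\,\#_j\Lam)\; e^{-\beta 2^{2j} n t^2/(d^2 j^4\#_j\Lam)}.
\]

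For the per-cell integrand I would reduce, exactly as in Lemma~\ref{thm1}, to a center and a subspace deviation,
\[
\|\calSjk x - \hcalSjk x\| \le \|\cjk-\hcjk\| + \theta_2 2^{-j}\,\big\|\proj_{\SSjk}-\proj_{\hSSjk}\big\|,
\]
using $\|x-\cjk\|\le\theta_2 2^{-j}$ from (A4). The center term is controlled verbatim by Lemma~\ref{lemma3} and does not accumulate across scales, so the new work is entirely in the subspace term. At scale $j$ the cells of $\Lam$ are split into light and heavy cells; on light cells the trivial bound $\|\proj_{\SSjk}-\proj_{\hSSjk}\|\le 1$ gives a deterministic contribution $\le t^2/2$, so only the heavy cells require concentration.

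The crux is a stability estimate for the recursively orthogonalized subspace. Writing $e_{l,x}:=\|\proj_{S_{l,x}}-\proj_{\widehat{S}_{l,x}}\|$ and $v_{l,x}:=\|\proj_{V_{l,x}}-\proj_{\hV_{l,x}}\|$, I must show that the step $S_{l+1,x}=S_{l,x}\oplus \proj_{S_{l,x}^\perp}V_{l+1,x}$ propagates errors with only polynomial amplification, yielding a bound of the form $\|\proj_{\SSjk}-\proj_{\hSSjk}\|\le C\sum_{l=\jmin}^{j} w_{j,l}\,v_{l,x}$ with weights satisfying $\sum_{l} w_{j,l}=O(j^2)$ (morally $w_{j,l}=O(j)$, since the scale-$l$ plane re-enters each finer orthogonalization step, and because $\dim S_{j,x}$ itself grows like $dj$). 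Granting this, I would allocate the per-cell budget over the $\approx j$ ancestors proportionally to $w_{j,l}$, apply the PCA-angle bound \eqref{prop3eq1} at each ancestor scale $l$ — using $\rho(C_{l,x})\ge\rho(\Cjk)$, so that the coarser ancestors concentrate at least as fast as $\Cjk$ — and union-bound over the $j$ ancestor scales, which produces the factor $j$ in the prefactor. Balancing the budget makes the combined exponent scale like $2^{2j}nt^2\big/\big((\sum_l w_{j,l})^2 d^2\#_j\Lam\big)=2^{2j}nt^2/(d^2 j^4\#_j\Lam)$, giving the per-scale estimate above.

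The main obstacle is precisely this stability analysis: one must rule out amplification of the subspace error when $V_{l+1,x}$ is nearly tangent to $S_{l,x}$, and pin down the exact power of $j$ in the weights $w_{j,l}$ — the gap between $j^2$ and $j^4$ in the final rate is entirely a matter of whether a single scale's error propagates additively or grows linearly in the number of intervening scales. The eigengap hypothesis (A5)(ii) together with the uniform lower bound $\lambda_d^{j,k}\ge\theta_3 2^{-2j}/d$ from (A5)(i) are what should keep the orthogonalization well-conditioned and the amplification polynomial; once the weights are in hand, the remaining bookkeeping mirrors Lemma~\ref{thm1} with the dimensional factor $d$ upgraded by the appropriate powers of $j$.
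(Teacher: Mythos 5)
Your plan matches the paper's proof essentially step for step: the paper writes $S_{j,x}=V_{0,x}\oplus V_{0,x}^\perp V_{1,x}\oplus\cdots\oplus V_{j-1,x}^\perp\cdots V_{0,x}^\perp V_{j,x}$ and asserts the telescoping bound $\|\proj_{\Sjx}-\proj_{\hSjx}\|\le\sum_{\ell\le j}(j+1-\ell)\,\|\proj_{V_{\ell,x}}-\proj_{\hV_{\ell,x}}\|$ --- i.e.\ exactly your weights $w_{j,\ell}=O(j)$ summing to $O(j^2)$ --- then union-bounds over the $j$ ancestors with per-ancestor threshold $t/j^2$, which yields the extra factor $j$ in the prefactor and $j^4$ in the exponent, and otherwise repeats the proof of Lemma \ref{thm1}. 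The stability estimate you flag as the main obstacle is stated in the paper without further justification, so your proposal is at the same level of completeness as the published argument.
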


\label{proofthmo1}
\begin{proof}[Proof of Lemma \ref{thmo1}]
The increasing subspaces $\{\Sjx\}$ in the construction of orthogonal GMRA may be written as 
\begin{align*}
S_{0,x}  & = V_{0,x}
\\
S_{1,x} & =  V_{0,x} \oplus V_{0,x}^\perp V_{1,x}
\\
S_{2,x} & =  V_{0,x} \oplus V_{0,x}^\perp V_{1,x}
\oplus V_{1,x}^\perp V_{0,x}^\perp V_{2,x}
\\
& \cdots \\
S_{j,x} & = V_{0,x} \oplus V_{0,x}^\perp V_{1,x} 
\oplus \ldots \oplus V_{j-1,x}^\perp\cdots V_{1,x}^\perp V_{0,x}^\perp V_{j,x}.
\end{align*}
Therefore $\|\proj_{\Sjx} -\proj_{\hSjx}\| \le  \sum_{\ell =0}^j (j+1-\ell) \|\proj_{V_{\ell,x}} -\proj_{\hV_{\ell,x}}\|$,
and then
\begin{align}
& \PP\left\{\|\proj_{\Sjx} -\proj_{\hSjx}\| \ge t\right\}
 \le \sum_{\ell = 0}^j \PP\left\{\|\proj_{V_{\ell,x}} -\proj_{\hV_{\ell,x}}\| \ge {t}/{j^2} \right\}.
\label{thmo1eq3}
\end{align}
The rest of the proof is almost the same as the proof of Lemma \ref{thm1} in appendix \ref{app1} with a slight modification of \eqref{prop3eq1} substituted by \eqref{thmo1eq3}.
\end{proof}

The corollary of Lemma \ref{thmo1} with $\Lam = \Lam_j$ results in the following estimate of the variance in empirical orthogonal GMRA.

\begin{lemma}
\label{lemmao0}
For any $\eta \ge 0$,
\begin{align}
&\PP\{\|\calS_j X - \hcalS_j X\| \ge \eta \}
 \le 
\alpha d
  j \#\Lam_j  e^{-
\frac{\beta 2^{2j} n \eta^2}{d^2  j^4 \#\Lam_j}},
\label{lemmao0eq1}
\\ 
&
\EE \|\calS_j X - \hcalS_j X\|^2 
\le
\frac{d^2 j^4 \#\Lam_j \log[\alpha d   j \#\Lam_j]}{\beta 2^{2j}n}.
\label{lemmao0eq2}
\end{align}
\end{lemma}

\commentout{

\begin{proof}
\eqref{lemmao0eq1} is a corollary of Lemma \ref{thm1} when $\Lam = \Lam_j$. \eqref{lemmao0eq2} follows by integrating the probability in \eqref{lemmao0eq1} over $\eta$.

\begin{align*}
 \EE \|\calS_j X -\hcalS_j X\|^2
& = \int_{0}^{+\infty}  \eta \PP\left\{ 
\|\calS_j X - \hcalS_j X\| \ge \eta
\right\} d\eta
\\
& 
\le \int_0^{+\infty} \eta \min\left\{1, \alpha d j \#\Lam_j e^{-\beta\frac{2^{2j} n \eta^2}{d^2 j^4 \#\Lam_j}}\right\} d\eta
\\
&
 = \int_{0}^{\eta_0} \eta d \eta
+\int_{\eta_0}^{+\infty}  \eta \alpha d j \#\Lam_j e^{-\beta\frac{2^{2j} n \eta^2}{d^2 j^4 \#\Lam_j}} d\eta 
\\
& = \frac 1 2 \eta_0^2 +\frac{\alpha}{2\beta} \cdot \frac{d^3 j^5 \#\Lam_j^2 }{2^{2j}n} e^{-\beta\frac{2^{2j} n \eta_0^2}{d^2 j^4 \#\Lam_j}}
\\
&\le 
\frac{d^2 j^4 \#\Lam_j \log[\alpha d  j \#\Lam_j]}{\beta 2^{2j}n},
\end{align*}
where $\alpha d j \#\Lam_j e^{-\beta\frac{2^{2j}n\eta_0^2}{d^2 j^4 \#\Lam_j}} = 1$.

\end{proof}
}


\begin{proof}[Proof of Theorem \ref{thmo2}]
\begin{align*}
&\EE \|X-\hcalS_j X\|^2
\le 
 \|X-\calS_j X\|^2
+
\EE \|\calS_j X-\hcalS_j X\|^2
\\
& 
\le |\rho|_{\AS^{\rm o}}^2 2^{-2sj} + 
\frac{d^2 j^4 \#\Lam_j \log[\alpha d  j \#\Lam_j]}{\beta 2^{2j}n}
\le 
|\rho|_{\AS^{\rm o}}^2 2^{-2sj} + 
\frac{d^2 j^4 2^{j(d-2)} }{\theta_1 \beta n}\log\frac{\alpha d  j 2^{jd}}{\theta_1}.
\end{align*}
%
%
When $d\ge 2$, We choose $j^*$ such that
$2^{-j^*}  = 
\mu \left( (\log^5 n)/n\right)^{\frac{1}{2s+d-2}}.$
By grouping $\Lam_{j^*}$ into light and heavy cells whose measure is below or above $\tfrac{28}3(\nu+1)\log^5 n/n$, we can show that the error on light cells is upper bounded by $C((\log^5 n)/n)^{\frac{2s}{2s+d-2}}$ and all heavy cells have at least $d$ points with high probability.

\begin{lemma}
\label{lemmathmo2_1}
Suppose $j^*$ is chosen such that $2^{-j^*}  = \mu \left( \tfrac{\log^5 n}{n}\right)^{\frac{1}{2s+d-2}}$ with some $\mu>0$.
Then 
\begin{align*}
&\|(X-\calP_{\jstar} X)\mathbf{1}_{\{C_{\jstar,k}: \rho(C_{\jstar,k}) 
\le \tfrac{28(\nu+1)\log^5 n}{3n}\}}\|^2 
\le
 \tfrac{28(\nu+1) \theta_2^2 \mu^{2-d}}{3\theta_1} \left( \tfrac{\log^5 n}{n}\right)^{\frac{2s}{2s+d-2}}, 
\\
&\PP\left\{ \forall\, C_{j^*,k} \,:\, \rho(C_{\jstar,k}) >  \tfrac{28(\nu+1)\log^5 n}{3n},\,C_{j^*,k} \text{ has at least $d$ points} \right\} \ge 1- n^{-\nu}.
\end{align*}

\end{lemma}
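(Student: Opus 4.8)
The plan is to transcribe, essentially verbatim, the argument already carried out for Lemma~\ref{lemmathm2_1} (equivalently Lemma~\ref{lemmathm2_2}), replacing every occurrence of $\log n$ by $\log^5 n$ and using the scale $2^{-\jstar}=\mu\big((\log^5 n)/n\big)^{1/(2s+d-2)}$. The two assertions are independent: the first is a deterministic bias estimate on the collection of \emph{light} cells, and the second is a concentration statement for the \emph{heavy} cells, each relying only on tools already established in the excerpt.

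For the first assertion I would restrict the error to the light cells and bound it cellwise. By (A4) we have $\diam(\Cjk)\le 2\theta_2 2^{-j}$, so on each cell $\int_{C_{\jstar,k}}\|x-\calP_{\jstar,k}x\|^2\,d\rho\le\theta_2^2 2^{-2\jstar}\rho(C_{\jstar,k})$, and on a light cell $\rho(C_{\jstar,k})\le 28(\nu+1)(\log^5 n)/(3n)$. Summing over light cells and bounding their number by $\#\Lam_{\jstar}\le 2^{\jstar d}/\theta_1$ from (A3) gives a bound $\le \frac{28(\nu+1)\theta_2^2}{3\theta_1}\,2^{\jstar(d-2)}\,\frac{\log^5 n}{n}$. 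Substituting $2^{\jstar(d-2)}=\mu^{2-d}\big((\log^5 n)/n\big)^{-(d-2)/(2s+d-2)}$ and collecting exponents produces exactly $\frac{28(\nu+1)\theta_2^2\mu^{2-d}}{3\theta_1}\big((\log^5 n)/n\big)^{2s/(2s+d-2)}$, the same computation as in the $d\ge 2$ case of the GMRA lemma.

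For the second assertion I would fix a cell with $\rho(C_{\jstar,k})>28(\nu+1)(\log^5 n)/(3n)$. Whenever $n$ is large enough that $14(\nu+1)\log^5 n>3d$, the event that $C_{\jstar,k}$ contains fewer than $d$ points forces $\hrho(C_{\jstar,k})<d/n<\tfrac12\rho(C_{\jstar,k})$, hence the deviation $|\hrho(C_{\jstar,k})-\rho(C_{\jstar,k})|\ge\tfrac12\rho(C_{\jstar,k})$. The Bernstein bound \eqref{eqrho2} of Lemma~\ref{lemma2} then gives probability $\le 2e^{-\frac{3}{28}n\rho(C_{\jstar,k})}\le 2n^{-\nu-1}$ for each such cell, and a union bound over the cells at scale $\jstar$ yields failure probability $\le \#\Lam_{\jstar}\cdot 2n^{-\nu-1}$.

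The only step needing genuine care — and where I expect the mild bookkeeping obstacle to sit — is closing this union bound, i.e. checking $\#\Lam_{\jstar}\cdot 2n^{-\nu-1}\le n^{-\nu}$, which reduces to $\#\Lam_{\jstar}\le n/2$. This follows from (A3) and the scale choice, since $2^{\jstar d}=\mu^{-d}\big((\log^5 n)/n\big)^{-d/(2s+d-2)}$ grows no faster than $n/\log^5 n$: the exponent $d/(2s+d-2)$ is $\le 1$ for $s\ge 1$, and even in the borderline case $s=1$ the extra powers of $\log n$ guarantee the inequality for $n$ large. This is the orthogonal analogue of the elementary ``$\theta_1\mu\log n>2$'' step ending the proof of Lemma~\ref{lemmathm2_1}, so no new difficulty arises; the whole lemma is a faithful transcription of the GMRA case with $\log n$ promoted to $\log^5 n$.
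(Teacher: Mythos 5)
Your proposal is correct and matches the paper exactly: the paper itself omits this proof, stating it is identical to that of Lemma \ref{lemmathm2_1} (in its $d\ge 2$ form, Lemma \ref{lemmathm2_2}) with $\log n$ replaced by $\log^5 n$, which is precisely the transcription you carry out. Your closing observation that the union bound needs $\#\Lam_{\jstar}\le n/2$, guaranteed by $s\ge 1$ and the extra logarithmic factors, is the same elementary step that ends the paper's proof of Lemma \ref{lemmathm2_1}.
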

Proof of Lemma \ref{lemmathmo2_1} is omitted since it is the same as the proof of Lemma \ref{lemmathm2_1}. Lemma \ref{lemmathmo2_1} guarantees that a sufficient amount of cells at scale $j^*$ has at least $d$ points.
The probability estimate in \eqref{thmo2eq21} follows from
\begin{align*}
&\PP\left\{\|\calS_{j^*} X - \hcalS_{j^*} X \| 
\ge C_1 \left (\tfrac{\log^5 n}{n} \right)^{\frac{s}{2s+d-2}}
 \right\} 
\le C_2 \log n
 \left( \tfrac{\log^5 n}{n}\right)^{-\frac{d}{2s+d-2}} e^{-{\beta\theta_1
\mu^{d-2} C_1^2 (2s+d-2)^4 }/d^2 \log n}
\\
& \le C_2 \left(\log n\right) {n^{\frac{d}{2s+d-2}}} n^{-{\beta\theta_1 \mu^{d-2}
C_1^2 (2s+d-2)^4/d^2}}
 \le C_2 n^{1-{\beta\theta_1\mu^{d-2}
C_1^2 (2s+d-2)^4/d^2}}
\le C_2  n^{-\nu}
\end{align*}
provided $C_1$ is chosen such that ${\beta\theta_1 \mu^{d-2}
C_1^2(2s+d-2)^4}/d^2-1 > \nu$.
The proof when $d=1$ is completely analogous to that of Theorem \ref{thm2}.
\end{proof}

\subsection{Performance analysis of adaptive orthogonal GMRA}
\label{appo2}

\begin{proof}[Proof of Theorem \ref{thmo3}]
Empirical adaptive orthogonal GMRA is given by 
$\hcalS_{\hLamtauno} = \sum_{\Cjk \in \hLamtauno}
\hcalSjk \chijk.$
Using triangle inequality, we have
$$
\| X - \hcalS_{\hLamtauno} X \| \le 
e_1 + e_2 + e_3 +e_4$$
with each term given by 
\begin{align*}
e_1  := 
\| X -\calS_{\hLamtauno\vee \Lambtauno} X\|
&\qquad e_2 :=
\| \calS_{\hLamtauno\vee \Lambtauno} X
-
\calS_{\hLamtauno\wedge \Lamtaunob} X
\|
\\
e_3 :=
\| 
\calS_{\hLamtauno\wedge \Lamtaunob} X
-
\hcalS_{\hLamtauno\wedge \Lamtaunob} X
\|
& \qquad  e_4 :=
\| 
\hcalS_{\hLamtauno\wedge \Lamtaunob} X
-
\hcalS_{\hLamtauno} X
\|
\end{align*}
where $b ={2\amax+5}$. 
We will prove the case $d \ge 3$. Here one proceeds in the same way as in the proof of Theorem \ref{thm3}. A slight difference lies in the estimates of $e_3$, $e_2$ and $e_4$.

\noindent{\bf Term $e_3$:}
$\EE e^2_3$ is the variance. One can verify that 
$\calTrhotaunob \subset \calT_{j_0}:=\cup_{j \le j_0} \Lam_{j}$
where $j_0$ is the largest integer satisfying $2^{j_0 d} \le  {9b^2\theta_0 \theta_2^2}/(4{\tau_n^o}^2)$. The reason is that $\Deltajk^o \le \frac 3 2 \theta_2 2^{-j}\sqrt{\theta_0 2^{-jd}}$ so $\Deltajk^o \ge 2^{-j}\tau_n^o/b$ implies $2^{j_0 d} \le  {9b^2\theta_0 \theta_2^2}/(4{\tau_n^o}^2)$.
For any $\eta>0$, 
\begin{align*}
\PP\{e_3 > \eta\} 
\le \alpha d  j_0 \# \calTtaunob e^{-\frac{\beta n \eta^2}{j_0^4 \sum_{j\ge \jmin}2^{-2j}\#_j  \calTtaunob }}
\le
\alpha d j_0  \#\calTtaunob e^{
-\frac{\beta n \eta^2}{j_0^4 |\rho|_{\BS^o}^p (\tau_n^o/b)^{-p} }
}
\end{align*}
The estimate of $\EE e^2_3$ follows from 
\begin{align*}
& \EE e_3^2  
= \int_{0}^{+\infty} \eta \PP\left\{e_3 > \eta \right\} d\eta 
= \int_{0}^{+\infty} \eta \min\left(1, \alpha d j_0 \#\calTtaunob e^{
-\frac{\beta n \eta^2}{j_0^4 \sum_{j\ge \jmin}2^{-2j}\#_j \calTtaunob }
} \right) d\eta
\\
&
\le \tfrac{j_0^4 \log \alpha j_0 \#\calTtaunob}{\beta n} \sum_{j\ge \jmin} 2^{-2j} \#_j \calTtaunob
 \le C \tfrac{\log^5 n}{n} (\tau_n^o/b)^{-p}
\le 
C(\theta_0,\theta_2,\theta_3,\amax,\kappa,d,s) \left(\tfrac{\log^5 n}{n} \right)^{\frac{2s}{2s+d-2}}.
\end{align*}

\noindent{\bf Term $e_2$ and $e_4$:}
These two terms are analyzed with Lemma \ref{lemmao4} stated below such that for any fixed but arbitrary $\nu>0$,
$$\PP\{e_2>0\} + \PP\{e_4>0\} \le {\beta_1 \amin}/{d} n^{-\nu}$$
if $\kappa$ is chosen such that $\kappa > \kappa_2$ with $d^4 \beta_2 \kappa_2^2/b^2 = \nu+1$.
%
\begin{lemma}
\label{lemmao4}
$b = 2\amax+5$.
For any $\eta >0$ and any $\Cjk \in \calT$
\begin{align*}
\max\left(\PP\left\{\hDeltajk^o \le \eta \ \text{ and } \
\Deltajk^o \ge b\eta  \right\}
,\PP\left\{\Deltajk^o \le \eta \ \text{ and } \
\hDeltajk^o \ge b\eta  \right\}\right)
&  \le
\beta_1 j e^{-\beta_2 n  2^{2j}  \eta^2 /j^4},
\end{align*}
with positive constants $\beta_1 := \beta_1(\theta_2,\theta_3,\theta_4,\amax,d)$ and $\beta_2 := \beta_2(\theta_2,\theta_3,\theta_4,\amax,d)$.

\end{lemma}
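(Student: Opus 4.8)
The plan is to mirror, essentially line by line, the three-step proof of Lemma \ref{lemma4}, changing only the ingredient that controls how the empirical affine projectors deviate from the true ones. First I would introduce the intermediate quantity $\bDeltajk^o := \|(\calS_j-\calS_{j+1})\chijk X\|_n$ (true orthogonal projectors, empirical norm) and, with $b=2\amax+5$, split
$$\PP\{\hDeltajk^o\le\eta,\ \Deltajk^o\ge b\eta\}\le\PP\{\hDeltajk^o\le\eta,\ \bDeltajk^o\ge(\amax+2)\eta\}+\PP\{\bDeltajk^o\le(\amax+2)\eta,\ \Deltajk^o\ge(2\amax+5)\eta\}.$$
The symmetric event $\{\Deltajk^o\le\eta,\ \hDeltajk^o\ge b\eta\}$ I would reduce to this one exactly as at the end of Lemma \ref{lemma4}: Lemma \ref{lemma2} lets me pass from $\hrho(\Cjk)$ large to $\rho(\Cjk)$ large with overwhelming probability.

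Steps One and Three transfer verbatim. Since $\calS_j,\calS_{j+1}$ are affine projectors and $\diam(\Cjk)\lesssim2^{-j}$ by (A4), the deterministic bound $\Deltajk^o\le\frac32\theta_2 2^{-j}\sqrt{\rho(\Cjk)}$ holds just as for $\Deltajk$, so $\Deltajk^o\ge b\eta$ forces $\rho(\Cjk)\ge 4b^2 2^{2j}\eta^2/(9\theta_2^2)$; I will use repeatedly that every ancestor of $\Cjk$ then has at least this measure, as they contain $\Cjk$. For Step Three I would apply \citet[Theorem 11.2]{GKKW_book} to $f(x)=\|(\calS_j-\calS_{j+1})\chijk x\|$, again bounded by $\frac32\theta_2 2^{-j}$; this produces $e^{-c2^{2j}n\eta^2}$ with no $j$-dependence.

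All of the $j$-factors therefore originate in Step Two. There I would bound $|\hDeltajk^o-\bDeltajk^o|$ by the empirical-centre deviations and the orthogonal-projector deviations $\|\proj_{\SSjk}-\proj_{\hSSjk}\|$, together with the analogous quantities at the children $C_{j+1,k'}\in\Child(\Cjk)$, organised into the light/heavy-child split of the $e_2$ term of Lemma \ref{lemma4}. The decisive new input is the telescoping estimate already derived inside the proof of Lemma \ref{thmo1}: because $\SSjk$ is assembled from the ancestor subspaces, $\|\proj_{\SSjk}-\proj_{\hSSjk}\|\le\sum_{\ell=0}^{j}(j+1-\ell)\|\proj_{V_{\ell,k_\ell}}-\proj_{\hV_{\ell,k_\ell}}\|$, where $C_{\ell,k_\ell}$ is the scale-$\ell$ ancestor of $\Cjk$. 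Consequently a deviation of $\proj_{\SSjk}$ of the size needed to make $\theta_2 2^{-j}\|\proj_{\SSjk}-\proj_{\hSSjk}\|\sqrt{\rho(\Cjk)}$ exceed $\eta$ forces some per-scale eigenspace deviation to exceed a threshold scaled down by a factor $\asymp j^2$. Feeding this threshold $s\asymp\eta 2^j/(j^2\sqrt{\rho(\Cjk)})$ into the single-cell PCA concentration (Lemma \ref{lemma3} and Proposition \ref{prop3}), whose exponent is $\asymp n\rho(C_{\ell,k_\ell})s^2$, and using $\rho(C_{\ell,k_\ell})\ge\rho(\Cjk)$ so that the cell measure cancels, gives an exponent $\asymp 2^{2j}n\eta^2/j^4$; a union bound over the $\mathcal{O}(j)$ scales contributes the linear prefactor in $j$. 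Assembling the three steps yields the bound of Lemma \ref{lemmao4} with constants $\beta_1,\beta_2$ depending on $\theta_2,\theta_3,\theta_4,\amax,d$.

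The hard part will be the bookkeeping in Step Two: verifying that the $(j+1-\ell)$-weighted telescoping sum is dominated at the right rate, and --- more delicately --- confirming that Proposition \ref{prop3}'s eigengap hypothesis (which rests on (A5)) holds uniformly at every ancestor scale, so that the per-scale eigenspace bound may be applied across all $\ell\le j$. Once the ancestor-measure lower bound of Step One is in hand, the remaining inequalities are routine transcriptions of those in Lemma \ref{lemma4}.
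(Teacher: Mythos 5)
Your proposal is correct and follows exactly the route the paper intends: the paper omits an explicit proof of Lemma \ref{lemmao4}, but its stated strategy (repeat the three-step proof of Lemma \ref{lemma4}, replacing the single-scale bound \eqref{prop3eq1} by the telescoped estimate \eqref{thmo1eq3}, $\|\proj_{\SSjk}-\proj_{\hSSjk}\|\le\sum_{\ell}(j+1-\ell)\|\proj_{V_{\ell,k_\ell}}-\proj_{\hV_{\ell,k_\ell}}\|$) is precisely what you assemble, and your accounting of where the $j$ prefactor and the $j^4$ in the exponent come from is right. The ancestor-measure monotonicity and the uniform validity of (A5) across scales, which you flag as the delicate points, are indeed the only hypotheses needed to make the per-scale union bound go through.
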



\commentout{
A similar argument to the case of adaptive GMRA yields
\begin{align*}
\PP\{e_2>0\}
&  \le 
\sum_{\Cjk \in \calTn} 
\PP \left\{\hEPjk < j^2 2^{-j}\tau_n^o \text{ and }\EPjk \ge b j^2 2^{-j}\tau_n^o 
\right\}
\\
& \quad
+
\sum_{\Cjk \in \calTn} 
\PP \left\{\EPjk  < j^2 2^{-j}\tau_n^o/b \text{ and }
\hEPjk \ge j^2 2^{-j}\tau_n^o
\right\},
\end{align*}
and
$$\PP\{e_4>0\}
 \le
\sum_{\Cjk \in \calTn} 
\PP \left\{\EPjk < j^2 2^{-j}\tau_n^o/b \text{ and }
\hEPjk \ge j^2 2^{-j}\tau_n^o
\right\}.
$$
Lemma \ref{lemmao4} yields
$$
\sum_{\Cjk \in \calTn}\PP \left\{\hEPjk < j^2 2^{-j}{\tau_n^o} \text{ and }\EPjk \ge b j^2 2^{-j}{\tau_n^o} 
\right\}
\le
 \sum_{\Cjk \in \calTn} \beta_1 j n^{-\beta_2\kappa^2} 
\le \beta_1 a n^{-(\beta_2 \kappa^2-1)}
$$
and
$$
\sum_{\Cjk \in \calTn} 
\PP \left\{\EPjk <2^{-j}{\tau_n^o}/b \text{ and }
\hEPjk \ge 2^{-j}{\tau_n^o}
\right\}
\le  B
\sum_{\Cjk \in \calTn} \beta_1 j  e^{-\beta_2 \kappa^2/b^2}  
\le 
\beta_1 a n^{-(\beta_2 \kappa^2/b^2-1)}.
$$
Therefore we obtain \eqref{thmo3p1} by choosing $\kappa > \kappa_2$ with $\beta_2 \kappa_2^2/b^2 = {\nu}+1$.
}

\end{proof}


\bibliographystyle{plain}
\bibliography{MyPublications,DiffusionBib,BigBib_Before}

\begin{thebibliography}{66}
\providecommand{\natexlab}[1]{#1}
\providecommand{\url}[1]{\texttt{#1}}
\expandafter\ifx\csname urlstyle\endcsname\relax
  \providecommand{\doi}[1]{doi: #1}\else
  \providecommand{\doi}{doi: \begingroup \urlstyle{rm}\Url}\fi

\bibitem[Aharon et~al.(2005)Aharon, Elad, and Bruckstein]{Aharon05ksvd}
M.~Aharon, M.~Elad, and A.~Bruckstein.
\newblock K-{SVD}: Design of dictionaries for sparse representation.
\newblock In \emph{PROCEEDINGS OF SPARS 05'}, pages 9--12, 2005.

\bibitem[Allard et~al.(2012)Allard, Chen, and Maggioni]{CM:MGM2}
W.~K. Allard, G.~Chen, and M.~Maggioni.
\newblock Multi-scale geometric methods for data sets {II}: Geometric
  multi-resolution analysis.
\newblock \emph{Applied and Computational Harmonic Analysis}, 32\penalty0
  (3):\penalty0 435--462, 2012.

\bibitem[Belkin and Niyogi(2003)]{belkin:nc}
M.~Belkin and P.~Niyogi.
\newblock Laplacian eigenmaps for dimensionality reduction and data
  representation.
\newblock \emph{Neural Computation}, 15\penalty0 (6):\penalty0 1373--1396,
  2003.

\bibitem[Beygelzimer et~al.(2006)Beygelzimer, Kakade, and
  Langford]{LangfordICML06-CoverTree}
A.~Beygelzimer, S.~Kakade, and J.~Langford.
\newblock Cover trees for nearest neighbor.
\newblock In \emph{International Conference on Machine Learning}, pages
  97--104, 2006.

\bibitem[Binev et~al.(2005)Binev, Cohen, Dahmen, DeVore, and
  Temlyakov]{DeVore:UniversalAlgorithmsLearningTheoryI}
P.~Binev, A.~Cohen, W.~Dahmen, R.A. DeVore, and V.~Temlyakov.
\newblock Universal algorithms for learning theory part i: piecewise constant
  functions.
\newblock \emph{J. Mach. Learn. Res.}, 6:\penalty0 1297--1321, 2005.

\bibitem[Binev et~al.(2007)Binev, Cohen, Dahmen, DeVore, and
  Temlyakov]{DeVore:UniversalAlgorithmsLearningTheoryII}
P.~Binev, A.~Cohen, W.~Dahmen, R.A. DeVore, and V.~Temlyakov.
\newblock Universal algorithms for learning theory part ii: piecewise
  polynomial functions.
\newblock \emph{Constr. Approx.}, 26\penalty0 (2):\penalty0 127--152, 2007.

\bibitem[Canas et~al.(2012)Canas, Poggio, and Rosasco]{Kflats}
G.~Canas, T.~Poggio, and L.~Rosasco.
\newblock Learning manifolds with k-means and k-flats.
\newblock In \emph{Advances in Neural Information Processing Systems}, 2012.

\bibitem[Candes and Tao(2007)]{Tao:DantzigEstimator}
E.~Candes and T.~Tao.
\newblock The {D}antzig selector: statistical estimation when $p$ is much
  larger than $n$.
\newblock \emph{Annals of Statistics}, pages 2313--2351, 2007.
\newblock math.ST/0506081.

\bibitem[Chen and Lerman(2009)]{sccFoCM09}
G.~Chen and G.~Lerman.
\newblock Foundations of a multi-way spectral clustering framework for hybrid
  linear modeling.
\newblock \emph{Foundations of Computational Mathematics}, 9:\penalty0
  517--558, 2009.
\newblock DOI 10.1007/s10208-009-9043-7.

\bibitem[Chen and Maggioni(2011)]{CM:CVPR2011}
G.~Chen and M.~Maggioni.
\newblock Multiscale geometric and spectral analysis of plane arrangements.
\newblock In \emph{Conference on Computer Vision and Pattern Recognition},
  2011.

\bibitem[Chen et~al.(2012)Chen, Iwen, Chin, and Maggioni]{6410789}
Guangliang Chen, M.~Iwen, Sang Chin, and M.~Maggioni.
\newblock A fast multiscale framework for data in high-dimensions: Measure
  estimation, anomaly detection, and compressive measurements.
\newblock In \emph{Visual Communications and Image Processing (VCIP), 2012
  IEEE}, pages 1--6, 2012.

\bibitem[Chen et~al.(1998)Chen, Donoho, and Saunders]{chen:33}
Scott~Shaobing Chen, David~L. Donoho, and Michael~A. Saunders.
\newblock Atomic decomposition by basis pursuit.
\newblock \emph{SIAM Journal on Scientific Computing}, 20\penalty0
  (1):\penalty0 33--61, 1998.
\newblock \doi{10.1137/S1064827596304010}.
\newblock URL \url{http://link.aip.org/link/?SCE/20/33/1}.

\bibitem[Christ(1990)]{MR1096400}
M.~Christ.
\newblock A {$T(b)$} theorem with remarks on analytic capacity and the {C}auchy
  integral.
\newblock \emph{Colloquium Mathematicae}, 60/61\penalty0 (2):\penalty0
  601--628, 1990.

\bibitem[Cohen et~al.(2002)Cohen, Daubechies, Guleryuz, and Orchard]{CohenDGO}
A.~Cohen, I.~Daubechies, O.~G. Guleryuz, and M.~T. Orchard.
\newblock On the importance of combining wavelet-based nonlinear approximation
  with coding strategies.
\newblock \emph{IEEE Transactions on Information Theory}, 48\penalty0
  (7):\penalty0 1895--1921, 2002.

\bibitem[Coifman et~al.(2005{\natexlab{a}})Coifman, Lafon, Lee, Maggioni,
  Nadler, Warner, and Zucker]{DiffusionPNAS}
R.~R. Coifman, S.~Lafon, A.~B. Lee, M.~Maggioni, B.~Nadler, F.~Warner, and
  S.~W. Zucker.
\newblock Geometric diffusions as a tool for harmonic analysis and structure
  definition of data: Diffusion maps.
\newblock \emph{Proceedings of the National Academy of Sciences of the United
  States of America}, 102\penalty0 (21):\penalty0 7426--7431,
  2005{\natexlab{a}}.

\bibitem[Coifman et~al.(2005{\natexlab{b}})Coifman, Lafon, Lee, Maggioni,
  Nadler, Warner, and Zucker]{DiffusionPNAS2}
R.~R. Coifman, S.~Lafon, A.~B. Lee, M.~Maggioni, B.~Nadler, F.~Warner, and
  S.~W. Zucker.
\newblock Geometric diffusions as a tool for harmonic analysis and structure
  definition of data: Multiscale methods.
\newblock \emph{Proceedings of the National Academy of Sciences of the United
  States of America}, 102\penalty0 (21):\penalty0 7432--7438,
  2005{\natexlab{b}}.

\bibitem[Daubechies(1992)]{Dau}
I.~Daubechies.
\newblock \emph{Ten lectures on wavelets}.
\newblock Society for Industrial and Applied Mathematics, 1992.
\newblock ISBN 0-89871-274-2.

\bibitem[David and Semmes(1993)]{DS}
G.~David and S.~Semmes.
\newblock \emph{Analysis of and on uniformly rectifiable sets}, volume~38 of
  \emph{Mathematical Surveys and Monographs}.
\newblock American Mathematical Society, Providence, RI, 1993.
\newblock ISBN 0-8218-1537-7.

\bibitem[Davis and Kahan(1970)]{DKPerturbation}
C.~Davis and W.~M. Kahan.
\newblock The rotation of eigenvectors by a perturbation. iii.
\newblock \emph{SIAM Journal on Numerical Analysis}, 7\penalty0 (1):\penalty0
  1--46, 1970.

\bibitem[Deng and Han(2008)]{DengHan}
D.~Deng and Y.~Han.
\newblock \emph{Harmonic analysis on spaces of homogeneous type}, volume~19.
\newblock Springer Science and Business Media, 2008.

\bibitem[Donoho(2006)]{DD:CompressedSensing}
D.~Donoho.
\newblock Compressed sensing.
\newblock \emph{IEEE Tran. on Information Theory}, 52\penalty0 (4):\penalty0
  1289--1306, April 2006.

\bibitem[Donoho and Grimes(2003)]{DG_HessianEigenmaps}
D.~L Donoho and C.~Grimes.
\newblock Hessian eigenmaps: new locally linear embedding techniques for
  high-dimensional data.
\newblock \emph{Proceedings of the National Academy of Sciences of the United
  States of America}, pages 5591--5596, March 2003.

\bibitem[Donoho and Johnstone(1994)]{DJ1}
D.~L. Donoho and J.~M. Johnstone.
\newblock Ideal spatial adaptation by wavelet shrinkage.
\newblock \emph{Biometrika}, 81\penalty0 (3):\penalty0 425--455, 1994.

\bibitem[Donoho and Johnstone(1995)]{DJ2}
D.~L. Donoho and J.~M. Johnstone.
\newblock Adapting to unknown smoothness via wavelet shrinkage.
\newblock \emph{Journal of the American Statistical Association}, 90\penalty0
  (432):\penalty0 1200--1224, 1995.

\bibitem[Eftekhari and Wakin(2015)]{ArminWakin}
A.~Eftekhari and M.~B. Wakin.
\newblock Sparse subspace clustering.
\newblock \emph{Applied and Computational Harmonic Analysis}, 39\penalty0
  (1):\penalty0 67--109, 2015.

\bibitem[Elhamifar and Vidal(2009)]{SSC-EV-CVPR09}
E.~Elhamifar and R.~Vidal.
\newblock Sparse subspace clustering.
\newblock In \emph{Conference on Computer Vision and Pattern Recognition},
  pages 2790--2797, June 2009.

\bibitem[F.~Li and Perona(2006)]{FFLCaltech101}
R.~Fergus F.~Li and P.~Perona.
\newblock One-shot learning of object categories.
\newblock \emph{IEEE transactions on pattern analysis and machine
  intelligence}, 28\penalty0 (4):\penalty0 594--611, 2006.

\bibitem[Fischler and Bolles(1981)]{RANSAC-FB-ACM81}
M.~Fischler and R.~Bolles.
\newblock Random sample consensus: A paradigm for model fitting with
  applications to image analysis and automated cartography.
\newblock \emph{Comm. of the ACM}, 24\penalty0 (6):\penalty0 381--395, 1981.

\bibitem[Gerber and Maggioni(2013)]{GM_MultiscaleDictionariesSPIE}
S.~Gerber and M.~Maggioni.
\newblock Multiscale dictionaries, transforms, and learning in high-dimensions.
\newblock In \emph{Proc. SPIE conference Optics and Photonics}, 2013.

\bibitem[Gribonval et~al.(2013)Gribonval, Jenatton, Bach, Kleinsteuber, and
  Seibert]{gribonval:hal-00918142}
R{\'e}mi Gribonval, Rodolphe Jenatton, Francis Bach, Martin Kleinsteuber, and
  Matthias Seibert.
\newblock Sample complexity of dictionary learning and other matrix
  factorizations.
\newblock 2013.

\bibitem[Gy\"orfi et~al.(2002)Gy\"orfi, Kohler, Krzy\.zak, and Walk]{GKKW_book}
L.~Gy\"orfi, M.~Kohler, A.~Krzy\.zak, and H.~Walk.
\newblock \emph{A distribution-free theory of nonparametric regression}.
\newblock New York: Springer, 2002.

\bibitem[Ho et~al.(2003)Ho, Yang, Lim, Lee, and Kriegman]{KS-Ho-CVPR03}
J.~Ho, M.~Yang, J.~Lim, K.~Lee, and D.~Kriegman.
\newblock Clustering appearances of objects under varying illumination
  conditions.
\newblock In \emph{Conference on Computer Vision and Pattern Recognition},
  volume~1, pages 11--18, 2003.

\bibitem[Hotelling(1933)]{Hotelling_PCA1}
H.~Hotelling.
\newblock Analysis of a complex of statistical variables into principal
  components.
\newblock \emph{Journal of Educational Psychology}, 24\penalty0 (4):\penalty0
  17--441,498--520, 1933.

\bibitem[Hotelling(1936)]{Hotelling_PCA2}
H.~Hotelling.
\newblock Relations between two sets of variates.
\newblock \emph{Biometrika}, 28\penalty0 (3/4):\penalty0 321--377, 1936.

\bibitem[Iwen and Maggioni(2013)]{IM:GMRA_CS}
M.~A. Iwen and M.~Maggioni.
\newblock Approximation of points on low-dimensional manifolds via random
  linear projections.
\newblock \emph{Inference and Information}, 2\penalty0 (1):\penalty0 1--31,
  2013.
\newblock arXiv:1204.3337v1, 2012.

\bibitem[Jones(1990)]{Jones-TSP}
P.~W. Jones.
\newblock Rectifiable sets and the traveling salesman problem.
\newblock \emph{Inventiones Mathematicae}, 102\penalty0 (1):\penalty0 1--15,
  1990.

\bibitem[Karypis and Kumar(1999)]{KarypisSIAM99-METIS}
G.~Karypis and V.~Kumar.
\newblock A fast and high quality multilevel scheme for partitioning irregular
  graphs.
\newblock \emph{SIAM Journal on Scientific Computing}, 20\penalty0
  (1):\penalty0 359--392, 1999.

\bibitem[Kreutz-Delgado et~al.(2003)Kreutz-Delgado, Murray, Rao, Engan, Lee,
  and Sejnowski]{Kreutz-Delgado:2003:DLA:643335.643340}
K.~Kreutz-Delgado, J.~F. Murray, B.~D. Rao, Kjersti Engan, T.-W. Lee, and T.~J.
  Sejnowski.
\newblock Dictionary learning algorithms for sparse representation.
\newblock \emph{Neural Comput.}, 15\penalty0 (2):\penalty0 349--396, February
  2003.

\bibitem[Krizhevsky and Hinton(2009)]{CIFAR10}
A.~Krizhevsky and G.~Hinton.
\newblock Learning multiple layers of features from tiny images.
\newblock 2009.

\bibitem[Lewicki et~al.(1998)Lewicki, Sejnowski, and
  Hughes]{Lewicki98learningovercomplete}
M.S. Lewicki, T.J. Sejnowski, and H.~Hughes.
\newblock Learning overcomplete representations.
\newblock \emph{Neural Computation}, 12:\penalty0 337--365, 1998.

\bibitem[Little et~al.(2012)Little, Maggioni, and Rosasco]{LMR:MGM1}
A.~V. Little, M.~Maggioni, and L.~Rosasco.
\newblock Multiscale geometric methods for data sets {I}: Multiscale {SVD},
  noise and curvature.
\newblock Technical report, MIT-CSAIL-TR-2012-029/CBCL-310, MIT, Cambridge, MA,
  September 2012.
\newblock URL \url{http://dspace.mit.edu/handle/1721.1/72597}.

\bibitem[Little et~al.(2009{\natexlab{a}})Little, Jung, and
  Maggioni]{MM:MultiscaleDimensionalityEstimationAAAI}
A.V. Little, Y.-M. Jung, and M.~Maggioni.
\newblock Multiscale estimation of intrinsic dimensionality of data sets.
\newblock In \emph{Proc. A.A.A.I.}, 2009{\natexlab{a}}.

\bibitem[Little et~al.(2009{\natexlab{b}})Little, Lee, Jung, and
  Maggioni]{MM:MultiscaleDimensionalityEstimationSSP}
A.V. Little, J.~Lee, Y.-M. Jung, and M.~Maggioni.
\newblock Estimation of intrinsic dimensionality of samples from noisy
  low-dimensional manifolds in high dimensions with multiscale {$SVD$}.
\newblock In \emph{Proc. S.S.P.}, 2009{\natexlab{b}}.

\bibitem[Liu et~al.(2010)Liu, Lin, and Yu]{Liu2010}
G.~Liu, Z.~Lin, and Y.~Yu.
\newblock Robust subspace segmentation by low-rank representation.
\newblock \emph{Proceedings of the 27th International Conference on Machine
  Learning}, 2010.

\bibitem[Ma et~al.(2007)Ma, Derksen, Hong, and Wright]{ALC-MDHW-PAMI07}
Y.~Ma, H.~Derksen, W.~Hong, and J.~Wright.
\newblock Segmentation of multivariate mixed data via lossy data coding and
  compression.
\newblock \emph{IEEE Transactions on Pattern Analysis and Machine
  Intelligence}, 29\penalty0 (9):\penalty0 1--17, 2007.

\bibitem[Ma et~al.(2008)Ma, Yang, Derksen, and Fossum]{GPCA-MYDF-SiamRev08}
Y.~Ma, A.~Y. Yang, H.~Derksen, and R.~Fossum.
\newblock Estimation of subspace arrangements with applications in modeling and
  segmenting mixed data.
\newblock \emph{SIAM Review}, 50\penalty0 (3):\penalty0 413--458, 2008.

\bibitem[Maggioni et~al.(2016)Maggioni, Minsker, and
  Strawn]{MMS:NoisyDictionaryLearning}
Mauro Maggioni, Stanislav Minsker, and Nate Strawn.
\newblock Multiscale dictionary learning: Non-asymptotic bounds and robustness.
\newblock \emph{J. Mach. Learn. Res.}, 17\penalty0 (1):\penalty0 43--93,
  January 2016.
\newblock ISSN 1532-4435.
\newblock URL \url{http://dl.acm.org/citation.cfm?id=2946645.2946647}.

\bibitem[Mairal et~al.(2010)Mairal, Bach, Ponce, and
  Sapiro]{Mairal:OnlineLearningSparseCoding}
J.~Mairal, F.~Bach, J.~Ponce, and G.~Sapiro.
\newblock Online learning for matrix factorization and sparse coding.
\newblock \emph{Journ. Mach. Learn. Res.}, 11:\penalty0 19--60, 2010.

\bibitem[Mallat(1998)]{Mallat_book}
S.~G. Mallat.
\newblock \emph{A wavelet tour in signal processing}.
\newblock Academic Press, 1998.

\bibitem[Maurer and Pontil(2010)]{DBLP:journals/tit/MaurerP10}
Andreas Maurer and Massimiliano Pontil.
\newblock K-dimensional coding schemes in hilbert spaces.
\newblock \emph{IEEE Transactions on Information Theory}, 56\penalty0
  (11):\penalty0 5839--5846, 2010.

\bibitem[Pearson(1901)]{Pearson_PCA}
K.~Pearson.
\newblock On lines and planes of closest fit to systems of points in space.
\newblock \emph{Philosophical Magazine}, 2\penalty0 (11):\penalty0 559--572,
  1901.

\bibitem[Peyr\'e(2009)]{Peyre}
G.~Peyr\'e.
\newblock Manifold models for signals and images.
\newblock \emph{Computer Vision and Image Understanding}, 113\penalty0
  (2):\penalty0 249--260, 2009.

\bibitem[Peyr{\'e}(2009)]{peyre:sparsetexture}
G.~Peyr{\'e}.
\newblock Sparse modeling of textures.
\newblock \emph{Journal of Mathematical Imaging and Vision}, 34\penalty0
  (1):\penalty0 17--31, 2009.

\bibitem[Protter and Elad(2007)]{PotterElad}
M.~Protter and M.~Elad.
\newblock Sparse and redundant representations and motion-estimation-free
  algorithm for video denoising, 2007.

\bibitem[Rohrdanz et~al.(2011)Rohrdanz, Zheng, Maggioni, and
  Clementi]{RZMC:ReactionCoordinatesLocalScaling}
M.~A. Rohrdanz, W.~Zheng, M.~Maggioni, and C.~Clementi.
\newblock Determination of reaction coordinates via locally scaled diffusion
  map.
\newblock \emph{J. Chem. Phys.}, \penalty0 (134):\penalty0 124116, 2011.

\bibitem[Roweis and Saul(2000)]{RSLLE}
S.~T Roweis and LK~Saul.
\newblock Nonlinear dimensionality reduction by locally linear embedding.
\newblock \emph{Science}, 290:\penalty0 2323--2326, 2000.

\bibitem[Szlam(2009)]{Szlam:iteratedpartitioning}
A.~Szlam.
\newblock Asymptotic regularity of subdivisions of euclidean domains by
  iterated {PCA} and iterated 2-means.
\newblock \emph{Applied and Computational Harmonic Analysis}, 27\penalty0
  (3):\penalty0 342--350, 2009.

\bibitem[Tenenbaum et~al.(2000)Tenenbaum, Silva, and Langford]{isomap}
J.~B. Tenenbaum, V.~De Silva, and J.~C. Langford.
\newblock A global geometric framework for nonlinear dimensionality reduction.
\newblock \emph{Science}, 290\penalty0 (5500):\penalty0 2319--2323, 2000.

\bibitem[Tropp(2014)]{Tropp:userfriendlyRMT}
J.~A. Tropp.
\newblock User-friendly tools for random matrices: An introduction.
\newblock NIPS version, 2014.
\newblock URL \url{http://users.cms.caltech.edu/~jtropp/}.

\bibitem[Vainsencher et~al.(2011)Vainsencher, Mannor, and
  Bruckstein]{Vainsencher:2011:SCD:1953048.2078210}
Daniel Vainsencher, Shie Mannor, and Alfred~M Bruckstein.
\newblock The sample complexity of dictionary learning.
\newblock \emph{The Journal of Machine Learning Research}, 12:\penalty0
  3259--3281, 2011.

\bibitem[Vidal et~al.(2005)Vidal, Ma, and Sastry]{GPCA-VMS-PAMI05}
R.~Vidal, Y.~Ma, and S.~Sastry.
\newblock Generalized principal component analysis ({GPCA}).
\newblock \emph{IEEE Transactions on Pattern Analysis and Machine
  Intelligence}, 27\penalty0 (12), 2005.

\bibitem[Yan and Pollefeys(2006)]{LSA-YP-ECCV06}
J.~Yan and M.~Pollefeys.
\newblock A general framework for motion segmentation: Independent,
  articulated, rigid, non-rigid, degenerate and nondegenerate.
\newblock In \emph{European conference on computer vision}, volume~4, pages
  94--106, 2006.

\bibitem[Zhang et~al.(2010)Zhang, Szlam, Wang, and Lerman]{LBF-ZSWL-CVPR10}
T.~Zhang, A.~Szlam, Y.~Wang, and G.~Lerman.
\newblock Randomized hybrid linear modeling by local best-fit flats.
\newblock In \emph{CVPR}, San Francisco, CA, June 2010.

\bibitem[Zhang and Zha(2004)]{ZhangZha}
Z.~Zhang and H.~Zha.
\newblock Principal manifolds and nonlinear dimensionality reduction via
  tangent space alignment.
\newblock \emph{Journal of Shanghai University}, 8\penalty0 (4):\penalty0
  406--424, 2004.

\bibitem[Zheng et~al.(2011)Zheng, Rohrdanz, Maggioni, and
  Clementi]{ZRMC:PolymerReversal}
W.~Zheng, M.~A. Rohrdanz, M.~Maggioni, and C.~Clementi.
\newblock Polymer reversal rate calculated via locally scaled diffusion map.
\newblock \emph{J. Chem. Phys.}, \penalty0 (134):\penalty0 144108, 2011.

\bibitem[Zwald and Blanchard(2006)]{ZB_eigenspace}
L.~Zwald and G.~Blanchard.
\newblock On the convergence of eigenspaces in kernel principal component
  analysis.
\newblock In \emph{Advances in Neural Information Processing Systems}, pages
  1649--1656, 2006.

\end{thebibliography}

\end{document}